\newcommand{\Real}{\mathds{R}}
\newcommand{\Zeta}{\mathds{Z}}
\newcommand{\N}{\mathds{N}}
\newcommand{\LR}{\Leftrightarrow}
\newcommand*{\defeq}{\stackrel{\text{def}}{=}}
\DeclareMathOperator*{\argmin}{argmin}
\newtheorem{proposition}{Proposition}
\newtheorem{definition}{Definiton}
\newtheorem{lemma}{Lemma}
\newcommand{\shortVersion}{0}
\newif\ifexpandexplanation
\newif\ifithasappendixforlemmas
\begin{document}

  \title{On the throughput of the common target area for robotic swarm strategies
   \if\shortVersion 0
   \ifexpandexplanation
     -- extended version
   \fi 
   \fi
  }
  
  \author[*,1,2]{Yuri Tavares dos Passos}
  
  \author[2]{Xavier Duquesne}
  
  \author[2]{Leandro Soriano Marcolino}
  
  \affil[*]{Corresonding author.}
  
  \affil[1]{Centro de Ciências Exatas e Tecnológicas. Universidade Federal do Reconcâvo da Bahia. Rua Rui Barbosa, 710. Centro. Cruz das Almas. 44380-000. Bahia. Brazil.}

  \affil[2]{School of Computing and Communications. Lancaster University. Bailrigg. Lancaster. LA1 4WA. Lancashire. United Kingdom.}

  \affil[ ]{yuri.passos@ufrb.edu.br, duquesne.xavier.13@gmail.com, l.marcolino@lancaster.ac.uk}

  \date{}

  \maketitle

  \begin{abstract}
     A robotic swarm may encounter traffic congestion when many robots simultaneously attempt to reach the same area. For solving that efficiently, robots must execute decentralised traffic control algorithms. In this work, we propose a measure for evaluating the access efficiency of a common target area as the number of robots in the swarm rises: the common target area throughput. We also employ here the target area asymptotic throughput -- that is, the throughput of a target region with a limited area as the time tends to infinity -- because it is always finite as the number of robots grows, opposed to the relation arrival time at the target per number of robots that tends to infinity. Using this measure, we can analytically compare the effectiveness of different algorithms. In particular, we propose and formally evaluate three different theoretical strategies for getting to a circular target area: (i) forming parallel queues towards the target area, (ii) forming a hexagonal packing through a corridor going to the target, and (iii) making multiple curved trajectories towards the boundary of the target area. We calculate the throughput for a fixed time and the asymptotic throughput for these strategies. Additionally, we corroborate these results by simulations, showing that when a strategy has higher throughput, its arrival time per number of robots is lower. Thus, we conclude that using throughput is well suited for comparing congestion algorithms for a common target area in robotic swarms even if we do not have their closed asymptotic equation.
  \end{abstract}
  
  Keywords: Robotic swarm, Common target, Throughput, Congestion, Traffic control

\section{Introduction}

Swarms of robots are systems composed of a large number of robots that can only interact with direct neighbours and follow simple algorithms. Interestingly, complex behaviours may emerge from such straightforward rules \citep{navarro2013introduction, Garnier2007}. An advantage of such systems is the usage of low-priced robots instead of a few expensive ones to solve problems. Robotic swarms accurately projected for simple robots may solve complex tasks with greater efficiency and fault-tolerance, while being cheaper than a small group of complex robots oriented for a specific problem domain. They can also be seen as a multiagent system with spatial computers, which is a group of devices displaced in the space such that its objective is defined in terms of spatial structure and its interaction depends on the distance between them  \citep{giavitto:hal-00821901}. Swarms have been recently receiving attention in the multi-agents systems literature in problems such as logistics \citep{10.5555/3463952.3464142}, flocking formation \citep{10.5555/3463952.3463999}, pattern formation \citep{10.5555/3463952.3463998} and coordination of unmanned aerial vehicles swarms \citep{10.5555/3463952.3464114}. In such problems relating to spatial distribution, conflicts may be created by the trajectories of its robots, which may slow down the system, in special when a group is intended to go to a common region of the space. Some examples where this happens are waypoint navigation \citep{marcolinoNoRobotLeft2008} and foraging \citep{ducatelleCommunicationAssistedNavigation2011}.

Related works on multi-agents systems \citep{carlinoAuctionbasedAutonomousIntersection2013, Sharon2017, CuiStoneScalable} deals with a similar problem, but they consider autonomous cars navigating over lanes and roads, and coordination is needed at the junctions. In  \citep{ChoudhurySKP21} and \citep{jair112397}, they also deal with multi-agents and pathfinding, but not in a situation in which the target of every agent is the same area. \cite{jair112397} present theoretical analysis for their proposed matter, alongside experimentation by simulations corroborating the results as we do. Furthermore, we are considering agents with only local information and distributed solutions, while \citep{ChoudhurySKP21} and \citep{jair112397} propose centralised solutions. \cite{jmse9121324} investigate the topology of the neighbourhood relations between multiple unmanned surface vehicles in a swarm. In our work, we analyse the impact of the throughput of the target area when using formation packing in squares and hexagons. They deal with maintaining formation in swarms, but they have to keep virtual leaders, and their goal is not minimising congestion.

Moreover, there has not been much research on the problem of reducing congestion when a swarm of robots are aimed at the same target. Surveys about robotic swarms \citep{sahin04swarm,SahinGBT08,Barca2013swarm,Brambilla2013Swarm,Bayindir2016,8424838} do not provide information regarding these situations. Even a recent survey on collision avoidance \citep{hoyAlgorithmsCollisionfreeNavigation2015} do not address this issue though it provides insights into multi-vehicle navigation. Congestion in robotic swarms is mostly managed by collision avoidance in a decentralised fashion, which allows for improved scalability of the algorithms. 

However, solely avoiding collisions does not necessarily lead to a good performance in this problem with a common target. For example, we showed  \citep{Marcolino2016} that the ORCA algorithm  \citep{Berg2011} reaches an equilibrium where robots could not arrive at the target despite avoiding collisions. In that work, we also presented three algorithms using artificial potential fields for the common target congestion problem, but no formal analysis of the cluttered environment was done. Hence, congestion is still not well understood, and more theoretical work is needed to measure the optimality of the algorithms. A better understanding of this topic should lead to a variety of new algorithms adapted to specific environments.

Furthermore, any elaborated analysis on that subject must investigate the effect of the increase of the number of individuals on the swarm congestion, as we desire for the system to perform well as it grows in size. If we have a finite measure that abstracts the optimality of any algorithm as the number of robots goes to infinity, we can use this as a metric to compare different approaches to the same problem. Thus, in this work, we present as metric the common target area throughput. That is, we are proposing a measure of the rate of arrival in this area as the time tends to infinity as an alternative approach to analyse the congestion in swarms with a common target area. In network and parallel computing studies \citep{asymptotic1,asymptotic2}, asymptotic throughput is used to measure the throughput when the message size is assumed to have infinite length. We use the same idea here, but instead of message size, we work with infinite time, as if the algorithms run forever. As we will present in the next section, this implies dealing with an infinite number of robots. Thus, here we use time instead of message size or bytes as in computer network studies.

Therefore, the contributions in this paper are the following.
\begin{enumerate}[(i)]
  \item We propose a method for evaluating algorithms for the common target problem in a robotic swarm by using the throughput in theoretical or experimental scenarios.
  \item We present an extensive theoretical study of the common target problem, allowing one to understand better how to measure the access to a common target using a metric not yet used in other works on the same problem. 
  \item Assuming a circular target area and robots with a constant linear velocity and a fixed distance from each other, we develop theoretical strategies for entering the area and calculate their theoretical throughput for a fixed time and their asymptotic throughput when time goes to infinite. Additionally, we verify the correctness of these calculations by simulations.
\end{enumerate}

The presented theoretical strategies are based on forming a corridor towards the target area or making multiple curved trajectories towards the boundary of the target area. For the corridor strategy, we also discuss the throughput when the robots are going to the target in square and hexagonal packing formations. We evaluate our theoretical strategies by realistic Stage \citep{PlayerStage} simulations with holonomic and non-holonomic robots. Our experiments corroborate that whenever an algorithm makes a swarm take less time to reach the target region than another algorithm, the throughput of the former is higher than the latter. These strategies are the inspiration to new distributed algorithms for robotic swarms in our concurrent work \citep{arxivAlgorithms}.

This paper is organised as follows. In the next section, we briefly explain the mathematical notation we are using.  In Section \ref{sec:theoreticalresults}, we formally define the common target area throughput and prove statements about this measure for theoretical strategies that allow robots to enter the common target area. Section \ref{sec:experimentresults} describes the experiments and present its results to verify the correctness of the theoretical strategies results. Finally, we summarise our results and make final remarks in Section \ref{sec:conclusion}.

\section{Notation}

Geometric notation is used as follows. $\overleftrightarrow{AB}, \overrightarrow{AB}$ and $\overline{AB}$ represents a line passing through points A and B, a ray starting at A and passing through B and a segment from A to B, respectively. $\vert \overline{AB}\vert $ is the size of $\overline{AB}$. $\overleftrightarrow{AB} \parallel \overleftrightarrow{CD}$ means $\overleftrightarrow{AB}$ is parallel to $\overleftrightarrow{CD}$. If a two-dimensional point is represented by a vector $P_{1}$, its x- and y-coordinates are denoted by $P_{1,x}$ and $P_{1,y}$, respectively. 

$\bigtriangleup ABC$ express the triangle formed by the points A, B and C. $\bigtriangleup ABC \cong \bigtriangleup DEF$ and $\bigtriangleup ABC \sim \bigtriangleup DEF$ mean the triangles ABC and DEF are congruent (same angles and same size) and similar (same angles), respectively. Depending on the context, the notation is omitted for brevity. 

$\widehat{AOB}$ means an angle with vertex O, one ray passing through point A and another through B. Depending on the context, if we are dealing only with one $\bigtriangleup EFG$, we will name its angles only by  $\widehat{E}$, $\widehat{F}$ and $\widehat{G}$. All angles are measured in radians in this paper. 

\section{Theoretical Analysis}
\label{sec:theoreticalresults}

We consider in this paper the scenario where a large number of robots must reach a common target. After reaching the target, each robot moves towards another destination which may or may not be common among the robots. We assume the target is defined by a circular area of radius $s$. A robot reaches the target if its centre of mass is at a distance below or equal to the radius $s$ from the centre of the target. We assume that there is no minimum amount of time to stay at the target. Additionally, the angle and the speed of arrival have no impact on whether the robot reached the target or not. In this section, theoretical strategies are constructed to solve that task and show limits for the efficiency of real-life implementations, which we developed in a concurrent work \citep{arxivAlgorithms}. To measure performance, we start with the following definition.

\begin{definition}
The \emph{throughput} is the inverse of the average time between arrivals at the target. 
\label{def:throughput}
\end{definition} 

Informally speaking, the throughput is measured by someone located on the common target (i.e., on its perspective). We consider that an optimal algorithm minimises the average time between two arrivals or, equivalently, maximises throughput. The unit for throughput can be in $s^{-1}$.   
It will be noted $f$ (as in frequency).
In the rest of the paper, we focus on maximising throughput.

Assume we have run an experiment with $N \ge 2$ robots for $T$ units of time, such that the time between the arrival of the $i$-th robot and the $i+1$-th robot is $t_{i}$, for $i$ from $1$ to $N-1$. Then, by Definition \ref{def:throughput}, we have
\if\shortVersion 1
  $ f =  \frac{1}{\frac{1}{N-1}\sum_{i=1}^{N-1}t_{i}} 
  = \frac{N-1}{\sum_{i=1}^{N-1}t_{i}}
  = \frac{N-1}{T}, $
\else
  $$ f =  \frac{1}{\frac{1}{N-1}\sum_{i=1}^{N-1}t_{i}} 
  = \frac{N-1}{\sum_{i=1}^{N-1}t_{i}}
  = \frac{N-1}{T}, $$
\fi
because $\sum_{i=1}^{N-1}t_{i} = T.$ Thus, we have an equivalent definition of throughput:

\begin{definition}
The \emph{throughput} is the ratio of the number of robots that arrive at a target region, not counting the first robot to reach it, to the arrival time of the last robot.
\label{def:throughput2}
\end{definition} 

The target area is a limited resource that must be shared between the robots. Since the velocities of the robots have an upper bound, a robot needs a minimum amount of time to reach and leave the target before letting another robot in. Let the \emph{asymptotic throughput} of the target area be its throughput as the time tends to infinity. Because any physical phenomenon is limited by the speed of light, this measure is bounded. Then, the asymptotic throughput is well suited to measure the access of a common target area as the number of robots grows.

One should expect that the asymptotic throughput depends mainly on the target size and shape, the maximum speed for robots, $v$, and the minimum distance between robots, $d$. As any bounded target region can be included in a circle of radius $s$, we will deal hereafter with only circular target regions. 

To efficiently access the target area, we identify two main cases: $s \ge d/2$ and $s < d/2$. There are targets that several robots can simultaneously reach without collisions. That is the case if the radius $s \ge d/2$. Thus, one approach is making lanes to arrive in the target region so that as many robots as possible can simultaneously arrive. After the robots arrive at the target, they must leave the target region by making curves. However, we discovered in \citep{arxivAlgorithms} that this approach does not obtain good results in our realistic simulations due to the influence of other robots, although it is theoretically the best approach if the robots could run at a constant speed and maintain a fixed distance between each other.

We are also interested in the case where $s < d / 2$, when only one robot can occupy the target area simultaneously. Making two queues and avoiding the inter-robot distance to be less than $d$ is good guidance to work efficiently. Particularly, the case $s = 0$ offers interesting insights, so we begin by discussing it. 

\subsection{Common target point: $s = 0$}

We consider the case where robots are moving in straight lines at constant linear speed $v$, 
maintaining a distance of at least $d$ between each other.
A robot has reached the target when its centre of mass is over the target. 
When $s = 0$, the target is a point.
Our first result is the optimal throughput when robots are moving in a straight line to a  target point.
It is illustrated in Figure \ref{fig:straight_line}.
In this section, we construct a solution to attain the optimal throughput.

\begin{figure}
  \centering
  \includegraphics[width=0.38\columnwidth]{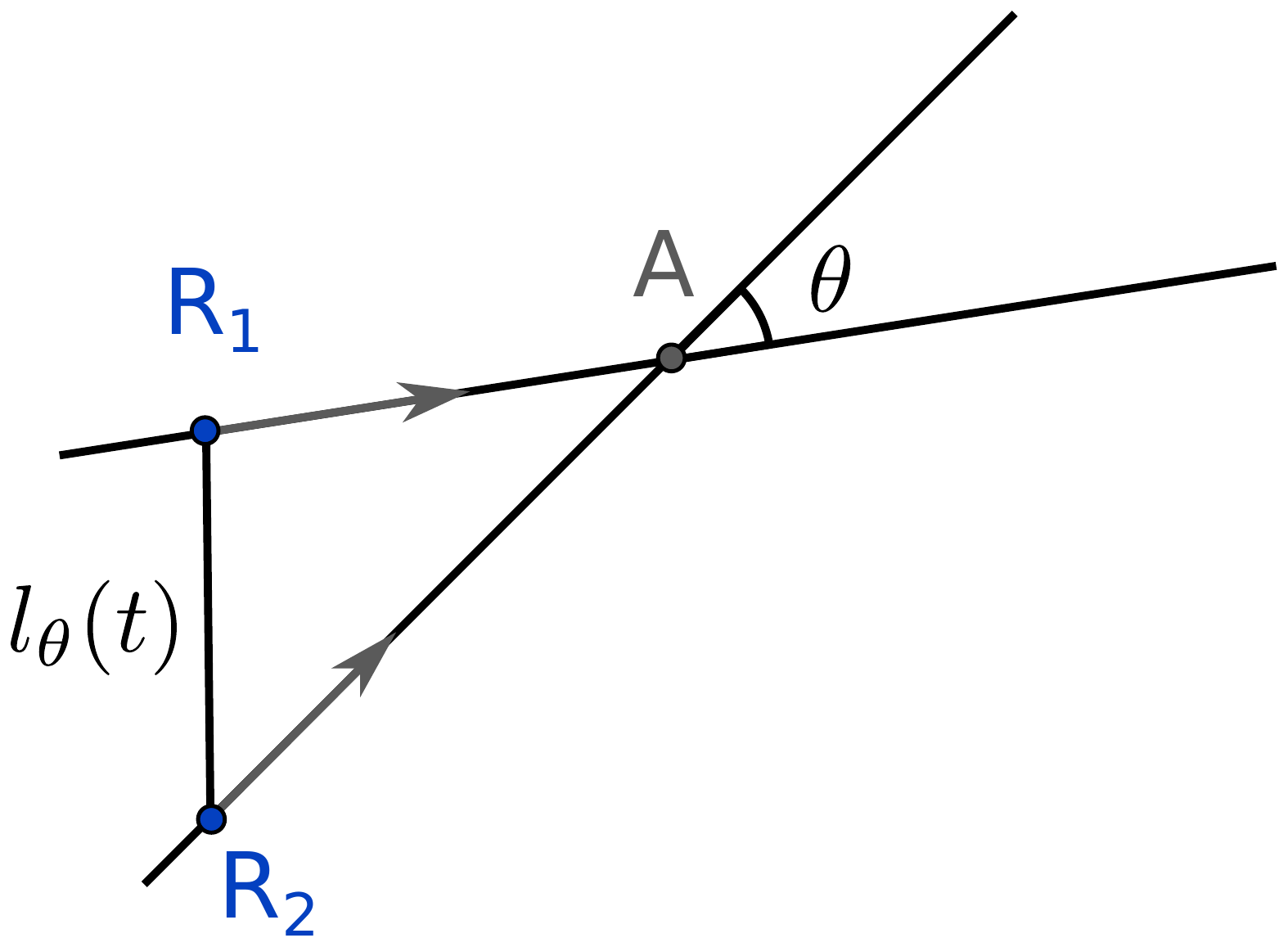}
  \caption{We consider two robots $R_1$ and $R_2$ moving in  straight lines toward a target at A. The angle between their trajectory is $\theta$. The distance between the two robots over  time is denoted by $l_\theta(t)$.} 
  \label{fig:straight_line}
\end{figure}

First, let us consider two robots, Robot 1 and Robot 2. Their trajectories are straight lines towards the target. Assume the straight-line trajectory of Robot 1 has an angle $\theta_{1}$ with the $x$-axis and the one of Robot 2 has $\theta_{2}$.
We call $\theta_{2} - \theta_{1} = \theta$ the angle between the two lines. 
The positions of the robots are described by the kinematic equation (\ref{eq:kimatic_equation_punctual_target}) below, where
$(x_1(t), y_1(t))$ and $(x_2(t), y_2(t))$ are the positions of Robot 1 and Robot 2, respectively, and
$t \in \mathbb{R}$ is an instant of time.
Without loss of generality, we set the origin of time when Robot 1 reaches the target, and the target is located at $(0, 0)$. Thus, $(x_1(0), y_1(0)) = (0, 0)$.
$\tau$ is the delay between the two arrivals at the target. Then, $(x_2(\tau), y_2(\tau)) = (0, 0)$,

\begin{equation}
\label{eq:kimatic_equation_punctual_target}
\left[
  \begin{matrix}
    x_1(t)\\ 
    y_1(t) 
  \end{matrix}
\right]=
\left[
  \begin{matrix}
    v t \cos(\theta_{1}) \\
    v t \sin(\theta_{1})
  \end{matrix}
\right]
\text{ and }
\left[
  \begin{matrix}
    x_2(t)\\ 
    y_2(t) 
  \end{matrix}
\right]=
\left[
  \begin{matrix}
    v (t - \tau) \cos(\theta_{2}) \\
    v (t - \tau) \sin(\theta_{2})
  \end{matrix}
\right]
\end{equation}

In order to find the optimal throughput, we will start with the following lemma:

\begin{lemma}
To respect a distance of at least $d$ between the two robots, the minimum delay between their arrival is $\frac{d}{v} \sqrt{\frac{2}{1 + \cos(\theta)}}$.
\label{prop:security_distance_punctual_target}
\end{lemma}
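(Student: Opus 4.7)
The plan is to compute the squared distance $l_\theta(t)^2$ between the two robots as an explicit function of time using the kinematic equations (\ref{eq:kimatic_equation_punctual_target}), minimise it over $t$, and then impose the resulting minimum to be at least $d$. First I would expand
$l_\theta(t)^2 = (x_1(t)-x_2(t))^2 + (y_1(t)-y_2(t))^2$
and collapse the cross terms $\cos\theta_1\cos\theta_2 + \sin\theta_1\sin\theta_2$ into $\cos(\theta_2-\theta_1) = \cos\theta$ via the angle-difference identity. This should give the clean form $l_\theta(t)^2 = v^2\bigl[t^2 - 2t(t-\tau)\cos\theta + (t-\tau)^2\bigr]$, a quadratic in $t$ with leading coefficient $2v^2(1-\cos\theta) \ge 0$.

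Next I would differentiate and find the critical point. A short computation yields $\frac{d}{dt}l_\theta(t)^2 = 2v^2(2t-\tau)(1-\cos\theta)$, so (for $\cos\theta \neq 1$) the unique minimiser is $t^{\ast} = \tau/2$, the instant midway between the two arrivals. This matches geometric intuition: by symmetry, both robots are then equidistant from the target. Plugging $t^{\ast}$ into the expression gives $l_\theta(\tau/2)^2 = v^2\tau^2(1+\cos\theta)/2$, hence the minimum separation over all time is $v\tau\sqrt{(1+\cos\theta)/2}$.

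Imposing the safety condition $l_\theta(\tau/2) \ge d$ and solving for $\tau$ immediately produces the claimed bound $\tau \ge \frac{d}{v}\sqrt{\frac{2}{1+\cos\theta}}$, and equality is achieved when the two robots graze each other at distance exactly $d$ at $t=\tau/2$, so the bound is tight.

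The main subtlety, more than the calculus itself, is the handling of the boundary cases. For $\theta = 0$ the derivative vanishes identically because the trajectories coincide; here $l_\theta(t) \equiv v\tau$ is constant and the constraint reduces to $\tau \ge d/v$, which agrees with the formula in the limit. For $\theta = \pi$ the factor $1+\cos\theta$ vanishes, so the formula diverges; this is faithful to reality, since head-on trajectories meet at the origin no matter how large $\tau$ is. I would mention these two extremes to confirm consistency of the expression at both endpoints of the admissible range of $\theta$.
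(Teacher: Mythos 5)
Your proof is correct and is essentially the same argument as the paper's: both reduce the safety condition to the non-negativity of the same quadratic $2(1-\cos\theta)v^2t^2 - 2v\tau(1-\cos\theta)v t + (v\tau)^2 - d^2$, with the same split into the degenerate case $\cos\theta = 1$ and the generic case. The only cosmetic difference is that you impose non-negativity by evaluating the quadratic at its vertex $t^{\ast}=\tau/2$ (which nicely identifies the closest-approach instant as the midpoint between arrivals), whereas the paper equivalently requires the discriminant to be non-positive.
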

\begin{proof}
  \ifithasappendixforlemmas %
      See Online Appendix.
  \else %
  Let $l_\theta(t) = \sqrt{(x_1(t) - x_2(t))^2 + (y_1(t) - y_2(t))^2}$ be the distance between the two robots.
  The robots must maintain their minimum distance $d$ at all time:
  \begin{equation}
  \label{eq:distancerelation1}
  \forall t \in \mathbb{R}, l_\theta(t) \ge d.
  \end{equation}
  
  To avoid a collision, we have $\theta \neq \pi$, which corresponds to the case where robots face each other exactly. As a result, $\cos(\theta) \neq - 1$.
  For ease of calculation, we define $X = \tau v$, that is, the distance between Robot 1 and Robot 2 when Robot 1 reaches the target. We also define $P_\theta(t) = l_\theta(t)^2 - d^2$, so the constraint in  (\ref{eq:distancerelation1}) for minimum distance between them is expressed by
  \if\shortVersion 1
    $
    \forall t \in \mathbb{R}, l_\theta(t) \ge d
    \Leftrightarrow
    \forall t \in \mathbb{R}, P_\theta(t) \ge 0.
    $
    In addition, we have
    $
      P_\theta(t) 
        = 2 (1 - \cos(\theta)) v^{2} t^{2} - 2 X (1 - \cos(\theta))v t + X^2 - d^2,
    $
  \else
    $$
    \forall t \in \mathbb{R}, l_\theta(t) \ge d
    \Leftrightarrow
    \forall t \in \mathbb{R}, P_\theta(t) \ge 0.
    $$
    
    In addition, we have
    $$
    \begin{aligned}
      P_\theta(t) 
        &= (v t \cos(\theta_{1}) - v (t - \tau)  \cos(\theta_{2}))^2 + (v t \sin(\theta_{1}) - v (t - \tau)  \sin(\theta_{2}))^2 \\
          &\phantom{=\ }- d^2\\
    \ifexpandexplanation
        \end{aligned}
      $$
      $$
        \begin{aligned} 
          \phantom{P_\theta(t) }
          &= (v t \cos(\theta_{1}) - (v t - X)  \cos(\theta_{2}))^2 + (v t \sin(\theta_{1}) - (v t - X)  \sin(\theta_{2}))^2 - d^2\\
          &= (v t)^{2} \cos(\theta_{1})^{2} - 2 v t \cos(\theta_{1})(v t - X)  \cos(\theta_{2}) +  (v t - X)^2  \cos(\theta_{2})^2 + \\
            &\phantom{=\ \ } (v t)^{2} \sin(\theta_{1})^{2} - 2 v t \sin(\theta_{1})(v t - X)  \sin(\theta_{2}) +  (v t - X)^2  \sin(\theta_{2})^2  - d^2\\
        \end{aligned}
      $$
      $$
        \begin{aligned} 
          \phantom{P_\theta(t) }
    \fi
        &= (v t)^{2}  - 2 v t(v t - X)  (\cos(\theta_{1})\cos(\theta_{2}) + \sin(\theta_{1}) \sin(\theta_{2}))   \\
          &\phantom{=\ \ }   +  (v t - X)^2  - d^2 \\
        &= (v t)^{2}  - 2 v t(v t - X)  \cos(\theta_{2} - \theta_{1})  +  (v t - X)^2    - d^2\\
        &= (v t)^{2}  - 2 v t(v t - X)  \cos(\theta)  +  (v t - X)^2    - d^2\\
\ifexpandexplanation 
        \end{aligned}
      $$
      $$
        \begin{aligned}
          \phantom{P_\theta(t) }  
          &= (v t)^{2}  - 2 v t(v t - X)  + 2 v t(v t - X)  - 2 v t(v t - X)  \cos(\theta)  +  (v t - X)^2    - d^2\\
          &= (v t)^{2}  - 2 v t(v t - X)  +  (v t - X)^2 + 2 v t(v t - X)  - 2 v t(v t - X)  \cos(\theta)      - d^2\\
          &= (v t -  (v t - X))^2 + 2 v t(v t - X)  - 2 v t(v t - X)  \cos(\theta)      - d^2\\
          &= X^2 + 2 v t(v t - X)  - 2 v t(v t - X)  \cos(\theta)  - d^2\\
          &= 2 v t(v t - X)  - 2 v t(v t - X)  \cos(\theta) +X^2  - d^2\\
          &= 2 (v t)^{2} -2 X v t  - 2 (v t)^{2}\cos(\theta) + 2 X v t  \cos(\theta) +X^2  - d^2\\
          &= 2 (v t)^{2}  - 2 (v t)^{2}\cos(\theta) - 2X v t +2 X v t  \cos(\theta) +X^2  - d^2\\
          &= 2 (1 - \cos(\theta)) (v t)^2 - 2 X (1 - \cos(\theta))v t + X^2 - d^2\\
        \end{aligned}
      $$
      $$
        \begin{aligned} 
          \phantom{P_\theta(t) }
\fi
      &= 2 (1 - \cos(\theta)) v^{2} t^{2} - 2 X (1 - \cos(\theta))v t + X^2 - d^2,\\
    \end{aligned}
    $$
  \fi
  where we used $\cos(\theta)=\cos(\theta_{2}-\theta_{1})= \cos(\theta_{2})  \cos(\theta_{1}) + \sin(\theta_{2}) \sin(\theta_{1}) $.

  We identify two cases:
  \begin{enumerate}
  \item Case 1: $\cos(\theta) \neq 1$.
  Then $P_\theta(t)$ is a second-degree polynomial in $t$. It is of the form $at^2 + b t + c$ 
  with $a = 2 (1 - \cos(\theta)) v^2$,
  $b = -2 X (1 - \cos(\theta))v$ and
  $c = X^2 - d^2$.
  We know that $P_\theta(t)$ has $a$ with positive sign for all $t$, because $(1 - \cos(\theta)) > 0$ when $\cos(\theta) \neq 1$. Thus, as $a>0$, by second-degree polynomial inequalities properties, $P_\theta(t) \ge 0$ for all $t$ if and only if its discriminant $\Delta = b^2 - 4 a c$ is negative, that is, 
  \if\shortVersion 1
    $
    \forall t \in \mathbb{R}, P_\theta(t) \ge 0
    \Leftrightarrow
    b^2 - 4 a c  \le 0.
    $
    Thus,
    \begin{equation}
      X \ge d\sqrt{\frac{2}{1 + \cos(\theta)}}   
      \label{eq:methogology:pformula}
    \end{equation}
  \else
  
    $$
    \forall t \in \mathbb{R}, P_\theta(t) \ge 0
    \Leftrightarrow
    b^2 - 4 a c  \le 0.
    $$
    
    Thus:
    $$
    \begin{aligned}
    & 2^2 X^2 (1 - \cos(\theta))^2 v^2 - 4 \cdot 2 (1 - \cos(\theta))v^2(X^2 - d^2)  \le 0   \Leftrightarrow\\
    & (4(1 - \cos(\theta))v^{2}) ( X^2 (1 - \cos(\theta)) -  2 (X^2 - d^2))  \le 0   \Rightarrow\\
    & X^2 (1 - \cos(\theta)) - 2 (X^2 - d^2)  \le 0  \Leftrightarrow
     2 d^2 - X^2 (1 + \cos(\theta))  \le 0  \Leftrightarrow\\
    & \frac{2d^{2}}{1 + \cos(\theta)}  \le X^2  \Rightarrow 
    \end{aligned}
    $$
    
    \begin{equation}
      X \ge d\sqrt{\frac{2}{1 + \cos(\theta)}}   
      \label{eq:methogology:pformula}
    \end{equation}
  \fi
  \item Case 2: $\cos(\theta) = 1$. Then $P_{\theta}(t) = X^{2} - d^{2}$. In this case, $P_{\theta}(t) \ge 0$ for all $t$ when $X^{2} - d^{2} \ge 0 \Rightarrow X \ge d$. This is the same as using $\cos(\theta) = 1$ in (\ref{eq:methogology:pformula}).
  \end{enumerate}
  
  Hence, (\ref{eq:methogology:pformula}) gives,   for the robots to respect the minimum distance $d$ for every time  $t$,  a relation between the minimum distance, the angle between the lanes and the distance between Robot 1 and Robot 2 when Robot 1 reaches the target. The final result is obtained noticing that (\ref{eq:methogology:pformula}) is equivalent to
  \if\shortVersion 1
    $
    \tau \ge \frac{d}{v} \sqrt{\frac{2}{1 + \cos(\theta)}}.
    $
  \else
    $$
    \tau \ge \frac{d}{v} \sqrt{\frac{2}{1 + \cos(\theta)}}.
    $$
  \fi
  \fi %
\end{proof}

This result enables us to show Proposition \ref{prop:optimal_throughput_straight_line_punctual_target}.

\begin{proposition}
The optimal throughput $f$ for a point-like target ($s = 0$) is $f = \frac{v}{d}$. 
It is achieved when robots form a single line, i.e., the angle between robots trajectories must be $0$.
\label{prop:optimal_throughput_straight_line_punctual_target}
\end{proposition}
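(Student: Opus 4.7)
The plan is to combine Lemma \ref{prop:security_distance_punctual_target} with Definition \ref{def:throughput2}: the lemma gives a per-pair lower bound on the inter-arrival delay, summing this across all consecutive arrivals yields an upper bound on the throughput, and a single-line configuration realises equality.

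First, I would analyse the function $\theta\mapsto\sqrt{2/(1+\cos(\theta))}$. On the admissible angular range (the value $\theta=\pi$ is ruled out since it corresponds to a head-on collision), the denominator $1+\cos(\theta)$ is maximal exactly at $\theta=0$, so the entire expression attains its minimum value $1$ there. Consequently, any two robots on straight-line approaches must have inter-arrival delay at least $d/v$, with equality if and only if their trajectories are colinear and co-directional.

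Second, I would lift this pairwise bound to an arbitrary number of robots. Ordering the $N$ robots by arrival time and letting $t_i$ denote the delay between the arrivals of the $i$-th and $(i+1)$-th robot, Lemma \ref{prop:security_distance_punctual_target} applied to each consecutive pair gives $t_i\ge d/v$. Summing yields $T=\sum_{i=1}^{N-1}t_i\ge (N-1)\,d/v$, so Definition \ref{def:throughput2} gives $f=(N-1)/T\le v/d$.

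Third, to show the bound is tight, I would exhibit an explicit configuration: all robots arranged on a single ray toward the target, at uniform spacing $d$, all moving at speed $v$ in the same direction. Because they share the same velocity vector, pairwise distances are preserved in time and equal $kd$ for robots $k$ apart in the line, so the minimum-distance constraint holds globally; consecutive arrivals are spaced exactly $d/v$, giving $f=v/d$ and also the qualitative statement about the optimal angle being $0$. The proof is not intrinsically difficult once Lemma \ref{prop:security_distance_punctual_target} is in hand; the main point to check is that the per-pair optimum $\theta=0$ can be realised simultaneously across all $N-1$ consecutive pairs, which the single-line construction witnesses without any further work.
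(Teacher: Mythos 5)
Your proposal is correct and follows essentially the same route as the paper: both rest on Lemma \ref{prop:security_distance_punctual_target} to bound each consecutive inter-arrival delay below by $d/v$ (minimised at $\theta=0$), aggregate over the $N-1$ consecutive pairs, and realise equality with a single colinear queue. The only cosmetic difference is that you sum the pairwise bounds directly while the paper packages the same aggregation as an induction on $N$.
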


\begin{proof}
We show by induction on $N$, which is the number of robots moving towards the target. We define $\theta_{N}$ as the angle between the trajectories of Robot $N - 1$ and Robot $N$; 
$\tau_{N}$, the minimum delay between the arrival of Robot $N - 1$ and Robot $N$; and
$\Delta_{N}$, the minimum delay between the arrival of Robot 1 and Robot $N$.
We want to show the following predicate: for all $N \ge 2$, $\Delta_{N} = (N-1) d / v$ for $\theta_2 = \theta_3 = \ldots = \theta_{N} = 0$.

Base case ($N=2$):
Let $\tau_2$ be the delay between the arrival of Robot 1 and Robot 2.
From Lemma \ref{prop:security_distance_punctual_target} we have that the minimum delay between Robot 1 and Robot 2 is equal to
$\frac{d}{v} \sqrt{\frac{2}{1 + \cos(\theta_2)}}$,
which is minimised by $\theta_2 = 0$.
Then, the minimum delay between the two robots is $\tau_2 = d/v = \Delta_{2}$.

Inductive step: We suppose the predicate is true for a given $N-1 \ge 2$. 
We will show that it implies the predicate is true for $N$ robots.
As in the previous case, we conclude from Lemma \ref{prop:security_distance_punctual_target} that the minimum delay between Robot $N-1$ and Robot $N$ is equal to
$\frac{d}{v} \sqrt{\frac{2}{1 + \cos(\theta_{N})}}$, which is minimised for $\theta_{N} = 0$.
Then, the minimum delay between the two robots is $\tau_{N} = d/v$.
We have 
  \if\shortVersion 1
    $
    \begin{aligned}
    \Delta_{N} & = \Delta_{N-1} + \tau_{N}
              = (N-2)\frac{d}{v} + \frac{d}{v} 
              = (N-1)\frac{d}{v}.
    \end{aligned}
    $
  \else
    $$
    \begin{aligned}
    \Delta_{N} & = \Delta_{N-1} + \tau_{N}
              = (N-2)\frac{d}{v} + \frac{d}{v} 
              = (N-1)\frac{d}{v}.
    \end{aligned}
    $$

  \fi
Consequently, the minimum delay between Robot 1 and Robot $N$ is $\Delta_{N} = \sum_{i = 2}^{N} \tau_i = (N-1)\frac{d}{v}$ and the time of arrival of Robot $N$, for all $N$, 
is minimised for $\theta_2 = \theta_3 = \ldots = \theta_{N} = 0$. Finally, by Definition \ref{def:throughput2}, the throughput is $f = \frac{N-1}{\Delta_{N}} = \frac{v}{d}$.
\end{proof}

The insight derived from Proposition \ref{prop:optimal_throughput_straight_line_punctual_target} implies that we should increase the maximum speed of the robots or decrease the minimum distance between them to increase the throughput. It is also noted that the optimal trajectory for all the robots is to form a queue behind the target and Robot 1.
As a result, the optimal path is to create one lane to reach the target.
When we increase the angle $\theta$ between the path of a robot and the next one, 
we introduce a delay from the optimal throughput. 
For instance, Figure \ref{fig:theoretical:normalised_delay} shows the normalised delay for different angles $\theta$ (normalised by dividing $\tau$ by $\tau_{min} = d / v$) between two robots, according to Lemma \ref{prop:security_distance_punctual_target}. This figure shows that for an angle of $\pi/3$, the minimum delay is $15\%$ higher than for an angle of 0, and the minimum delay is $41\%$ higher for an angle of $\pi/2$.

\begin{figure}[t]
  \centering
  \includegraphics[width=0.8\columnwidth]{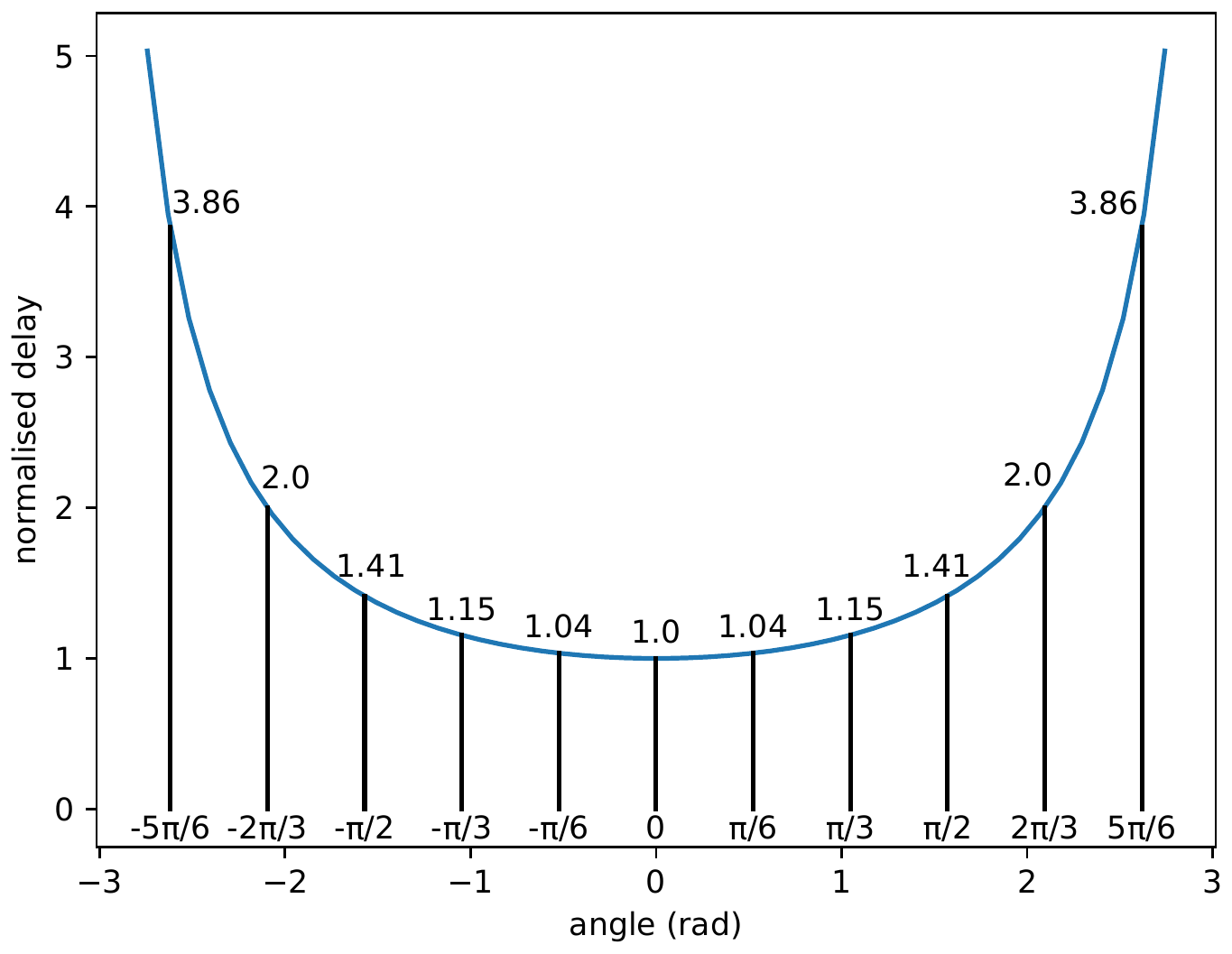}
  \caption{Normalised delay versus the angle between the trajectories of the robots.}
  \label{fig:theoretical:normalised_delay}
\end{figure}

\subsection{Small target area: $0 < s < d/2$}
\label{sec:smalltargetarrea2}

In this section, we suppose a small target area where $0 < s < d/2$, hence we cannot yet fit two lanes with a distance $d$ towards the target. The next results are based on a strategy using two \emph{parallel lanes} as close as possible to guarantee the minimum distance $d$ between robots. Figure \ref{fig:smalltargetarrea} describes these two parallel lanes. We hereafter call this  strategy \emph{compact lanes}. Proposition \ref{prop:parallel1} considers a target area with radius $0 < s \le \frac{\sqrt{3}}{4}d$, and Proposition \ref{prop:parallel2} assumes $\frac{\sqrt{3}}{4}d < s < \frac{d}{2}$.

\begin{figure}[t]
  \centering
  \includegraphics[width=0.5\columnwidth]{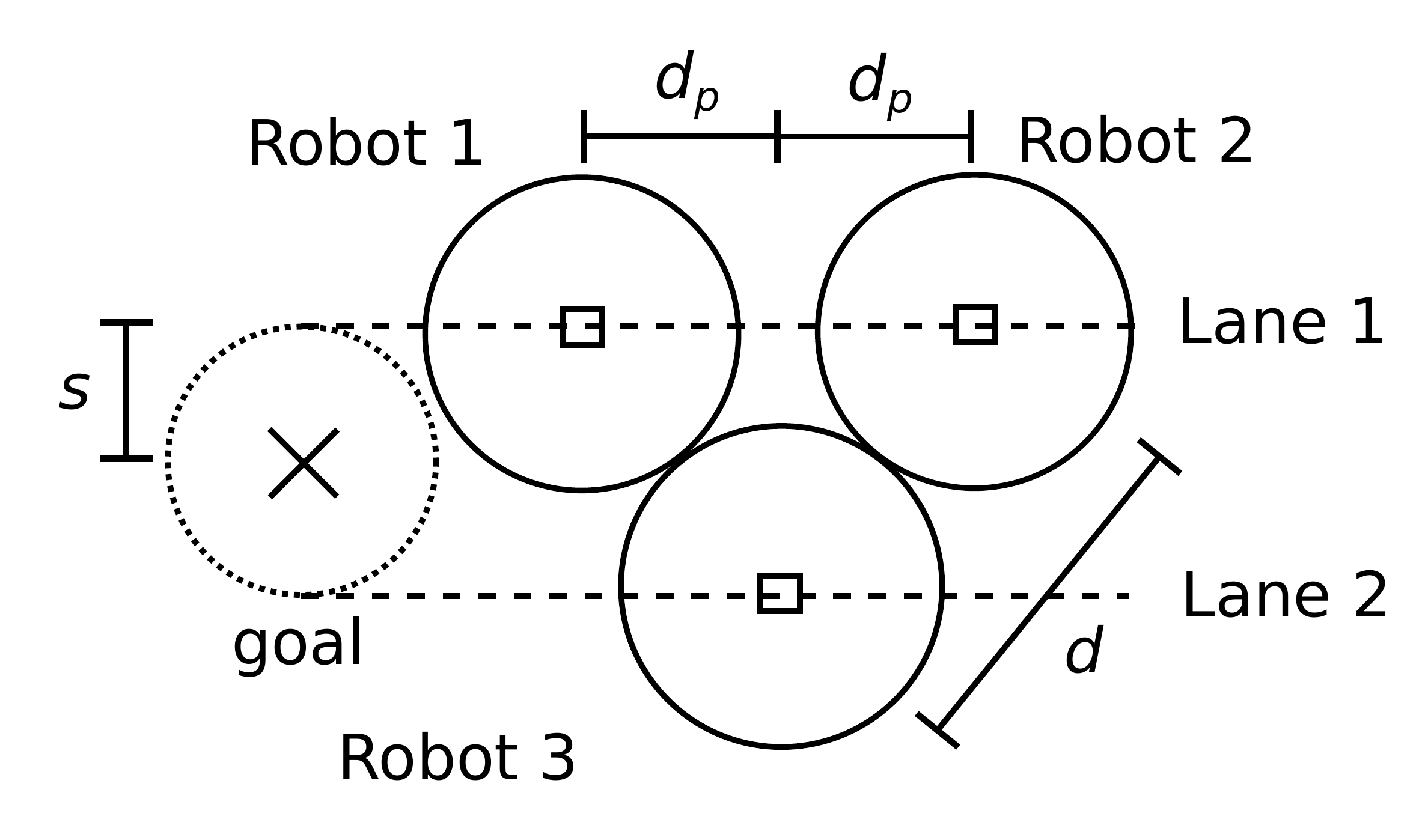}
  \caption{Two parallel robot lanes for a small target, illustrating the compact lanes strategy.}
  \label{fig:smalltargetarrea}
\end{figure}

\begin{proposition}
  Assume two parallel lanes with robots at maximum speed $v$ and maintaining a minimum distance $d$ between them. The throughput of a common target area with radius $0<s \le \frac{\sqrt{3}}{4}d$ at a given time $T$ after the first robot has reached the target area is
  \begin{equation}
    f(T) = \frac{1}{T} \left(\left \lfloor \frac{vT}{2 \sqrt{d^2 - (2 s)^2}} \right \rfloor + \left \lfloor \frac{vT}{2 \sqrt{d^2 - (2 s)^2}} + \frac{1}{2}\right \rfloor \right )
    \label{eq:giventime1}
  \end{equation}
  and is limited by
  \begin{equation}
    f = \lim_{T \to \infty} f(T) = \frac{v}{d\sqrt{1-(\frac{2s}{d})^{2}}}.
    \label{eq:limit1}
  \end{equation} 
  \label{prop:parallel1}
\end{proposition}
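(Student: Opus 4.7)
My plan is to fix the geometry of the two lanes first, then enforce the minimum-distance constraint to determine the spacing along each lane, and finally count arrivals to read off $f(T)$ and its limit.

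I would place the two parallel lanes at $y = s$ and $y = -s$, that is, at the maximum perpendicular separation for which each lane still meets the target disk of radius $s$. Pushing them further apart would prevent one of them from reaching the target, while pushing them closer only increases the required $x$-offset between consecutive inter-lane neighbours and hence reduces throughput. On each lane I would pack the robots uniformly with some spacing $D$ to be fixed, and interleave the two lanes so that successive arrivals at the target alternate, with lane 2 lagging lane 1 by $D/2$ in the direction of motion; this is the compact-lanes configuration of Figure \ref{fig:smalltargetarrea}.

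Next I would enforce the minimum-distance constraint. The closest pair across the two lanes lies at Euclidean distance $\sqrt{(D/2)^{2} + (2s)^{2}}$, and setting this equal to $d$ gives $D = 2\sqrt{d^{2} - 4s^{2}}$. Same-lane neighbours are then at distance $D$, and the hypothesis $s \le \tfrac{\sqrt{3}}{4}d$ is precisely the condition $D \ge d$, so the configuration is feasible. (When $s$ exceeds $\tfrac{\sqrt{3}}{4}d$, same-lane crowding forces a larger spacing, which is why that regime needs the separate treatment of Proposition \ref{prop:parallel2}.) Having fixed $D$, I would count arrivals: setting $t=0$ at the first arrival on lane 1, subsequent lane-1 arrivals occur at times $kD/v$ for $k\ge 1$ and lane-2 arrivals at times $(k-\tfrac12)D/v$ for $k\ge 1$. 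The number of lane-1 arrivals in $(0,T]$ is therefore $\lfloor vT/D\rfloor$, and the number of lane-2 arrivals in $[0,T]$ is $\lfloor vT/D + \tfrac12\rfloor$. Substituting $D = 2\sqrt{d^{2}-4s^{2}}$ and applying Definition \ref{def:throughput2} produces (\ref{eq:giventime1}); letting $T\to\infty$ and using $\lfloor x\rfloor/x\to 1$ in each term yields the asymptotic value $v/(d\sqrt{1-(2s/d)^{2}})$ in (\ref{eq:limit1}).

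The step I expect to be the real obstacle is the optimality argument hidden inside the geometric setup: one must argue that, among all two-parallel-lane strategies respecting the minimum distance, the choice of separation $2s$ together with a half-period $D/2$ phase offset between the lanes is what maximises the arrival rate. Once that is in place, the rest reduces to straightforward Euclidean geometry for $D$ and careful floor-function bookkeeping for the arrival count.
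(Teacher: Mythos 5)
Your proposal is correct and follows essentially the same route as the paper: the same geometric identity $d_p=\sqrt{d^2-(2s)^2}$ with same-lane spacing $2d_p$, the same feasibility check that $s\le\frac{\sqrt{3}}{4}d$ gives $d_e\ge d$, the same per-lane arrival counts $\lfloor vT/d_e\rfloor$ and $\lfloor vT/d_e+\frac12\rfloor$, and the same floor-function limit. The "optimality of the $2s$ separation and half-period offset" step you flag as the real obstacle is not actually needed: the proposition only asserts the throughput \emph{of} the compact-lanes configuration of Figure \ref{fig:smalltargetarrea}, not that this configuration is optimal among all two-lane strategies.
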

\begin{proof}
  Consider Figure \ref{fig:smalltargetarrea}. The distance between the lanes is $2 s$, and the minimum distance between two robots is $d$. Thus, 
  $d_p = \sqrt{d^2 - (2 s)^2}.$
  Hence, the distance between two robots in the same lane is 
  $d_e = 2 d_p = 2 \sqrt{d^2 - (2 s)^2}.$
  
  The distance between two robots in the same lane must not be less than $d$, so $d_{e} \ge d.$ This is true, because as we have that $0 < s \le \frac{\sqrt{3}}{4}d$, 
  \if\shortVersion 1
   $
        d_e =   2 \sqrt{d^2 - (2 s)^2} 
            \ge 2\sqrt{d^2 - \left(2 \frac{\sqrt{3}}{4}d \right)^2}
            = d.
   $
  \else
\ifexpandexplanation

    $$
      \begin{aligned}
        d_e &=   2 \sqrt{d^2 - (2 s)^2} 
            \ge 2\sqrt{d^2 - \left(2 \frac{\sqrt{3}}{4}d \right)^2}\\ 
            &=   2\sqrt{d^2 - \left( \frac{\sqrt{3}}{2}d \right)^2}\\
            &=   2\sqrt{d^2 - \frac{3}{4}d^{2}}\\
            &=   2\sqrt{\frac{1}{4}d^{2}}\\
            &= d.
      \end{aligned}
    $$   
\else

   $$
     d_e =   2 \sqrt{d^2 - (2 s)^2} 
        \ge 2\sqrt{d^2 - \left(2 \frac{\sqrt{3}}{4}d \right)^2}
         = d.
   $$
\fi
  \fi

  Without loss of generality, assume the first robot to reach the target area being at the top lane in Figure \ref{fig:smalltargetarrea}. The number of robots on any lane is the integer division of the size of the lane by the offset between the robots plus one (because we are including the first robot in this counting). Therefore, the number of robots for a given time $T$ in the top lane is  $N_{1}(T) = \left \lfloor \frac{vT}{d_e} + 1\right \rfloor$ and in the bottom lane is $N_{2}(T) = \left \lfloor \frac{vT - d_p}{d_e} + 1\right \rfloor = \left \lfloor \frac{vT}{d_e} + \frac{1}{2}\right \rfloor$. By Definition \ref{def:throughput2},
    \if\shortVersion 1
      $f(T) = \frac{N_{1}(T) + N_{2}(T) - 1}{T} = \frac{1}{T} \left(\left \lfloor \frac{vT}{2 \sqrt{d^2 - (2 s)^2}} \right \rfloor + \left \lfloor \frac{vT}{2 \sqrt{d^2 - (2 s)^2}} + \frac{1}{2}\right \rfloor \right ).$
    \else
      $$f(T) = \frac{N_{1}(T) + N_{2}(T) - 1}{T} = \frac{1}{T} \left(\left \lfloor \frac{vT}{2 \sqrt{d^2 - (2 s)^2}} \right \rfloor + \left \lfloor \frac{vT}{2 \sqrt{d^2 - (2 s)^2}} + \frac{1}{2}\right \rfloor \right ).$$ 
    \fi
  By the definition of the floor function, $\lfloor x \rfloor = x - frac(x)$ with $0 \le frac(x) < 1$. Thus, 
  \if\shortVersion 1
    $
        \lim_{T \to \infty} f(T) 
          = \frac{v}{d\sqrt{1-(2s/d)^{2}}},
    $
  \else
\ifexpandexplanation    
    $$
      \begin{aligned}
        \lim_{T \to \infty} f(T) 
            &= \lim_{T \to \infty} \frac{1}{T} \left(\left \lfloor \frac{vT}{ d_e} \right \rfloor + \left \lfloor \frac{vT}{d_{e}} + \frac{1}{2}\right \rfloor \right ) 
            \\
          &= \lim_{T \to \infty} \frac{1}{T} \left( \frac{vT}{ d_e} - frac\left(\frac{vT}{ d_e}\right) + \frac{vT}{d_{e}} + \frac{1}{2} - frac \left ( \frac{vT}{d_{e}} + \frac{1}{2}\right ) \right )\\ 
      \end{aligned}
    $$ 
    $$ 
      \begin{aligned} 
            &= \lim_{T \to \infty} \frac{1}{T}\left(\frac{2vT}{ d_e}\right) + \frac{1}{T} \left(  - frac\left(\frac{vT}{ d_e}\right)  + \frac{1}{2} - frac \left ( \frac{vT}{d_{e}} + \frac{1}{2}\right ) \right )\\ 
          &= \frac{2v}{d_{e}} 
            \\&= \frac{2v}{2\sqrt{d^{2}-(2s)^{2}}} 
          = \frac{v}{d\sqrt{1-(2s/d)^{2}}},
      \end{aligned}
    $$
\else 
    $$
      \begin{aligned}
        \lim_{T \to \infty} f(T) 
          &= \lim_{T \to \infty} \frac{1}{T} \left( \frac{vT}{ d_e} - frac\left(\frac{vT}{ d_e}\right) + \frac{vT}{d_{e}} + \frac{1}{2} - frac \left ( \frac{vT}{d_{e}} + \frac{1}{2}\right ) \right )\\ 
          &= \frac{2v}{d_{e}} 
          = \frac{v}{d\sqrt{1-(2s/d)^{2}}},
      \end{aligned}
    $$
\fi 
  \fi
  as $\displaystyle \lim_{T\to \infty} \frac{frac(x)}{T} = 0$, for any $x$.
\end{proof}

\begin{proposition}
  Assume two parallel lanes with robots at maximum speed $v$ and maintaining a minimum distance $d$ between them. The throughput of a common target area with radius $\frac{\sqrt{3}}{4}d <s < \frac{d}{2}$ at a given time $T$ after the first robot has reached the target area  is 
  \begin{equation}
    f(T) = \frac{1}{T} \left(\left \lfloor \frac{vT}{d} \right \rfloor + \left \lfloor \frac{vT}{d} + \frac{1}{2}\right \rfloor \right ) 
    \label{eq:giventime2}
  \end{equation}
  and is limited by
  \begin{equation}
    f = \lim_{T\to \infty} f(T) = \frac{2v}{d}. 
    \label{eq:limit2}
  \end{equation}
  \label{prop:parallel2}
\end{proposition}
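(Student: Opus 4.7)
The plan is to adapt the proof of Proposition \ref{prop:parallel1} to the new regime $\frac{\sqrt{3}}{4}d < s < \frac{d}{2}$, with a single conceptual modification. In the previous regime, the within-lane longitudinal spacing $d_e = 2\sqrt{d^2 - (2s)^2}$ arose from placing robots so that each robot in one lane was exactly at distance $d$ from its nearest neighbour in the opposite lane; this spacing automatically satisfied $d_e \ge d$. For $s > \frac{\sqrt{3}}{4}d$, however, that same formula yields $d_e < d$, which would violate the minimum-distance constraint between consecutive robots on the same lane. Therefore the binding constraint switches: the within-lane spacing is forced up to $d_e = d$, and the cross-lane constraint becomes slack.

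I would then verify that the natural staggered configuration, with longitudinal offset $d/2$ between the two lanes, is feasible in this regime. A top-lane robot and its nearest bottom-lane neighbour are at Euclidean distance $\sqrt{(d/2)^2 + (2s)^2}$, which is at least $d$ precisely when $(2s)^2 \ge 3d^2/4$, i.e.\ when $s \ge \frac{\sqrt{3}}{4}d$; this holds throughout the range considered. Hence both the within-lane distance $d_e = d$ and the cross-lane distance are respected.

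From here the counting mirrors Proposition \ref{prop:parallel1}. Placing the first arrival on the top lane at time $0$, the top lane produces arrivals at $0, d/v, 2d/v, \ldots$, giving $N_1(T) = \lfloor vT/d + 1 \rfloor$; the bottom lane, delayed by $d/(2v)$, gives $N_2(T) = \lfloor vT/d + 1/2 \rfloor$. Definition \ref{def:throughput2} then yields $f(T) = (N_1(T)+N_2(T)-1)/T$, which simplifies to (\ref{eq:giventime2}). Taking $T \to \infty$ and using $\lfloor x \rfloor = x - \mathrm{frac}(x)$ together with $\lim_{T\to\infty}\mathrm{frac}(x)/T = 0$ (exactly as in the previous proof) collapses the expression to $2v/d$, giving (\ref{eq:limit2}).

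The only genuinely new step is the observation that the binding constraint has switched from cross-lane to within-lane, which fixes $d_e = d$ and motivates the staggered offset $d/2$; everything else is a routine transcription of the Proposition \ref{prop:parallel1} argument. I expect no serious obstacle beyond making this constraint swap explicit and cleanly checking the feasibility inequality $\sqrt{(d/2)^2 + (2s)^2} \ge d$.
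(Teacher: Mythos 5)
Your proposal is correct and follows essentially the same route as the paper: the paper likewise fixes the longitudinal offset at $d_p = d/2$ (so the within-lane spacing becomes $d_e = d$), notes that the resulting triangle of two same-lane robots and one opposite-lane robot has the cross-lane distance at least $d$ precisely because $s \ge \frac{\sqrt{3}}{4}d$, and then repeats the counting of Proposition~\ref{prop:parallel1}. Your write-up merely makes explicit the constraint swap and the feasibility inequality that the paper states more tersely.
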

\begin{proof}
  As the distance between the robots must be at least $d$ and $\frac{\sqrt{3}}{4}d < s < \frac{d}{2}$, we can assign $d_{p} = d/2$ in Figure \ref{fig:smalltargetarrea}. By doing so, two robots side by side in one lane and a robot in the other lane form an equilateral triangle with side measuring $d$, whose height has size $\frac{\sqrt{3}}{2}d$. Hence, the minimum diameter of the circular target region must be  this value, and the hypothesis says so. 
  
  Also, the radius of the target area is less than $d/2,$ implying that the three robots in Figure \ref{fig:smalltargetarrea} must stay in the equilateral triangle formation because the two lanes cannot be far by $d$ units of distance. 
  
  Thus, the throughput for a given time $T$ is calculated similarly as in Proposition \ref{prop:parallel1} resulting 
  \if\shortVersion 1
    $f(T) = \frac{1}{T} \left(\left \lfloor \frac{vT}{d} \right \rfloor + \left \lfloor \frac{vT}{d} + \frac{1}{2}\right \rfloor \right )$  and   $f = \lim_{T\to \infty} f(T) = \frac{2v}{d}.$
  \else
    $$f(T) = \frac{1}{T} \left(\left \lfloor \frac{vT}{d} \right \rfloor + \left \lfloor \frac{vT}{d} + \frac{1}{2}\right \rfloor \right ) \text{ and }  f = \lim_{T\to \infty} f(T) = \frac{2v}{d}.$$
  \fi
\end{proof}

Observe that if we use $T = k \frac{d}{v}$ for any $0 < k \in \Zeta$ in (\ref{eq:giventime2}), the compact lanes strategy can achieve the throughput of two parallel lanes of robots going in the direction of the target region when $T=k \frac{d}{v}$ for any $k \in \Zeta$ or when $T \to \infty$, even though two robots cannot reach the target region at the same time.

\subsection{Large target area: $s \geq d/2$}

We now focus on situations where more than two robots can simultaneously touch the target. 
In this section, we will present three feasible strategies. 

The simplest strategy is to consider several parallel lanes being at a distance $d$ from each other. However, it is possible to obtain higher throughput. In particular, we identify two other strategies: (a) using parallel straight line lanes that may be distanced lower than $d$; and (b) robots moving towards the target following curved trajectories. The strategy (a) uses more than two compact lanes, extending the strategy presented in the previous section. By doing this, the robots fit in a hexagonal packing arrangement moving towards the target region. The strategy (b) uses a touch and run approach. In it, robots do not cross the target area, they only reach it and return by the opposite direction using curved trajectories which respect the minimum distance $d$.

We start with the parallel lanes strategy, which has the lowest asymptotic throughput over the strategies presented in this section, for comparison with the other strategies. In particular, it will be used later as a justification for the lowest number of lanes used in the strategy (b) in (\ref{eq:Kbounds}) in Proposition \ref{prop:Kboundsrk}. Following their description and properties, a discussion comparing them is provided.

\subsubsection{Parallel lanes}

Here we consider the robots moving inside lanes. 
The lanes are straight lines, and the linear velocity $v$ of the robots is constant.
We consider the lanes being separated by a distance $d$ and each robot maintaining a distance $d$ from each other. 

\begin{proposition}
Assume a circular target region with centre at $(0,0)$ and radius $s \ge \frac{d}{2}$ and parallel lanes starting at $(s,s-(i-1)d)$ for $i\in \{1, \dots, \left\lfloor \frac{2s}{d} \right\rfloor+1\}$. At each Lane $i$, the first robot is located at the point $(s,s-(i-1)d)$ in the starting configuration. Then, the first robot to reach the target is located at $(s, s-(J-1)d)$, for 
\if\shortVersion 1
  $
    J=
      \left\lfloor\frac{s}{d}\right\rfloor + 1$, 
         if  $\left\vert s - \left\lfloor\frac{s}{d}\right\rfloor d\right\vert \le \left\vert s - \left\lceil\frac{s}{d}\right\rceil d\right\vert $, 
      otherwise
      $J=\left\lceil\frac{s}{d}\right\rceil + 1$. 
\else
  $$
    J=
    \begin{cases}
      \left\lfloor\frac{s}{d}\right\rfloor + 1,
        & \text{ if } \left\vert s - \left\lfloor\frac{s}{d}\right\rfloor d\right\vert \le \left\vert s - \left\lceil\frac{s}{d}\right\rceil d\right\vert , \\
      \left\lceil\frac{s}{d}\right\rceil + 1, 
        & \text{ otherwise.}
    \end{cases}
  $$
\fi
The throughput for a given time $T$ after the first robot reaches the target region is 
\begin{equation}
  f_{p}(T) =  \frac{1}{T}\left(\sum_{i=1}^{\left\lfloor\frac{2s}{d}\right\rfloor+1} N_{i}(T)\right) - \frac{1}{T},
  \label{eq:parallelT}
\end{equation}
for
\if\shortVersion 1
  $
    N_{i}(T) = 
      \left\lfloor \frac{vT - d_{i} + d_{J} }{d}  + 1 \right\rfloor$,
         if  $T \ge \frac{d_{i}-d_{J}}{v}$, otherwise,
      $N_{i}(T) = 0, $
  $d_{j} = s - \sqrt{s^{2} - (s-(j-1)d)^{2}}$,  and 
\else 
  $$
    N_{i}(T) = 
    \begin{cases}
      \left\lfloor \frac{vT - d_{i} + d_{J} }{d}  + 1 \right\rfloor,
        & \text{ if } T \ge \frac{d_{i}-d_{J}}{v},\\
      0, & \text{ otherwise,}
    \end{cases}
  $$
  $$d_{j} = s - \sqrt{s^{2} - (s-(j-1)d)^{2}}, \text{ and }$$ 
\fi
\begin{equation} 
f_{p} = \lim_{T \to \infty} f_{p}(T) =  \left\lfloor \frac{2s}{d} + 1 \right\rfloor  \frac{v}{d}.
\label{eq:parallelLimit}
\end{equation}
\label{methodology:bigtarget:independent_straight:proof}
\end{proposition}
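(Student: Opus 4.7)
The plan is to reduce the problem to a per-lane arrival count. Each Lane $i$ is the horizontal line $y = y_i := s - (i-1)d$, with the first robot starting at $(s, y_i)$ and moving leftward at speed $v$. The robot enters the target exactly when $(s - vt)^2 + y_i^2 \le s^2$, i.e., at time $d_i/v$ where $d_i = s - \sqrt{s^2 - y_i^2}$. A preliminary check that $|y_i| \le s$ for every admissible $i$ in $\{1, \ldots, \lfloor 2s/d\rfloor + 1\}$ ensures every lane actually intersects the circle and the square root is well defined.

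To identify the first lane to reach the target, observe that $d_i$ is a monotone function of $y_i^2$, so minimising $d_i$ over $i$ amounts to choosing the integer $i - 1 \in \{0, 1, \ldots, \lfloor 2s/d\rfloor\}$ whose product with $d$ is closest to $s$. This integer is either $\lfloor s/d \rfloor$ or $\lceil s/d \rceil$, and the stated case distinction for $J$ is exactly the comparison $|s - \lfloor s/d \rfloor d|$ versus $|s - \lceil s/d \rceil d|$.

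With the time origin reset to the arrival of the first robot on Lane $J$, the first robot on Lane $i$ arrives at relative time $(d_i - d_J)/v \ge 0$, and robots on the same lane arrive periodically at spacing $d/v$ thereafter, because they maintain the minimum distance $d$ at constant speed $v$. Hence the number of arrivals on Lane $i$ by time $T$ is zero if $T < (d_i - d_J)/v$ and otherwise $\lfloor (vT - d_i + d_J)/d \rfloor + 1$, which coincides with the stated $N_i(T)$ via the identity $\lfloor x + 1 \rfloor = \lfloor x \rfloor + 1$. Summing $N_i(T)$ over all lanes and subtracting one for the uncounted first arrival (per Definition \ref{def:throughput2}) yields (\ref{eq:parallelT}).

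The asymptotic formula (\ref{eq:parallelLimit}) then follows by writing each floor as $\lfloor x \rfloor = x - \mathrm{frac}(x)$, dividing by $T$, and letting $T \to \infty$: the dominant $vT/d$ contribution in each $N_i(T)$ gives $v/d$, while the $-1/T$ term and the bounded fractional residuals all vanish. Summing over the $\lfloor 2s/d \rfloor + 1 = \lfloor 2s/d + 1 \rfloor$ lanes delivers the claimed limit. The only nontrivial step is the case analysis defining $J$; once that is settled, the counting and the limit are routine bookkeeping already employed in Propositions \ref{prop:parallel1} and \ref{prop:parallel2}.
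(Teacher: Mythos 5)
Your proposal is correct and follows essentially the same route as the paper's proof: compute the extra distance $d_i$ each lane's lead robot must travel (you do it via the circle inequality, the paper via an explicit right-triangle figure, which is the same computation), identify $J$ by minimising $d_i$ over the two integer candidates nearest $s/d$, count per-lane arrivals with period $d/v$ after the offset $(d_i - d_J)/v$, and pass to the limit termwise over the fixed number of lanes. No gaps; the minor presentational differences (monotonicity of $d_i$ in $y_i^2$ versus the paper's chain of equivalences, and your explicit remark that $\lfloor 2s/d\rfloor + 1 = \lfloor 2s/d + 1\rfloor$) do not change the argument.
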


\begin{proof}
  \begin{figure}[t]
    \centering
    \includegraphics[width=0.65\columnwidth]{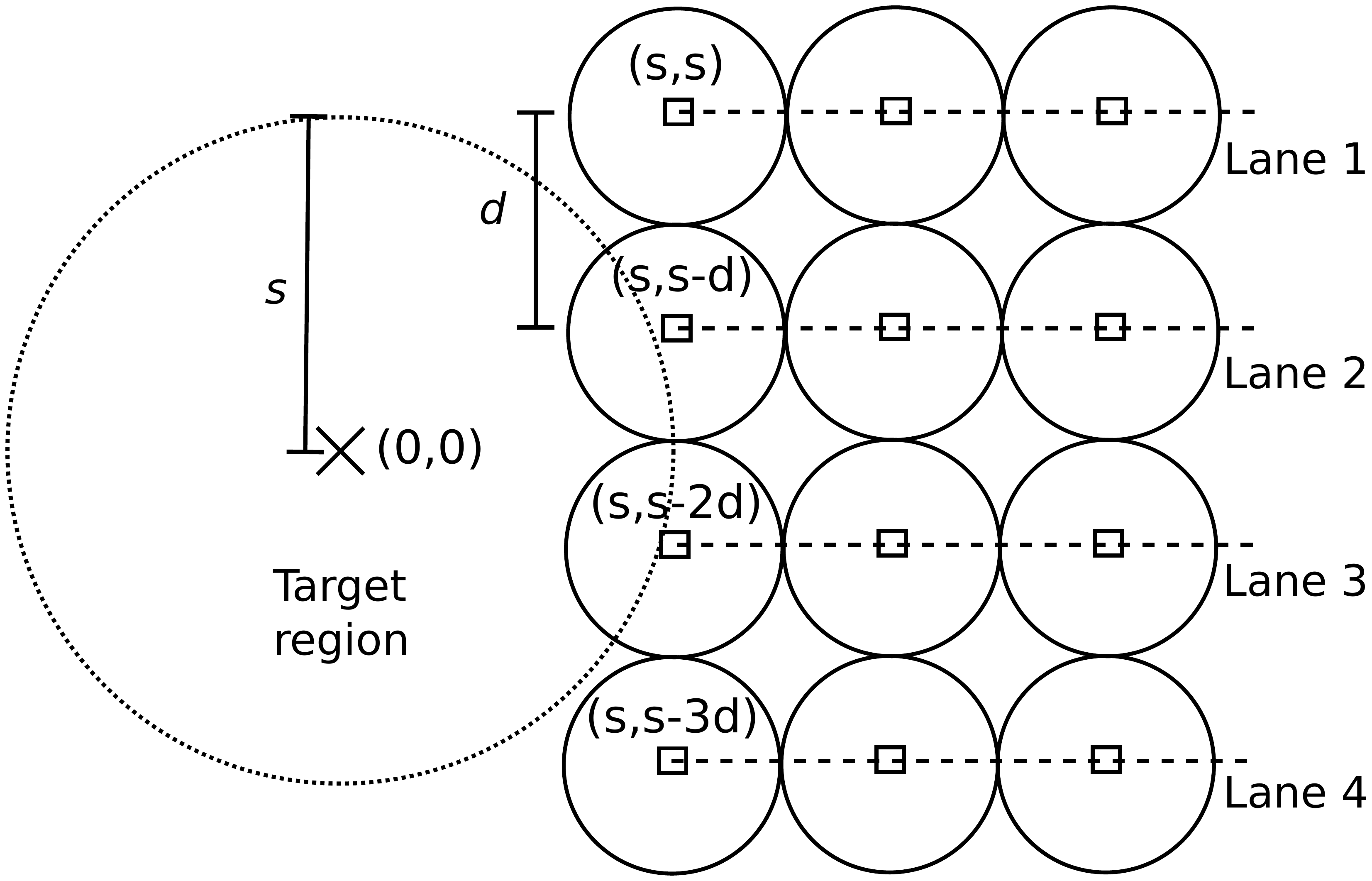}
    \caption{Example of the parallel lanes strategy. Robots and parallel lanes are distant by $d$. The first lane, Lane 1, is at the top. The first robot of each lane is located at $(s,s-(i-1)d)$ for the Lane $i$.}
    \label{fig:parallellanes}
  \end{figure} 
  
  When robots move in straight lines in a single lane, we know from Proposition \ref{prop:optimal_throughput_straight_line_punctual_target} that the optimal throughput is $\frac{v}{d}$. Since $s \geq \frac{d}{2}$, we can have multiple straight line lanes that are parallel to each other (Figure \ref{fig:parallellanes}). 
  
  \begin{figure}[t]
    \centering
    \includegraphics[width=0.55\columnwidth]{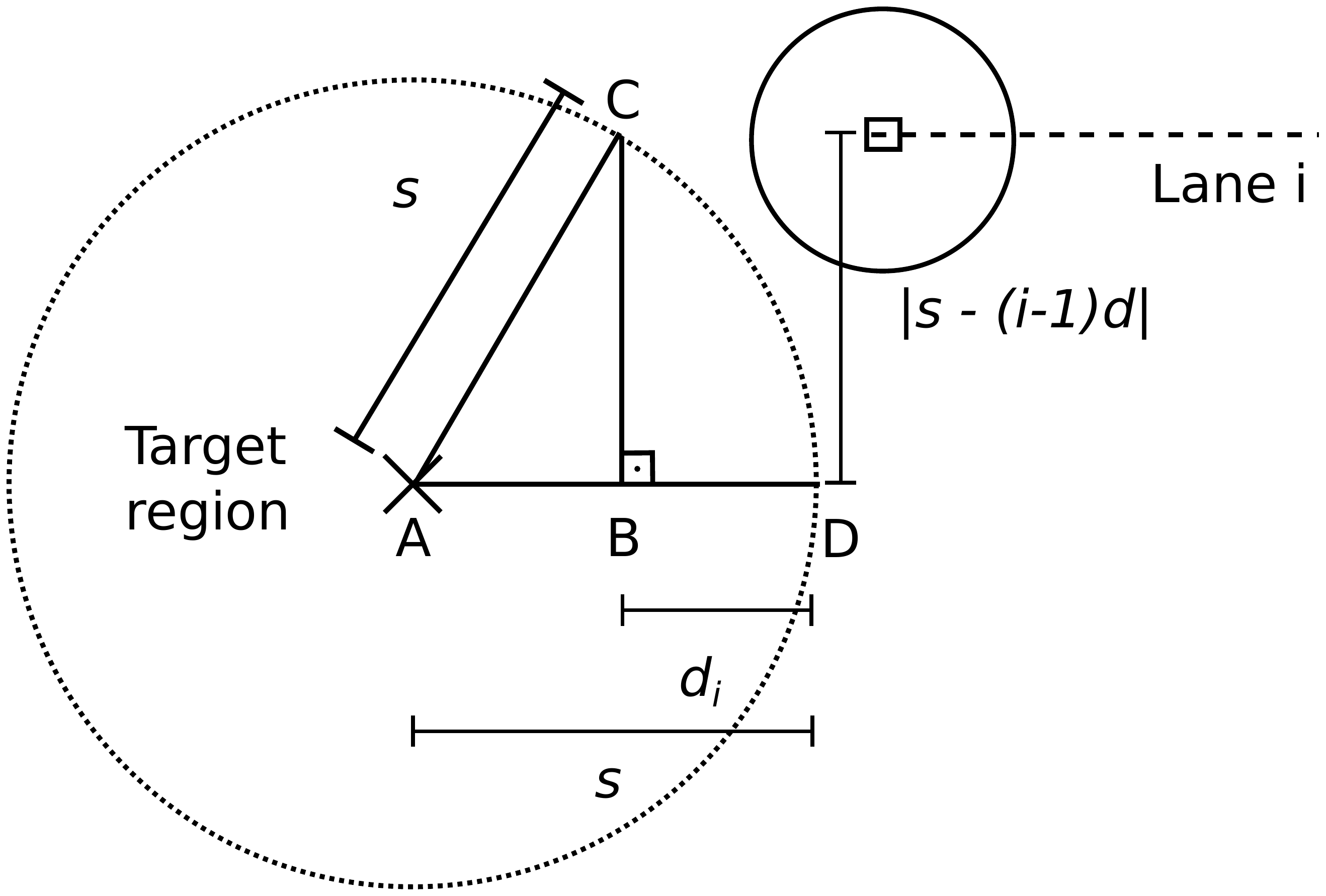}
    \caption{The distance from the target region to a robot at the beginning of the Lane $i$ is equal to $d_{i}$ (represented by $\overline{BD}$).}
    \label{fig:RobotFarCentre}
  \end{figure} 
  
  As the robots are going to a circular target region, robots next to  the centre reach the region in shorter time than the others. The first robot of each lane must run an additional distance $d_{i}$ from the beginning of its lane, which is related to its y-coordinate.  Figure \ref{fig:RobotFarCentre} illustrates this distance for a robot in Lane $i$. The right triangle $ABC$ has hypotenuse $\overline{AC}$ measuring $s$, so the horizontal cathetus $\overline{AB}$ measures $\sqrt{s^{2} - (s-(i-1)d)^{2}}$. Consequently, the  robot in the Lane $i$ needs to walk an additional distance represented by $\overline{BD}$, which  has $d_{i} = \vert \overline{BD}\vert = s - \vert \overline{AB}\vert = s - \sqrt{s^{2} - (s-(i-1)d)^{2}}$ units of length. 
  
  This distance is minimised when $\vert \overline{BD}\vert = 0$, that would happen if $i = \frac{s}{d}+1$. However, $i$ must be integer, so $\vert \overline{BD}\vert $  is minimised for an integer $J$ that minimises $d_{J}$.  If $\frac{s}{d}\notin \Zeta$, the two nearest integers  are $\left\lfloor\frac{s}{d}\right\rfloor$ and $\left\lceil\frac{s}{d}\right\rceil$. Thus, if $J = \left\lfloor\frac{s}{d}\right\rfloor + 1$ then, equivalently,
  \if\shortVersion 1
    $d_{\left\lfloor\frac{s}{d}\right\rfloor+1} \le d_{\left\lceil\frac{s}{d}\right\rceil+1} \LR$ 
    $\left\vert s - \left\lfloor\frac{s}{d}\right\rfloor d\right\vert \le \left\vert s - \left\lceil\frac{s}{d}\right\rceil d\right\vert $. Thus,
    $
      J= \left\lfloor\frac{s}{d}\right\rfloor + 1,
    $        
    if  $\left\vert s - \left\lfloor\frac{s}{d}\right\rfloor d\right\vert \le \left\vert s - \left\lceil\frac{s}{d}\right\rceil d\right\vert $, otherwise, $J=
        \left\lceil\frac{s}{d}\right\rceil + 1.$
  \else  
    $d_{\left\lfloor\frac{s}{d}\right\rfloor+1} \le d_{\left\lceil\frac{s}{d}\right\rceil+1} \LR$ 
    $s - \sqrt{s^{2} - \left(s - \left\lfloor\frac{s}{d}\right\rfloor d\right)^{2}} \le$ $ s - \sqrt{s^{2} -\left(s - \left\lceil\frac{s}{d}\right\rceil d\right)^{2}}  \LR $
    $ \sqrt{s^{2} -\left(s - \left\lceil\frac{s}{d}\right\rceil d\right)^{2}} \le \sqrt{s^{2} - \left(s - \left\lfloor\frac{s}{d}\right\rfloor d\right)^{2}}\LR $ 
    $ s^{2} -\left(s - \left\lceil\frac{s}{d}\right\rceil d\right)^{2} \le s^{2} - \left(s - \left\lfloor\frac{s}{d}\right\rfloor d\right)^{2}\LR $ 
    $\left(s - \left\lfloor\frac{s}{d}\right\rfloor d\right)^{2} \le \left(s - \left\lceil\frac{s}{d}\right\rceil d\right)^{2} \LR $
    $\left\vert s - \left\lfloor\frac{s}{d}\right\rfloor d\right\vert \le \left\vert s - \left\lceil\frac{s}{d}\right\rceil d\right\vert $. Thus,
    $$ 
      J=
      \begin{cases}
        \left\lfloor\frac{s}{d}\right\rfloor + 1,
          & \text{ if } \left\vert s - \left\lfloor\frac{s}{d}\right\rfloor d\right\vert \le \left\vert s - \left\lceil\frac{s}{d}\right\rceil d\right\vert , \\
        \left\lceil\frac{s}{d}\right\rceil + 1, 
          & \text{ otherwise.}
      \end{cases}
    $$
  \fi
  
  Let $N(T)$ be the number of robots that arrive at the target region until a given time $T$ after the first robot has reached it. Thus,
  \if\shortVersion 1
    $
      N(T) = \sum_{i=1}^{\left\lfloor\frac{2s}{d}\right\rfloor+1} N_{i}(T),
    $
  \else
    $$
      N(T) = \sum_{i=1}^{\left\lfloor\frac{2s}{d}\right\rfloor+1} N_{i}(T),
    $$
  \fi
  for the number of robots at Lane $i$, $N_{i}(T)$, that arrived at the target region by  time $T$. As every robot has the same linear velocity and started at the same x-coordinate, when the first robot at Lane $J$ reaches the target region, all robots have run $d_{J}$ units of length. Hence, at each Lane $i$, instead of running an additional $d_{i}$ to reach the target region, they need to run $d_{i} - d_{J}$. Consequently,
  \if\shortVersion 1 
    $ N_{i}(T) = 
      \left\lfloor \frac{vT - (d_{i} - d_{J}) }{d}  + 1 \right\rfloor, $
        if  $T \ge \frac{d_{i}-d_{J}}{v},$ otherwise, 
      $N_{i}(T) = 0,$
  \else
    $$
      N_{i}(T) = 
      \begin{cases}
        \left\lfloor \frac{vT - (d_{i} - d_{J}) }{d}  + 1 \right\rfloor,
          & \text{ if } T \ge \frac{d_{i}-d_{J}}{v},\\
        0, & \text{ otherwise,}
      \end{cases}
    $$ 
  \fi
  and, by Definition \ref{def:throughput2},
  \if\shortVersion 1 
    $f_{p}(T) = \frac{N(T)-1}{T} = \frac{1}{T}\left(\sum_{i=1}^{\left\lfloor\frac{2s}{d}\right\rfloor+1} N_{i}(T)\right) - \frac{1}{T}.$
  \else
    $$f_{p}(T) = \frac{N(T)-1}{T} = \frac{1}{T}\left(\sum_{i=1}^{\left\lfloor\frac{2s}{d}\right\rfloor+1} N_{i}(T)\right) - \frac{1}{T}.$$
  \fi
  Also, 
  \if\shortVersion 1 
    $
        f_{p} 
          = \lim _{ T \to \infty} f_{p}(T)
          = \left\lfloor\frac{2s}{d} + 1\right\rfloor\frac{v}{d},
    $
  \else
    $$
      \begin{aligned}
        f_{p} 
          &= \lim _{ T \to \infty} f_{p}(T)
          = \lim _{ T \to \infty} \left( \frac{1}{T}\left(\sum_{i=1}^{\left\lfloor\frac{2s}{d}\right\rfloor+1} N_{i}(T)\right) - \frac{1}{T} \right)
          \\& = \lim _{ T \to \infty} \frac{1}{T}\sum_{i=1}^{\left\lfloor\frac{2s}{d}\right\rfloor+1} N_{i}(T) 
          = \lim _{ T \to \infty} \sum_{i=1}^{\left\lfloor\frac{2s}{d}\right\rfloor+1} \frac{N_{i}(T)}{T} 
          \\&= \lim _{ T \to \infty} \sum_{i=1}^{\left\lfloor\frac{2s}{d}\right\rfloor+1} \frac{1}{T}\left\lfloor \frac{vT - d_{i} + d_{J} }{d}  + 1 \right\rfloor
            \hspace{1.9cm} \left[\text{as } T\to \infty \Rightarrow T \ge \frac{d_{i}-d_{J}}{v}\right]
          \\&= \lim _{ T \to \infty} \sum_{i=1}^{\left\lfloor\frac{2s}{d}\right\rfloor+1} \frac{1}{T}\left( \frac{vT - d_{i} + d_{J} }{d}  + 1 - frac\left(\frac{vT - d_{i} + d_{J} }{d}  + 1\right)\right)
          \\&= \lim _{ T \to \infty} \sum_{i=1}^{\left\lfloor\frac{2s}{d}\right\rfloor+1} \left( \frac{v}{d} - \frac{d_{i} - d_{J} }{d T}  + \frac{1}{T}\right) - \lim _{ T \to \infty} \frac{1}{T} \sum_{i=1}^{\left\lfloor\frac{2s}{d}\right\rfloor+1} frac\left(\frac{vT - d_{i} + d_{J} }{d}  + 1\right)
          \\&= \left\lfloor\frac{2s}{d} + 1\right\rfloor\frac{v}{d},
      \end{aligned}
    $$ 
  \fi
  as $frac$ and $d_{i}$ are bounded for every $i$ due to $0 \le d_{i} \le s$ and $0 \le frac(x) < 1$ for any $x$.
\end{proof}

\subsubsection{Hexagonal packing}

By extending the compact lanes to more than two lanes, the robots will be packed in a hexagonal formation. An illustration of this strategy is shown in Figure \ref{fig:targethexagonalpacking}. As we can see, robots from different lanes are still able to move towards the target keeping a distance $d$ from each other, even though the lanes have a distance lower than $d$. 

We first compute an upper bound of the asymptotic throughput for the \emph{hexagonal packing} strategy, then we calculate the throughput for a given time using this strategy.

\begin{figure}
  \centering
  \includegraphics[width=0.65\columnwidth]{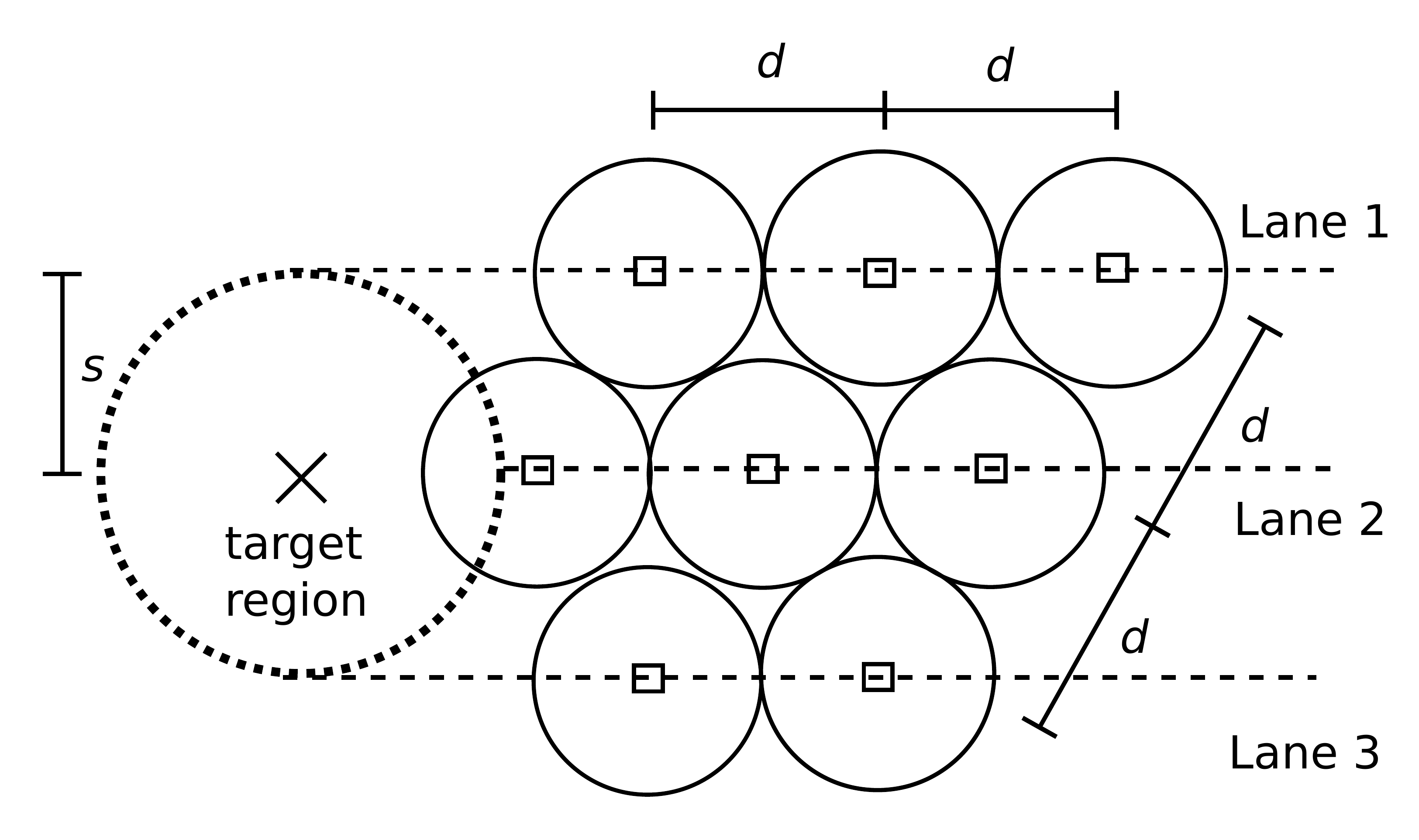}
  \caption{Robot lanes for hexagonal packing.}
  \label{fig:targethexagonalpacking}
\end{figure}

\begin{proposition}
Assume robots moving at speed $v$, going to 
  a circular target of radius $s$.
The upper bound of the asymptotic throughput for the hexagonal packing strategy is 
\begin{equation}
f_{h}^{max} = 
\frac{2}{\sqrt{3}}
\left(
\frac{2 s}{d} + 1
\right)
\frac{v}{d}.
\label{eq:limitnoangle}
\end{equation}
\label{prop:triangularthroughput}
\end{proposition}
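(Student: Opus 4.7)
The plan is to upper-bound the asymptotic throughput by decomposing the hexagonal-packing arrangement into parallel rows feeding the target disk and bounding each row separately, then multiplying.

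First, I would set up the geometry. When robots maintain mutual distance at least $d$ and sit in a hexagonal packing, consecutive rows are separated by $h = \frac{\sqrt{3}}{2}d$ (the altitude of an equilateral triangle of side $d$), and successive robots within the same row are exactly $d$ apart. Treating each row as a single straight-line lane of robots moving at constant speed $v$ with in-lane spacing $d$, Proposition \ref{prop:optimal_throughput_straight_line_punctual_target} (applied to that lane alone) gives a per-row asymptotic arrival rate of at most $v/d$.

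Next, I would bound how many rows can feed the disk of radius $s$. A row whose perpendicular distance from the centre of the disk exceeds $s$ does not intersect the target and hence contributes nothing. Since the rows are equispaced at distance $h$, the number of contributing rows is at most $\left\lfloor \frac{2s}{h} \right\rfloor + 1 = \left\lfloor \frac{4s}{\sqrt{3}\,d} \right\rfloor + 1 \le \frac{4s}{\sqrt{3}\,d} + 1$. To reach the precise form stated in (\ref{eq:limitnoangle}), I would then use the inequality $\frac{4s}{\sqrt{3}\,d} + 1 \le \frac{2}{\sqrt{3}}\!\left(\frac{2s}{d} + 1\right)$, which reduces to the trivially true $1 \le \frac{2}{\sqrt{3}}$. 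Multiplying this row count by the per-row throughput $v/d$ yields
\[
f_{h}^{max} \le \frac{2}{\sqrt{3}}\!\left(\frac{2s}{d} + 1\right)\frac{v}{d}.
\]

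The main obstacle is the rigorous bookkeeping: justifying that every arriving robot can be uniquely attributed to one row so that the contributions add without double counting, and that the hexagonal spacings $h$ and $d$ are truly the minimum compatible with the distance constraint, so that the product of these two per-row bounds genuinely upper-bounds any throughput attainable by this strategy as $T \to \infty$.
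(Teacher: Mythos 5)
Your decomposition is a genuinely different route from the paper's (which bounds the number of robot centres in a $vT\times 2s$ rectangle via the optimal circle-packing density $\pi\sqrt{3}/6$ applied to reserved discs of radius $d/2$), but as written it has a gap: it silently fixes the orientation of the hexagonal lattice relative to the direction of motion. The spacings you use --- robots exactly $d$ apart along rows parallel to the velocity, with rows separated by $h=\frac{\sqrt{3}}{2}d$ --- hold only for the packing angle $\theta=0$. The proposition is used in the paper as an upper bound over \emph{all} packing angles $\theta\in\lbrack 0,\pi/3\rparen$; it is later compared against the $\theta$-dependent bounds of Proposition \ref{prop:hexthroughputbounds}. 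Already at $\theta=\pi/6$ the rows parallel to the motion have in-row spacing $\sqrt{3}d$ and row separation $d/2$, so both factors in your product change; and for a generic angle the lattice points do not organise into rows parallel to the velocity at all (each such line contains at most one robot), so neither the row count nor the per-row appeal to Proposition \ref{prop:optimal_throughput_straight_line_punctual_target} applies. The aligned case you treat does go through: attributing each robot to a unique row is unproblematic, and the algebra $\frac{4s}{\sqrt{3}d}+1\le\frac{2}{\sqrt{3}}\left(\frac{2s}{d}+1\right)$ is correct. But to recover the full statement you would have to redo the decomposition for every lattice direction, or switch to an orientation-invariant argument. The paper's density argument buys exactly that invariance: it never identifies rows, only counts how many non-overlapping discs of radius $d/2$ fit in the inflated rectangle $(vT+d)\times(2s+d)$, which depends only on the minimum-distance constraint and not on how the lattice is rotated.
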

\begin{proof}
Without loss of generality, we consider the target at the origin of a coordinate system and the robots are moving parallel to the $x$-axis. By Definition \ref{def:throughput2}, the throughput considers the number of robots that cross the target during a unit of time, after the first robot has reached it. We evaluate that number of robots, $N_{T}$, during a time $T$. As a result, computing the maximum throughput is reduced to finding the maximum number of robots (their centre of mass) that can fit in a rectangle of width $w(T) = v T$ and height $h = 2 s$ (Figure \ref{fig:recthexagonalpackingp}).

\begin{figure}[t]
  \centering
  \includegraphics[width=0.9\columnwidth]{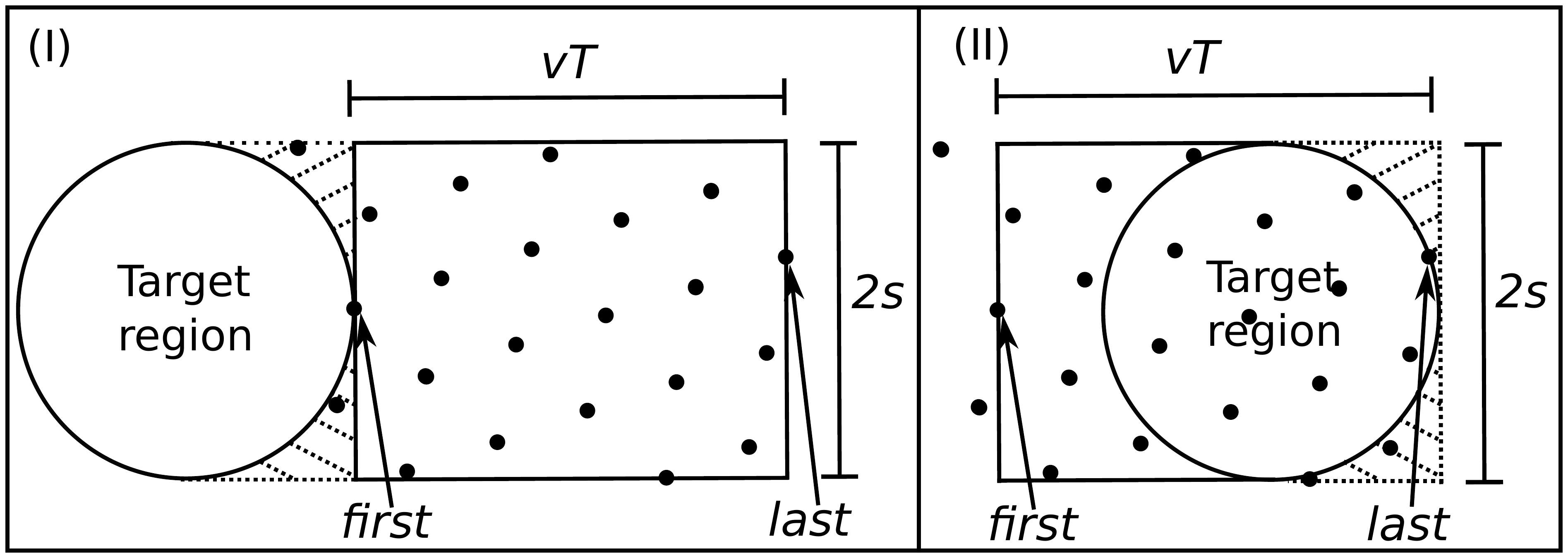}
  \caption{Robots are represented by black dots and they are in hexagonal formation and distant by $d$. In (I), only the first robot reached the target and, in (II), all robots that are not in the dashed area arrived in the target region before the last robot. The robots on the right dashed area should not be counted because their arrival time is greater than the arrival time of the last robot, hence they arrive after the considered time frame $T$. That is,  all robots on the dashed area in (I) should be counted as part of the number of robots that reached the target region in the time between the first and the last robot, while the robots on the dashed area in (II) should not. As these dashed areas have the same value, we consider in this proof the number of robots inside a rectangle $vT \times 2s$. Then, the dashed area used for counting in (II) replaces the unconsidered robots in the dashed area in (I). As we are concerned with $T \to \infty$, any possible difference between the number of robots on the dashed areas of either side due to the configuration of the hexagonal packing is negligible.}
  \label{fig:recthexagonalpackingp}
\end{figure}

Since we have the constraint that robots must be at a minimum distance $d$ from each other, we consider discs of radius $d/2$ as reserved areas for each robot, and any two reserved areas must not intersect. Therefore, the problem is equivalent to finding the optimal arrangement of circles of radius $d/2$ in a rectangle of width $W(T) = w(T) + 2 \frac{d}{2}$ and height $H=h + 2 \frac{d}{2}$. This formulation is a variant of the \textit{circle packing} problem, 
which is already well studied\footnote{See \url{http://packomania.com/} 
and \url{http://hydra.nat.uni-magdeburg.de/packing/crc_var/crc.html} for an informal introduction.}. The term $2 \frac{d}{2}$ was added because the circle packing problem deals with full circles, not their centres.

The optimal surface occupied by the circles divided by the rectangle area was proven to be $\pi \sqrt{3} / 6$ 
in the case of hexagonal packing over an infinite area \citep{chang2010simple}. Thus, the total area occupied by the circles representing the reserved areas of the robots is given by $\left(\pi \sqrt{3} / 6\right) H   W(T)$. Hence, the maximum number of robots $N_{T}$ that can fit inside the $H W(T)$ area is bounded by  $N_{opt}(T) \ge N_{T}$, for 
\if\shortVersion 1
  $
  N_{opt}(T) = 
  \left \lfloor
  \frac{\left(\pi \sqrt{3} / 6\right) H   W(T)   }{\pi d^2 / 4} 
  \right \rfloor = 
  \left \lfloor
  \frac{2H   W(T)   }{\sqrt{3} d^2 }
  \right \rfloor.
  $
\else
  $$
  N_{opt}(T) = 
  \left \lfloor
  \frac{\left(\pi \sqrt{3} / 6\right) H   W(T)   }{\pi d^2 / 4} 
  \right \rfloor = 
  \left \lfloor
  \frac{2H   W(T)   }{\sqrt{3} d^2 }
  \right \rfloor.
  $$
\fi
By Definition \ref{def:throughput2}, the maximum throughput is
\if\shortVersion 1
  $
  f_{h}^{max}(T) =  \frac{N_{opt}(T)-1}{T} = \frac{\left \lfloor
  \frac{2H   W(T)   }{\sqrt{3} d^2 }
  \right \rfloor-1}{T}.
  $
\else
  $$
  f_{h}^{max}(T) =  \frac{N_{opt}(T)-1}{T} = \frac{\left \lfloor
  \frac{2H   W(T)   }{\sqrt{3} d^2 }
  \right \rfloor-1}{T}.
  $$
\fi
As for any $x$, $\lfloor x \rfloor = x - frac(x)$ and $0 \le frac(x) < 1$, the upper bound of the asymptotic throughput is 
\if\shortVersion 1 
  $
  f_{h}^{max} = \lim_{T\to \infty} f_{h}^{max}(T)
    = \frac{2 }{\sqrt{3}}\left(\frac{2s}{d}+1\right)\frac{v}{d}.
  $
\else
  $$
  \begin{aligned}
  f_{h}^{max} &= \lim_{T\to \infty} f_{h}^{max}(T)
    = \lim_{T\to \infty} \frac{\left \lfloor \frac{2H   W(T)   }{\sqrt{3} d^2 } \right \rfloor-1}{T}
    = \lim_{T\to \infty} \frac{2 H   W(T)   }{\sqrt{3} d^2 T}\\
\ifexpandexplanation
    &= \lim_{T\to \infty}  \frac{2(h+d)(W(T)+d) }{\sqrt{3} d^2 T}\\
\fi
    &= \lim_{T\to \infty}  \frac{2(2s+d)(v T+d)  }{\sqrt{3} d^2 T}
    = \lim_{T\to \infty}  \frac{2 }{\sqrt{3}}\frac{2s+d}{d^2}\frac{v T+d}{T}\\
\ifexpandexplanation
    &= \lim_{T\to \infty}  \frac{2 }{\sqrt{3}}\left(\frac{2s}{d^2 }+\frac{d}{d^2 }\right)\left(\frac{v T}{ T} + \frac{d}{T}\right)\\
\fi
    &= \lim_{T\to \infty}  \frac{2 }{\sqrt{3}}\left(\frac{2s}{d^2 }+\frac{1}{d }\right)\left(v + \frac{d}{T}\right)
    = \frac{2 }{\sqrt{3}}\left(\frac{2s}{d^2 }+\frac{1}{d }\right)v
    = \frac{2 }{\sqrt{3}}\left(\frac{2s}{d}+1\right)\frac{v}{d}.\\
  \end{aligned}
  $$
\fi
\end{proof}

Proposition \ref{prop:triangularthroughput} presents an upper bound of the asymptotic throughput using hexagonal packing, but it did not tell us which is the best placement of the robots inside a corridor, since the hexagonal formation can be rotated by different angles. Hence, we are going to present results about the throughput considering the placement of the hexagonal packing inside a corridor of robots going to the target region. First, however, we will need the following definition:

\begin{definition}
  The hexagonal packing angle $\theta$ is the angle formed by the $x$-axis and the line formed by any robot at position $(x,y)$ and its neighbour at $(x + d \cos(\theta), y + d \sin(\theta))$ under the target region reference frame.
\end{definition}

Observe that any robot at $(x,y)$ under the hexagonal packing has at most six neighbours located at $\big(x + d \cos\big(\theta\big), y + d \sin\big(\theta\big)\big),$ $\big(x + d \cos\big(\theta+\frac{\pi}{3}\big), y + d \sin\big(\theta+\frac{\pi}{3}\big)\big),$ $\dots,$ $\big(x + d \cos\big(\theta+\frac{5\pi}{3}\big), y + d \sin\big(\theta+\frac{5\pi}{3}\big)\big)$ (Figure \ref{fig:hexangles}). If $\theta = \frac{\pi}{3}$, putting this value in the previous series results that the first neighbour robot would be at $(x + d \cos(\pi/3), y + d \sin(\pi/3)),$ and the last neighbour robot at $\left(x + d \cos\left(0\right), y + d \sin\left(0\right)\right)$. This is the same result if $\theta = 0$ was used. Consequently, due to this periodicity, we assume hexagonal packing angles in $\lbrack 0,\frac{\pi}{3} \rparen.$ 

\begin{figure}
  \centering
  \includegraphics[width=\columnwidth]{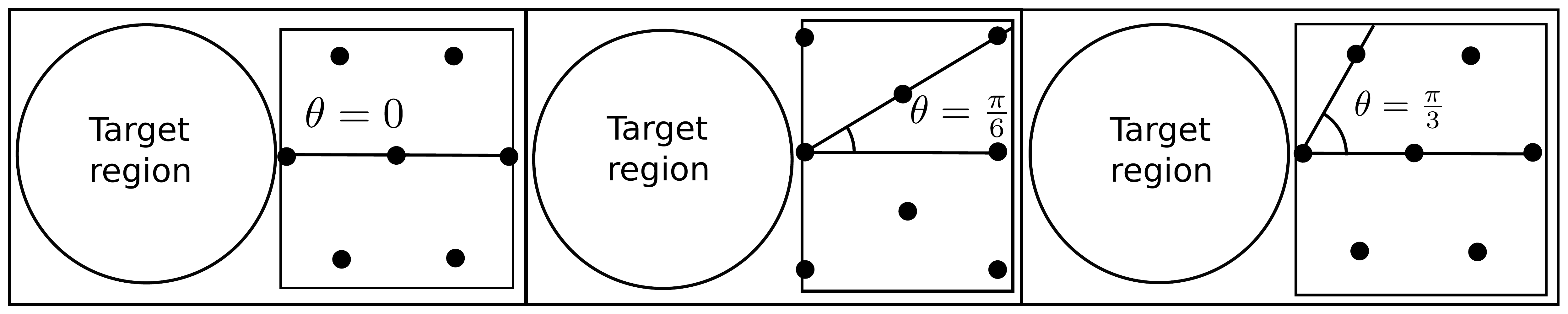}\\
  \caption{Example of hexagonal packing with different angles. The robots are the black dots. }
  \label{fig:hexangles}
\end{figure}

The next proposition states the bounds of the throughput in the limit towards the infinity for hexagonal packing using an arbitrary, but fixed, hexagonal packing angle $\theta$. We consider a fixed $\theta$ because normally in a robotic swarm the robots rely on local sensing. For getting the maximum number of robots inside the corridor, all robots should know the size of the corridor and communicate by local-ranged message sending. It would take time to send information, and for all robots to adjust their orientation each time a new robot joins the swarm when using this local sensing approach.

In other words, if the corridor where the robots are going in the direction of the target is increasing over time, then $\theta$ should change over time for the optimal throughput. However, in practice, changing the hexagonal packing angle implies all robots must turn to a hexagonal packing angle $\theta^{*}$ depending on the size of the new rectangle based on the added robots to it in order to maximise the number of robots inside the corridor. In addition to the time to send messages with this parameter, more time would be needed for every robot to adapt to the updated computed $\theta^{*}$ because the turning velocity of the robots is finite. Therefore, we do not handle this adjustable scenario in this paper.

\begin{proposition}
  Assume the robots using hexagonal formation coming to a circular target area with radius $s$ such that the first robot to reach it was at time $0$ at $(x_{0},y_{0}) = (w, 0)$, for any $w \ge s$. For a given time $T$, the robots are going to the target at linear speed $v$, keeping a minimum distance $d$ between neighbours ($0 < d \le 2 s$), using fixed hexagonal packing angle $\theta \in \lbrack 0,\pi/3 \rparen$. The throughput for a given time is given by

  \begin{equation}
    \begin{aligned}
      f_{h}(T,\theta) = & \frac{1}{T} \sum_{x_{h}=- n_{l}^{-}}^{n_{l}^{+}-1} \left( \lfloor Y_{2}^{R}(x_{h}) \rfloor - \lceil Y_{1}^{R}(x_{h}) \rceil + 1 \right)+\\
      &\frac{1}{T}  \sum_{x_{h} = B}^{U}\left(\lfloor Y_{2}^{S}(x_{h}) \rfloor - \lceil Y_{1}^{S}(x_{h}) \rceil + 1 \right) - \frac{1}{T},
    \end{aligned}
    \label{eq:hexthroughput}
  \end{equation}
  for $\left\lfloor Y_{2}^{R}(x_{h}) \right\rfloor \ge \left \lceil Y_{1}^{R}(x_{h}) \right \rceil $ and $\left\lfloor Y_{2}^{S}(x_{h}) \right\rfloor \ge \left \lceil Y_{1}^{S}(x_{h}) \right \rceil $  (if for some $x_{h}$, either of these conditions is false, we assume the respective summand for this  $x_{h}$ being zero). Also,
  \if\shortVersion 1
    $
       n_{l}^{-} = 
         \left\lfloor\frac{2s\sin\left(\vert \pi/6 - \theta\vert \right)}{\sqrt{3}d}\right \rfloor,
    $
    $
        n_{l}^{+} = 
          \left\lfloor\frac{2 (vT - s)\cos(\pi/6 - \theta)  + 2s\sin(\vert \pi/6 - \theta\vert ) }{\sqrt{3}d}+1\right\rfloor,
    $
  \else
    $$
       n_{l}^{-} = 
         \left\lfloor\frac{2s\sin\left(\vert \pi/6 - \theta\vert \right)}{\sqrt{3}d}\right \rfloor,
    $$
    $$
        n_{l}^{+} = 
          \left\lfloor\frac{2 (vT - s)\cos(\pi/6 - \theta)  + 2s\sin(\vert \pi/6 - \theta\vert ) }{\sqrt{3}d}+1\right\rfloor,
    $$
  \fi
  $$
    Y_{1}^{R}(x_{h}) = 
        \left\{
        \begin{array}{>{\displaystyle}c>{\displaystyle}l}
          \max\left(\frac{ \sin(\frac{\pi}{3}-\theta) x_h -\frac{s}{d}}{\cos\left(\theta -\frac{\pi}{6}\right)},
          \frac{-\cos(\frac{\pi}{3}-\theta)x_{h}}{\sin\left(\frac{\pi}{6}-\theta\right)}\right),
            & \text{ if } \theta < \pi/6, \\
          \max\left(\frac{ \sin(\frac{\pi}{3}-\theta) x_h -\frac{s}{d}}{\cos\left(\theta -\frac{\pi}{6}\right)},
          \frac{\frac{vT-s}{d} - \cos(\frac{\pi}{3}-\theta)x_{h}}{\sin\left(\frac{\pi}{6}-\theta\right)} \right),
          & \text{ if } \theta > \pi/6,\\
           \frac{x_{h}}{2}-\frac{s}{d},
            & \text{ if } \theta = \pi/6,
        \end{array}
        \right.
  $$
  $$
    Y_{2}^{R}(x_{h}) = 
        \left\{
        \begin{array}{>{\displaystyle}c>{\displaystyle}l}
          \min\left(\frac{ \sin(\frac{\pi}{3}-\theta) x_h +\frac{s}{d}}{\cos\left(\theta -\frac{\pi}{6}\right)},
          \frac{\frac{vT-s}{d} - \cos(\frac{\pi}{3}-\theta)x_{h}}{\sin\left(\frac{\pi}{6}-\theta\right)} \right),
            & \text{ if } \theta < \pi/6, \\
          \min\left(\frac{ \sin(\frac{\pi}{3}-\theta) x_h +\frac{s}{d}}{\cos\left(\theta -\frac{\pi}{6}\right)},
          \frac{-\cos(\frac{\pi}{3}-\theta)x_{h}}{\sin\left(\frac{\pi}{6}-\theta\right)}\right),
          & \text{ if } \theta > \pi/6,\\
           \frac{x_{h}}{2}+\frac{s}{d},
            & \text{ if } \theta = \pi/6,
        \end{array}
        \right.
  $$
  $$
    B = \left\{
    \begin{array}{>{\displaystyle}c>{\displaystyle}l}
         \left\lceil\frac{2( \sin(\pi/3-\theta)(c_{x} - l_{x})  + \cos(\pi/3-\theta)(y_{0} - l_{y} -s) )}{\sqrt{3}d}\right\rceil,
           & \text{ if } T > \frac{s}{v},
         \\
         \left\lceil-\frac{2\sqrt{2svT - (vT)^{2}}}{\sqrt{3}d}\sin\left(\theta + \frac{\pi}{6}\right)\right\rceil,
           & \text{ otherwise, }
    \end{array}
    \right.
  $$
  for $c_{x} =  x_{0} + vT - s$ and 
  \if\shortVersion 1
    $
        (l_{x},l_{y}) = 
          \argmin_{(x,y) \in Z}{\vert vT - s + x_{0} - x\vert + \vert y_{0} - y\vert },
         $   
          if  $T > \frac{s}{v}$, 
          otherwise,
          $(l_{x},l_{y}) = (x_{0},y_{0})$,
  \else
    $$
        (l_{x},l_{y}) =  \left\{
        \begin{array}{>{\displaystyle}cl}
          \argmin_{(x,y) \in Z}{\vert vT - s + x_{0} - x\vert + \vert y_{0} - y\vert },
            & \text{ if } T > \frac{s}{v}, \\
          (x_{0},y_{0}),
            & \text{ otherwise, }
        \end{array}
        \right. 
    $$
  \fi
  where $Z$ is the set of robot positions inside the rectangle measuring $vT - s \times 2s$ for $vT - s > 0$. If  $T > \frac{s}{v}$  or $\arctan\left( \frac{\frac{s}{2} - \sin(\theta) (vT - s) }{\frac{\sqrt{3}s}{2} + \cos(\theta) (vT - s)} \right) <  \frac{\pi}{2} - \theta$,
  \if\shortVersion 1
    $
      U = \left \lfloor \frac{2(\sin(\pi/3-\theta)(c_{x} - l_{x}) + \cos(\pi/3-\theta)(y_{0} - l_{y}) + s)}{\sqrt{3}d} \right \rfloor,
    $
  \else
    $$
      U = \left \lfloor \frac{2(\sin(\pi/3-\theta)(c_{x} - l_{x}) + \cos(\pi/3-\theta)(y_{0} - l_{y}) + s)}{\sqrt{3}d} \right \rfloor,
    $$
  \fi
  otherwise,
  \if\shortVersion 1
    $
        U = \left \lfloor \frac{2\sqrt{2svT - (vT)^{2}}}{\sqrt{3}d}\cos\left(\theta-\frac{\pi}{3}\right) \right\rfloor.
    $
  \else
    $$
        U = \left \lfloor \frac{2\sqrt{2svT - (vT)^{2}}}{\sqrt{3}d}\cos\left(\theta-\frac{\pi}{3}\right) \right\rfloor.
    $$
  \fi
  Also,
  \if\shortVersion 1
    $
        Y_{1}^{S}(x_{h}) = 
         \frac{d {x_{h}}- {C_{-\theta,x}} + \sqrt{3} C_{-\theta,y}   - \sqrt{\Delta(x_{h})}}{2  d}  
    $ and
  \else
    $$
        Y_{1}^{S}(x_{h}) = 
         \frac{d {x_{h}}- {C_{-\theta,x}} + \sqrt{3} C_{-\theta,y}   - \sqrt{\Delta(x_{h})}}{2  d} \text{ and } 
    $$
  \fi
  \begin{equation}
    Y_{2}^{S}(x_{h}) = \left\{
    \begin{array}{>{\displaystyle}c>{\displaystyle}l}
      \min(L(x_{h}),C_{2}(x_{h})) - 1, 
       & \text{ if } \min(L(x_{h}),C_{2}(x_{h})) \\ 
       & \phantom{if} = \lfloor L(x_{h}) \rfloor \text{ and } T > \frac{s}{v},\\
      \min(L(x_{h}),C_{2}(x_{h})), 
       & \text{ otherwise, } 
    \end{array}
    \right.
    \label{eq:whereIusedepsilon}
  \end{equation}
  \if\shortVersion 1
    $
      C_{-\theta} = \left[
            \begin{array}{cc}
              \cos(-\theta) & -\sin(-\theta)\\
              \sin(-\theta) & \cos(-\theta)\\
            \end{array}
            \right]
            \left[
            \begin{array}{c}
              c_{x} - l_{x}\\
              y_{0} - l_{y}\\
            \end{array}
            \right],
    $
    $
      \Delta(x_{h}) =   4  s^{2} - \left(\sqrt{3}  {\left(d {x_{h}} -{C_{-\theta,x}} \right)} - C_{-\theta,y}\right)^{2},
    $
    $
      C_{2}(x_{h}) = 
       \frac{d {x_{h}}- {C_{-\theta,x}}  + \sqrt{3} C_{-\theta,y}  + \sqrt{\Delta(x_{h})}}{2  d}, 
    $
  \else
    $$
      C_{-\theta} = \left[
            \begin{array}{cc}
              \cos(-\theta) & -\sin(-\theta)\\
              \sin(-\theta) & \cos(-\theta)\\
            \end{array}
            \right]
            \left[
            \begin{array}{c}
              c_{x} - l_{x}\\
              y_{0} - l_{y}\\
            \end{array}
            \right],
    $$
    $$
      \Delta(x_{h}) =   4  s^{2} - \left(\sqrt{3}  {\left(d {x_{h}} -{C_{-\theta,x}} \right)} - C_{-\theta,y}\right)^{2},
    $$
    $$
      C_{2}(x_{h}) = 
       \frac{d {x_{h}}- {C_{-\theta,x}}  + \sqrt{3} C_{-\theta,y}  + \sqrt{\Delta(x_{h})}}{2  d}, 
    $$
  \fi
  \if\shortVersion 1
    $
      L(x_{h}) = 
        \frac{\sin\left(\frac{\pi}{2} - \theta\right)(d x_{h}  - C_{-\theta,x}) + \cos\left(\frac{\pi}{2} - \theta\right)C_{-\theta,y}}{d \sin\left(\frac{5\pi}{6}-\theta\right)},
     $      if  $T > \frac{s}{v}$, otherwise
     $ L(x_{h}) =
        \frac{\sin\left(\frac{\pi}{2}-\theta\right)  x_{h}}{\sin\left( \frac{5\pi}{6}-\theta\right)}$,
  \else
    $$
      L(x_{h}) = \left\{ 
      \begin{array}{>{\displaystyle}c>{\displaystyle}l}
        \frac{\sin\left(\frac{\pi}{2} - \theta\right)(d x_{h}  - C_{-\theta,x}) + \cos\left(\frac{\pi}{2} - \theta\right)C_{-\theta,y}}{d \sin\left(\frac{5\pi}{6}-\theta\right)},
          & \text{ if } T > \frac{s}{v}, \\
        \frac{\sin\left(\frac{\pi}{2}-\theta\right)  x_{h}}{\sin\left( \frac{5\pi}{6}-\theta\right)},
          & \text{ otherwise,}\\
      \end{array}
      \right.
    $$
  \fi
    and
  \begin{equation}
   \begin{aligned}
      \lim_{T \to \infty}  f_{h}(T,\theta) \in&
         \left(\frac{4vs}{\sqrt{3}d^{2}} - \frac{2 v  \cos(\theta -\pi/6)}{\sqrt{3}d}, \frac{4vs}{\sqrt{3}d^{2}} +  \frac{2 v  \cos(\theta-\pi/6)}{\sqrt{3}d}\right].
    \end{aligned}
    \label{eq:hexthroughputbounds}
  \end{equation}
  \label{prop:hexthroughputbounds}
\end{proposition}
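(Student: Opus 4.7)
The plan is to reduce the throughput calculation to counting points of the hexagonal lattice inside a region determined by the target geometry and the elapsed time. Because each robot travels parallel to the $x$-axis at constant speed $v$, a robot with initial position $(x_{r},y_{r})$ has reached the target of radius $s$ by time $T$ iff $|y_{r}|\le s$ and $x_{r}\le x_{0}+vT-s+\sqrt{s^{2}-y_{r}^{2}}$; the set of admissible starting positions is therefore a ``stadium'': the target disc together with a corridor rectangle of dimensions $(vT-s)\times 2s$ capped on the far side by a semicircular arc of radius $s$. I would split the count into two parts --- the interior of the corridor rectangle and the interior of the cap disc centred at $(c_{x},y_{0})$ --- which correspond exactly to the two double sums in (\ref{eq:hexthroughput}), after the $-1/T$ correction for not counting the first robot (Definition~\ref{def:throughput2}).

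The hexagonal lattice can be parametrised as $\vec{p}_{0}+x_{h}\vec{e}_{1}+y_{h}\vec{e}_{2}$ for integers $x_{h},y_{h}$ with basis $\vec{e}_{1}=d(\cos\theta,\sin\theta)$, $\vec{e}_{2}=d(\cos(\theta+\pi/3),\sin(\theta+\pi/3))$. For the rectangle summand I would fix each column $x_{h}\in[-n_{l}^{-},n_{l}^{+}-1]$ and intersect the parametric line $\{x_{h}\vec{e}_{1}+y_{h}\vec{e}_{2}:y_{h}\in\mathbb{R}\}$ with the four rectangle sides, solving the resulting linear equations for $y_{h}$; the $\max/\min$ structure inside $Y_{1}^{R}$ and $Y_{2}^{R}$ together with the three-way case split around $\theta=\pi/6$ records which pair of opposite sides the line exits through, and the column bounds $n_{l}^{\pm}$ come from projecting the rectangle's extreme corners onto $\vec{e}_{1}$ and taking floors. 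For the disc summand I would re-anchor the lattice at the robot $(l_{x},l_{y})$ closest to $(c_{x},y_{0})$, rotate the offset by $-\theta$ to obtain $C_{-\theta}$, then solve the quadratic $(dx_{h}-C_{-\theta,x})^{2}+(dy_{h}-C_{-\theta,y})^{2}\le s^{2}$ (expressed in the hexagonal basis) for $y_{h}$ given $x_{h}$; this yields the two roots encoded by $Y_{1}^{S}$ and $C_{2}(x_{h})$, while the $\min(L,C_{2})$ in $Y_{2}^{S}$ prevents double-counting of lattice points already listed in the rectangle summand (hence the $-1$ in the first branch of (\ref{eq:whereIusedepsilon})). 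The two-case definitions of $B$, $U$ and $L$ treat separately whether the corridor has emerged from the target ($T>s/v$) or the whole reachable region is still inside the disc, and the arctangent condition that modifies $U$ distinguishes whether the upper-right corner of the disc sits beyond the rectangle's top edge or inside it.

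For the asymptotic interval (\ref{eq:hexthroughputbounds}) I would take $T\to\infty$: the disc summand is $O(1)$ because its area $\pi s^{2}$ is bounded, so it vanishes after division by $T$; the rectangle summand contributes its area divided by the hexagonal fundamental cell area $\sqrt{3}d^{2}/2$, giving leading term $4vs/(\sqrt{3}d^{2})$. The correction $\pm 2v\cos(\theta-\pi/6)/(\sqrt{3}d)$ quantifies the fact that at most one full row of lattice points (in the $\vec{e}_{1}$ direction) can be gained or lost along the top or bottom edge of the corridor, with the factor $\cos(\theta-\pi/6)$ giving the longitudinal density of that row along the corridor axis and the sign reflecting whether the rectangle cuts the lattice just above or just below a row. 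The main obstacle will be the bookkeeping: tracking which boundary is active in each of the many sub-cases, and, for the open-left/closed-right asymptotic interval, verifying that the worst deficit is approached but never attained (a whole row cannot land exactly on the excluded edge) whereas the best surplus is attained when a row aligns with the top edge.
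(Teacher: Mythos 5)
Your plan reproduces the paper's proof essentially step for step: the same decomposition of the set of arrived robots into a $(vT-s)\times 2s$ rectangle plus a semicircular cap, the same column-by-column count in skewed hexagonal-lattice coordinates with the $\max/\min$ bookkeeping recording which pair of sides each column line exits through (and the $n_{l}^{\pm}$ column range from projecting the corners), the same re-anchoring at $(l_{x},l_{y})$ and rotation by $-\theta$ for the cap with $\min(L,C_{2})$ and the $-1$ guarding against double counting, and the same asymptotics: the cap count is bounded, hence vanishes after division by $T$, while the rectangle contributes its area over the cell area $\sqrt{3}d^{2}/2$ plus a per-column error of at most one lattice point, whose column density $2v\cos(\theta-\pi/6)/(\sqrt{3}d)$ gives the half-width of the interval. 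The one ingredient your sketch omits is the \emph{existence} of $\lim_{T\to\infty}f_{h}(T,\theta)$: the per-column floor/ceiling estimate only confines the $\liminf$ and the $\limsup$ to the stated interval, whereas (\ref{eq:hexthroughputbounds}) asserts membership of the limit itself. The paper closes this by showing that the time-averaged sum over the columns of the fractional parts of $Y_{1}^{R}$ and $Y_{2}^{R}$ forms a Cauchy sequence in $T$ (its Lemma \ref{lemma:limsum1Ri}); you would need that, or an equivalent equidistribution argument, to finish.
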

\begin{proof}
  We are concerned about the throughput of the target region for a given time and hexagonal packing angle $\theta$, $f_{h}(T,\theta) = \frac{N(T,\theta)-1}{T}$, where $N(T,\theta)$ denotes the number of robots which arrived at the target region. Figure \ref{fig:hexnumrobots} illustrates the arrival of the robots on the target region.
  As this region has a circular shape, not all robots at the distance $vT$ arrive at target region by the time $T$. Thence, the number of robots in hexagonal packing are divided into the number of robots located inside a rectangle, $N_{R}$, and of robots inside a semicircle $N_{S}$ (Figure \ref{fig:hexnumrobots} (III)). That is, $N(T,\theta) = N_{S}(T,\theta) + N_{R}(T,\theta)$ and $N_{R} = 0$ whenever $vT \le s$.
  
  \begin{figure}[t!]
    \centering
    \includegraphics[width=\columnwidth]{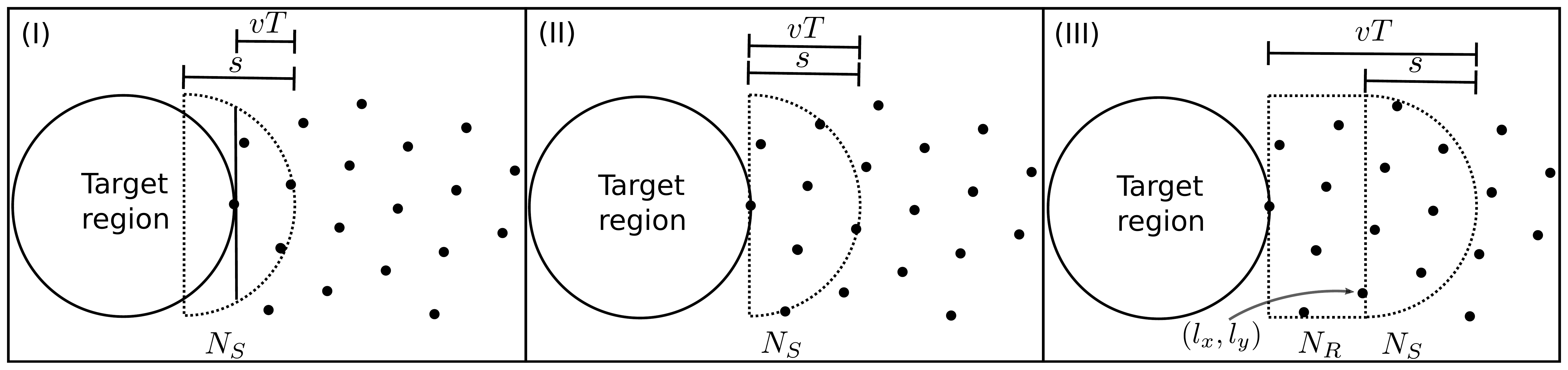}
    \caption{(I) When the robots -- here represented by black dots -- in hexagonal packing begin to arrive at the target region, only the robots inside a part of the semicircle are counted. (II) We consider the first robot to reach the target region being at $(x_{0},y_{0})$  at time $0$. As $T$ grows, this continues until $vT = s$. (III) When $vT > s$, the robots are counted on two regions: a rectangular, $N_{R}$, and a semicircular, $N_{S}$. When $vT > s$, the semicircular region counting starts after the last robot on the rectangular region located at $(l_{x},l_{y})$.}
    \label{fig:hexnumrobots}
  \end{figure}

  \begin{figure}[t!]
    \centering
    \includegraphics[width=0.7\columnwidth]{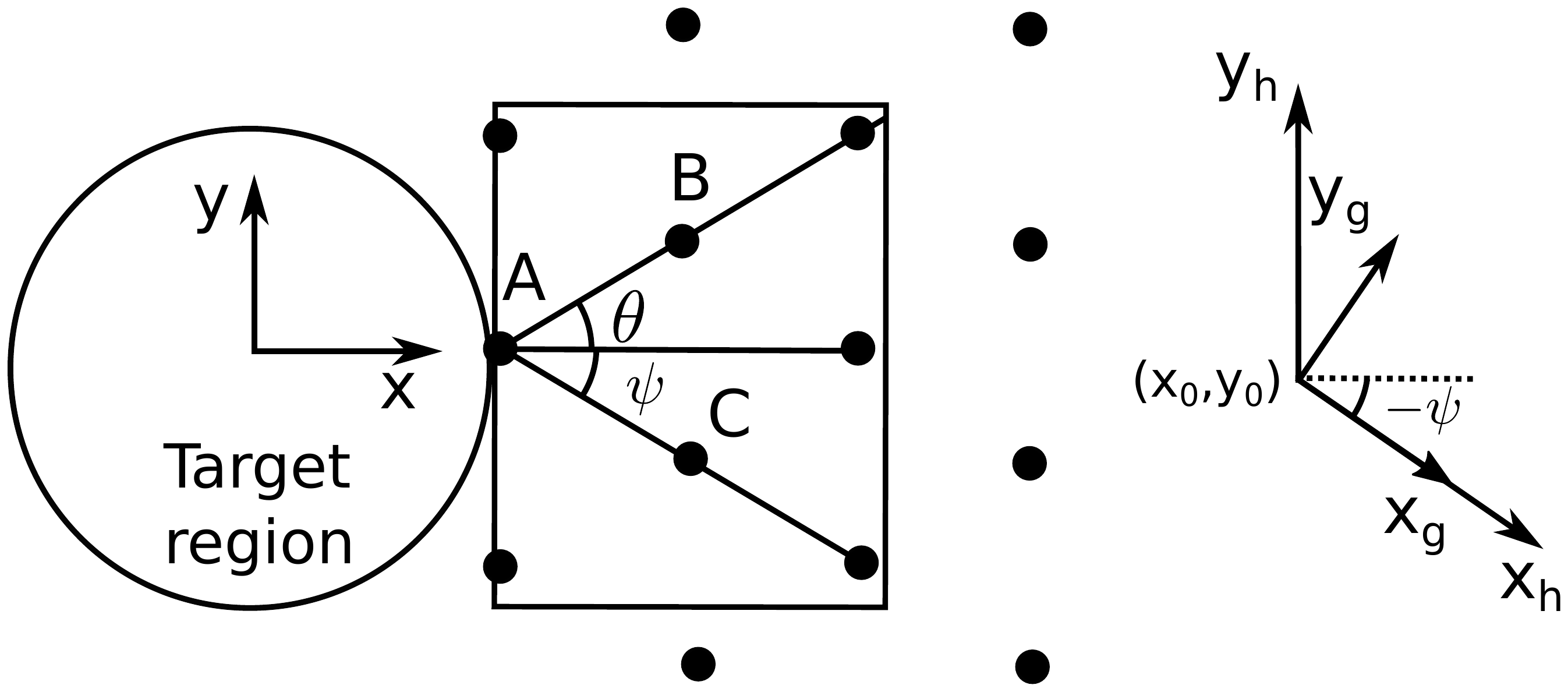}
    \caption{The reference frames used in this proof: the usual Euclidean space $(x,y)$ in relation to the target region and the rectangle region formed by robots in hexagonal packing going to it; the coordinate space $(x_{g},y_{g})$, formed by the usual space after a translation to the first robot to reach the target region at $(x_{0},y_{0})$, followed by a rotation by $-\psi$; the coordinate space $(x_{h},y_{h})$, a hexagonal grid coordinate space made after this transformation  and a linear transformation $H$. Robots are represented by the black dots and they are on hexagonal formation. Each neighbour of a robot is distant by $d$, so $\bigtriangleup ABC$ is equilateral. Thus, $\theta + \psi = \pi/3$.}
    \label{fig:referencepsi}
  \end{figure}
  
  \begin{figure}[t!]
    \centering
    \includegraphics[width=\columnwidth]{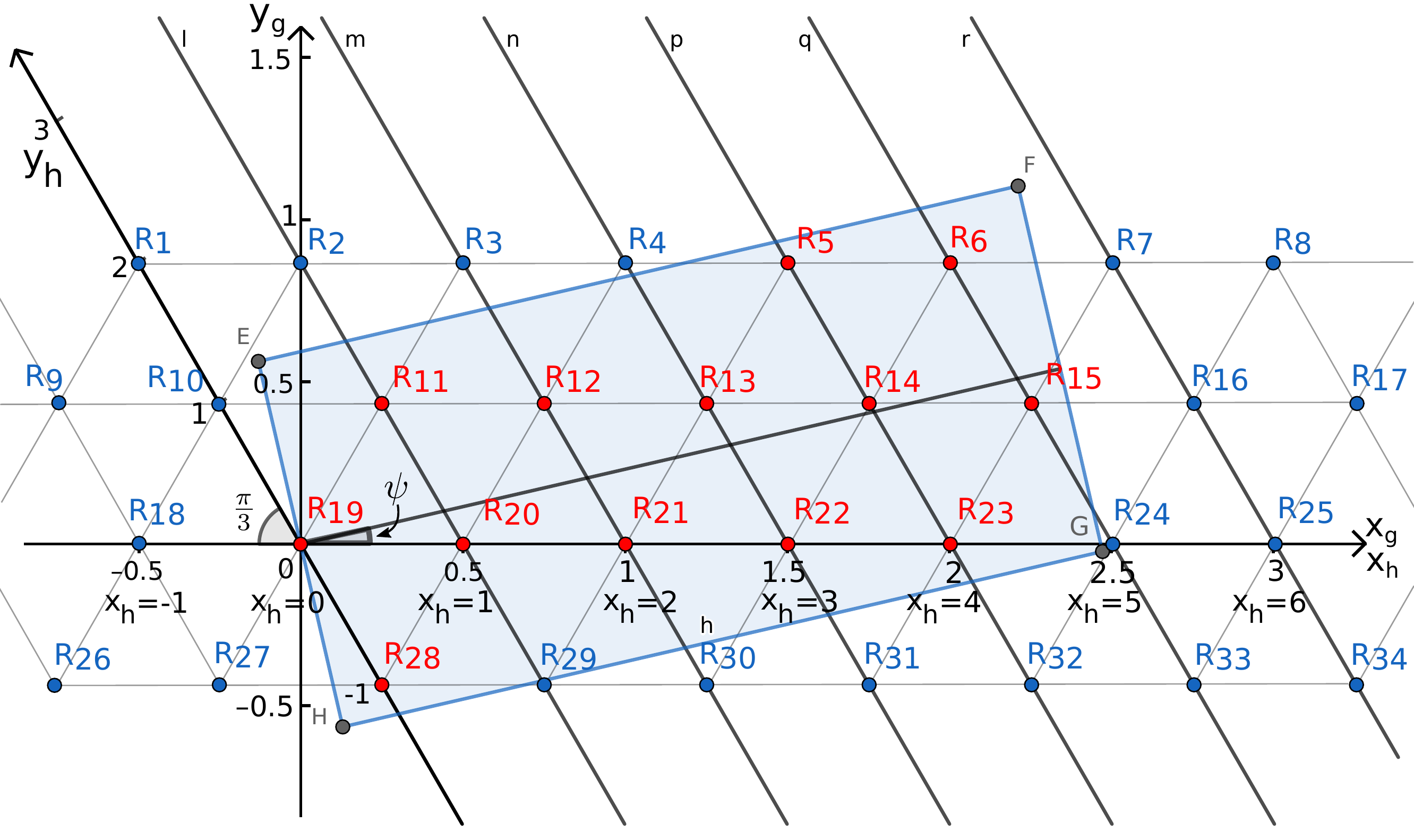}
    \caption{Robots in hexagonal packing formation, and the corresponding rectangular corridor which will reach the target region. Robots are located in the $l$, $m$, $n$, $p$, $q$ and $r$ lines, which are parallel to the $y_h$-axis. In this example, $\psi = 0.227$ and the distance between all robots is $d = 0.5$. The distance between those parallel-to-$y_{h}$ lines is $\sqrt{3}d / 2$. The robots inside the rectangle EFGH are counted and are indicated by red points, while blue points are robots outside the rectangle. Although the $x_{h}$-axis coincides with the $x_{g}$-axis, $x_{h}$ is scaled by $d$.}
    \label{fig:rectangle_robots}
  \end{figure}
  
  \begin{figure}[t!]
    \centering
    \includegraphics[width=0.8\columnwidth]{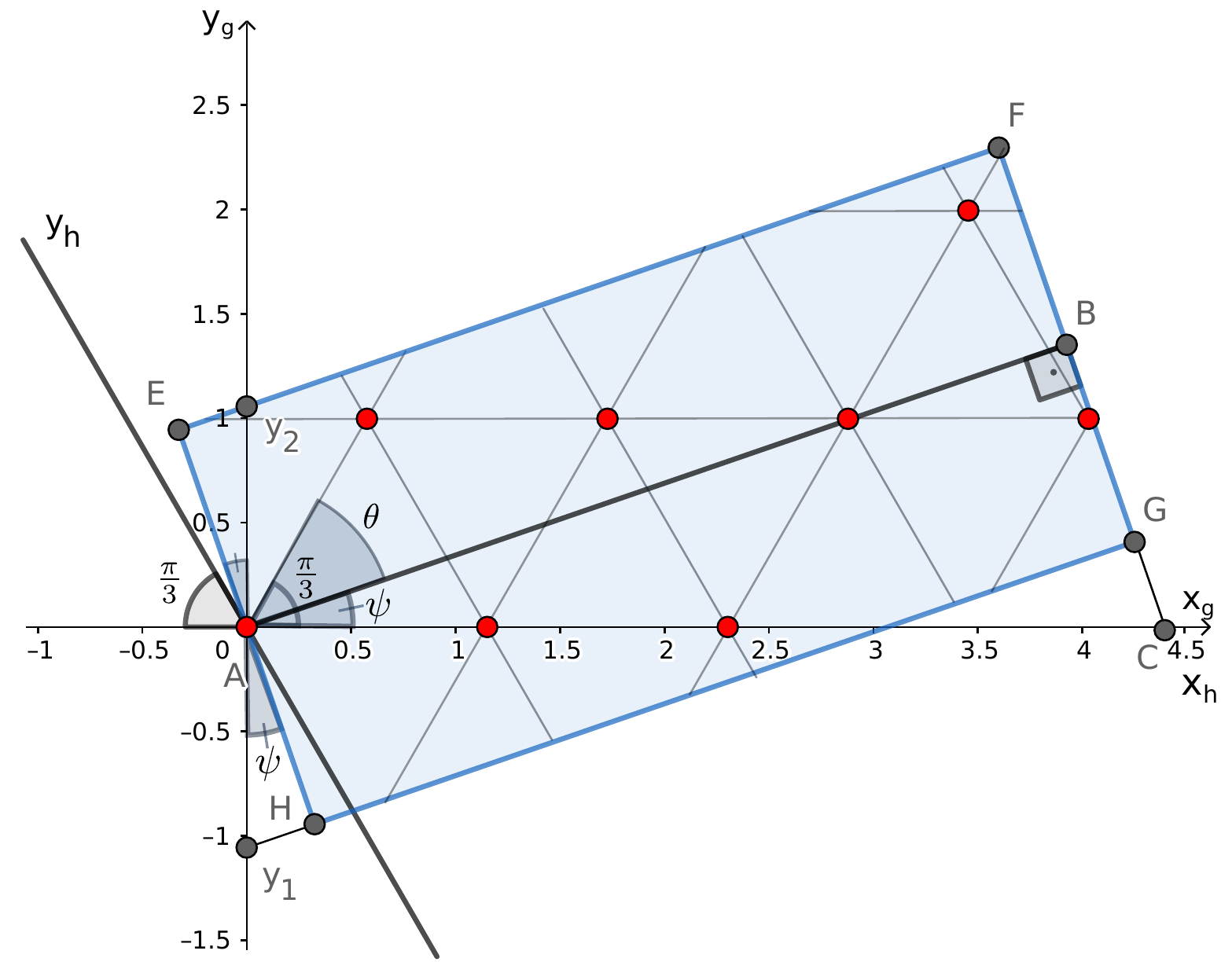}
    \caption{The problem involves the rectangle EFGH in a hexagonal grid (grey lines inside the rectangle) of robots (the red dots). The $x_h$-axis is horizontal and coincides with the $x$-axis. The $y_h$-axis forms a $2 \pi / 3$ angle with it. $\overline{EH}$ and $\overline{AB}$ have length $2 s$ and $vT-s$, respectively. In this example, $\psi = 19\pi/180$ and $s=1$. The angles marked with a line are equal to $\psi$, because the angle formed by $\protect \overrightarrow{y_{2}A}$ and the $x$-axis is right, as well as $\widehat{EAB}$. Accordingly, $\widehat{y_{2}AB} = \pi/2 - \psi$ implies that $\protect \widehat{EAy_{2}} = \psi$.}
    \label{fig:rectangle_problem}
  \end{figure}
  
  This proof is divided in lemmas for helping the construction of the equation to compute $N_{R}(T,\theta)$ and $N_{S}(T,\theta)$ as well for calculating $\lim_{T\to \infty} f_{h}(T,\theta)$.  Before presenting them, we discuss a coordinate space transformation which will be used to count the robots for $N_{R}$ and $N_{S}$. This transformation was inspired by \citep{redblobgames}.

  Figure \ref{fig:referencepsi} shows the coordinate spaces used in this proof. Let $\psi = \pi/3 - \theta$ (because the angle of the equilateral triangle formed by neighbours is $\pi/3$, as explained in Figure \ref{fig:referencepsi}). Accordingly, $\psi \in \lbrack 0,\pi/3 \rparen$ too.
  The usual Euclidean coordinate space which represents the location of all robots is denoted here by $(x,y)$ coordinates. The next coordinate space is denoted by $(x_{g},y_{g})$, and it is the result of a translation of the usual Euclidean coordinate space by the position of the first robot to reach the target region at $(x_{0},y_{0})$, then a rotation of $-\psi$, that is,
  \if\shortVersion 1
    $
       \left[
      \begin{array}{c}
        x_{g} \\
        y_{g}
      \end{array}
      \right]
      = \left[
      \begin{array}{cc}
        \cos(-\psi) & -\sin(-\psi)\\
        \sin(-\psi) & \cos(-\psi)\\
      \end{array}
      \right]
      \left[
      \begin{array}{c}
        x-x_{0}\\
        y-y_{0}\\
      \end{array}
      \right].
    $
  \else
    $$
       \left[
      \begin{array}{c}
        x_{g} \\
        y_{g}
      \end{array}
      \right]
      = \left[
      \begin{array}{cc}
        \cos(-\psi) & -\sin(-\psi)\\
        \sin(-\psi) & \cos(-\psi)\\
      \end{array}
      \right]
      \left[
      \begin{array}{c}
        x-x_{0}\\
        y-y_{0}\\
      \end{array}
      \right].
    $$
  \fi
  The last coordinate space is denoted by $(x_{h},y_{h})$ and it is intended to represent a hexagonal grid such that the position of each robot is  an integer pair.
  
  Figure \ref{fig:rectangle_robots} shows the location of robots with respect to that hexagonal grid.
  Let $(x_{h}, y_{h}) \in \mathbb{Z}^2$ be the hexagonal coordinates of a robot in this hexagonal grid space. In this figure, there is an integer grid in grey -- the horizontal lines correspond to fixed integer $y_{h}$ values and the inclined ones, $x_{h}$ values. For example, in Figure \ref{fig:rectangle_robots} robots $R_{10}$, $R_{11}$ and $R_{20}$ respectively are at $(0,1)$, $(1,1)$ and $(1,0)$ at $(x_{h}, y_{h})$ coordinate system, which is equivalent to $\left(-1/4,\sqrt{3}/4\right)$, $\left(1/4,\sqrt{3}/4\right)$ and $\left(1/2,0\right)$ on the usual two dimensional coordinate system with origin at $(x_{0},y_{0})$. 
  
  We get the linear transformation $H$ from a point $(x_{h},y_{h})$ to  $(x_{g},y_{g})$ basis by knowing the result of this transformation for the standard vectors $(1,0)$ and $(0,1)$. Observing Figure \ref{fig:rectangle_robots} and having that the angle between the $x$-axis and $y_{h}$-axis is by definition $2\pi/3$, we get the following mappings $(x_{h},y_{h}) \mapsto (x_{g},y_{g})$: $(1,0) \mapsto (d,0)$ and $(0,1) \mapsto (d\cos\left(2\pi/3\right),d\sin\left(2\pi/3\right)) = (-\frac{d}{2}, \frac{\sqrt{3}d}{2})$ (in Figure \ref{fig:rectangle_robots} these two mappings are represented by robots $R_{20}$ and $R_{10}$, respectively, with $d=0.5$). Then,
  
  \begin{equation}
    \left[
    \begin{array}{c}
      x_{g} \\
      y_{g}
    \end{array}
    \right]
    = {\left[
    \begin{array}{cc}
      H\left(\left[
      \begin{aligned}
        1 \\
        0
      \end{aligned}
      \right]\right) & 
      H\left(\left[
      \begin{aligned}
        0 \\
        1
      \end{aligned}
      \right]\right) 
    \end{array}
    \right]}
    \left[
    \begin{array}{c}
      x_{h}\\
      y_{h}
    \end{array}
    \right] 
    = \left[
    \begin{array}{cc}
      d & -\frac{d}{2}\\
      0 & \frac{\sqrt{3}d}{2}\\ 
    \end{array}
    \right]
    \left[
    \begin{array}{c}
      x_{h}\\
      y_{h}
    \end{array}
    \right].
    \label{eq:xhyh2xgyg}
  \end{equation}

  Counting the robots inside the rectangle is the same as counting the number of integer hexagonal coordinate points lying inside it.
  Figure \ref{fig:rectangle_problem} shows the rectangular part with some robots in hexagonal packing, where the robots are the red dots and the hexagonal packing is guided by the grey lines inside the rectangle, based on the value of the angle $\psi$. The rectangle is of width $vT - s$ and of height $2s$. The reference frame of the hexagonal grid is rotated in relation to the target region (Figure \ref{fig:referencepsi}). From Figure \ref{fig:rectangle_problem}, we have 
  \begin{equation}
    2 s = (y_{2} - y_{1})\cos(\psi),\ y_{2} = \frac{s}{\cos(\psi)} \text{ and } y_{1} = -\frac{s}{\cos(\psi)}.
    \label{eq:2sy2y1}
  \end{equation}

  We consider a robot with coordinates $(x_g, y_g)$. The four sides of the rectangle EFGH, $\overline{HG}$, $\overline{EF}$, $\overline{EH}$ and $\overline{FG}$, have the following equations of line: $y_{g} = y_{1} + \tan(\psi)x_{g}$, $y_{g} = y_2 + x_{g} \tan(\psi)$, $y_{g} = \tan\left(\psi+\frac{\pi}{2}\right)x_{g}$ and $y_{g} = \tan(\psi + \frac{\pi}{2}) \left(x_{g} - \frac{v T-s}{\cos(\psi)} \right) $, respectively. The term $\frac{vT-s}{\cos(\psi)}$ in the last equation arises because of the length of $\overline{AC}$, which is the hypotenuse of $\bigtriangleup ABC$ whose side $\overline{AB}$ measures $vT$. Knowing that $\tan\left(\psi+\frac{\pi}{2}\right) = -\cot(\psi)$, the equations below are all true for a robot at $(x_{g},y_{g})$ to be inside or on the boundary of the previously defined rectangle,
  \begin{equation}
    \begin{aligned}
     &y_g \ge y_1 + x_g \tan( \psi), \,
     y_g \le y_2 + x_g \tan (\psi), \, 
     - x_g \le \tan(\psi) y_g,\text{ and } \\
     &- \left(x_g - \frac{vT-s}{\cos\psi} \right) \ge \tan(\psi) y_g.
    \end{aligned}
    \label{eq:insiderectangleEFGH}
  \end{equation}
  
  Now we take the minimum and maximum $y_{h}$ value for each parallel-to-$y_{h}$ line depending on the $x_{h}$ value. Using (\ref{eq:xhyh2xgyg}) for converting (\ref{eq:insiderectangleEFGH}) to $x_{h}$ and $y_{h}$ coordinate system, i.e., hexagonal coordinates, we have for  $\overline{HG}$ and $\overline{EF}$
  \if\shortVersion 1
    $  \left(\frac{\sqrt{3}}{2} + \frac{1}{2}\tan(\psi) \right) y_h  - \tan(\psi) x_h  \ge  \frac{y_1}{d}$ and
    $ \left(\frac{\sqrt{3}}{2} + \frac{1}{2}\tan(\psi) \right) y_h - \tan(\psi) x_h  \le  \frac{y_2}{d}.$
  \else
    $$  \left(\frac{\sqrt{3}}{2} + \frac{1}{2}\tan(\psi) \right) y_h  - \tan(\psi) x_h  \ge  \frac{y_1}{d} \text{ and}$$
    $$ \left(\frac{\sqrt{3}}{2} + \frac{1}{2}\tan(\psi) \right) y_h - \tan(\psi) x_h  \le  \frac{y_2}{d}.$$
  \fi
  Hence,
  \if\shortVersion 1
    $\frac{y_1}{d} \le \left(\frac{\sqrt{3}}{2} + \frac{1}{2}\tan\psi \right) y_h - \tan(\psi) x_h \le  \frac{y_2}{d} \Leftrightarrow$
  \else
    $$\frac{y_1}{d} \le \left(\frac{\sqrt{3}}{2} + \frac{1}{2}\tan\psi \right) y_h - \tan(\psi) x_h \le  \frac{y_2}{d} \Leftrightarrow$$
  \fi
  \begin{equation}
    \frac{\frac{2y_1}{d} + 2\tan(\psi) x_h}{\sqrt{3} + \tan(\psi)}  \le  y_h \le
    \frac{\frac{2y_2}{d} + 2\tan(\psi) x_h}{\sqrt{3} + \tan(\psi)}.
  \label{eq:boundsyh1}
  \end{equation}
  
  Analogously, but considering $\overline{EH}$ and $\overline{FG}$,
  \begin{equation}
  -  x_{h}   \le \left(\tan(\psi) \frac{\sqrt{3}}{2} - \frac{1}{2}\right)y_{h} \text{ and } 
    \left(\tan(\psi) \frac{\sqrt{3}}{2} - \frac{1}{2}\right)y_{h} \le \frac{v T-s}{d\cos(\psi)} - x_{h}.
  \label{eq:xhvTcos1}
  \end{equation}

  Based on the sign of $\left(\tan(\psi) \frac{\sqrt{3}}{2} - \frac{1}{2}\right)$ and excluding the null case (when $\psi=\pi/6$), we have two different inequalities over $y_{h}$. Assuming $\psi \in \lbrack 0,\pi/3 \rparen$, we have  $\left(\tan(\psi) \frac{\sqrt{3}}{2} - \frac{1}{2}\right) > 0 \LR \tan(\psi) \frac{\sqrt{3}}{2} > \frac{1}{2} \LR \tan(\psi) > \frac{1}{\sqrt{3}} \LR \psi > \pi/6$. Thus, from (\ref{eq:xhvTcos1}), 
\ifexpandexplanation
  \begin{equation}
    \begin{aligned}
      \frac{-x_{h}}{\frac{\sqrt{3} \tan(\psi) - 1}{2}}   \le y_{h} \le \frac{\frac{v T-s}{d\cos(\psi)} - x_{h}}{ \frac{\sqrt{3} \tan(\psi) - 1}{2}}, & \text{ if } \psi > \pi/6, \\
      \frac{\frac{v T-s}{d\cos(\psi)} - x_{h}}{\frac{\sqrt{3} \tan(\psi) - 1}{2}}   \le y_{h} \le \frac{-x_{h}}{\frac{\sqrt{3} \tan(\psi) - 1}{2}}, & \text{ if } \psi < \pi/6.
    \end{aligned}
  \end{equation}
\fi
  \begin{equation}
    \begin{aligned}
      \frac{-2x_{h}}{\sqrt{3} \tan(\psi) - 1}   \le y_{h} \le \frac{\frac{2(v T-s)}{d\cos(\psi)} - 2x_{h}}{\sqrt{3} \tan(\psi) - 1}, & \text{ if } \psi > \pi/6, \\
      \frac{\frac{2(v T-s)}{d\cos(\psi)} - 2x_{h}}{\sqrt{3} \tan(\psi) - 1}   \le y_{h} \le \frac{-2x_{h}}{\sqrt{3} \tan(\psi) - 1}, & \text{ if } \psi < \pi/6.
    \end{aligned}
    \label{eq:boundsyh2}
  \end{equation}
  
  We have that (\ref{eq:boundsyh1}) and (\ref{eq:boundsyh2}) restrict the value of $y_{h}$ depending on the value of $x_{h}$ by the relation
  \begin{equation}
    \begin{aligned}
      \max\left(\frac{\frac{2y_1}{d} + 2\tan(\psi) x_h}{\sqrt{3} + \tan(\psi)},
      \frac{-2 x_{h}}{\sqrt{3} \tan(\psi) - 1}\right) 
        \le y_{h} 
      \\
      \le
      \min\left(\frac{\frac{2y_2}{d} + 2\tan(\psi) x_h}{\sqrt{3} + \tan(\psi)},
      \frac{\frac{2(v T-s)}{d\cos(\psi)} - 2x_{h}}{\sqrt{3} \tan(\psi) - 1}\right), 
        & \text{ if } \psi > \pi/6, \\
      \max\left(\frac{\frac{2y_1}{d} + 2\tan(\psi) x_h}{\sqrt{3} + \tan(\psi)},
      \frac{\frac{2(v T-s)}{d\cos(\psi)} - 2x_{h}}{\sqrt{3} \tan(\psi) - 1} \right)
        \le y_{h} 
      \\
      \le 
      \min\left(\frac{\frac{2y_2}{d} + 2\tan(\psi) x_h}{\sqrt{3} + \tan(\psi)},
      \frac{-2x_{h}}{\sqrt{3} \tan(\psi) - 1}\right), 
        & \text{ if } \psi < \pi/6.
    \end{aligned}
    \label{eq:boundsyhminmaxreal}
  \end{equation}
  
  Using hexagonal coordinates the position of each robot is represented by a pair of integers. Then, assuming $x_{h}$ and $y_{h}$ integers, (\ref{eq:boundsyhminmaxreal}) becomes $\lceil Y_{1}^{R}(x_{h}) \rceil \le y_{h} \le \lfloor Y_{2}^{R}(x_{h}) \rfloor,$ for   
  \if\shortVersion 1
    \begin{equation}
        Y_{1}^{R}(x_{h}) 
          =
          \left\{
          \begin{array}{>{\displaystyle}c>{\displaystyle}l}
            \max\left(\frac{\sin(\psi) x_h - \frac{s}{d}}{\cos\left(\frac{\pi}{6}-\psi\right)},
            \frac{-\cos(\psi)x_{h}}{\sin\left(\psi-\frac{\pi}{6}\right)}\right),
              & \text{ if } \psi > \pi/6, \\
            \max\left(\frac{ \sin(\psi) x_h -\frac{s}{d}}{\cos\left(\frac{\pi}{6}-\psi\right)},
            \frac{\frac{v T-s}{d} - \cos(\psi)x_{h}}{\sin\left(\psi-\frac{\pi}{6}\right)} \right),
            & \text{ if } \psi < \pi/6,\\
            \frac{x_{h}}{2}-\frac{s}{d},
              & \text{ if } \psi = \pi/6,
          \end{array}
          \right.
      \label{eq:y1xh}
    \end{equation}
    \begin{equation}
      Y_{2}^{R}(x_{h}) 
        =
          \left\{
          \begin{array}{>{\displaystyle}c>{\displaystyle}l}
            \min\left(\frac{\sin(\psi) x_h + \frac{s}{d}}{\cos\left(\frac{\pi}{6}-\psi\right)},
            \frac{\frac{v T-s}{d} - \cos(\psi)x_{h}}{\sin\left(\psi-\frac{\pi}{6}\right)} \right),
              & \text{ if } \psi > \pi/6, \\
            \min\left(\frac{ \sin(\psi) x_h +\frac{s}{d}}{\cos\left(\frac{\pi}{6}-\psi\right)},
            \frac{-\cos(\psi)x_{h}}{\sin\left(\psi-\frac{\pi}{6}\right)}\right),
            & \text{ if } \psi < \pi/6,\\
             \frac{x_{h}}{2}+\frac{s}{d},
              & \text{ if } \psi = \pi/6.
          \end{array}
          \right.
      \label{eq:y2xh}
    \end{equation}  
  \else 
\ifexpandexplanation 
    $$
      \begin{aligned}
        Y_{1}^{R}(x_{h}) &= 
          \left\{
          \begin{array}{>{\displaystyle}c>{\displaystyle}l}
            \max\left(\frac{\frac{2y_1}{d} + 2\tan(\psi) x_h}{\sqrt{3} + \tan(\psi)},
            \frac{-2x_{h}}{\sqrt{3} \tan(\psi) - 1}\right),
              & \text{ if } \psi > \pi/6, \\
            \max\left(\frac{\frac{2y_1}{d} + 2\tan(\psi) x_h}{\sqrt{3} + \tan(\psi)},
            \frac{\frac{2 (v T-s)}{d\cos(\psi)} - 2x_{h}}{\sqrt{3} \tan(\psi) - 1} \right),
            & \text{ if } \psi < \pi/6,\\
            \frac{\sqrt{3}y_{1} + d x_{h}}{2d},
              & \text{ if } \psi = \pi/6,
          \end{array}
          \right.
          \\ 
      \end{aligned}
    $$
    $$
      \begin{aligned}
          &\left\{
          \begin{array}{>{\displaystyle}c>{\displaystyle}l}
            \max\left(\frac{\frac{2y_1}{d} + 2\tan(\psi) x_h}{{\sqrt{3} + \tan(\psi)}},
            \frac{-2x_{h}}{\sqrt{3} \tan(\psi) - 1}\right),
              & \text{ if } \psi > \pi/6, \\
            \max\left(\frac{\frac{2y_1}{d} + 2\tan(\psi) x_h}{\sqrt{3} + \tan(\psi)},
            \frac{\frac{2(v T-s)}{d\cos(\psi)} - 2x_{h}}{\sqrt{3} \tan(\psi) - 1} \right),
            & \text{ if } \psi < \pi/6,\\
            \frac{\frac{-\sqrt{3}s}{\cos(\pi/6)} + d x_{h}}{2d},
              & \text{ if } \psi = \pi/6,
          \end{array}
          \right.
          \\
          &=
          \left\{
          \begin{array}{>{\displaystyle}c>{\displaystyle}l}
            \max\left(\frac{\frac{2y_1\cos(\psi)}{d} + 2\sin(\psi) x_h}{{\sqrt{3}\cos(\psi) + \sin(\psi)}},
            \frac{-2x_{h}\cos(\psi)}{\sqrt{3} \sin(\psi) - \cos(\psi)}\right),
              & \text{ if } \psi > \pi/6, \\
            \max\left(\frac{\frac{2y_1\cos(\psi)}{d} + 2\sin(\psi) x_h}{\sqrt{3}\cos(\psi) + \sin(\psi)},
            \frac{\frac{2(v T-s)}{d} - 2x_{h}\cos(\psi)}{\sqrt{3} \sin(\psi) - \cos(\psi)} \right),
            & \text{ if } \psi < \pi/6,\\
            \frac{\frac{-2\sqrt{3}s}{\sqrt{3}} + d x_{h}}{2d},
              & \text{ if } \psi = \pi/6,
          \end{array}
          \right.
          \\
          &=
          \left\{
          \begin{array}{>{\displaystyle}c>{\displaystyle}l}
            \max\left(\frac{\frac{-2s}{d} + 2\sin(\psi) x_h}{{\sqrt{3}\cos(\psi) + \sin(\psi)}},
            \frac{-2x_{h}\cos(\psi)}{\sqrt{3} \sin(\psi) - \cos(\psi)}\right),
              & \text{ if } \psi > \pi/6, \\
            \max\left(\frac{\frac{-2s}{d} + 2\sin(\psi) x_h}{\sqrt{3}\cos(\psi) + \sin(\psi)},
            \frac{\frac{2(v T-s)}{d} - 2x_{h}\cos(\psi)}{\sqrt{3} \sin(\psi) - \cos(\psi)} \right),
            & \text{ if } \psi < \pi/6,\\
            \frac{-2s + d x_{h}}{2d},
              & \text{ if } \psi = \pi/6,
          \end{array}
          \right.
          \\
      \end{aligned}
    $$
    $$
      \begin{aligned}
          &=
          \left\{
          \begin{array}{>{\displaystyle}c>{\displaystyle}l}
            \max\left(\frac{\sin(\psi) x_h - \frac{s}{d}}{\cos\left(\frac{\pi}{6}-\psi\right)},
            \frac{-\cos(\psi)x_{h}}{\sin\left(\psi-\frac{\pi}{6}\right)}\right),
              & \text{ if } \psi > \pi/6, \\
            \max\left(\frac{ \sin(\psi) x_h -\frac{s}{d}}{\cos\left(\frac{\pi}{6}-\psi\right)},
            \frac{\frac{v T-s}{d} - \cos(\psi)x_{h}}{\sin\left(\psi-\frac{\pi}{6}\right)} \right),
            & \text{ if } \psi < \pi/6,\\
            \frac{x_{h}}{2}-\frac{s}{d},
              & \text{ if } \psi = \pi/6,
          \end{array}
          \right.
          \\
      \end{aligned}
    $$
    \begin{equation}
      \begin{aligned}
          &=
          \left\{
          \begin{array}{>{\displaystyle}c>{\displaystyle}l}
            \max\left(\frac{d\sin(\psi) x_h - s}{d\cos\left(\frac{\pi}{6}-\psi\right)},
            \frac{-\cos(\psi)x_{h}}{\sin\left(\psi-\frac{\pi}{6}\right)}\right),
              & \text{ if } \psi > \pi/6, \\
            \max\left(\frac{ d\sin(\psi) x_h -s}{d\cos\left(\frac{\pi}{6}-\psi\right)},
            \frac{v T-s - d\cos(\psi)x_{h}}{d\sin\left(\psi-\frac{\pi}{6}\right)} \right),
            & \text{ if } \psi < \pi/6,\\
             \frac{x_{h}}{2}-\frac{s}{d},
              & \text{ if } \psi = \pi/6,
          \end{array}
          \right.
      \end{aligned}
      \label{eq:y1xh}
    \end{equation}
    \begin{equation}
      \begin{aligned}
      Y_{2}^{R}(x_{h}) 
        &= 
        \left\{    
        \begin{array}{>{\displaystyle}c>{\displaystyle}l}
          \min\left(\frac{\frac{2y_2}{d} + 2\tan(\psi) x_h}{\sqrt{3} + \tan(\psi)} ,
          \frac{\frac{2(v T-s)}{d\cos(\psi)} - 2x_{h}}{\sqrt{3} \tan(\psi) - 1}\right),
            & \text{ if } \psi > \pi/6, \\
          \min\left(\frac{\frac{2y_2}{d} + 2\tan(\psi) x_h}{\sqrt{3} + \tan(\psi)},
          \frac{-2x_{h}}{\sqrt{3} \tan(\psi) - 1}\right), 
            & \text{ if } \psi < \pi/6,\\
          \frac{\sqrt{3}y_{2} + d x_{h}}{2d},
            & \text{ if } \psi = \pi/6,
        \end{array}
        \right. 
        \\
        &=
          \left\{
          \begin{array}{>{\displaystyle}c>{\displaystyle}l}
            \min\left(\frac{\sin(\psi) x_h + \frac{s}{d}}{\cos\left(\frac{\pi}{6}-\psi\right)},
            \frac{\frac{v T-s}{d} - \cos(\psi)x_{h}}{\sin\left(\psi-\frac{\pi}{6}\right)} \right),
              & \text{ if } \psi > \pi/6, \\
            \min\left(\frac{ \sin(\psi) x_h +\frac{s}{d}}{\cos\left(\frac{\pi}{6}-\psi\right)},
            \frac{-\cos(\psi)x_{h}}{\sin\left(\psi-\frac{\pi}{6}\right)}\right),
            & \text{ if } \psi < \pi/6,\\
             \frac{x_{h}}{2}+\frac{s}{d},
              & \text{ if } \psi = \pi/6.
          \end{array}
          \right.
        \\
      \end{aligned}
      \label{eq:y2xh}
    \end{equation}
\else 
    \begin{equation}
      \begin{aligned}
        Y_{1}^{R}(x_{h}) &= 
          \left\{
          \begin{array}{>{\displaystyle}c>{\displaystyle}l}
            \max\left(\frac{\frac{2y_1}{d} + 2\tan(\psi) x_h}{\sqrt{3} + \tan(\psi)},
            \frac{-2x_{h}}{\sqrt{3} \tan(\psi) - 1}\right),
              & \text{ if } \psi > \pi/6, \\
            \max\left(\frac{\frac{2y_1}{d} + 2\tan(\psi) x_h}{\sqrt{3} + \tan(\psi)},
            \frac{\frac{2 (v T-s)}{d\cos(\psi)} - 2x_{h}}{\sqrt{3} \tan(\psi) - 1} \right),
            & \text{ if } \psi < \pi/6,\\
            \frac{\sqrt{3}y_{1} + d x_{h}}{2d},
              & \text{ if } \psi = \pi/6,
          \end{array}
          \right.
          \\ 
          &=
          \left\{
          \begin{array}{>{\displaystyle}c>{\displaystyle}l}
            \max\left(\frac{\sin(\psi) x_h - \frac{s}{d}}{\cos\left(\frac{\pi}{6}-\psi\right)},
            \frac{-\cos(\psi)x_{h}}{\sin\left(\psi-\frac{\pi}{6}\right)}\right),
              & \text{ if } \psi > \pi/6, \\
            \max\left(\frac{ \sin(\psi) x_h -\frac{s}{d}}{\cos\left(\frac{\pi}{6}-\psi\right)},
            \frac{\frac{v T-s}{d} - \cos(\psi)x_{h}}{\sin\left(\psi-\frac{\pi}{6}\right)} \right),
            & \text{ if } \psi < \pi/6,\\
            \frac{x_{h}}{2}-\frac{s}{d},
              & \text{ if } \psi = \pi/6,
          \end{array}
          \right.
      \end{aligned}
      \label{eq:y1xh}
    \end{equation}
    \begin{equation}
      \begin{aligned}
      Y_{2}^{R}(x_{h}) 
        &= 
        \left\{    
        \begin{array}{>{\displaystyle}c>{\displaystyle}l}
          \min\left(\frac{\frac{2y_2}{d} + 2\tan(\psi) x_h}{\sqrt{3} + \tan(\psi)} ,
          \frac{\frac{2(v T-s)}{d\cos(\psi)} - 2x_{h}}{\sqrt{3} \tan(\psi) - 1}\right),
            & \text{ if } \psi > \pi/6, \\
          \min\left(\frac{\frac{2y_2}{d} + 2\tan(\psi) x_h}{\sqrt{3} + \tan(\psi)},
          \frac{-2x_{h}}{\sqrt{3} \tan(\psi) - 1}\right), 
            & \text{ if } \psi < \pi/6,\\
          \frac{\sqrt{3}y_{2} + d x_{h}}{2d},
            & \text{ if } \psi = \pi/6,
        \end{array}
        \right. 
        \\
        &=
          \left\{
          \begin{array}{>{\displaystyle}c>{\displaystyle}l}
            \min\left(\frac{\sin(\psi) x_h + \frac{s}{d}}{\cos\left(\frac{\pi}{6}-\psi\right)},
            \frac{\frac{v T-s}{d} - \cos(\psi)x_{h}}{\sin\left(\psi-\frac{\pi}{6}\right)} \right),
              & \text{ if } \psi > \pi/6, \\
            \min\left(\frac{ \sin(\psi) x_h +\frac{s}{d}}{\cos\left(\frac{\pi}{6}-\psi\right)},
            \frac{-\cos(\psi)x_{h}}{\sin\left(\psi-\frac{\pi}{6}\right)}\right),
            & \text{ if } \psi < \pi/6,\\
             \frac{x_{h}}{2}+\frac{s}{d},
              & \text{ if } \psi = \pi/6.
          \end{array}
          \right.
        \\
      \end{aligned}
      \label{eq:y2xh}
    \end{equation}
\fi
  \fi
  We simplified above using (\ref{eq:2sy2y1}), $\cos\big(\frac{\pi}{6} - \psi\big) = \frac{\sqrt{3}}{2}\cos(\psi)+\frac{1}{2}\sin(\psi)$ and $\sin\big(\psi-\frac{\pi}{6}\big) = \frac{\sqrt{3}}{2}\sin(\psi)-\frac{1}{2}\cos(\psi)$.

  Now we get the possible integer values for the $x_{h}$-axis which are inside the rectangle EFGH, that is, we count the number of lines parallel with the $y_h$-axis that intersect the rectangle for $x_{h}$ integer values. Let $n_{l}$ be the number of such parallel lines. We consider $n_{l} = n_{l}^{-} + n_{l}^{+}$, such that $n_{l}^{-}$ is the number of lines parallel to the $y_{h}$-axis whose intersection with the $x_{h}$-axis is a point $(i,0)$ for $i<0$ and $i\in \Zeta$, and $n_{l}^{+}$ is similar but for non-negative integer $i$. For example, in Figure \ref{fig:rectangle_robots} we have $n_{l}^{-} = 0$ and $n_{l}^{+} = 6$ (we marked below the values of the points over the $x$-axis, the equivalent over $x_{h}$-axis in order to aid enumerating them). Note that the point $(i, 0)$ may be outside of the rectangle, but it will still be counted if there are integer $(i, y_h)$ coordinates inside the rectangle. The next lemma shows how to compute $n_{l}^{+}$ and $n_{l}^{-}$ to aid in this proof development.

  \begin{lemma}
    On the $(x_{h},y_{h})$ coordinate system, the integer values for $x_{h}$ robot coordinates inside the rectangle EFGH are in the set $\{-n_{l}^{-}, \dots, n_{l}^{+}-1\}$ with
    \begin{equation}
      n_{l}^{+} = 
        \left\lfloor\frac{2 (vT-s) \cos(\psi - \pi/6) + 2s\sin(\vert \psi - \pi/6\vert ) }{\sqrt{3}d} + 1\right\rfloor, 
        \label{eq:nlp}
    \end{equation}
    and
      \begin{equation}
        n_{l}^{-} = 
        \left\lfloor\frac{2s\sin\left(\left\vert \psi - \pi/6\right\vert \right)}{\sqrt{3}d}\right\rfloor.
        \label{eq:nlm}
      \end{equation}
    \label{lemma:nlmnlp}
  \end{lemma}
  \begin{proof}
  \ifithasappendixforlemmas %
    See Online Appendix.
  \else %
  \begin{figure}[t]
    \centering
    \includegraphics[width=0.9\columnwidth]{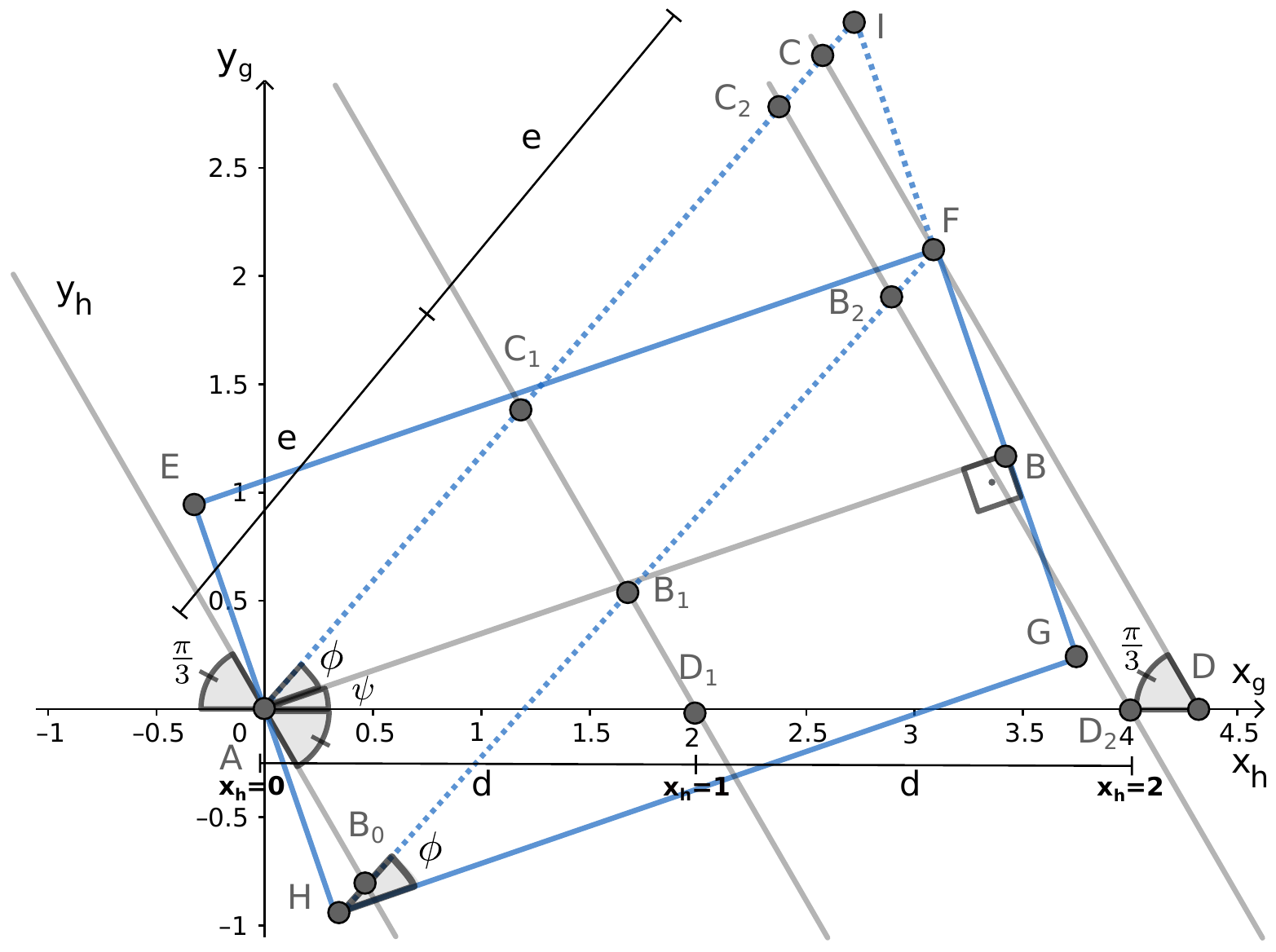}
    \caption{The goal is to count how many points named $B_{i}$ lie in the diagonal $\overline{HF}$. $B_{i}$ is the intersection of a parallel-to-$y_{h}$ line on a $x_{h}$ integer coordinate and the diagonal $\overline{HF}$. The triangles $AD_{i}C_{i}$ for any $i \in \{1,2\}$ and $ADC$ are similar. $\overline{AD_{i}}$ and $\overline{AC_{i}}$  have distance $i \cdot d$ and $i \cdot e$, respectively. In this example, $d = 2$ and there are three points lying over $\overline{HF}$.}
    \label{fig:rectangle_triangles}
  \end{figure}

 For getting $n_{l}^{+}$, we count how many parallel-to-$y_{h}$ lines when projected over the $x$-axis are distant from each other by $d$ on this axis and are inside the rectangle. These lines must intersect the diagonal $\overline{HF}$ of the rectangle, but commencing from the intersection between the $y_{h}$-axis and the diagonal (i.e., from $B_{0}$ in the Figure \ref{fig:rectangle_triangles}). Let $\phi = \arctan\left(\frac{2s}{vT-s}\right)$ be the  angle of the diagonal in relation to the rectangle base. We have two cases depending on the value of $\psi$.
  
  \begin{itemize} 
    \item Case $\psi \le \frac{\pi}{6}$: from Figure \ref{fig:rectangle_triangles}, every line parallel to $y_{h}$ is distant by $d$ on the projection onto the $x$-axis. The triangles $AD_{i}C_{i}$ for any $i \in \{1,\dots,n_{l}^{+}-1\}$ and $ADC$ are similar, $\vert \overline{AD_{1}}\vert = d$ and $\vert \overline{AC_{1}}\vert = e$, whose value is unknown for the moment. $\bigtriangleup ADC$ has angles $\widehat{CAD} = \psi + \phi$, $\widehat{ADC} = \pi/3$ and $\widehat{ACD} = \pi - \widehat{CAD} - \widehat{ADC} = 2\pi/3 - \psi - \phi$. As for every $i$, $\bigtriangleup AD_{i}C_{i} \sim \bigtriangleup ADC$, we have that 
    \begin{equation}
      \frac{\vert \overline{AC}\vert }{\vert \overline{AC_{1}}\vert } = \frac{\vert \overline{AD}\vert }{\vert \overline{AD_{1}}\vert } \LR \frac{\vert \overline{AC}\vert }{e}  = \frac{\vert \overline{AD}\vert }{d}.
      \label{eq:acac1}
    \end{equation} 
    
    As AHFI is a parallelogram, $\vert \overline{HF}\vert = \vert \overline{AI}\vert $ and $\vert \overline{FI}\vert = \vert \overline{AH}\vert = s$, then $\vert \overline{BI}\vert = 2s$. Thus, $\vert \overline{AI}\vert = \sqrt{(2s)^{2} + (vT-s)^{2}}$, because $\bigtriangleup ABI$ is right-angled.  
    Also, by the law of sines, we get $ \frac{\vert \overline{AD}\vert }{\sin(\widehat{ACD})} = \frac{\vert \overline{AC}\vert }{\sin(\widehat{ADC})} \LR $
    \begin{equation}
      \begin{aligned}
        \vert \overline{AD}\vert  &= \vert \overline{AC}\vert \frac{\sin(\widehat{ACD})}{\sin(\widehat{ADC})} 
          = \left(\vert \overline{AI}\vert - \vert \overline{CI}\vert \right)\frac{\sin(\widehat{ACD})}{\sin(\widehat{ADC})} \\
          &= \left(\vert \overline{AI}\vert - \vert \overline{CI}\vert \right)\frac{\sin(2\pi/3 - \psi - \phi)}{\sin(\pi/3)}.
      \end{aligned}
      \label{eq:adacsin1}
    \end{equation}
    $AB_{0}FC$ is a parallelogram as well, so $\vert \overline{AC}\vert = \vert \overline{B_{0}F}\vert = \vert \overline{HF}\vert - \vert \overline{HB_{0}}\vert $ and $\vert \overline{CI}\vert = \vert \overline{HB_{0}}\vert $.

    The $\bigtriangleup AB_{0}H$ has angles $\widehat{HAB_{0}} = \widehat{HAB} - \widehat{B_{0}AB} = \widehat{HAB} - (\widehat{B_{0}AD} + \widehat{DAB}) = \pi/2 - (\pi/3 + \psi) = \pi/6 - \psi$, $\widehat{AHB_{0}} = \widehat{AHG} - \widehat{FHG} = \pi/2 - \phi$ and $\widehat{HB_{0}A} = \pi - \widehat{HAB_{0}} - \widehat{AHB_{0}} = \pi/3 + \psi  + \phi$. By the law of sines, we have $\vert \overline{HB_{0}}\vert =\frac{\sin(\widehat{HAB_{0}})\vert \overline{AH}\vert }{\sin(\widehat{HB_{0}A})} = \frac{\sin(\pi/6 - \psi)s}{\sin(\pi/3 + \psi  + \phi)}$. Hence,
    \if\shortVersion 1
     from (\ref{eq:adacsin1}), 
     $\vert \overline{AD}\vert = 
       \left(\vert \overline{AI}\vert - \vert \overline{CI}\vert \right)\frac{\sin(2\pi/3 - \psi - \phi)}{\sin(\pi/3)},$ thus
     \begin{equation}
       \vert \overline{AD}\vert = \frac{2\cos(\pi/6 - \psi)   (vT-s)   +  {2s\sin(\pi/6 - \psi)}}{\sqrt{3}} 
       \label{eq:adacsin}
     \end{equation}
    \else
      $$
        \begin{aligned}
            &\vert \overline{AD}\vert 
            = \left(\vert \overline{AI}\vert - \vert \overline{CI}\vert \right)\frac{\sin(2\pi/3 - \psi - \phi)}{\sin(\pi/3)} 
             \hspace*{28mm} [\text{from (\ref{eq:adacsin1})}]
            \\
            &= \left(\sqrt{(2s)^{2} + (vT-s)^{2}} - \frac{s\sin(\pi/6 - \psi)}{\sin(\pi/3 + \psi  + \phi)}\right)\frac{\sin(2\pi/3 - \psi - \phi)}{\sin(\pi/3)}\\
            &= \sqrt{(2s)^{2} + (vT-s)^{2}}\frac{\sin(2\pi/3 - \psi - \phi)}{\sin(\pi/3)} - \frac{s\sin(\pi/6 - \psi)}{\sin(\pi/3)}\\ 
\ifexpandexplanation
              &= 2\sqrt{(2s)^{2} + (vT-s)^{2}}\frac{\sin(2\pi/3 - \psi - \phi)}{\sqrt{3}} - \frac{2s\sin(\pi/6 - \psi)}{\sqrt{3}}\\ 
\fi
            &= \frac{2\sqrt{(2s)^{2} + (vT-s)^{2}}{\sin\left(2\pi/3 - \psi - \phi\right)} - {2s\sin(\pi/6 - \psi)}}{\sqrt{3}}\\
            &= \frac{2\sqrt{(2s)^{2} + (vT-s)^{2}}\left(\sin(\frac{2\pi}{3} - \psi)\cos(\phi) - \cos\left(\frac{2\pi}{3} - \psi\right)\sin(\phi)\right)}{\sqrt{3}}\\ 
            &\phantom{=} \ - \frac{2s\sin(\frac{\pi}{6} - \psi)}{\sqrt{3}}\\
            &= \frac{2\sqrt{(2s)^{2} + (vT-s)^{2}}\left(        \frac{\sin(2\pi/3 - \psi)(vT-s)}{\sqrt{(2s)^{2} + (vT-s)^{2}}}        -  \frac{2s\cos(2\pi/3 - \psi)}{\sqrt{(2s)^{2} + (vT-s)^{2}}}        \right)}{\sqrt{3}} \\
            &\phantom{=} \ - \frac{2s\sin(\pi/6 - \psi)}{\sqrt{3}}\\
        \end{aligned}
      $$        
      \begin{align}
          &= \frac{2\left(\sin(2\pi/3 - \psi)        (vT-s)       - 2s\cos(2\pi/3 - \psi)                \right) - {2s\sin(\pi/6 - \psi)}}{\sqrt{3}}\nonumber\\
\ifexpandexplanation
          &= \frac{2\left(\sin(2\pi/3 - \psi)        (vT-s)       + 2s\sin(\pi/6 - \psi)        \right) - {2s\sin(\pi/6 - \psi)}}{\sqrt{3}}\nonumber\\
          &= \frac{2\sin(2\pi/3 - \psi)   (vT-s)   + 4s \sin(\pi/6 - \psi)   - {2s\sin(\pi/6 - \psi)}}{\sqrt{3}} \nonumber\\
          &= \frac{2\sin(2\pi/3 - \psi)   (vT-s)   +  {2s\sin(\pi/6 - \psi)}}{\sqrt{3}} \nonumber\\
\fi
          &= \frac{2\cos(\pi/6 - \psi)   (vT-s)   +  {2s\sin(\pi/6 - \psi)}}{\sqrt{3}} \label{eq:adacsin}
      \end{align}
    \fi
    Above we used $\sin(2\pi/3 - \psi) = \cos(\pi/6 - \psi)$, $\cos(2\pi/3 - \psi) = - \sin(\pi/6 - \psi)$, $\sin(2\pi/3 - \psi -\phi) = \sin(\pi/3 + \psi +\phi)$, $\sin(2\pi/3 - \psi - \phi) = \sin(2\pi/3 - \psi)\cos(\phi) - \cos(2\pi/3 - \psi)\sin(\phi)$, $\sin(\arctan(y/x)) = \frac{y}{\sqrt{x^{2}+y^{2}}}$, and $\cos($ $\arctan(y/x) ) = \frac{x}{\sqrt{x^{2}+y^{2}}}$.
    
    Therefore, the number of lines parallel to the $y_{h}$-axis intersecting $\overline{B_{0}F}$ for integer $x_{h}$ values is
    \if\shortVersion 1
      $
          n_{l}^{+} =  \left\lfloor\frac{\vert \overline{B_{0}F}\vert }{e} + 1\right\rfloor 
            = \big\lfloor\frac{2\cos(\pi/6 - \psi)   (vT-s)   +  {2s\sin(\pi/6 - \psi)}}{\sqrt{3}d} + 1\big\rfloor 
      $
      by using (\ref{eq:acac1}) and (\ref{eq:adacsin}).
    \else
      $$
        \begin{aligned}
          n_{l}^{+}
            &=  \left\lfloor\frac{\vert \overline{B_{0}F}\vert }{e} + 1\right\rfloor 
\ifexpandexplanation
            =  \left\lfloor\frac{\vert \overline{HF}\vert - \vert \overline{HB_{0}}\vert }{e} + 1\right\rfloor 
\fi
            =  \left\lfloor\frac{\vert \overline{AC}\vert }{e} + 1\right\rfloor 
            = \left\lfloor\frac{\vert \overline{AD}\vert }{d} + 1\right\rfloor
            &[\text{from (\ref{eq:acac1})}]
            \\
            &= \left\lfloor\frac{2\cos(\pi/6 - \psi)   (vT-s)   +  {2s\sin(\pi/6 - \psi)}}{\sqrt{3}d} + 1\right\rfloor 
             &[\text{from (\ref{eq:adacsin})}]
            \\
        \end{aligned}
      $$
    \fi
    \item Case $\psi > \frac{\pi}{6}$: Figure \ref{fig:rectangle_triangles2} shows this case. Observe that when $\psi > \frac{\pi}{6}$, $\overline{EA}$ is on the left side of the $y_{h}$-axis. Also, note that we are considering now the diagonal $\overline{EG}$, because the $y_{h}$-axis does not intersect the diagonal $\overline{HF}$ for these values of $\psi$. Then, we have to consider  $\overline{B_{0}G}$ to count  $n_{l}^{+}$.  Additionally, $\vert B_{0}G\vert = \vert AC\vert $, due to the $AB_{0}GC$ parallelogram properties. As in the previous case, for $i \in \{1, \dots, n_{l}^{+}-1\},\bigtriangleup AD_{i}C_{i} \sim \bigtriangleup ADC$, $\widehat{CAD} = \widehat{BAD} - \widehat{BAC} = \psi - \phi$, $\widehat{ADC} = \pi/3$, $\widehat{ACD} = \pi - \widehat{CAD} - \widehat{ADC} = 2\pi/3 - \psi + \phi$, and $\frac{\vert \overline{B_{0}G}\vert }{e} = \frac{\vert \overline{AC}\vert }{e} = \frac{\vert \overline{AD}\vert }{d}$, by the similarity of these triangles as we showed in the previous case. Also, we have $\widehat{EAB_{0}} = \widehat{DAE} - \widehat{DAB_{0}} = \psi + \pi/2 - 2\pi/3 = \psi - \pi/6$, $\widehat{B_{0}EA} = \widehat{FEA} - \widehat{FEB_{0}} = \pi/2-\phi$, $\widehat{EB_{0}A} = \pi - \widehat{B_{0}EA} - \widehat{EAB_{0}} = \pi - (\pi/2-\phi) - (\psi - \pi/6)  = 2\pi/3 + \phi - \psi $. Thus, by the law of sines, $\frac{\vert \overline{B_{0}E}\vert }{\sin(\widehat{EAB_{0}})} = \frac{\vert \overline{EA}\vert }{\sin(\widehat{EB_{0}A})} \LR \vert \overline{B_{0}E}\vert = \frac{s\sin(\widehat{EAB_{0}})}{\sin(\widehat{EB_{0}A})} = \frac{s\sin( \psi - \pi/6)}{\sin(2\pi/3 + \phi - \psi)}$.  $EAIG$ and $B_{0}ACG$ are parallelograms  sharing the points G and A, so $\vert \overline{B_{0}E}\vert = \vert \overline{CI}\vert $. By following similar steps as before, we get
    \if\shortVersion 1
     $ n_{l}^{+} = \left\lfloor\frac{2\cos(\pi/6 - \psi)(vT-s) + 2s\sin(\psi - \pi/6)}{\sqrt{3}d} + 1\right\rfloor.$
    \else
\ifexpandexplanation 
      $$
        \begin{aligned}
        &n_{l}^{+}
          =  \left\lfloor\frac{\vert \overline{B_{0}G}\vert }{e} + 1\right\rfloor 
          =  \left\lfloor\frac{\vert \overline{AC}\vert }{e} + 1\right\rfloor 
          = \left\lfloor\frac{\vert \overline{AD}\vert }{d} + 1\right\rfloor\\
        \end{aligned}
      $$
      $$
        \begin{aligned}
          & 
          = \left\lfloor\frac{\left(\vert \overline{AI}\vert - \vert \overline{CI}\vert \right)\frac{\sin(2\pi/3 - \psi + \phi)}{\sin(\pi/3)}}{d} + 1\right\rfloor\\
          &= \left\lfloor\frac{\left(\sqrt{(2s)^{2} + (vT-s)^{2}} - \frac{s\sin(\psi - \pi/6)}{\sin(2\pi/3 - \psi  + \phi)}\right)\frac{\sin(2\pi/3 - \psi + \phi)}{\sin(\pi/3)}}{d} + 1\right\rfloor\\
          &= \left\lfloor\frac{\sqrt{(2s)^{2} + (vT-s)^{2}}\frac{\sin(2\pi/3 - \psi + \phi)}{\sin(\pi/3)} - \frac{s\sin(\psi - \pi/6)}{\sin(\pi/3)}}{d} + 1\right\rfloor\\
        \end{aligned}
      $$
      $$
        \begin{aligned}
          &= \left\lfloor\frac{2\sqrt{(2s)^{2} + (vT-s)^{2}}{\sin(2\pi/3 - \psi + \phi)} - {2s\sin(\psi - \pi/6)}}{\sqrt{3}d} + 1\right\rfloor\\
          &= \Bigg\lfloor\frac{2\sqrt{(2s)^{2} + (vT-s)^{2}}( \sin(2\pi/3 - \psi)\cos(\phi) + \cos(2\pi/3 - \psi)\sin(\phi) )}{\sqrt{3}d}  \\
            &\phantom{=}\ -\frac{{2s\sin(\psi - \pi/6)}}{\sqrt{3}d} + 1\Bigg\rfloor\\
          &= \left\lfloor\frac{2\sin(2\pi/3 - \psi)(vT-s) + 4s\cos(2\pi/3 - \psi)  - {2s\sin(\psi - \pi/6)}}{\sqrt{3}d} + 1\right\rfloor\\
        \end{aligned}
      $$
      $$
        \begin{aligned}
          &= \left\lfloor\frac{2\sin(2\pi/3 - \psi)(vT-s) + 4s\sin(\psi - \pi/6)  - {2s\sin(\psi - \pi/6)}}{\sqrt{3}d} + 1\right\rfloor\\
          &= \left\lfloor\frac{2\sin(2\pi/3 - \psi)(vT-s) + 2s\sin(\psi - \pi/6)}{\sqrt{3}d} + 1\right\rfloor \\ 
          &= \left\lfloor\frac{2\cos(\pi/6 - \psi)(vT-s) + 2s\sin(\psi - \pi/6)}{\sqrt{3}d} + 1\right\rfloor.\\
        \end{aligned}
      $$
\else
     $$
      \begin{aligned}
        &n_{l}^{+}
          = \left\lfloor\frac{2\cos(\pi/6 - \psi)(vT-s) + 2s\sin(\psi - \pi/6)}{\sqrt{3}d} + 1\right\rfloor.\\
      \end{aligned}
      $$
\fi
    \fi
    We used this time $\sin(2\pi/3 - \psi) =  \cos(\psi - \pi/6)$ and $\cos(2\pi/3 - \psi) =  \sin(\psi - \pi/6)$.

  \end{itemize}

  For the final result on (\ref{eq:nlp}), we simplified using the fact that when $\psi \le \pi/6$, $\sin(\vert \psi - \pi/6\vert ) = \sin(\pi/6 - \psi)$, otherwise, $\sin(\vert \psi - \pi/6\vert ) = \sin(\psi - \pi/6).$
  
  \begin{figure}[t]
    \centering
    \includegraphics[width=0.85\columnwidth]{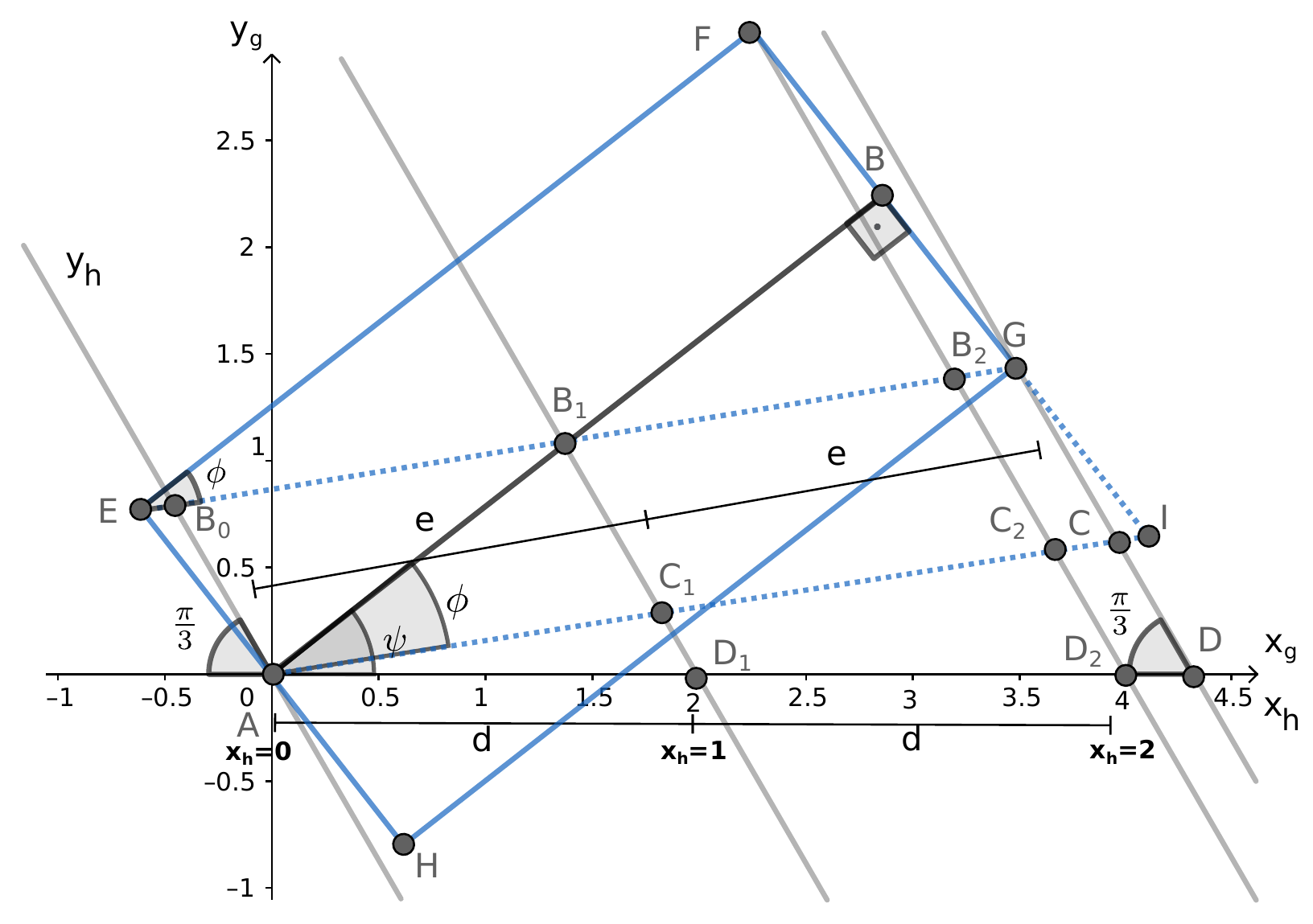}
    \caption{The goal is to count how many points named $B_{i}$ lie in the diagonal $\overline{EG}$. The triangles $AD_{i}C_{i}$ for any $i \in \{1,2\}$ and $ADC$ are similar. $\overline{AD_{i}}$ and $\overline{AC_{i}}$  have distance $i\cdot d$ and $i\cdot e$, respectively. In this example, $d = 2$ and there are three points lying over $\overline{EG}$. }
    \label{fig:rectangle_triangles2}
  \end{figure}

  \begin{figure}[t]
    \centering
    \includegraphics[width=0.85\columnwidth]{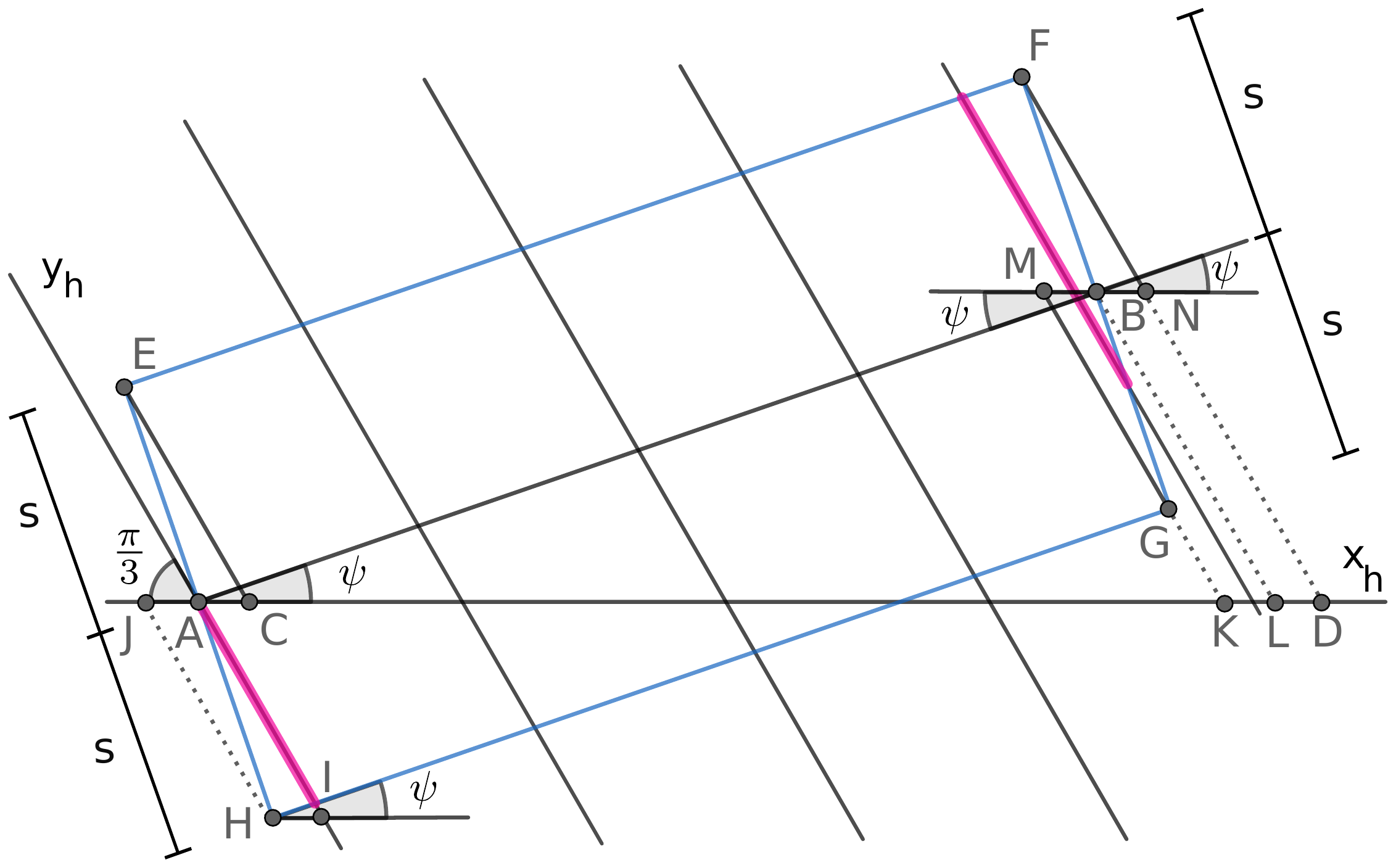}
    \caption{In this example $\psi \le \pi/6$. The pink line on the left side is an example of one satisfying Lemma \ref{lemma:Interval1}, while the one on the right side, Lemma \ref{lemma:endcase}. The triangles ACE, HIA, BMG and BNF are congruent, because their respective angles are equal -- due to parallelism -- and $\vert \overline{EA}\vert =\vert \overline{AH}\vert =\vert \overline{GB}\vert =\vert \overline{FB}\vert =s$. In this example, except for $\protect\overleftrightarrow{JH}, \protect\overleftrightarrow{EC}, \protect\overleftrightarrow{MG},  \protect\overleftrightarrow{BL}$ and $\protect \overleftrightarrow{FD}$, the lines  parallel-to-$y_{h}$ are distant by $d$  on the projection over the $x$-axis and can have robots on them. }
    \label{fig:not_full_lines}
  \end{figure}
  
  \begin{figure}[t]
    \centering
    \includegraphics[width=0.7\columnwidth]{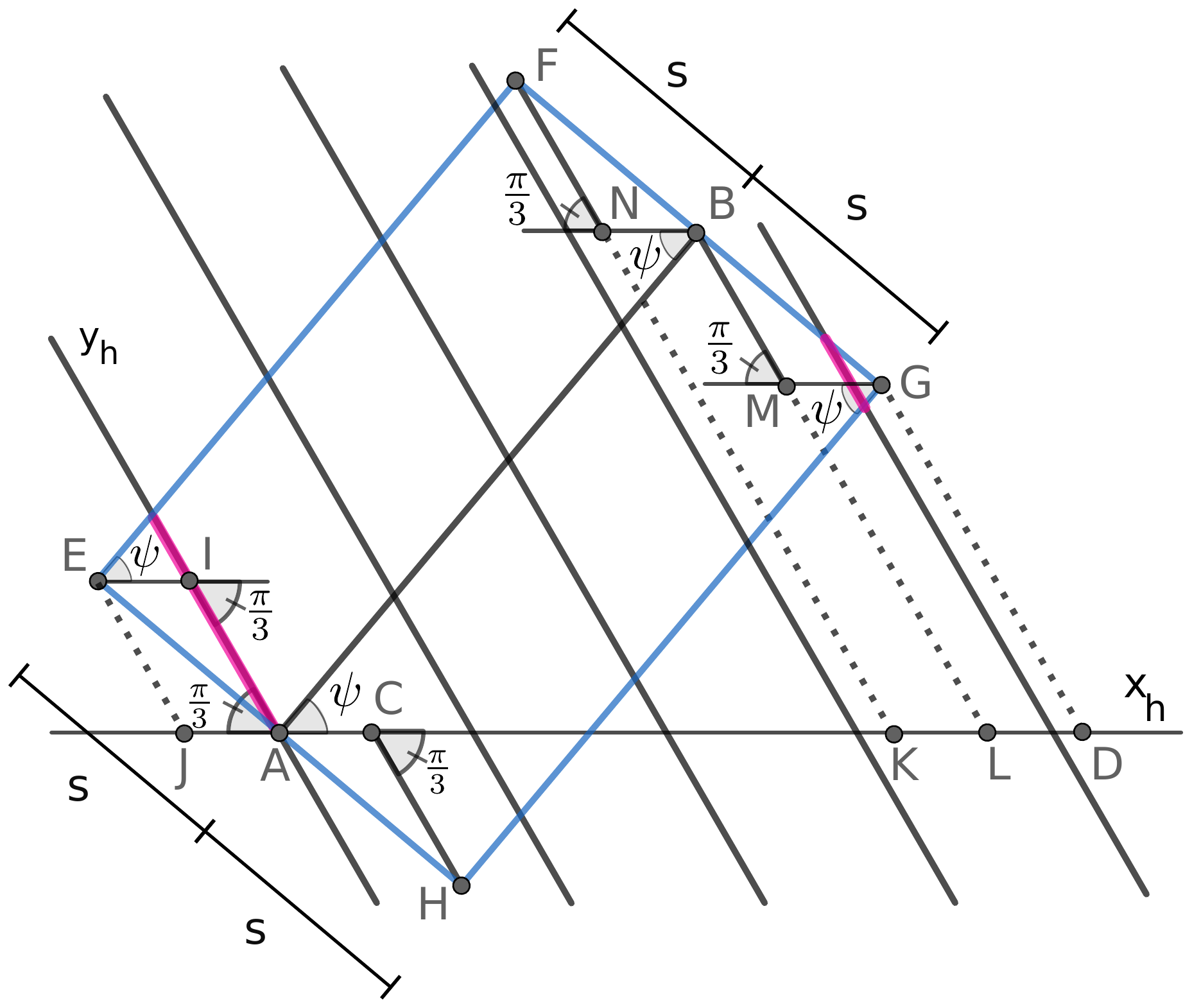}
    \caption{In this example $\psi > \pi/6$, the side EH has an angle greater than zero with the $y_{h}$-axis. The pink line on the left side is an example of one satisfying Lemma \ref{lemma:Interval1}, while the one on the right side, Lemma \ref{lemma:endcase}.  The triangles AIE, HCA, FNB and BMG are congruent, because their respective angles are equal -- due to parallelism -- and $\vert \overline{EA}\vert =\vert \overline{AH}\vert =\vert \overline{GB}\vert =\vert \overline{FB}\vert =s$. Except for $\protect\overleftrightarrow{EJ}, \protect\overleftrightarrow{CH}, \protect\overleftrightarrow{FK}, \protect\overleftrightarrow{BL}$ and $\protect \overleftrightarrow{GD}$, the lines  parallel-to-$y_{h}$ are distant by $d$ on the projection over the $x$-axis and can have robots on them.  }
    \label{fig:not_full_lines2}
  \end{figure}
  
  For $n_{l}^{-}$, we also calculate how many lines parallel to the $y_{h}$-axis projected over the $x$-axis are distant from each other by $d$ on this axis and are inside the rectangle. However, we consider only those on the left side of the point $A$, i.e., commencing from the one whose intersection with the $x$-axis is at $(-d,0)$, equivalently, $(-1,0)$ on the $(x_{h},y_{h})$ coordinate system. We also have two cases here.
    
    \begin{itemize}
      \item Case $\psi \le \pi/6$:  
      Figure \ref{fig:not_full_lines} shows the $\bigtriangleup HIA$ on the left side of the rectangle EFGH. As the robots are over the parallel-to-$y_{h}$ lines distant by $d$ on the projection over the $x$-axis, we want to know how many parallel lines intersect $\overline{HI}$ (equivalently, how many such lines intersect $\overline{JA}$ due to parallelism), excluding $\overleftrightarrow{AI}$ (because it was already counted on $n_{l}^{+}$). Thus,
      \if\shortVersion 1
        $n_{l}^{-} = \left\lfloor\frac{\vert HI\vert }{d}\right\rfloor.$
      \else
        $$n_{l}^{-} = \left\lfloor\frac{\vert HI\vert }{d}\right\rfloor.$$
      
      \fi
      We have $\vert \overline{AH}\vert = s, \widehat{H} = \pi/2 + \psi, \widehat{I} = \pi/3$ and $\widehat{A} = \pi - \widehat{I} - \widehat{H} = \pi - \pi/3 - (\pi/2 + \psi) = \pi/6 - \psi$. By the law of sines on the angles opposite to the sides $\overline{AH}$ and $\overline{HI}$, results the following 
      \begin{equation}
        \begin{aligned}
        \vert HI\vert 
          = \frac{\vert AH\vert \sin(\widehat{A})}{\sin(\widehat{I})} 
          = \frac{s\sin\left(\frac{\pi}{6} - \psi\right)}{\sin\left(\frac{\pi}{3}\right)} 
          = \frac{2s\sin\left(\frac{\pi}{6} - \psi\right)}{\sqrt{3}}.
        \end{aligned}
        \label{eq:hisize}
      \end{equation}
      Thus, 
      \if\shortVersion 1
        $
        n_{l}^{-}  = \left \lfloor\frac{2s\sin\left(\frac{\pi}{6} - \psi\right)}{\sqrt{3}d}  \right \rfloor.
        $
      \else
        $$
        n_{l}^{-}  = \left \lfloor\frac{2s\sin\left(\frac{\pi}{6} - \psi\right)}{\sqrt{3}d}  \right \rfloor.
        $$
      \fi
      \item Case $\psi > \pi/6$. Figure \ref{fig:not_full_lines2} illustrates this case. The reasoning is similar to the previous case, but now we use $\bigtriangleup EIA$. Then, $\vert \overline{EA}\vert = s, \widehat{E} =  \pi/2 - \psi$, $\widehat{I} = 2\pi/3$ and $\widehat{A} =  
       \pi - \widehat{I} - \widehat{E} 
       = \pi - 2\pi/3 - (\pi/2 - \psi) 
       =  \psi - \pi/6$. Consequently, 
      \if\shortVersion 1 
        $
            n_{l}^{-} 
              = \left\lfloor\frac{\vert \overline{EI}\vert }{d} \right\rfloor   
              = \left\lfloor\frac{2s\sin\left(\psi - \frac{\pi}{6}\right)}{\sqrt{3}d}\right\rfloor.  
        $
      \else
        $$
          \begin{aligned}
            n_{l}^{-} 
              = \left\lfloor\frac{\vert \overline{EI}\vert }{d} \right\rfloor   
              = \left\lfloor\frac{2s\sin\left(\psi - \frac{\pi}{6}\right)}{\sqrt{3}d}\right\rfloor.  \\
          \end{aligned}
        $$
      \fi
    \end{itemize}
    
    For the final result in (\ref{eq:nlm}), we use the absolute value inside the sine function to combine both cases.
    \fi %
  \end{proof}
  
  By the previous lemma, we have calculated the interval of integer $x_{h}$ values needed for counting the robots inside the rectangle. In the next lemma, we get the equation for the number of robots at the rectangular part ($N_{R}$) ranging from these integer $x_{h}$ values. Although the proposition we are now proving gives the throughput in terms of $\theta$, we are first going to calculate this number in terms of $\psi$.
  
  \begin{lemma}
    For $\psi \in \lbrack 0,\pi/3 \rparen$,
    \if\shortVersion 1
      $
        N_{R}(T,\psi) = 
          \sum_{x_{h}=-n_{l}^{-}}^{n_{l}^{+}-1}\left( \lfloor Y_{2}^{R}(x_{h}) \rfloor - \lceil Y_{1}^{R}(x_{h}) \rceil + 1\right).
      $
    \else
      $$
        N_{R}(T,\psi) = 
          \sum_{x_{h}=-n_{l}^{-}}^{n_{l}^{+}-1}\left( \lfloor Y_{2}^{R}(x_{h}) \rfloor - \lceil Y_{1}^{R}(x_{h}) \rceil + 1\right).
      $$
    \fi
    If for some $x_{h}$ $\left\lfloor Y_{2}^{R}(x_{h}) \right\rfloor < \left \lceil Y_{1}^{R}(x_{h}) \right \rceil $, we assume the respective summand for this $x_{h}$ being zero.
    \label{lemma:NR}
  \end{lemma}
  \begin{proof}
  \ifithasappendixforlemmas %
      See Online Appendix.
  \else %
  By the previous lemma and knowing that the positions of the robots are integer coordinates over the hexagonal grid coordinate space, 
  \if\shortVersion 1
    $
      N_{R}(T,\psi) 
      = \sum_{x_{h}=-n_{l}^{-}}^{n_{l}^{+}-1} \left( \lfloor Y_{2}^{R}(x_{h}) \rfloor - \lceil Y_{1}^{R}(x_{h}) \rceil + 1 \right).
    $
  \else
    $$
      \begin{aligned}
        N_{R}(T,\psi) 
        &= \sum_{x_{h}=-n_{l}^{-}}^{n_{l}^{+}-1}\sum_{y_{h}=\lceil Y_{1}^{R}(x_{h}) \rceil}^{\lfloor Y_{2}^{R}(x_{h}) \rfloor}1
        = \sum_{x_{h}=-n_{l}^{-}}^{n_{l}^{+}-1} \left( \lfloor Y_{2}^{R}(x_{h}) \rfloor - \lceil Y_{1}^{R}(x_{h}) \rceil + 1 \right).
      \end{aligned}
    $$
  \fi
  since (\ref{eq:y1xh}) and (\ref{eq:y2xh}) give the minimum ($Y_{1}^{R}$) and maximum ($Y_{2}^{R}$) $y_{h}$ coordinates for a given $x_{h}$ value such that the robot is inside the rectangle.  Note that the last summation can only be used when $\left\lfloor Y_{2}^{R}(x_{h}) \right\rfloor \ge \left \lceil Y_{1}^{R}(x_{h}) \right \rceil $, otherwise a negative number of robots would be accounted.   
  \fi %
  \end{proof}
  
  In special, for $\psi = \pi/6$, by (\ref{eq:y1xh}) and (\ref{eq:y2xh}),
  \begin{equation}
    N(T,\pi/6) = \sum_{x_{h}=0}^{
    \left\lfloor\frac{2  (vT-s) }{\sqrt{3}d} \right\rfloor
    } \left(\left \lfloor \frac{\sqrt{3}y_{2} + d x_{h}}{2d} \right \rfloor - \left \lceil \frac{\sqrt{3}y_{1} + d x_{h}}{2d} \right \rceil + 1 \right). 
    \label{eq:30degreescasepsi}
  \end{equation}

  If $\psi \neq \pi/6$, each parallel-to-$y_{h}$-axis  line intersects  two segments of the rectangle EFGH. The $y_{h}$-components of the two intersections of a rectangle side and such lines are the values of $Y_{1}^{R}(x_{h})$ and  $Y_{2}^{R}(x_{h})$ for a given $x_{h}$. Hence, the set of $x_{h}$ integer values $\{-n_{l}^{-}, \dots, n_{l}^{+}-1\}$ will be cut in disjoint subsets based on the $\max$ and $\min$ outcomes of (\ref{eq:y1xh}) and (\ref{eq:y2xh}). That is, $Y_{1}^{R}(x_{h})$ and $Y_{2}^{R}(x_{h})$, respectively; equivalently, which two sides of the rectangle the parallel-to-$y_{h}$-axis line corresponding to $(x_{h},0)$ intersects. The following lemmas describe each subset: $\{- n_{l}^{-}, \dots, n_{l}^{-}\}$ in Lemma \ref{lemma:Interval1}; $\{n_{l}^{-} + 1, \dots, K'-1 \}$ in Lemma \ref{lemma:MiddleInterval}; $\{K', \dots, n_{l}^{+}-1\}$ in Lemma \ref{lemma:endcase}, for an integer $K'$ defined later.

  \begin{lemma}
    Consider parallel-to-$y_{h}$-axis lines inside the rectangle EFGH intersecting the $x_{h}$-axis at $(x_{h},0)$, for $x_{h} \in \Zeta$. The two following statements  are equivalent:
    \begin{itemize}
      \item[(I)] 
        If $\psi < \pi/6$,
        \begin{equation}
          Y_{1}^{R}(x_{h}) = 
             \frac{\frac{2y_1}{d} + 2\tan(\psi) x_h}{\sqrt{3} + \tan(\psi)} \text{ and }
          Y_{2}^{R}(x_{h}) = 
            \frac{-2 x_{h}}{\sqrt{3} \tan(\psi) - 1},
          \label{eq:eq48}
        \end{equation}
        and, if $\psi > \pi/6$,
        \begin{equation}
          Y_{1}^{R}(x_{h}) = 
            \frac{-2 x_{h}}{\sqrt{3} \tan(\psi) - 1} \text{ and }  
          Y_{2}^{R}(x_{h}) = 
           \frac{\frac{2y_2}{d} + 2\tan(\psi) x_h}{\sqrt{3} + \tan(\psi)}.
        \end{equation}
      \item[(II)] 
         $x_{h} \in \{- n_{l}^{-}, \dots,   n_{l}^{-}\}$.
    \end{itemize}
    \label{lemma:Interval1}
  \end{lemma}
  \begin{proof} 
    \ifithasappendixforlemmas %
      See Online Appendix.
    \else %
       $(I) \Rightarrow (II):$ Let $\psi < \pi/6$. By (\ref{eq:y1xh}) and (\ref{eq:y2xh}), (\ref{eq:eq48}) is equivalent to 
    \if\shortVersion 1
      $
      \frac{ \sin(\psi) x_h -\frac{s}{d}}{\cos\left(\frac{\pi}{6}-\psi\right)} \ge \frac{\frac{v T-s}{d} - \cos(\psi)x_{h}}{\sin\left(\psi-\frac{\pi}{6}\right)}
      $ and $
      \frac{ \sin(\psi) x_h +\frac{s}{d}}{\cos\left(\frac{\pi}{6}-\psi\right)} \ge \frac{-\cos(\psi)x_{h}}{\sin\left(\psi-\frac{\pi}{6}\right)}.
      $
    \else
      $$
      \frac{ \sin(\psi) x_h -\frac{s}{d}}{\cos\left(\frac{\pi}{6}-\psi\right)} \ge \frac{\frac{v T-s}{d} - \cos(\psi)x_{h}}{\sin\left(\psi-\frac{\pi}{6}\right)}
      \text{ and }
      \frac{ \sin(\psi) x_h +\frac{s}{d}}{\cos\left(\frac{\pi}{6}-\psi\right)} \ge \frac{-\cos(\psi)x_{h}}{\sin\left(\psi-\frac{\pi}{6}\right)}.
      $$
    \fi
    From the second inequality, we have 
    \if\shortVersion 1
      $
            \frac{ \sin(\psi) x_h +\frac{s}{d}}{\cos\left(\frac{\pi}{6}-\psi\right)}
            \ge
            \frac{- \cos(\psi)x_{h}}{\sin\left(\psi-\frac{\pi}{6}\right)}
        \LR 
           x_{h}
          \le
          \frac{- 2s\sin\left(\psi-\frac{\pi}{6}\right)}{\sqrt{3}d}
          = \frac{2s\sin\left(\frac{\pi}{6}-\psi\right)}{\sqrt{3}d}.
      $
    \else
      $$
        \begin{aligned}
          &\phantom{\LR} 
            \frac{ \sin(\psi) x_h +\frac{s}{d}}{\cos\left(\frac{\pi}{6}-\psi\right)}
            \ge
            \frac{- \cos(\psi)x_{h}}{\sin\left(\psi-\frac{\pi}{6}\right)}
          \\ 
          &\LR
            \left(\frac{ \sin(\psi) }{\cos\left(\frac{\pi}{6}-\psi\right)} + \frac{ \cos(\psi)}{\sin\left(\psi-\frac{\pi}{6}\right)}\right) x_{h}
            \ge
            -\frac{s}{d\cos\left(\frac{\pi}{6}-\psi\right)}
          \\
          &\LR
            \left( \frac{\sin(\psi) \sin\left(\psi-\frac{\pi}{6}\right) +  \cos(\psi)\cos\left(\frac{\pi}{6}-\psi\right)}{\cos\left(\frac{\pi}{6}-\psi\right)\sin\left(\psi-\frac{\pi}{6}\right)}\right) x_{h}
            \ge
            - \frac{s\sin\left(\psi-\frac{\pi}{6}\right)}{d\cos\left(\frac{\pi}{6}-\psi\right)\sin\left(\psi-\frac{\pi}{6}\right)}
          \\ 
        \end{aligned}
      $$
      $$
        \begin{aligned}
\ifexpandexplanation 
        &\LR
          \left( \frac{\sin(\psi) \sin\left(\psi-\frac{\pi}{6}\right) +  \cos(\psi)\cos\left(\psi-\frac{\pi}{6}\right)}{\cos\left(\frac{\pi}{6}-\psi\right)\sin\left(\psi-\frac{\pi}{6}\right)}\right) x_{h}
          \ge
        - \frac{s\sin\left(\psi-\frac{\pi}{6}\right)}{d\cos\left(\frac{\pi}{6}-\psi\right)\sin\left(\psi-\frac{\pi}{6}\right)}
        \\ 
        &\LR
          \left( \frac{\cos(\psi - (\psi-\frac{\pi}{6}))}{\cos\left(\frac{\pi}{6}-\psi\right)\sin\left(\psi-\frac{\pi}{6}\right)}\right) x_{h}
          \ge
         - \frac{s\sin\left(\psi-\frac{\pi}{6}\right)}{d\cos\left(\frac{\pi}{6}-\psi\right)\sin\left(\psi-\frac{\pi}{6}\right)}
        \\
        &\LR
          \left( \frac{\cos(\psi - \psi + \frac{\pi}{6})}{\cos\left(\frac{\pi}{6}-\psi\right)\sin\left(\psi-\frac{\pi}{6}\right)}\right) x_{h}
          \ge
         - \frac{s\sin\left(\psi-\frac{\pi}{6}\right)}{d\cos\left(\frac{\pi}{6}-\psi\right)\sin\left(\psi-\frac{\pi}{6}\right)}
        \\
\fi
        &\LR
          \left( \frac{\cos(\frac{\pi}{6})}{\cos\left(\frac{\pi}{6}-\psi\right)\sin\left(\psi-\frac{\pi}{6}\right)}\right) x_{h}
          \ge
         - \frac{s\sin\left(\psi-\frac{\pi}{6}\right)}{d\cos\left(\frac{\pi}{6}-\psi\right)\sin\left(\psi-\frac{\pi}{6}\right)}
        \\
        &\LR
          \left( \frac{\frac{\sqrt{3}}{2}}{\cos\left(\frac{\pi}{6}-\psi\right)\sin\left(\psi-\frac{\pi}{6}\right)}\right) x_{h}
          \ge
          - \frac{s\sin\left(\psi-\frac{\pi}{6}\right)}{d\cos\left(\frac{\pi}{6}-\psi\right)\sin\left(\psi-\frac{\pi}{6}\right)}
        \\
\ifexpandexplanation
        &\LR
           x_{h}
          \le
          \frac{- s\sin\left(\psi-\frac{\pi}{6}\right)}{\frac{\sqrt{3}}{2}d}
        \\
\fi
        &\LR 
           x_{h}
          \le
          \frac{- 2s\sin\left(\psi-\frac{\pi}{6}\right)}{\sqrt{3}d}
          = \frac{2s\sin\left(\frac{\pi}{6}-\psi\right)}{\sqrt{3}d}.
      \end{aligned}
    $$
  \fi
  The change of inequality sign above is due to $\cos\left(\frac{\pi}{6}-\psi\right)\sin\left(\psi-\frac{\pi}{6}\right) < 0$ for $\psi < \pi/6$.  As $x_{h} \in \Zeta$, 
  \if\shortVersion 1
    $
        x_{h} \le  \left\lfloor\frac{2s\sin\left(\frac{\pi}{6} - \psi \right)}{\sqrt{3}d}\right\rfloor = n_{l}^{-}.
    $
  \else
    $$
        x_{h} \le  \left\lfloor\frac{2s\sin\left(\frac{\pi}{6} - \psi \right)}{\sqrt{3}d}\right\rfloor = n_{l}^{-}.
    $$
  \fi
  The lower value on $x_{h}$ is obtained by Lemma \ref{lemma:nlmnlp}, as to be inside the rectangle EFGH $x_{h} \ge -n_{l}^{-}$. For $\psi > \pi/6$, we obtain the same result by a similar reasoning, but without changing the inequality sign since in this case $\cos\left(\frac{\pi}{6}-\psi\right)\sin\left(\psi-\frac{\pi}{6}\right) > 0$.

    $(II) \Rightarrow (I):$ 
    From (\ref{eq:boundsyh1}), (\ref{eq:xhvTcos1}) (i.e., the line equations for $\overleftrightarrow{HG}$, $\overleftrightarrow{EH}$ and $\overleftrightarrow{EF}$), (\ref{eq:y1xh}) and (\ref{eq:y2xh}) (i.e., the definitions of $Y_{1}^{R}$ and $Y_{2}^{R}$), we have, if $\psi < \pi/6$, 
    \if\shortVersion 1 
      $(x_{h},Y_{1}^{R}(x_{h})) \in \overleftrightarrow{HG} \LR Y_{1}^{R}(x_{h}) =  \frac{\frac{2y_1}{d} + 2\tan(\psi) x_h}{\sqrt{3} + \tan(\psi)},$
      $(x_{h},Y_{2}^{R}(x_{h})) \in \overleftrightarrow{EH} \LR Y_{2}^{R}(x_{h}) =  \frac{-2 x_{h}}{\sqrt{3} \tan(\psi) - 1},$
      and, if $\psi > \pi/6$,
      $(x_{h},Y_{1}^{R}(x_{h})) \in \overleftrightarrow{EH} \LR Y_{1}^{R}(x_{h}) = \frac{-2 x_{h}}{\sqrt{3} \tan(\psi) - 1} ,$
      $(x_{h},Y_{2}^{R}(x_{h})) \in \overleftrightarrow{EF} \LR Y_{2}^{R}(x_{h}) =  \frac{\frac{2y_2}{d} + 2\tan(\psi) x_h}{\sqrt{3} + \tan(\psi)}.$
    \else
      $$(x_{h},Y_{1}^{R}(x_{h})) \in \overleftrightarrow{HG} \LR Y_{1}^{R}(x_{h}) =  \frac{\frac{2y_1}{d} + 2\tan(\psi) x_h}{\sqrt{3} + \tan(\psi)},$$
      $$(x_{h},Y_{2}^{R}(x_{h})) \in \overleftrightarrow{EH} \LR Y_{2}^{R}(x_{h}) =  \frac{-2 x_{h}}{\sqrt{3} \tan(\psi) - 1},$$
      and, if $\psi > \pi/6$,
      $$(x_{h},Y_{1}^{R}(x_{h})) \in \overleftrightarrow{EH} \LR Y_{1}^{R}(x_{h}) = \frac{-2 x_{h}}{\sqrt{3} \tan(\psi) - 1} ,$$
      $$(x_{h},Y_{2}^{R}(x_{h})) \in \overleftrightarrow{EF} \LR Y_{2}^{R}(x_{h}) =  \frac{\frac{2y_2}{d} + 2\tan(\psi) x_h}{\sqrt{3} + \tan(\psi)}.$$
      
    \fi
    Then, we prove this part by showing that for all $x_{h} \in \{- n_{l}^{-},\dots,n_{l}^{-}\}$, the line parallel to the $y_{h}$-axis intercepting the point $(x_{h},0)$ intercepts both sides $\overline{EH}$ and $\overline{HG}$ (and no other), if $\psi < \pi/6$ (Figure \ref{fig:not_full_lines}), and,  if $\psi > \pi/6$, both sides $\overline{EH}$ and $\overline{EF}$ (and no other) (Figure \ref{fig:not_full_lines2}).
    
  \begin{itemize}
    \item Case $\psi < \pi/6$:  
    Figure \ref{fig:not_full_lines} shows the triangles HIA, ACE and BMG inside the rectangle EFGH. As the robots are over the parallel lines to the $y_{h}$-axis, which are distant by $d$ when projected over the $x$-axis, we want to know how many such parallel lines intersect $\overline{HI}$ (equivalently, how many such lines intersect $\overline{JA}$ due to parallelism) or $\overline{AC}$.
    For such parallel lines that intersect $\overline{HI}$,  Lemma \ref{lemma:nlmnlp} showed that for every $x_{h} \in \{-n_{l}^{-},\dots,-1\}$  the line parallel to $y_{h}$-axis intersecting $(x_{h},0)$ is inside the rectangle. Also, these lines intersect the sides $\overline{EH}$ and $\overline{HG}$, as any line parallel to $\overleftrightarrow{AI}$ which is on its left side intersects the sides $\overline{EH}$ and $\overline{HG}$ if it is inside the rectangle.
    For the case where such parallel lines intersect  $\overline{AC}$, we need to know the maximum integer value, $M$, such that these parallel lines still intersect the sides $\overline{EH}$ and $\overline{HG}$ for any $x_{h} \in \{0, \dots, M\}$. Starting from point $A$ (that is, when $x_{h} = 0$), we have
    \if\shortVersion 1
      $M= \left\lfloor\frac{\vert \overline{AC}\vert }{d}\right\rfloor.$
    \else
      $$M= \left\lfloor\frac{\vert \overline{AC}\vert }{d}\right\rfloor.$$
    \fi
    We have $\vert \overline{AH}\vert = \vert \overline{EA}\vert = s$, $\overleftrightarrow{AI} \parallel \overleftrightarrow{EC}$, $\overleftrightarrow{HI} \parallel \overleftrightarrow{AC}$, and $\overleftrightarrow{AH} \parallel \overleftrightarrow{AE}$ (as $E$, $A$ and $H$ are collinear), then $\widehat{IHA} = \widehat{CAE}, \widehat{AIH} =  \widehat{ECA}$, and $\widehat{HAI} = \widehat{AEC}$. Thus, $\bigtriangleup HIA \cong \bigtriangleup ACE$, then $\vert \overline{AC}\vert = \vert \overline{HI}\vert $, whose value has been previously calculated in Lemma \ref{lemma:nlmnlp}, leading to
    \if\shortVersion 1
      $
      M = \left \lfloor\frac{2s\sin\left(\frac{\pi}{6} - \psi\right)}{\sqrt{3}d}  \right \rfloor = n_{l}^{-}.
      $
    \else
      $$
      M = \left \lfloor\frac{2s\sin\left(\frac{\pi}{6} - \psi\right)}{\sqrt{3}d}  \right \rfloor = n_{l}^{-}.
      $$
    \fi
    Hence, for any $x_{h} \in \{0, \dots, n_{l}^{-}\}$, those parallel lines intersect the sides $\overline{EH}$ and $\overline{HG}$. 
    \item Case $\psi > \pi/6$: Figure \ref{fig:not_full_lines2} illustrates this case. The reasoning is similar to the previous case, but using that $\bigtriangleup AIE \cong \bigtriangleup HCA  $. As the value for $\vert \overline{EI}\vert /d$ also has been calculated in Lemma \ref{lemma:nlmnlp} for this figure, then 
    \if\shortVersion 1
      $
        M 
          = \left\lfloor\frac{2s\sin\left(\psi - \frac{\pi}{6}\right)}{\sqrt{3}d}\right\rfloor = n_{l}^{-}.
      $
    \else
      $$
        M 
          = \left\lfloor\frac{2s\sin\left(\psi - \frac{\pi}{6}\right)}{\sqrt{3}d}\right\rfloor = n_{l}^{-}.
      $$
    \fi 
    Consequently, for any $x_{h} \in \{-n_{l}^{-}, \dots, n_{l}^{-}\}$, the parallel-to-$y_{h}$-axis line at $(x_{h},0)$  intersects the sides $\overline{EH}$ and $\overline{EF}$ in this case. \qed     
  \end{itemize}  \renewcommand{\qedsymbol}{}
  \fi %
  \end{proof}
  
The next lemma will define the integer $K'$ mentioned before. This number will be compared with the integer $x_{h}$ coordinate of the point $(n_{l}^{+}-1,0)$ intersected by the rightmost parallel-to-$y_{h}$-axis line inside the rectangle EFGH. Assuming $\theta \neq \pi/6$, if this rightmost line  intersects a point on the $x_{h}$-axis with an integer coordinate less than $K'$, then no parallel-to-$y_{h}$-axis line intersects the rectangle right side $\overline{FG}$. However, if the intersection point coordinate is greater than or equal to $K'$, then at least one parallel line crosses  $\overline{FG}$.

  \begin{lemma}
    Consider parallel-to-$y_{h}$-axis lines inside the rectangle EFGH intersecting the $x_{h}$-axis at $(x_{h},0)$, for $x_{h} \in \Zeta$, and  $K' = \left\lceil\frac{2 (vT-s) \cos(\psi - \pi/6) - 2s\sin(\vert \psi - \pi/6\vert ) }{\sqrt{3}d}\right\rceil$. Then, the two statements below are equivalent: 
    \begin{itemize}
       \item[(I)] If $\psi < \pi/6$
          \begin{equation}
            Y_{1}^{R}(x_{h}) = 
              \frac{\frac{2 (v T-s)}{d \cos(\psi)}-2 x_{h}}{\sqrt{3} \tan(\psi) - 1}\text{ and } 
            Y_{2}^{R}(x_{h}) = 
              \frac{\frac{2y_2}{d} + 2\tan(\psi) x_h}{\sqrt{3} + \tan(\psi)},
            \label{eq:endcase1}
          \end{equation}
          and, if $\psi > \pi/6$
          \begin{equation}
            Y_{1}^{R}(x_{h}) = 
              \frac{\frac{2y_1}{d} + 2\tan(\psi) x_h}{\sqrt{3} + \tan(\psi)} \text{ and }
            Y_{2}^{R}(x_{h}) = 
              \frac{\frac{2 (v T-s)}{d \cos(\psi)}-2 x_{h}}{\sqrt{3} \tan(\psi) - 1}.
            \label{eq:endcase2}  
          \end{equation}
        \item[(II)] $x_{h} \in \{ K', \dots, n_{l}^{+}-1\}$.
    \end{itemize}
    \label{lemma:endcase}
  \end{lemma}
  \begin{proof} 
    \ifithasappendixforlemmas %
      See Online Appendix.
    \else %
    $(I) \Rightarrow (II)$: By contrapositive, assume $x_{h} \notin  \{ K', \dots, n_{l}^{+} - 1\}$. By Lemma \ref{lemma:nlmnlp}, there is no $x_{h} > n_{l}^{+} - 1$, so $x_{h} < K'$.  
    For the case of $\psi < \pi/6$, observe in Figure \ref{fig:not_full_lines} the point $K$ on the $x_{h}$-axis. This point corresponds to the intersection of $\overleftrightarrow{MG}$ on the $x_{h}$-axis, which is the first parallel-to-$y_{h}$-axis crossing the rectangle right side $\overline{FG}$.  The point D on the $x_{h}$-axis is the projection of the point F on this axis. By (\ref{eq:adacsin}), $\vert \overline{AD}\vert = \frac{2\cos(\pi/6 - \psi)   (vT-s)   +  {2s\sin(\vert \psi - \pi/6\vert )}}{\sqrt{3}}$. Because of the parallelism, we have $\vert \overline{MN}\vert = \vert \overline{KD}\vert $. Due to the congruence of triangles ACE, HIA, BMG and BNF and (\ref{eq:hisize}), $\vert \overline{BM}\vert = \vert \overline{BN}\vert = \vert \overline{HI}\vert = \frac{2s\sin\left(\vert \psi-\pi/6\vert \right)}{\sqrt{3}}$. Thus, $\vert \overline{KD}\vert = \vert \overline{MN}\vert = \vert \overline{BM}\vert + \vert \overline{BN}\vert = \frac{4s\sin\left(\vert \psi - \pi/6 \vert \right)}{\sqrt{3}}$. Since $\vert \overline{AK}\vert = \vert \overline{AD}\vert - \vert \overline{KD}\vert = \frac{2\cos(\pi/6 - \psi)   (vT -s)  -  {2s\sin(\vert \psi - \pi/6\vert )}}{\sqrt{3}}$, the point K is located on the $(x_{h},y_{h})$ coordinate space  at $\bigg( \frac{2\cos(\pi/6 - \psi)   (vT-s)}{\sqrt{3}d}   -  \frac{2s\sin(\vert \psi - \pi/6\vert )}{\sqrt{3}d} , 0 \bigg)$, as $K$ is on the $x$-axis and to convert it to $(x_{h},y_{h})$ coordinate space we only need to divide the $x$-coordinate by $d$. On the $x_{h}$-axis, the nearest point on the right of $K$  with integer $x_{h}$ is $(\left\lceil K \right\rceil,0)=(K',0)$. As we assumed $x_{h} < K'$, no parallel-to-$y_{h}$-axis crossing a integer $(x_{h},0)$ point inside the rectangle intersects $\overline{FG}$. Thus, no such parallel line has $Y_{1}^{R}(x_{h}) = \frac{\frac{2 (v T-s)}{d \cos(\psi)}-2 x_{h}}{\sqrt{3} \tan(\psi) - 1}$, which is the $y_{h}$-coordinate of the intersection of this line with $\overleftrightarrow{FG}$. 
    
    In the case of $\psi > \pi/6$, using a similar argument in the Figure \ref{fig:not_full_lines2} concludes the desired result, but here we use $\vert \overline{NB}\vert + \vert \overline{MG}\vert = \vert \overline{KD}\vert $ and the congruence of triangles AIE, HCA, FNB and BMG. As we assumed $x_{h} < K'$, no parallel-to-$y_{h}$-axis intersecting a integer point $(x_{h},0)$  inside the rectangle crosses $\overline{FG}$, so for such line $Y_{2}^{R}(x_{h}) \neq \frac{\frac{2 (v T-s)}{d \cos(\psi)}-2 x_{h}}{\sqrt{3} \tan(\psi) - 1}$.

    $(II) \Rightarrow (I):$ If $x_{h} \in \{K', \dots, n_{l}^{+}-1\}$ then the lines parallel-to-$y_{h}$-axis inside the rectangle intersecting the $x_{h}$-axis at $(x_{h}, 0)$ are  on the right of point K or intersecting it. Hence, these lines intersect $\overline{EF}$ and $\overline{FG}$, if $\psi < \pi/6$. By applying (\ref{eq:boundsyh1}), (\ref{eq:xhvTcos1}) (for the line equations for $\overleftrightarrow{EF}$ and $\overleftrightarrow{FG}$), (\ref{eq:y1xh}) and (\ref{eq:y2xh}) (for the definitions of $Y_{1}^{R}$ and $Y_{2}^{R}$), we have (\ref{eq:endcase1}).  A similar argument is used in the case of $\psi > \pi/6$, but for $\overline{FG}$ and $\overline{HG}$ intersections, yielding  (\ref{eq:endcase2}).     
    \fi %
  \end{proof}

  The lemma below characterises when a parallel-to-$y_{h}$-axis line touches only the sides EH and FG of the rectangle. Intuitively, 
  if this happens we have a rectangle with a small width. Thus, on rectangles with a large width, no such lines are crossing the sides EH and FG, for $\psi \neq \pi/6$. We will use this lemma on the Lemma \ref{lemma:MiddleInterval}, for completing the disjoint subsets based on the possible $\max$ and $\min$ outcomes of $Y_{1}^{R}$ and $Y_{2}^{R}$.
  
  \begin{lemma}
    If $vT -s > 2s\tan(\vert \psi - \frac{\pi}{6}\vert )$, then
    there is not a $x_{h} \in \{-n_{l}^{-},\dots , n_{l}^{+}-1\}$ such that,  
    \if\shortVersion 1
      $
          Y_{1}^{R}(x_{h}) = 
            \frac{\frac{2 (v T-s)}{d \cos(\psi)}-2 x_{h}}{\sqrt{3} \tan(\psi) - 1} 
      $
      and
      $
          Y_{2}^{R}(x_{h}) = 
            \frac{-2 x_{h}}{\sqrt{3} \tan(\psi) - 1},
          \text{ if } \psi<\pi/6;
      $
      $
          Y_{1}^{R}(x_{h}) = 
            \frac{-2 x_{h}}{\sqrt{3} \tan(\psi) - 1} 
      $ 
      and 
      $
          Y_{2}^{R}(x_{h}) = 
            \frac{\frac{2 (v T-s)}{d \cos(\psi)}-2 x_{h}}{\sqrt{3} \tan(\psi) - 1},
            \text{ if } \psi > \pi/6.
      $
    \else
      $$
          Y_{1}^{R}(x_{h}) = 
            \frac{\frac{2 (v T-s)}{d \cos(\psi)}-2 x_{h}}{\sqrt{3} \tan(\psi) - 1} 
            \text{ and }
          Y_{2}^{R}(x_{h}) = 
            \frac{-2 x_{h}}{\sqrt{3} \tan(\psi) - 1},
          \text{ if } \psi<\pi/6;
      $$
      $$
          Y_{1}^{R}(x_{h}) = 
            \frac{-2 x_{h}}{\sqrt{3} \tan(\psi) - 1} 
            \text{ and }
          Y_{2}^{R}(x_{h}) = 
            \frac{\frac{2 (v T-s)}{d \cos(\psi)}-2 x_{h}}{\sqrt{3} \tan(\psi) - 1},
            \text{ if } \psi > \pi/6.
      $$
    \fi
    \label{lemma:excludeCase}
  \end{lemma}
  \begin{proof} 
    \ifithasappendixforlemmas %
        See Online Appendix.
    \else %
    This proof is by contrapositive. Assume $\psi < \pi/6$. By  (\ref{eq:y1xh}) and (\ref{eq:y2xh}), we have a $x_{h}$ such that 
    \if\shortVersion 1
      $
        \frac{\frac{y_1}{d} + \tan(\psi) x_h}{\frac{\sqrt{3} + \tan(\psi)}{2}} \le
        \frac{\frac{v T-s}{d\cos(\psi)} - x_{h}}{ \frac{\sqrt{3} \tan(\psi) - 1}{2}}
      $ and 
      $
        \frac{\frac{y_2}{d} + \tan(\psi) x_h}{{\frac{\sqrt{3} + \tan(\psi)}{2}}} \ge
         \frac{-x_{h}}{\frac{\sqrt{3} \tan(\psi) - 1}{2}}.
      $
    \else
      $$
        \frac{\frac{y_1}{d} + \tan(\psi) x_h}{\frac{\sqrt{3} + \tan(\psi)}{2}} \le
        \frac{\frac{v T-s}{d\cos(\psi)} - x_{h}}{ \frac{\sqrt{3} \tan(\psi) - 1}{2}}
      \text{ and }
        \frac{\frac{y_2}{d} + \tan(\psi) x_h}{{\frac{\sqrt{3} + \tan(\psi)}{2}}} \ge
         \frac{-x_{h}}{\frac{\sqrt{3} \tan(\psi) - 1}{2}}.
      $$
    
    \fi
    Since $\frac{\sqrt{3} \tan(\psi) - 1}{2} < 0$, the signs of inequalities change, then we have the following implication
    \if\shortVersion 1
      $
           \frac{\frac{y_1}{d}\frac{\sqrt{3} \tan(\psi) - 1}{2}}{\frac{\sqrt{3} + \tan(\psi)}{2}} - \frac{v T-s}{d\cos(\psi)} \ge  - x_{h} - \frac{\tan(\psi) x_h \frac{\sqrt{3} \tan(\psi) - 1}{2}}{\frac{\sqrt{3} + \tan(\psi)}{2}} 
      $ and 
      $
          \frac{\frac{y_2}{d}\frac{\sqrt{3} \tan(\psi) - 1}{2}}{\frac{\sqrt{3} + \tan(\psi)}{2}}\le  -x_{h} - \frac{\tan(\psi) x_h \frac{\sqrt{3} \tan(\psi) - 1}{2}}{\frac{\sqrt{3} + \tan(\psi)}{2}} 
          \Rightarrow
          \frac{\frac{y_2}{d}\frac{\sqrt{3} \tan(\psi) - 1}{2}}{\frac{\sqrt{3} + \tan(\psi)}{2}}\le 
           \frac{\frac{y_1}{d}\frac{\sqrt{3} \tan(\psi) - 1}{2}}{\frac{\sqrt{3} + \tan(\psi)}{2}} - \frac{v T-s}{d\cos(\psi)},
      $
    \else
      $$
        \begin{aligned}
           &\phantom{\LR} \frac{\frac{y_1}{d}\frac{\sqrt{3} \tan(\psi) - 1}{2}}{\frac{\sqrt{3} + \tan(\psi)}{2}} - \frac{v T-s}{d\cos(\psi)} \ge  - x_{h} - \frac{\tan(\psi) x_h \frac{\sqrt{3} \tan(\psi) - 1}{2}}{\frac{\sqrt{3} + \tan(\psi)}{2}} 
               \text{ and }
          \\
          &\phantom{\LR} \frac{\frac{y_2}{d}\frac{\sqrt{3} \tan(\psi) - 1}{2}}{\frac{\sqrt{3} + \tan(\psi)}{2}}\le  -x_{h} - \frac{\tan(\psi) x_h \frac{\sqrt{3} \tan(\psi) - 1}{2}}{\frac{\sqrt{3} + \tan(\psi)}{2}} 
          \\
          &\Rightarrow
          \frac{\frac{y_2}{d}\frac{\sqrt{3} \tan(\psi) - 1}{2}}{\frac{\sqrt{3} + \tan(\psi)}{2}}\le 
           \frac{\frac{y_1}{d}\frac{\sqrt{3} \tan(\psi) - 1}{2}}{\frac{\sqrt{3} + \tan(\psi)}{2}} - \frac{v T-s}{d\cos(\psi)},
        \end{aligned}
      $$
    \fi
    by the transitivity of $\le$ under the real numbers. Also, we have the following equivalences
    \if\shortVersion 1
      $ 
        \frac{\frac{y_2}{d}\frac{\sqrt{3} \tan(\psi) - 1}{2}}{\frac{\sqrt{3} + \tan(\psi)}{2}}\le 
        \frac{\frac{y_1}{d}\frac{\sqrt{3} \tan(\psi) - 1}{2}}{\frac{\sqrt{3} + \tan(\psi)}{2}} - \frac{v T-s}{d\cos(\psi)}
        \LR
        v T-s
        \le 
        2s\tan(\pi/6 - \psi)
        ,
      $
      due to (\ref{eq:2sy2y1}), the equalities $\tan(a+b)=\frac{\tan(a)+\tan(b)}{1-\tan(a)\tan(b)}$, $\cot(a) = -\tan(a + \pi/2)$ and $-\tan(\pi-a) = \tan(a)$ for any real $a$ and $b$.
    \else
      $$
        \begin{aligned}
          &\phantom{\LR} \frac{\frac{y_2}{d}\frac{\sqrt{3} \tan(\psi) - 1}{2}}{\frac{\sqrt{3} + \tan(\psi)}{2}}\le 
           \frac{\frac{y_1}{d}\frac{\sqrt{3} \tan(\psi) - 1}{2}}{\frac{\sqrt{3} + \tan(\psi)}{2}} - \frac{v T-s}{d\cos(\psi)}
          \\
\ifexpandexplanation
          &\LR
          \frac{v T-s}{d\cos(\psi)}
          \le 
           \frac{\frac{y_1}{d}\frac{\sqrt{3} \tan(\psi) - 1}{2}}{\frac{\sqrt{3} + \tan(\psi)}{2}} -  \frac{\frac{y_2}{d}\frac{\sqrt{3} \tan(\psi) - 1}{2}}{\frac{\sqrt{3} + \tan(\psi)}{2}} 
          \\
\fi
          &\LR
          \frac{v T-s}{d\cos(\psi)}
          \le 
           \frac{y_1-y_2}{d} \frac{{\sqrt{3} \tan(\psi) - 1}}{{\sqrt{3} + \tan(\psi)}}  \\
          &\LR
          \frac{v T-s}{d\cos(\psi)}
          \le 
           -\frac{2s}{d\cos(\psi)} \frac{{\sqrt{3} \tan(\psi) - 1}}{{\sqrt{3} + \tan(\psi)}}  \hspace{2.5cm} [\text{By (\ref{eq:2sy2y1})}]
          \\
          & \LR
          \frac{v T-s}{2s}
          \le 
           \frac{{1- \sqrt{3} \tan(\psi)}}{{\sqrt{3} + \tan(\psi)}}  \LR
          \frac{v T-s}{2s}
          \le 
           \frac{1}{\tan(\pi/3+\psi)} \\&  \LR
          \frac{v T-s}{2s}
          \le 
           \cot(\pi/3+\psi)  
          \LR 
          \frac{v T-s}{2s}
          \le 
           -\tan(\psi+5\pi/6)  
          \\
          & \LR
          v T-s
          \le 
           2s\tan(\pi/6 - \psi)
           .
        \end{aligned}
      $$
      Above we used the equalities $\tan(a+b)=\frac{\tan(a)+\tan(b)}{1-\tan(a)\tan(b)}$, $\cot(a) = -\tan(a + \pi/2)$ and $-\tan(\pi-a) = \tan(a)$ for any real $a$ and $b$.
    \fi
    
    For the case $\psi > \pi/6$, using similar arguments we get the same result, but we do not change the signs of inequalities due to $\frac{\sqrt{3}\tan(\psi) - 1}{2} > 0$ in this case. The conclusion is reached after we combine the two cases using absolute values inside the tangent.
    \fi %
  \end{proof}
  
  The next lemma completes the properties of $N_{R}(T,\psi)$ that are useful for calculating its limit when T tends to infinity.
  
  \begin{lemma}
    Let $K'=\left\lceil\frac{2 (vT-s) \cos(\psi - \pi/6) - 2s\sin(\vert \psi - \pi/6\vert ) }{\sqrt{3}d}\right\rceil$. If $vT-s > 2s \tan(\vert \psi - \pi/6\vert )$, then  $x_{h} \in \{n_{l}^{-} + 1, \dots, K' -1\}$ if and only if
    \if\shortVersion 1
      $
          Y_{1}^{R}(x_{h}) = 
            \frac{\frac{y_1}{d} + \tan(\psi) x_h}{\frac{\sqrt{3} + \tan(\psi)}{2}} 
       $  and $
          Y_{2}^{R}(x_{h}) = 
            \frac{\frac{y_2}{d} + \tan(\psi) x_h}{{\frac{\sqrt{3} + \tan(\psi)}{2}}}.
      $
    \else
      $$
        \begin{aligned}
          Y_{1}^{R}(x_{h}) = 
            \frac{\frac{y_1}{d} + \tan(\psi) x_h}{\frac{\sqrt{3} + \tan(\psi)}{2}} 
            \text{ and }
          Y_{2}^{R}(x_{h}) = 
            \frac{\frac{y_2}{d} + \tan(\psi) x_h}{{\frac{\sqrt{3} + \tan(\psi)}{2}}}.
        \end{aligned}
      $$ 
    \fi
    \label{lemma:MiddleInterval}
  \end{lemma}
  \begin{proof} 
    \ifithasappendixforlemmas %
      See Online Appendix.
    \else %
    Excluding the case when $\psi = \pi/6$,  (\ref{eq:y1xh}) and (\ref{eq:y2xh}) give four combinations of possible outcomes for the values of $Y_{1}^{R}(x_{h})$ and $Y_{2}^{R}(x_{h})$ based on the results of $\min$ and $\max$. When $vT-s > 2s \tan(\vert \psi - \pi/6\vert )$, by Lemma \ref{lemma:excludeCase}, we do not have the case when they are on the sides $EH$ and $FG$. For the given values of $x_{h}$ on the hypothesis, neither Lemma \ref{lemma:Interval1} nor Lemma \ref{lemma:endcase} applies, excluding other two combinations of results for $Y_{1}^{R}(x_{h})$ and $Y_{2}^{R}(x_{h})$. Finally, Lemma \ref{lemma:nlmnlp} shows that every parallel-to-$y_{h}$-axis line crosses the $x_{h}$-axis at $(x_{h},0)$ for $x_{h} \in \{-n_{l}^{-}, \dots, n_{l}^{+}-1\}$,  so the remaining combination yields the desired equivalence.
    \fi %
  \end{proof}
  
  Now we present the calculation of $N_{S}(T,\theta)$. Here we are using $\theta$ instead of $\psi = \pi/3 - \theta$ for  easiness of presentation.  We denote $(l_{x},l_{y})$ the position of the last robot inside a rectangle of width $vT - s$ and height $2s$ whose left side is at $(x_{0},y_{0}).$ Here \emph{last} means the robot with highest  $x$ coordinate value. However, if two robots have the same $x$ coordinate value, we take the robot whose $y$ coordinate is nearer to $y_{0}$. Let $Z$ be the set of robot positions inside the rectangle above for $vT - s > 0$.
  
  \begin{lemma}
    Let  $c_{x} = x_{0} + vT - s$, and 
    \if\shortVersion 1
      $
      (l_{x},l_{y}) = 
          \argmin_{(x,y) \in Z}{\vert vT - s + x_{0} - x\vert + \vert y_{0} - y\vert } 
       $  if  $T > \frac{s}{v}$, otherwise,
          $(l_{x},l_{y}) = (x_{0},y_{0}).$
      Then,
      $
        N_{S}(T,\theta) = \sum_{x_{h} = B}^{U}\left(\lfloor Y_{2}^{S}(x_{h}) \rfloor - \lceil Y_{1}^{S}(x_{h}) \rceil + 1\right),
      $
    \else
      $$
      (l_{x},l_{y}) = 
        \begin{cases}
          \argmin_{(x,y) \in Z}{\vert vT - s + x_{0} - x\vert + \vert y_{0} - y\vert } 
            & \text{ if } T > \frac{s}{v}, \\
          (x_{0},y_{0}) 
          & \text{ otherwise. }
        \end{cases}
      $$
      Then,
      $$
        N_{S}(T,\theta) = \sum_{x_{h} = B}^{U}\left(\lfloor Y_{2}^{S}(x_{h}) \rfloor - \lceil Y_{1}^{S}(x_{h}) \rceil + 1\right),
      $$
    \fi
    for $\left\lfloor Y_{2}^{S}(x_{h}) \right\rfloor \ge \left \lceil Y_{1}^{S}(x_{h}) \right \rceil $ (if for some $x_{h}$ $\left\lfloor Y_{2}^{S}(x_{h}) \right\rfloor < \left \lceil Y_{1}^{S}(x_{h}) \right \rceil $, we assume the respective summand for this $x_{h}$ being zero),
    \begin{equation}
      B = \left\{
      \begin{array}{>{\displaystyle}c>{\displaystyle}l}
           \left\lceil\frac{2( \sin(\pi/3-\theta)(c_{x} - l_{x})  + \cos(\pi/3-\theta)(y_{0} - l_{y} -s) )}{\sqrt{3}d}\right\rceil,
             & \text{ if } T > \frac{s}{v},
           \\
           \left\lceil-\frac{2\sqrt{2svT - (vT)^{2}}}{\sqrt{3}d}\sin\left(\theta + \frac{\pi}{6}\right)\right\rceil,
             & \text{ otherwise, }
      \end{array}
      \right. 
      \label{eq:BcasesSC}
    \end{equation}
    if  $T > \frac{s}{v}$  or $\arctan\left( \frac{\frac{s}{2} - \sin(\theta) (vT - s) }{\frac{\sqrt{3}s}{2} + \cos(\theta) (vT - s)} \right) \le  \frac{\pi}{2} - \theta$,
    \begin{equation}
      U = \left \lfloor \frac{2(\sin(\pi/3-\theta)(c_{x} - l_{x}) + \cos(\pi/3-\theta)(y_{0} - l_{y}) + s)}{\sqrt{3}d} \right \rfloor, 
      \label{eq:eq58}
    \end{equation}
    otherwise, 
    \begin{equation}
      U = \left \lfloor \frac{2\sqrt{2svT - (vT)^{2}}}{\sqrt{3}d}\cos\left(\theta-\frac{\pi}{3}\right) \right\rfloor.
      \label{eq:eq59}
    \end{equation}
    Also,
    \if\shortVersion 1
      $
        Y_{1}^{S}(x_{h}) = 
         \frac{d {x_{h}}- {C_{-\theta,x}} + \sqrt{3} C_{-\theta,y}   - \sqrt{\Delta(x_{h})}}{2  d}, 
      $
      $
        Y_{2}^{S}(x_{h}) = 
          \min(L(x_{h}),C_{2}(x_{h})) - 1, 
        $    if  $\min(L(x_{h}),C_{2}(x_{h})) = \lfloor L(x_{h})\rfloor$ 
            and  $T > \frac{s}{v}$, otherwise,
          $ Y_{2}^{S}(x_{h}) = \min(L(x_{h}),C_{2}(x_{h}))$,
      for
      $
        C_{-\theta} = \big[
              \begin{array}{cc}
                \cos(-\theta) & -\sin(-\theta)\\
                \sin(-\theta) & \cos(-\theta)\\
              \end{array}
              \big]
              \big[
              \begin{array}{c}
                c_{x} - l_{x}\\
                y_{0} - l_{y}\\
              \end{array}
              \big],
      $
      $
        \Delta(x_{h}) =   4  s^{2} - \big(\sqrt{3}  {\big(d {x_{h}} -{C_{-\theta,x}} \big)} - C_{-\theta,y}\big)^{2},
      $
      $
        C_{2}(x_{h}) = 
         \frac{d {x_{h}}- {C_{-\theta,x}}  + \sqrt{3} C_{-\theta,y}  + \sqrt{\Delta(x_{h})}}{2  d}, 
      $
      and
      $
        L(x_{h}) = 
          \frac{\sin\big(\frac{\pi}{2} - \theta\big)(d x_{h}  - C_{-\theta,x}) + \cos\big(\frac{\pi}{2} - \theta\big)C_{-\theta,y}}{d \sin\big(\frac{5\pi}{6}-\theta\big)}$,
             if  $T > \frac{s}{v}$, otherwise,
          $L(x_{h}) = \frac{\sin\big(\frac{\pi}{2}-\theta\big)  x_{h}}{\sin\big( \frac{5\pi}{6}-\theta\big)}$.
    \else
      $$
        Y_{1}^{S}(x_{h}) = 
         \frac{d {x_{h}}- {C_{-\theta,x}} + \sqrt{3} C_{-\theta,y}   - \sqrt{\Delta(x_{h})}}{2  d}, 
      $$
      $$
        Y_{2}^{S}(x_{h}) = \left\{
        \begin{array}{>{\displaystyle}c>{\displaystyle}l}
          \min(L(x_{h}),C_{2}(x_{h})) - 1, 
           & \text{ if } \min(L(x_{h}),C_{2}(x_{h})) = \lfloor L(x_{h})\rfloor \\ 
           & \phantom{if} \text{ and } T > \frac{s}{v},\\
          \min(L(x_{h}),C_{2}(x_{h})), 
           & \text{ otherwise, } 
        \end{array}
        \right.
      $$
      for
      $$
        C_{-\theta} = \left[
              \begin{array}{cc}
                \cos(-\theta) & -\sin(-\theta)\\
                \sin(-\theta) & \cos(-\theta)\\
              \end{array}
              \right]
              \left[
              \begin{array}{c}
                c_{x} - l_{x}\\
                y_{0} - l_{y}\\
              \end{array}
              \right],
      $$
      $$
        \Delta(x_{h}) =   4  s^{2} - \left(\sqrt{3}  {\left(d {x_{h}} -{C_{-\theta,x}} \right)} - C_{-\theta,y}\right)^{2},
      $$
      $$
        C_{2}(x_{h}) = 
         \frac{d {x_{h}}- {C_{-\theta,x}}  + \sqrt{3} C_{-\theta,y}  + \sqrt{\Delta(x_{h})}}{2  d}, 
      $$
      and
      $$
        L(x_{h}) = \left\{ 
        \begin{array}{>{\displaystyle}c>{\displaystyle}l}
          \frac{\sin\left(\frac{\pi}{2} - \theta\right)(d x_{h}  - C_{-\theta,x}) + \cos\left(\frac{\pi}{2} - \theta\right)C_{-\theta,y}}{d \sin\left(\frac{5\pi}{6}-\theta\right)},
            & \text{ if } T > \frac{s}{v}, \\
          \frac{\sin\left(\frac{\pi}{2}-\theta\right)  x_{h}}{\sin\left( \frac{5\pi}{6}-\theta\right)},
            & \text{ otherwise.}\\
        \end{array}
        \right.
      $$
    \fi
    \label{lemma:NS}   
  \end{lemma}
  \begin{proof}
    \ifithasappendixforlemmas %
      See Online Appendix.
    \else %
    \begin{figure}[t]
      \centering
      \includegraphics[width=\columnwidth]{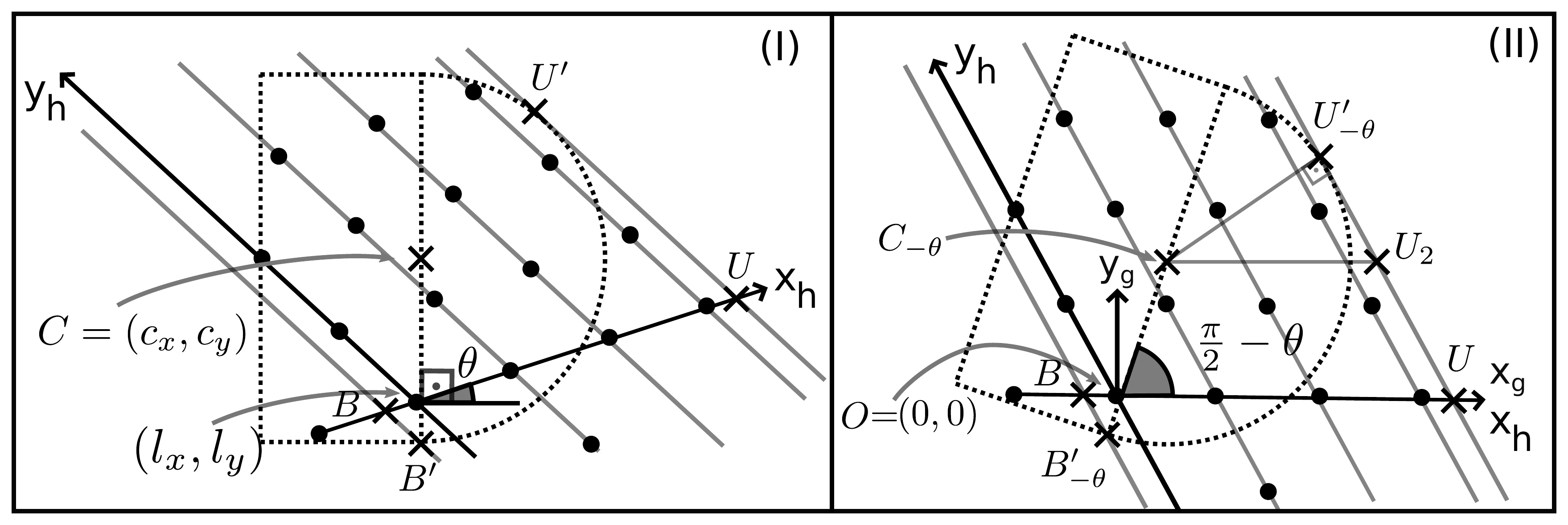}
      \caption{In the space (I) the robots are in the standard coordinate system and the semicircle with centre at $C = (c_{x},c_{y})$ has the lowest point at $B^{'}$. $\protect\overleftrightarrow{CB^{'}}$ has angle $\frac{\pi}{2}$ with the usual $x$-axis, however the $x_{h}$-axis here has angle $\theta$ with it. In (II) we rotate by $-\theta$ with $(l_{x},l_{y})$ as centre of rotation. After this rotation, $B^{'}$, $U^{'}$ and $C$ become $B^{'}_{-\theta}$, $U^{'}_{-\theta}$ and $C_{-\theta}$, respectively, and $\protect\overleftrightarrow{C_{-\theta}B^{'}_{-\theta}}$ has angle $\frac{\pi}{2}-\theta$ in relation to the $x_{g}$-axis and $x_{h}$-axis, which are now coincident lines despite their scale being different. $B$ and $U$ are the minimum and maximum values of the $x_{h}$-axis coordinate for a line parallel to the $y_{h}$-axis on the hexagonal grid coordinate system.}
      \label{fig:hexsemicircle1}
    \end{figure}
    
    Assume $T > \frac{s}{v}$, as shown in Figure \ref{fig:hexnumrobots} (III). The robots are located in the usual Euclidean space. Instead of it, we use in this proof a similar coordinate system transformation for positioning the robots in a hexagonal grid as integer coordinates as we did in the rectangular part. As in the previous lemmas, we call this coordinate system space coordinates $(x_{h},y_{h})$. However, here we are using a $(x_{h},y_{h})$ coordinate system with a different origin and inclination. 
    
    In order to do so, we first redefine a $(x_{g},y_{g})$ coordinate space, that is, we perform rotation by $-\theta$ on the usual Euclidean space about $(l_{x},l_{y})$. The origin of the $(x_{g},y_{g})$ coordinate system is at $(l_{x},l_{y})$. The transformation for $(x_{g},y_{g})$ coordinate system used here is similar to the depicted in the Figure \ref{fig:referencepsi}, but here we are using $-\theta$ and $(l_{x},l_{y})$ instead of $-\psi$ and $(x_{0},y_{0})$, i.e., 
    \if\shortVersion 1
      $
         \left[
        \begin{array}{c}
          x_{g} \\
          y_{g}
        \end{array}
        \right]
        = \left[
        \begin{array}{cc}
          \cos(-\theta) & -\sin(-\theta)\\
          \sin(-\theta) & \cos(-\theta)\\
        \end{array}
        \right]
        \left[
        \begin{array}{c}
          x-l_{x}\\
          y-l_{y}\\
        \end{array}
        \right].
      $
    \else
      $$
         \left[
        \begin{array}{c}
          x_{g} \\
          y_{g}
        \end{array}
        \right]
        = \left[
        \begin{array}{cc}
          \cos(-\theta) & -\sin(-\theta)\\
          \sin(-\theta) & \cos(-\theta)\\
        \end{array}
        \right]
        \left[
        \begin{array}{c}
          x-l_{x}\\
          y-l_{y}\\
        \end{array}
        \right].
      $$
    \fi
    As the coordinate space $(x_{g},y_{g})$ is already translated to the point $(l_{x},l_{y})$, the transformation from the new $(x_{h}, y_{h})$ to the new $(x_{g},y_{g})$ is the same as in (\ref{eq:xhyh2xgyg}). We repeat it below for convenience: 
    \begin{equation}
      \left[
      \begin{array}{c}
        x_{g} \\
        y_{g}
      \end{array}
      \right]
      = \left[
      \begin{array}{cc}
        d & -\frac{d}{2}\\
        0 & \frac{\sqrt{3}d}{2}\\ 
      \end{array}
      \right]
      \left[
      \begin{array}{c}
        x_{h}\\
        y_{h}
      \end{array}
      \right].
      \label{eq:xhyh2xgyglxly}
    \end{equation}
    Despite these differences,  we will keep using the notation $(x_{g},y_{g})$ and $(x_{h},y_{h})$ as we did before for a clean presentation.

    Figure \ref{fig:hexsemicircle1} shows how the semicircle with centre at $C = (c_{x},c_{y}) = (x_{0} + vT - s, y_{0})$ will be after the rotation by $-\theta$ about $(l_{x},l_{y})$, that is,
    \begin{equation}
      C_{-\theta} = \left[
        \begin{array}{cc}
          \cos(-\theta) & -\sin(-\theta)\\
          \sin(-\theta) & \cos(-\theta)\\
        \end{array}
        \right]
        \left[
        \begin{array}{c}
          c_{x} - l_{x}\\
          c_{y} - l_{y}\\
        \end{array}
        \right].
      \label{eq:centertheta}
    \end{equation}
    Hereafter we will use the subscript $-\theta$ on every point presented on the usual Euclidean space to denote the corresponding point on the $(x_{g},y_{g})$ coordinate space.

    We compute first the upper and lower values,  $U$ and $B$, of $x_{h}$ lying on the semicircle. For getting the $U$ value on the $x_{h}$-axis, we draw a line parallel to the $y_{h}$-axis on the rightmost semicircle boundary at the point $U^{'}$ in order to reach the $x_{h}$-axis (Figure \ref{fig:hexsemicircle1} (I)). The corresponding point on the $(x_{g},y_{g})$ space is denoted by $U^{'}_{-\theta}$ (Figure \ref{fig:hexsemicircle1} (II)).
    We compute $U^{'}_{-\theta}$, then we take its $x_{h}$-value on the hexagonal grid coordinate system. $\bigtriangleup U^{'}_{-\theta}C_{-\theta}U_{2}$ in Figure \ref{fig:hexsemicircle1} (II) has $\vert U^{'}_{-\theta}C_{-\theta}\vert =s$ and $\widehat{U^{'}_{-\theta}C_{-\theta}U_{2}} = \pi - \widehat{C_{-\theta}U^{'}_{-\theta}U_{2}} - \widehat{U^{'}_{-\theta}U_{2}C_{-\theta}} = \pi - \pi/2 - \pi/3 = \pi/6.$ Hence, 
    \if\shortVersion 1 
      $
        U^{'}_{-\theta} 
          = C_{-\theta} + s(\cos(\pi/6),\sin(\pi/6))
          =\big(\cos(\theta)(c_{x} - l_{x}) + \sin(\theta)(c_{y} - l_{y}) + \frac{\sqrt{3}s}{2}, 
          \cos(\theta)(c_{y} - l_{y}) -\sin(\theta)(c_{x} - l_{x})   + \frac{s}{2}\big).
      $
    \else
      $$
        \begin{aligned}
          U^{'}_{-\theta} 
            &= C_{-\theta} + s(\cos(\pi/6),\sin(\pi/6))\\
\ifexpandexplanation
            &= C_{-\theta} + \left(\frac{\sqrt{3}s}{2}, \frac{s}{2}\right) \\
            &= \left[
              \begin{array}{cc}
                \cos(-\theta) & -\sin(-\theta)\\
                \sin(-\theta) & \cos(-\theta)\\
              \end{array}
              \right]
              \left[
              \begin{array}{c}
                c_{x} - l_{x}\\
                c_{y} - l_{y}\\
              \end{array}
              \right]
             + \left[
               \begin{array}{c}
                 \frac{\sqrt{3}s}{2} \\ \frac{s}{2}
               \end{array}
               \right]\\
            &=\bigg(\cos(-\theta)(c_{x} - l_{x}) - \sin(-\theta)(c_{y} - l_{y})  + \frac{\sqrt{3}s}{2}, \\
            &\phantom{= \bigg(} \sin(-\theta)(c_{x} - l_{x}) + \cos(-\theta)(c_{y} - l_{y}) + \frac{s}{2}\bigg)\\
\fi
            &=\bigg(\cos(\theta)(c_{x} - l_{x}) + \sin(\theta)(c_{y} - l_{y}) + \frac{\sqrt{3}s}{2}, \\
            &\phantom{= \bigg(}  \cos(\theta)(c_{y} - l_{y}) -\sin(\theta)(c_{x} - l_{x})   + \frac{s}{2}\bigg).
        \end{aligned}
      $$
      
    \fi
    The inverse transformation from (\ref{eq:xhyh2xgyglxly}) is
    \begin{equation}
      \left[
        \begin{array}{c}
          x_{h}\\
          y_{h}\\
        \end{array}
      \right] = 
      \left[
        \begin{array}{cc}
          \frac{1}{d} & \frac{1}{\sqrt{3}d}\\
          0 & \frac{2}{\sqrt{3}d}
        \end{array}
      \right]
      \left[
        \begin{array}{c}
          x_{g}\\
          y_{g}\\
        \end{array}
      \right].
      \label{eq:xgyg2xhyh}
    \end{equation}
    
    Applying the transformation of (\ref{eq:xgyg2xhyh}) to the point $U^{'}_{-\theta}$ we get its $x_{h}$-axis coordinate
\ifexpandexplanation 
    $$
      \begin{aligned}
        U 
          &= \frac{1}{d}\left(\cos(\theta)(c_{x} - l_{x}) + \sin(\theta)(c_{y} - l_{y})  + \frac{\sqrt{3}s}{2} \right) + \\
            &\phantom{=}\ \frac{1}{\sqrt{3}d} \left(\cos(\theta)(c_{y} - l_{y}) -\sin(\theta)(c_{x} - l_{x}) + \frac{s}{2} \right)\\ 
      \end{aligned}
    $$
\else
    \begin{equation}
      \begin{aligned}
        U 
          &= \frac{1}{d}\left(\cos(\theta)(c_{x} - l_{x}) + \sin(\theta)(c_{y} - l_{y})  + \frac{\sqrt{3}s}{2} \right) + \\
            &\phantom{=}\ \frac{1}{\sqrt{3}d} \left(\cos(\theta)(c_{y} - l_{y}) -\sin(\theta)(c_{x} - l_{x}) + \frac{s}{2} \right)\\
\fi
\ifexpandexplanation 
    $$
      \begin{aligned}
          &= \frac{1}{d}\left(\cos(\theta)(c_{x} - l_{x}) + \sin(\theta)(c_{y} - l_{y})  \right) +  \frac{\sqrt{3}s}{2d} +\\
            &\phantom{=}\ \frac{1}{\sqrt{3}d} \left(\cos(\theta)(c_{y} - l_{y}) -\sin(\theta)(c_{x} - l_{x})   \right) + \frac{s}{2\sqrt{3}d}\\
      \end{aligned}
    $$
    $$
      \begin{aligned}
          &= \frac{1}{d}\left(\cos(\theta)(c_{x} - l_{x}) + \sin(\theta)(c_{y} - l_{y})  \right) +  \frac{3s+s}{2\sqrt{3}d}+\\
            &\phantom{=}\ \frac{1}{\sqrt{3}d} \left(\cos(\theta)(c_{y} - l_{y}) -\sin(\theta)(c_{x} - l_{x})   \right)\\
      \end{aligned}
    $$
    $$
      \begin{aligned}
          &= \frac{\cos(\theta)(c_{x} - l_{x}) + \sin(\theta)(c_{y} - l_{y})  }{d}  + \frac{\cos(\theta)(c_{y} - l_{y}) -\sin(\theta)(c_{x} - l_{x}) }{\sqrt{3}d}   +  \frac{2s}{\sqrt{3}d}\\
      \end{aligned}
    $$
    $$
      \begin{aligned}
          &= \frac{1}{\sqrt{3}d}(\sqrt{3}\cos(\theta)(c_{x} - l_{x}) + \sqrt{3}\sin(\theta)(c_{y} - l_{y})  + \cos(\theta)(c_{y} - l_{y}) \\ 
            &\phantom{=}\ -\sin(\theta)(c_{x} - l_{x}) ) +  \frac{2s}{\sqrt{3}d}\\
          &= \frac{2}{\sqrt{3}d}\Bigg(\frac{\sqrt{3}}{2}\cos(\theta)(c_{x} - l_{x}) + \frac{\sqrt{3}}{2}\sin(\theta)(c_{y} - l_{y})    \\
            &\phantom{=}\ + \frac{1}{2}\cos(\theta)(c_{y} - l_{y}) -\frac{1}{2}\sin(\theta)(c_{x} - l_{x}) \Bigg) +  \frac{2s}{\sqrt{3}d}\\
          &= \frac{2}{\sqrt{3}d}\Bigg(\Bigg(\frac{\sqrt{3}}{2}\cos(\theta)-\frac{1}{2}\sin(\theta)\Bigg)(c_{x} - l_{x}) + \Bigg(\frac{\sqrt{3}}{2}\sin(\theta)  \\ 
            &\phantom{=}\  + \frac{1}{2}\cos(\theta)\Bigg)(c_{y} - l_{y}) \Bigg) + \frac{2s}{\sqrt{3}d}\\
      \end{aligned}
    $$
    \begin{equation}
      \begin{aligned}
          &= \frac{2}{\sqrt{3}d}((\sin(\pi/3)\cos(\theta)-\cos(\pi/3)\sin(\theta))(c_{x} - l_{x}) + (\sin(\pi/3)\sin(\theta)  \\ 
            &\phantom{=}\ + \cos(\pi/3)\cos(\theta))(c_{y} - l_{y}) ) +  \frac{2s}{\sqrt{3}d}\\
          &= \frac{2}{\sqrt{3}d}(\sin(\pi/3-\theta)(c_{x} - l_{x}) + \cos(\pi/3-\theta)(c_{y} - l_{y}) ) +  \frac{2s}{\sqrt{3}d}\\
\fi
          &= \frac{2(\sin(\pi/3-\theta)(c_{x} - l_{x}) + \cos(\pi/3-\theta)(c_{y} - l_{y}) + s)}{\sqrt{3}d}\\
      \end{aligned}
      \label{eq:hexsemicircleU1}
    \end{equation}
    As we need the integer coordinate less or equal to this value, we apply the floor function to yield the desired result in (\ref{eq:eq58}).
    
    For getting the $B$ value on the $x_{h}$-axis, we draw a line parallel to the $y_{h}$-axis on the lower semicircle corner at the point $B^{'}$ in order to reach the $x_{h}$-axis (Figure \ref{fig:hexsemicircle1} (I)). We perform a calculation similar to the previous paragraph but using $B^{'}_{-\theta}$ (Figure \ref{fig:hexsemicircle1} (II)). We have $\widehat{C_{-\theta}OU}=\pi/2 - \theta$ (as this is the same angle of $\overleftrightarrow{C B'}$ with $x_{h}$-axis in Figure \ref{fig:hexsemicircle1} (I) which coincides with $x_{g}$-axis in the Figure \ref{fig:hexsemicircle1} (II)). Then, as the vector $C_{-\theta}B^{'}_{-\theta}$ is pointed downwards, it has negative angle with the $x_{g}$-axis, that is, $-\widehat{B_{-\theta}OU} = -(\pi - \widehat{C_{-\theta}OU}) = -(\pi - (\pi/2 - \theta)) = -\pi/2 - \theta$ with $x_{g}$-axis. Also,  $\left\vert \overline{C_{-\theta}B^{'}_{-\theta}}\right\vert = s$ . Consequently, 
    \if\shortVersion 1
      $
        \overrightarrow{C_{-\theta}B^{'}_{-\theta}} = B^{'}_{-\theta} - C_{-\theta} = s(\cos(-\pi/2 - \theta),\sin(-\pi/2 - \theta)) \LR 
      $
      $
          B^{'}_{-\theta} 
            = C_{-\theta} + s(\cos(-\pi/2 - \theta),\sin(-\pi/2 - \theta)) =(\cos(\theta)(c_{x} - l_{x}) + \sin(\theta)(c_{y} - l_{y} -s), \cos(\theta)(c_{y} - l_{y} -s) - \sin(\theta)(c_{x} - l_{x}) ).
      $
    \else
      $$
        \overrightarrow{C_{-\theta}B^{'}_{-\theta}} = B^{'}_{-\theta} - C_{-\theta} = s(\cos(-\pi/2 - \theta),\sin(-\pi/2 - \theta)) \LR \\
      $$
      $$
        \begin{aligned}
          B^{'}_{-\theta} 
            &= C_{-\theta} + s(\cos(-\pi/2 - \theta),\sin(-\pi/2 - \theta)) \\
\ifexpandexplanation
            &= C_{-\theta} + s(\sin(-\theta),-\cos(-\theta)) \\
            &= C_{-\theta} - s(\sin(\theta),\cos(\theta)) \\
            &= \left[
              \begin{array}{cc}
                \cos(-\theta) & -\sin(-\theta)\\
                \sin(-\theta) & \cos(-\theta)\\
              \end{array}
              \right]
              \left[
              \begin{array}{c}
                c_{x} - l_{x}\\
                c_{y} - l_{y}\\
              \end{array}
              \right]
              - \left[
               \begin{array}{c}
                 s\sin(\theta) \\ s\cos(\theta)
               \end{array}
               \right]\\
            &=(\cos(\theta)(c_{x} - l_{x}) + \sin(\theta)(c_{y} - l_{y})  -s\sin(\theta), \\
            &\phantom{= (}  \cos(\theta)(c_{y} - l_{y}) - \sin(\theta)(c_{x} - l_{x})  - s\cos(\theta))\\
\fi
            &=(\cos(\theta)(c_{x} - l_{x}) + \sin(\theta)(c_{y} - l_{y} -s), \\
            &\phantom{= (}  \cos(\theta)(c_{y} - l_{y} -s) - \sin(\theta)(c_{x} - l_{x}) ).\\
        \end{aligned}
      $$ 
    \fi
    Using (\ref{eq:xgyg2xhyh}) on $B^{'}_{-\theta}$ we get,
    \if\shortVersion 1
      $
          B  
            = \frac{1}{d}\left(\cos(\theta)(c_{x} - l_{x}) + \sin(\theta)(c_{y} - l_{y} -s)\right) +  \frac{1}{\sqrt{3}d}\left(\cos(\theta)(c_{y} - l_{y} - s) - \sin(\theta)(c_{x} - l_{x})\right) = \frac{2( \sin(\pi/3-\theta)(c_{x} - l_{x})  + \cos(\pi/3-\theta)(c_{y} - l_{y} -s) )}{\sqrt{3}d}. 
      $
    \else
      $$
        \begin{aligned}
          B  
            &= \frac{1}{d}\left(\cos(\theta)(c_{x} - l_{x}) + \sin(\theta)(c_{y} - l_{y} -s)\right) + \\
            &\phantom{=}  \frac{1}{\sqrt{3}d}\left(\cos(\theta)(c_{y} - l_{y} - s) - \sin(\theta)(c_{x} - l_{x})\right)\\
\ifexpandexplanation 
        \end{aligned}
      $$
      $$
        \begin{aligned}
            &=  \frac{\cos(\theta)(c_{x} - l_{x}) + \sin(\theta)(c_{y} - l_{y} -s)}{d} +   \frac{ \cos(\theta)(c_{y} - l_{y} - s) - \sin(\theta)(c_{x} - l_{x})}{\sqrt{3}d}  \\
            &= \frac{1}{\sqrt{3}d}\Big(\sqrt{3}\cos(\theta)(c_{x} - l_{x}) + \sqrt{3}\sin(\theta)(c_{y} - l_{y} -s) +    \cos(\theta)(c_{y} - l_{y} - s) - \\
              &\phantom{=}\  \sin(\theta)(c_{x} - l_{x})\Big) \\
        \end{aligned}
      $$
      $$
        \begin{aligned}
            &= \frac{2}{\sqrt{3}d}\Bigg(\frac{\sqrt{3}}{2}\cos(\theta)(c_{x} - l_{x}) + \frac{\sqrt{3}}{2}\sin(\theta)(c_{y} - l_{y} -s) +    \frac{1}{2}\cos(\theta)(c_{y} - l_{y} - s) - \\
              &\phantom{=}\ \frac{1}{2}\sin(\theta)(c_{x} - l_{x})\Bigg) \\
            &= \frac{2}{\sqrt{3}d}\Bigg(\frac{\sqrt{3}}{2}\cos(\theta)(c_{x} - l_{x}) - \frac{1}{2}\sin(\theta)(c_{x} - l_{x}) + \frac{\sqrt{3}}{2}\sin(\theta)(c_{y} - l_{y} -s) + \\
              &\phantom{=}\   \frac{1}{2}\cos(\theta)(c_{y} - l_{y} - s) \Bigg) \\
        \end{aligned}
      $$
      $$
        \begin{aligned}
            &= \frac{2}{\sqrt{3}d}( (\sin(\pi/3)\cos(\theta) - \cos(\pi/3)\sin(\theta))(c_{x} - l_{x})  + (\sin(\pi/3)\sin(\theta) + \\
              &\phantom{=}\  \cos(\pi/3)\cos(\theta))(c_{y} - l_{y} -s)  ) \\
\fi
            &= \frac{2( \sin(\pi/3-\theta)(c_{x} - l_{x})  + \cos(\pi/3-\theta)(c_{y} - l_{y} -s) )}{\sqrt{3}d}. \\
        \end{aligned}
      $$
    \fi
    Then, we apply the ceiling function on this value to get an integer coordinate greater or equal to it in order to obtain (\ref{eq:BcasesSC}) for $T > \frac{s}{v}$.
    
    On the hexagonal grid coordinate system, for each $x_{h}$ from $B$ to $U$, we need to find the minimum and maximum $y_{h}$ -- namely $Y_{1}^{S}(x_{h})$ and $Y_{2}^{S}(x_{h})$, respectively -- of a line parallel to $y_{h}$-axis intercepting the $x_{h}$-axis and lying on the semicircle. Depending on the angle of  $\overleftrightarrow{C_{-\theta}B^{'}_{-\theta}}$ with the $x_{h}$-axis, the minimum and maximum $y_{h}$ can be either on the semicircle arc or $\overleftrightarrow{C_{-\theta}B^{'}_{-\theta}}$.  Due to $\theta \in \lbrack 0,\pi/3 \rparen$, the angle of $\overleftrightarrow{C_{-\theta}B^{'}_{-\theta}}$  is in $(\frac{\pi}{6}, \frac{\pi}{2}]$. Thus, the minimum $y_{h}$ value is at the semicircle arc, otherwise the minimum angle of $\overleftrightarrow{C_{-\theta}B^{'}_{-\theta}}$ would be $2\pi/3$, which is the $y_{h}$-axis angle with the $x_{h}$-axis. However, the maximum $y_{h}$ value could be either on $\overleftrightarrow{C_{-\theta}B^{'}_{-\theta}}$ or on the circle, thus we take the lowest, since we want the $y_{h}$ value on the boundary of the semicircle. 
    
    Let $C_{1}(x_{h})$ and $C_{2}(x_{h})$ be functions that respectively return the lowest and the highest $y_{h}$ value at the circle centred at $C_{-\theta}$ and radius $s$ for a $x_{h}$ coordinate value of a parallel-to-$y_{h}$-axis line assuming it intersects the circle. Then, a point $(x_{g},y_{g})$ on the Euclidean space is on that circle if 
    \if\shortVersion 1
      $(x_{g} - C_{-\theta,x})^{2} + (y_{g} - C_{-\theta,y})^{2} = s^{2} \LR 
      \left(d x_{h} - \frac{d y_{h}}{2}  - C_{-\theta,x}\right)^{2} + \left(\frac{\sqrt{3}d y_{h}}{2} - C_{-\theta,y}\right)^{2} = s^{2},$
    \else
      $$(x_{g} - C_{-\theta,x})^{2} + (y_{g} - C_{-\theta,y})^{2} = s^{2} \LR$$ 
      $$\left(d x_{h} - \frac{d y_{h}}{2}  - C_{-\theta,x}\right)^{2} + \left(\frac{\sqrt{3}d y_{h}}{2} - C_{-\theta,y}\right)^{2} = s^{2},$$ 
    \fi
    by (\ref{eq:xhyh2xgyglxly}). 

    Isolating $y_{h}$ and solving the two degree polynomial we get
    \begin{equation}
      y_{h_{1}} = C_{1}(x_{h}) = 
       \frac{d {x_{h}}- {C_{-\theta,x}}  + \sqrt{3} {C_{-\theta,y}}   - \sqrt{\Delta(x_{h})}}{2  d} \text{ and }
       \label{eq:C1hex}
    \end{equation}
    \begin{equation}
      y_{h_{2}} = C_{2}(x_{h}) = 
       \frac{d {x_{h}}- {C_{-\theta,x}} + \sqrt{3} {C_{-\theta,y}}   + \sqrt{\Delta(x_{h})}}{2  d},
      \label{eq:C2hex}
    \end{equation}
    for
    $0 \le \Delta(x_{h}) =   4  s^{2} - \left(\sqrt{3}  {\left(d {x_{h}} -{C_{-\theta,x}} \right)} - C_{-\theta,y}\right)^{2}$. $\Delta(x_{h})$ cannot be negative, otherwise the lines would not intersect this circle, contradicting our assumption.
    
    We denote $L(x_{h})$ a function that returns the $y_{h}$ component of the line $\overleftrightarrow{C_{-\theta}B_{-\theta}}$ for a given $x_{h}$.  The $\overleftrightarrow{C_{-\theta}B_{-\theta}}$ equation for a point in the space $(x_{g},y_{g})$ 
    is 
    \if\shortVersion 1
      $\tan\left(\frac{\pi}{2} - \theta\right) = \frac{y_{g} - C_{-\theta,y}}{x_{g} - C_{-\theta,x}} \Rightarrow L(x_{h}) = y_{h} = \frac{\sin\left(\frac{\pi}{2} - \theta\right)(d x_{h}   - C_{-\theta,x}) + \cos\left(\frac{\pi}{2} - \theta\right)C_{-\theta,y} }{d \sin\left(\frac{5\pi}{6}-\theta\right)}.$
    \else
      $$\tan\left(\frac{\pi}{2} - \theta\right) = \frac{y_{g} - C_{-\theta,y}}{x_{g} - C_{-\theta,x}} \LR 
      \frac{\sin\left(\frac{\pi}{2} - \theta\right)}{\cos\left(\frac{\pi}{2} - \theta\right)} = \frac{y_{g} - C_{-\theta,y}}{x_{g} - C_{-\theta,x}} \LR$$
      $$\sin\left(\frac{\pi}{2} - \theta\right)\left(d x_{h} - \frac{d y_{h}}{2}  - C_{-\theta,x}\right) = \cos\left(\frac{\pi}{2} - \theta\right) \left(\frac{\sqrt{3}d y_{h}}{2} - C_{-\theta,y}\right) \LR$$ 
\ifexpandexplanation
      $$\sin\left(\frac{\pi}{2} - \theta\right)\left(d x_{h}   - C_{-\theta,x}\right) - \sin\left(\frac{\pi}{2} - \theta\right)\frac{d y_{h}}{2} = \cos\left(\frac{\pi}{2} - \theta\right) \frac{\sqrt{3}d y_{h}}{2} - \cos\left(\frac{\pi}{2} - \theta\right) C_{-\theta,y}\LR$$
      $$\frac{\sqrt{3}d y_{h}}{2}\cos\left(\frac{\pi}{2} - \theta\right) + \sin\left(\frac{\pi}{2}  - \theta\right)\frac{d y_{h}}{2} = \sin\left(\frac{\pi}{2} - \theta\right)(d x_{h}   - C_{-\theta,x}) + \cos\left(\frac{\pi}{2} - \theta\right)C_{-\theta,y} \LR$$ 
      $$d \left(\frac{\sqrt{3}}{2}\cos\left(\frac{\pi}{2} - \theta\right) + \sin\left(\frac{\pi}{2}  - \theta\right)\frac{1}{2}\right) y_{h} = \sin\left(\frac{\pi}{2} - \theta\right)(d x_{h}   - C_{-\theta,x}) + \cos\left(\frac{\pi}{2} - \theta\right)C_{-\theta,y} \LR$$ 
\fi
      $$y_{h} = \frac{\sin\left(\frac{\pi}{2} - \theta\right)(d x_{h}   - C_{-\theta,x}) + \cos\left(\frac{\pi}{2} - \theta\right)C_{-\theta,y} }{d \left(\frac{\sqrt{3}}{2}\cos\left(\frac{\pi}{2} - \theta\right) + \sin\left(\frac{\pi}{2}  - \theta\right)\frac{1}{2}\right)}\LR$$ 
\ifexpandexplanation
      $$y_{h} = \frac{\sin\left(\frac{\pi}{2} - \theta\right)(d x_{h}   - C_{-\theta,x}) + \cos\left(\frac{\pi}{2} - \theta\right)C_{-\theta,y} }{d \left(\sin\left(\frac{\pi}{3}\right)\cos\left(\frac{\pi}{2} - \theta\right) + \sin\left(\frac{\pi}{2}  - \theta\right)\cos\left(\frac{\pi}{3}\right)\right)}\LR$$
      $$y_{h} = \frac{\sin\left(\frac{\pi}{2} - \theta\right)(d x_{h}   - C_{-\theta,x}) + \cos\left(\frac{\pi}{2} - \theta\right)C_{-\theta,y} }{d \sin\left(\frac{\pi}{3}+\frac{\pi}{2} - \theta\right)}\LR$$ 
\fi
      $$L(x_{h}) = y_{h} = \frac{\sin\left(\frac{\pi}{2} - \theta\right)(d x_{h}   - C_{-\theta,x}) + \cos\left(\frac{\pi}{2} - \theta\right)C_{-\theta,y} }{d \sin\left(\frac{5\pi}{6}-\theta\right)}.$$

   \fi

    We have that $Y_{1}^{S}(x_{h}) = C_{1}(x_{h})$ and $Y_{2}^{S}(x_{h})$ can be either $\min(L(x_{h}),C_{2}(x_{h}))$ or $\min (L(x_{h}),C_{2}(x_{h})) - 1$. As $T > \frac{s}{v}$, we can have a number of robots inside the rectangle $N_{R}(T,\theta) \ge 1$. If, for some $x_{h}$, $Y'(x_{h}) = \min(L(x_{h}),C_{2}(x_{h})) = \lfloor L(x_{h})\rfloor$, then the robot on $\left(x_{h}, Y'(x_{h})\right)$ is on the line $\overleftrightarrow{C_{-\theta}B_{-\theta}^{'}}$. As this line belongs to the rectangle, the robot was already counted by $N_{R}(T,\theta)$.  Hence, 
    \if\shortVersion 1
      $
        Y_{2}^{S}(x_{h}) = 
          \min(L(x_{h}),C_{2}(x_{h})) - 1,$ 
        if  $\min(L(x_{h}),C_{2}(x_{h})) = \lfloor L(x_{h})\rfloor$ 
        and  $T > \frac{s}{v},$ otherwise,
          $Y_{2}^{S}(x_{h}) =\min(L(x_{h}),C_{2}(x_{h})).$
    \else
      $$
        Y_{2}^{S}(x_{h}) = \left\{
        \begin{array}{>{\displaystyle}c>{\displaystyle}l}
          \min(L(x_{h}),C_{2}(x_{h})) - 1, 
           & \text{ if } \min(L(x_{h}),C_{2}(x_{h})) = \lfloor L(x_{h})\rfloor \\ 
           & \phantom{if} \text{ and } T > \frac{s}{v},\\
          \min(L(x_{h}),C_{2}(x_{h})), 
           & \text{ otherwise. } 
        \end{array}
        \right.
      $$
    \fi
    
    The number of robots inside the semicircle is the number of integer coordinates $(x_{h},y_{h})$ for $x_{h}$ ranging from $B$ to $U$ and $y_{h} \in \left[\left\lceil Y_{1}^{S}(x_{h}) \right\rceil, \left\lfloor Y_{2}^{S}(x_{h}) \right\rfloor\right]$ for each $x_{h}$.  Thus,
    \if\shortVersion 1
      $N_{S}(T,\theta)  = \sum_{x_{h} = B}^{U} \left( \lfloor Y_{2}^{S}(x_{h}) \rfloor - \lceil Y_{1}^{S}(x_{h}) \rceil + 1 \right).$
    \else
      $$N_{S}(T,\theta) = \sum_{x_{h} = B}^{U} \sum_{y_{h} = \lceil Y_{1}^{S}(x_{h}) \rceil}^{\lfloor Y_{2}^{S}(x_{h}) \rfloor} 1 = \sum_{x_{h} = B}^{U} \left( \lfloor Y_{2}^{S}(x_{h}) \rfloor - \lceil Y_{1}^{S}(x_{h}) \rceil + 1 \right).$$
    \fi
    Heed that the last summation can only be used when $\left\lfloor Y_{2}^{S}(x_{h}) \right\rfloor \ge \left \lceil Y_{1}^{S}(x_{h}) \right \rceil $, otherwise a negative number of robots would be summed.

    \begin{figure}[t]
      \centering
      \includegraphics[width=0.84\columnwidth]{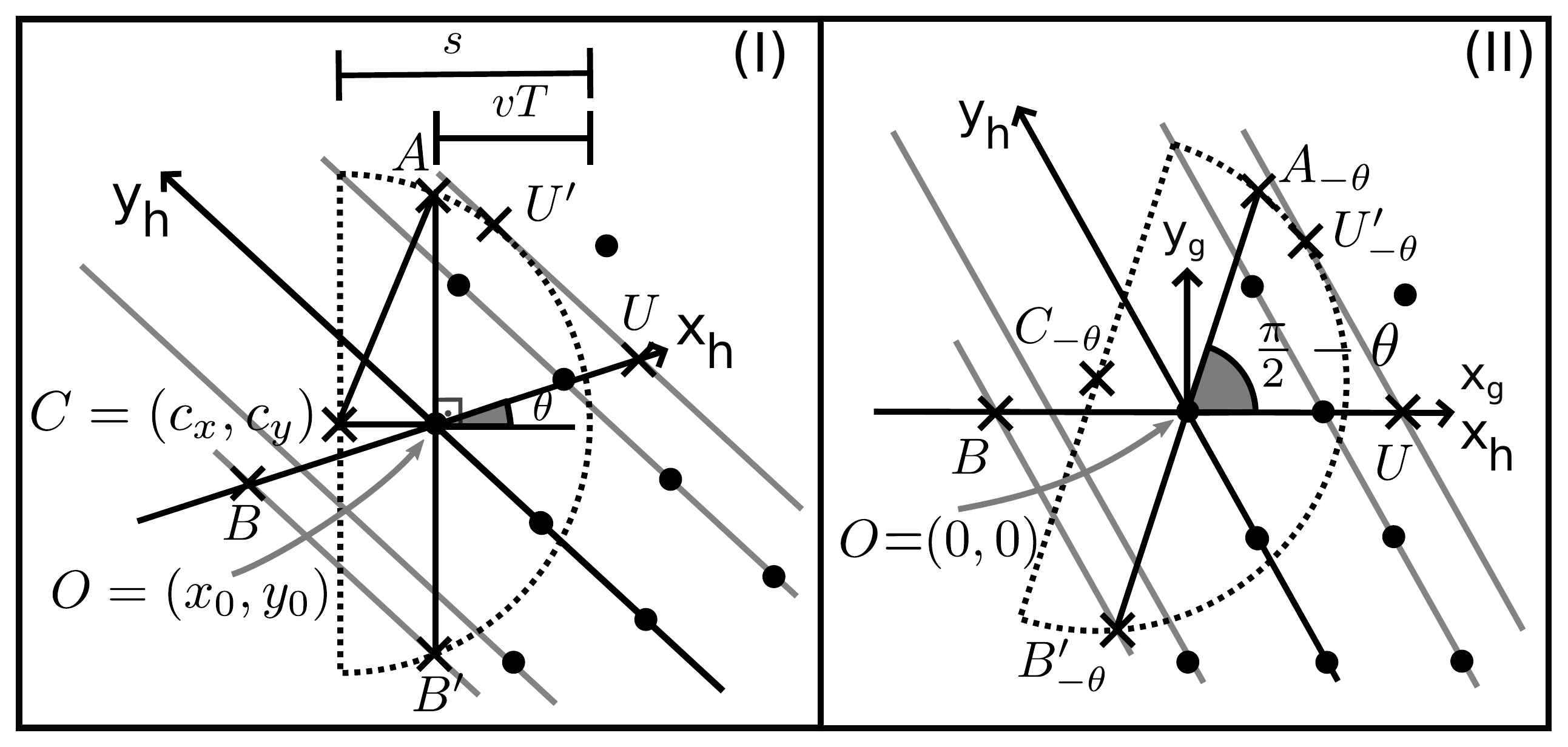} 
      \caption{Similar to the coordinate spaces of Figure \ref{fig:hexsemicircle1}, but for $T \le \frac{s}{v}$. The rotation and hexagonal grid system centres are now $(x_{0},y_{0})$. Notice also in (I) that $\bigtriangleup CAO$ is right with hypotenuse $\overline{CA}$ measuring $s$, and the horizontal cathetus $\overline{CO}$ measures $s - vT$.}
      \label{fig:hexsemicircle2}
    \end{figure}

    \begin{figure}[t]
      \centering
      \includegraphics[width=0.45\columnwidth]{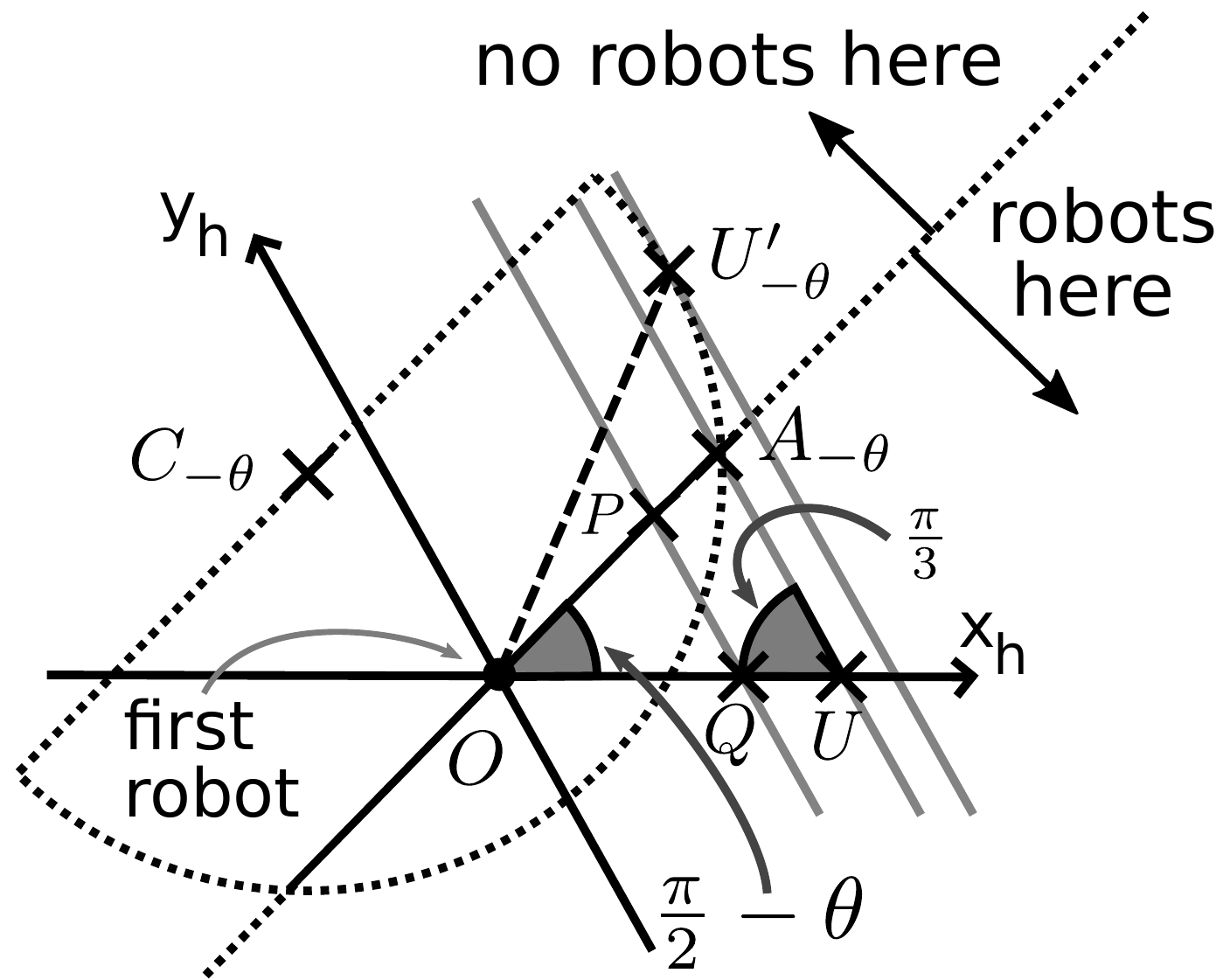}  
      \caption{An example of when the angle $\widehat{U^{'}_{-\theta}OU}$ is greater than $\widehat{A_{-\theta}OU}$. We only consider robots inside the semicircle below the line $\protect \overleftrightarrow{OA_{-\theta}}$, otherwise the robot on $O$ would not be the first robot by assumption. In this case, any line parallel to $y_{h}$-axis crossing the semicircle below $\protect\overline{OA_{-\theta}}$ must have its $x_{h}$-axis coordinate less than or equal to $U$, for example $Q$ projected from $P$.}
      \label{fig:hexsemicircleUA}
    \end{figure}

    Now, assume $T \le \frac{s}{v}$. Then, the semicircle has centre at $C = (c_{x},c_{y}) = (x_{0} - (s - vT) ,y_{0})$ (Figure \ref{fig:hexsemicircle2}). Now, as we do not have the rectangle part, we consider the \emph{last} robot of the rectangular part being the first robot to arrive at the target region, so $(l_{x},l_{y}) = (x_{0},y_{0})$, and, by (\ref{eq:centertheta}),
    \begin{equation}
      C_{-\theta} = \left[
        \begin{array}{cc}
          \cos(-\theta) & -\sin(-\theta)\\
          \sin(-\theta) & \cos(-\theta)\\
        \end{array}
        \right]
        \left[
        \begin{array}{c}
          c_{x} - x_{0}\\
          c_{y} - y_{0}\\
        \end{array}
        \right].
      \label{eq:centertheta2}
    \end{equation}
    
    In the usual Euclidean coordinate space before the rotation about $(x_{0},y_{0})$, we consider the line $\overleftrightarrow{OA}$ perpendicular to the $x$-axis at $O = (x_{0},y_{0})$. This line represents the perpendicular axis such that we wish to count all the robots from it to the arc of the semicircle on its right. From Figure \ref{fig:hexsemicircle2} (I), 
    \if\shortVersion 1
      $r = \vert \overline{AO}\vert = \sqrt{\vert \overline{CA}\vert ^{2} - \vert \overline{CO}\vert ^{2}} = \sqrt{s^{2} - (s - vT)^{2}} = \sqrt{2svT - (vT)^{2}}.$
    \else
      $$r = \vert \overline{AO}\vert = \sqrt{\vert \overline{CA}\vert ^{2} - \vert \overline{CO}\vert ^{2}} = \sqrt{s^{2} - (s - vT)^{2}} = \sqrt{2svT - (vT)^{2}}.$$
    \fi
    
    After the rotation by $-\theta$ about the point $O$, the maximum value for $x_{h}$ is defined by the point $U$. The point $U$ is chosen depending on the angles $\widehat{U^{'}_{-\theta}OU}$  and $\widehat{A_{-\theta}OU}$. When the angle $\widehat{U^{'}_{-\theta}OU}$ is greater than $\widehat{A_{-\theta}OU}$, the value of $U$ is calculated in relation to $A_{-\theta}$, because the line parallel to $y_{h}$-axis intercepting $U^{'}_{-\theta}$ is not inside the semicircle below $\overleftrightarrow{OA_{-\theta}}$ (Figure \ref{fig:hexsemicircleUA}). For comparison, Figure \ref{fig:hexsemicircle2} (II) shows an example where we choose $U$ as the $x_{h}$-axis intersection with the line parallel to $y_{h}$-axis at $U^{'}_{-\theta}$. As we saw before for the case $T > \frac{s}{v}$ (Figure \ref{fig:hexsemicircle1} (II)), the angle of $\overline{C_{-\theta}U^{'}_{\theta}}$ in relation to $x_{h}$-axis is $\pi/6$, consequently, 
    \if\shortVersion 1
      $
           U^{'}_{-\theta} 
            = C_{-\theta} + s\left(\cos\left(\frac{\pi}{6}\right),\sin\left(\frac{\pi}{6}\right)\right)
            = \left( \frac{\sqrt{3}s}{2} + \cos(\theta) (vT - s)  , \frac{s}{2}  - \sin(\theta) (vT - s)   \right),
      $
    \else
      $$ 
        \begin{aligned}
           U^{'}_{-\theta} 
             &= C_{-\theta} + s\left(\cos\left(\frac{\pi}{6}\right),\sin\left(\frac{\pi}{6}\right)\right)\\
\ifexpandexplanation
             &=
                \left[
                \begin{array}{cc}
                  \cos(-\theta) & -\sin(-\theta)\\
                  \sin(-\theta) & \cos(-\theta)\\
                \end{array}
                \right]
                \left[
                \begin{array}{c}
                  c_{x} - x_{0}\\
                  c_{y} - y_{0}\\
                \end{array}
                \right]
                + \left[
                 \begin{array}{c}
                   \frac{\sqrt{3}s}{2} \\ \frac{s}{2}
                 \end{array}
                 \right] \\
             &=
                \left[
                \begin{array}{cc}
                  \cos(\theta) & \sin(\theta)\\
                  -\sin(\theta) & \cos(\theta)\\
                \end{array}
                \right]
                \left[
                \begin{array}{c}
                  vT-s\\
                  0\\
                \end{array}
                \right]
                + \left[
                 \begin{array}{c}
                   \frac{\sqrt{3}s}{2} \\ \frac{s}{2}
                 \end{array}
                 \right]\\ 
\fi
            &= \left( \frac{\sqrt{3}s}{2} + \cos(\theta) (vT - s)  , \frac{s}{2}  - \sin(\theta) (vT - s)   \right),
        \end{aligned}
      $$
    \fi
    from (\ref{eq:centertheta2}), and $\widehat{U^{'}_{-\theta}OU}$ measures
    \if\shortVersion 1
      $
         \arctan\left( \frac{U^{'}_{-\theta,y}}{U^{'}_{-\theta,x}} \right)
            = \arctan\left( \frac{\frac{s}{2} - \sin(\theta) (vT - s)  }{\frac{\sqrt{3}s}{2} + \cos(\theta) (vT - s)}\right).
      $
    \else
      $$
        \begin{aligned}
          \arctan\left( \frac{U^{'}_{-\theta,y}}{U^{'}_{-\theta,x}} \right)
            &= \arctan\left( \frac{\frac{s}{2} - \sin(\theta) (vT - s)  }{\frac{\sqrt{3}s}{2} + \cos(\theta) (vT - s)}\right).
        \end{aligned}
      $$
    \fi
    
    $\widehat{A_{-\theta}OU}$ measures $\frac{\pi}{2} - \theta$, as show in Figure \ref{fig:hexsemicircle2} (II). Thence, 
    \if\shortVersion 1
      $A_{-\theta} = \big(r\cos\big(\frac{\pi}{2} - \theta\big), r\sin\big(\frac{\pi}{2} - \theta\big)\big).$
    \else
      $$A_{-\theta} = \left(r\cos\left(\frac{\pi}{2} - \theta\right), r\sin\left(\frac{\pi}{2} - \theta\right)\right).$$ 
    \fi
    If $\arctan\left( \frac{U^{'}_{-\theta,y}}{U^{'}_{-\theta,x}} \right) \le \widehat{A_{-\theta}OU} = \frac{\pi}{2} - \theta$, 
    we apply (\ref{eq:xgyg2xhyh}) on $U^{'}_{-\theta}$ to get its $x_{h}$-axis coordinate
    \if\shortVersion 1
       $
          U 
          = \frac{1}{d}\big(\frac{\sqrt{3}s}{2} + \cos(\theta) (vT - s)   \big) + \frac{1}{\sqrt{3}d}\big(\frac{s}{2} - \sin(\theta) (vT - s)   \big) = \frac{2\sin(\pi/3 - \theta)(vT - s)}{\sqrt{3}d}   + \frac{2s}{\sqrt{3}d},
      $
    \else
      $$
        \begin{aligned}
          U 
          &= \frac{1}{d}\left(\frac{\sqrt{3}s}{2} + \cos(\theta) (vT - s)   \right) + \frac{1}{\sqrt{3}d}\left(\frac{s}{2} - \sin(\theta) (vT - s)   \right)\\
\ifexpandexplanation
            &= \cos(\theta) \frac{vT - s}{d}  + \frac{\sqrt{3}s}{2d}   - \sin(\theta) \frac{vT - s}{\sqrt{3}d}   + \frac{s}{2\sqrt{3}d}\\
            &= \cos(\theta) \frac{vT - s}{d}     - \sin(\theta) \frac{vT - s}{\sqrt{3}d}  + \frac{\sqrt{3}\sqrt{3}s}{2\sqrt{3}d} + \frac{s}{2\sqrt{3}d}\\
            &= \cos(\theta) \frac{vT - s}{d}     - \sin(\theta) \frac{vT - s}{\sqrt{3}d}  + \frac{3s}{2\sqrt{3}d} + \frac{s}{2\sqrt{3}d}\\
            &= \cos(\theta) \frac{vT - s}{d}     - \sin(\theta) \frac{vT - s}{\sqrt{3}d}  + \frac{4s}{2\sqrt{3}d}\\
\fi
          &= \cos(\theta) \frac{vT - s}{d}     - \sin(\theta) \frac{vT - s}{\sqrt{3}d}   + \frac{2s}{\sqrt{3}d}\\
          &= \sqrt{3}\cos(\theta) \frac{vT - s}{\sqrt{3}d}     - \sin(\theta) \frac{vT - s}{\sqrt{3}d}   + \frac{2s}{\sqrt{3}d}\\
\ifexpandexplanation
            &= 2\left(\frac{\sqrt{3}}{2}\cos(\theta)  - \frac{1}{2}\sin(\theta) \right)\frac{vT - s}{\sqrt{3}d}   + \frac{2s}{\sqrt{3}d}\\
\fi
          &= \frac{2\sin(\pi/3 - \theta)(vT - s)}{\sqrt{3}d}   + \frac{2s}{\sqrt{3}d},\\
        \end{aligned}
      $$
    \fi
    followed by applying floor function to it, as we need the integer coordinate less or equal to this value. This is the same as (\ref{eq:hexsemicircleU1}) by using $(l_{x},l_{y}) = (x_{0},y_{0})$, then we also have (\ref{eq:eq58}) when $\arctan\left( \frac{\frac{s}{2} - \sin(\theta) (vT - s) }{\frac{\sqrt{3}s}{2} + \cos(\theta) (vT - s)} \right) \le  \frac{\pi}{2} - \theta$.
    
    If $\arctan\left( \frac{U^{'}_{-\theta,y}}{U^{'}_{-\theta,x} } \right) > \frac{\pi}{2} - \theta$, then there are no robots to consider on the parallel lines to $y_{h}$-axis between $U^{'}_{-\theta}$ and $A_{-\theta}$, otherwise the robot at $(x_{0},y_{0})$ would not be the first to arrive at the target region. Thus, if $\arctan\left( \frac{U^{'}_{-\theta,y} }{U^{'}_{-\theta,x}} \right) > \frac{\pi}{2} - \theta$, we use the $x_{h}$-coordinate for the point $A_{-\theta}$ on the hexagonal grid space, that is,
    \if\shortVersion 1
      $
          U 
          = \frac{1}{d}\big( r\cos\left(\frac{\pi}{2} - \theta\right)\big) + \frac{1}{\sqrt{3}d}\big(r\sin\left(\frac{\pi}{2} - \theta\right)\big)
          = \frac{2r}{\sqrt{3}d}\cos\left(\theta-\frac{\pi}{3}\right).
      $
    \else
      $$
        \begin{aligned}
          U 
          &= \frac{1}{d}\left( r\cos\left(\frac{\pi}{2} - \theta\right)\right) + \frac{1}{\sqrt{3}d}\left(r\sin\left(\frac{\pi}{2} - \theta\right)\right)\\
          &= \frac{2r}{\sqrt{3}d}\left(\frac{\sqrt{3}}{2} \cos\left(\frac{\pi}{2} - \theta\right) + \frac{1}{2}\sin\left(\frac{\pi}{2} - \theta\right)\right)\\
\ifexpandexplanation
            &= \frac{2r}{\sqrt{3}d}\left(\frac{\sqrt{3}}{2} \sin\left(\theta\right) + \frac{1}{2}\cos\left(\theta\right)\right)\\
\fi
          &= \frac{2r}{\sqrt{3}d}\cos\left(\theta-\frac{\pi}{3}\right).\\
        \end{aligned}
      $$
    \fi
    then we apply the floor function to yield the desired result in (\ref{eq:eq59}).

    Now we will find the minimum value for an integer $x_{h}$ such that a parallel-to-$y_{h}$-axis line is inside the semicircle and starting from the right of $\overleftrightarrow{OA}$ or on it.
    For the calculation of $B$, from Figure \ref{fig:hexsemicircle2} (II), similarly to how we previously did,
    \if\shortVersion 1
      $
          B^{'}_{-\theta} 
            = O + r(\cos(-(\pi/2 + \theta)),\sin(-(\pi/2 + \theta)) 
            = ( - r\sin(\theta),  - r\cos(\theta)),
      $
    \else
      $$
        \begin{aligned}
          B^{'}_{-\theta} 
            &
            = O + r(\cos(-(\pi/2 + \theta)),\sin(-(\pi/2 + \theta)) 
            \\ &= (r\cos(\pi/2+\theta), - r\sin(\pi/2 + \theta)) \\ &
            = ( - r\sin(\theta),  - r\cos(\theta)),
        \end{aligned}
      $$ 
    \fi
    and, by (\ref{eq:xgyg2xhyh}), as $B$ is the $x_{h}$-coordinate of the $B_{-\theta}$,
    \if\shortVersion 1
      $
          B 
            = \frac{1}{d}\left( - r\sin(\theta) \right) + \frac{1}{\sqrt{3}d}\left(-r\cos(\theta)\right)
            = -\frac{2r}{\sqrt{3}d}\sin\left(\theta + \frac{\pi}{6}\right).
      $
    \else
      $$
        \begin{aligned}
          B 
            &= \frac{1}{d}\left( - r\sin(\theta) \right) + \frac{1}{\sqrt{3}d}\left(-r\cos(\theta)\right)
            = -\frac{2r}{\sqrt{3}d}\left(   \frac{1}{2}\cos(\theta) + \frac{\sqrt{3}}{2}\sin(\theta) \right)\\
\ifexpandexplanation
              &= -\frac{2r}{\sqrt{3}d}\left(   \sin(\pi/6)\cos(\theta) + \cos(\pi/6)\sin(\theta) \right)\\
\fi
            &= -\frac{2r}{\sqrt{3}d}\sin\left(\theta + \frac{\pi}{6}\right).\\
        \end{aligned}
      $$
    \fi
    Also, we apply the ceiling function to yield the desired result in (\ref{eq:BcasesSC}).
    
    In this case, $C_{1}(x_{h})$ and $C_{2}(x_{h})$ are equal to (\ref{eq:C1hex}) and (\ref{eq:C2hex}), but $L(x_{h})$ is different from the previous case. The line $\overleftrightarrow{OA_{-\theta}}$ for a point $(x_{g},y_{g})$ in the Euclidean space is
    \if\shortVersion 1
      $ y_{g} = \tan\left(\frac{\pi}{2} - \theta\right)x_{g}  \Rightarrow $ 
      $  L(x_{h}) =  y_{h} = \frac{\sin\left(\frac{\pi}{2}-\theta\right)  x_{h}}{\sin\left( \frac{5\pi}{6}-\theta\right)}.$ 
    \else
      $$ y_{g} = \tan\left(\frac{\pi}{2} - \theta\right)x_{g}  \LR 
          y_{g} = \frac{\sin\left(\frac{\pi}{2} - \theta\right)} {\cos\left(\frac{\pi}{2} - \theta\right)} x_{g}  \LR$$
      $$\frac{\sqrt{3} d y_{h}}{2}  = \frac{\sin\left(\frac{\pi}{2}-\theta\right)}{\cos\left(\frac{\pi}{2}-\theta\right)} \left(d x_{h} - \frac{d y_{h}}{2} \right)  \LR $$
\ifexpandexplanation
        $$\frac{\sqrt{3} d y_{h}}{2}\cos\left(\frac{\pi}{2}-\theta\right)  + \sin\left(\frac{\pi}{2}-\theta\right)\frac{d y_{h}}{2} = \sin\left(\frac{\pi}{2}-\theta\right) d x_{h}   \LR$$
\fi
      $$ y_{h} = \frac{2\sin\left(\frac{\pi}{2}-\theta\right)  x_{h}}{\sqrt{3}\cos\left(\frac{\pi}{2}-\theta\right)  + \sin\left(\frac{\pi}{2}-\theta\right) }   \LR$$
\ifexpandexplanation
        $$y_{h} = \frac{\sin\left(\frac{\pi}{2}-\theta\right)  x_{h}}{\sin(\pi/3)\cos\left(\frac{\pi}{2}-\theta\right)  + \cos(\pi/3)\sin\left(\frac{\pi}{2}-\theta\right) }   \LR$$
        $$y_{h} = \frac{\sin\left(\frac{\pi}{2}-\theta\right)  x_{h}}{\sin(\pi/3 + \pi/2-\theta)}   \LR$$
\fi
      $$  L(x_{h}) =  y_{h} = \frac{\sin\left(\frac{\pi}{2}-\theta\right)  x_{h}}{\sin\left( \frac{5\pi}{6}-\theta\right)}.$$ 
    \fi 
    \fi %
  \end{proof}

  We have $\displaystyle \lim_{T \to \infty} f_{h}(T,\theta) = \lim_{T \to \infty} \frac{N_{R}(T,\theta)}{T} + \lim_{T \to \infty} \frac{N_{S}(T,\theta)-1}{T}$, by Definition \ref{def:throughput2}. As shown below, this limit needs only the rectangle part, because $N_{S}$ is limited by a semicircle with finite radius.
  
  \begin{lemma}
    \if\shortVersion 1
      $\lim_{T\to \infty} \frac{N_{S}(T,\theta)-1}{T} = 0.$
    \else
      $$\lim_{T\to \infty} \frac{N_{S}(T,\theta)-1}{T} = 0.$$
    \fi
    \label{lemma:limitinftyNS}
  \end{lemma}
  \begin{proof} 
    \ifithasappendixforlemmas %
      See Online Appendix.
    \else %
    As $T \to \infty$, we have $T > \frac{s}{v}$. By Lemma \ref{lemma:NS}, $c_{x} = x_{0} + vT - s$, which is the $x$-axis coordinate of the right side of the rectangle. We have that the robots are distant by $d$, so the last robot must be at most distant by $d$ from the point $(c_{x},y_{0})$. Hence, $x_{0} + vT - s - d \le l_{x} \le x_{0} + vT - s$, and $y_{0} - d \le l_{y} \le y_{0} + d$, so $0 = c_{x} - (x_{0} + vT - s) \le c_{x} - l_{x} \le c_{x} - (x_{0} + vT - s - d) = d$ and $-d \le y_{0} - l_{y} \le  d$. Then, $-d \le C_{-\theta,x}, C_{-\theta,y} \le d$. 
\ifexpandexplanation
    Also, we have that $\theta \in \lbrack 0,\pi/3 \rparen$, so $-1/2 \le \cos(\pi/3 - \theta) \le 1$.
\fi
    Thus,
    \if\shortVersion 1
      $
          B 
          =   \big\lceil\frac{2( \sin(\pi/3-\theta)(c_{x} - l_{x})  + \cos(\pi/3-\theta)(y_{0} - l_{y} -s) )}{\sqrt{3}d}\big\rceil
          \ge \big\lceil\frac{2(\cos(\pi/3-\theta)(-d -s) )}{\sqrt{3}d}\big\rceil 
          \ge  \big\lceil\frac{-2(1 +\frac{s}{d}) }{\sqrt{3}}\big\rceil 
          =  \big\lceil-\frac{2}{\sqrt{3}} -\frac{s}{\sqrt{3}d}\big\rceil 
          \ge  -\frac{2}{\sqrt{3}} -\frac{s}{\sqrt{3}d}, 
      $ and
      $
          U 
          = \big \lfloor \frac{2(\sin(\pi/3-\theta)(c_{x} - l_{x}) + \cos(\pi/3-\theta)(y_{0} - l_{y}) + s)}{\sqrt{3}d} \big \rfloor
          \le \big \lfloor \frac{2(\sin(\pi/3-\theta)d + \cos(\pi/3-\theta)d + s)}{\sqrt{3}d} \big \rfloor
          \le \big \lfloor \frac{2(2d + s)}{\sqrt{3}d} \big \rfloor
          \le  \frac{4}{\sqrt{3}} + \frac{2s}{\sqrt{3}d}, 
      $
   \else
      $$
        \begin{aligned}
          B 
          &=   \left\lceil\frac{2( \sin(\pi/3-\theta)(c_{x} - l_{x})  + \cos(\pi/3-\theta)(y_{0} - l_{y} -s) )}{\sqrt{3}d}\right\rceil\\
          &\ge \left\lceil\frac{2(\cos(\pi/3-\theta)(-d -s) )}{\sqrt{3}d}\right\rceil 
          = \left\lceil\frac{-2\cos(\pi/3-\theta)(1 +\frac{s}{d}) }{\sqrt{3}}\right\rceil 
          \ge  \left\lceil\frac{-2(1 +\frac{s}{d}) }{\sqrt{3}}\right\rceil \\
          &=  \left\lceil-\frac{2}{\sqrt{3}} -\frac{s}{\sqrt{3}d}\right\rceil 
          \ge  -\frac{2}{\sqrt{3}} -\frac{s}{\sqrt{3}d}, 
        \end{aligned}
      $$
\ifexpandexplanation 
      $$
        \begin{aligned}
          U 
          &= \left \lfloor \frac{2(\sin(\pi/3-\theta)(c_{x} - l_{x}) + \cos(\pi/3-\theta)(y_{0} - l_{y}) + s)}{\sqrt{3}d} \right \rfloor\\
          &\le \left \lfloor \frac{2(\sin(\pi/3-\theta)d + \cos(\pi/3-\theta)d + s)}{\sqrt{3}d} \right \rfloor
          \le \left \lfloor \frac{2(2d + s)}{\sqrt{3}d} \right \rfloor
            = \left \lfloor \frac{4}{\sqrt{3}} + \frac{2s}{\sqrt{3}d} \right \rfloor\\ & 
          \le  \frac{4}{\sqrt{3}} + \frac{2s}{\sqrt{3}d}, 
        \end{aligned}
      $$
\else
      $$
        \begin{aligned}
          U 
          &= \left \lfloor \frac{2(\sin(\pi/3-\theta)(c_{x} - l_{x}) + \cos(\pi/3-\theta)(y_{0} - l_{y}) + s)}{\sqrt{3}d} \right \rfloor\\
          &\le \left \lfloor \frac{2(\sin(\pi/3-\theta)d + \cos(\pi/3-\theta)d + s)}{\sqrt{3}d} \right \rfloor
          \le \left \lfloor \frac{2(2d + s)}{\sqrt{3}d} \right \rfloor
          \le  \frac{4}{\sqrt{3}} + \frac{2s}{\sqrt{3}d}, 
        \end{aligned}
      $$
\fi
    \fi
    and for any integer $x_{h} \in [B,U]$, as $\Delta(x_{h})$ cannot be negative,
    \if\shortVersion 1
      $
          0 \le \Delta(x_{h}) 
            =   4  s^{2} - \big(\sqrt{3}  {\big(d {x_{h}} -{C_{-\theta,x}} \big)} - C_{-\theta,y}\big)^{2} 
            \le   4  s^{2},
      $
      $
          \lceil Y_{1}^{S}(x_{h}) \rceil 
            \ge Y_{1}^{S}(x_{h})   
            = \frac{d {x_{h}}- {C_{-\theta,x}} + \sqrt{3} C_{-\theta,y}   - \sqrt{\Delta(x_{h})}}{2  d}  
            \ge \frac{d {x_{h}}- d - \sqrt{3} d   - 2s}{2  d}  
            = \frac{x_{h} - 1 - \sqrt{3}}{2} - \frac{s}{d} ,
      $
      and
      $
          \lfloor Y_{2}^{S}(x_{h}) \rfloor
             \le  Y_{2}^{S}(x_{h}) 
             \le \min(L(x_{h}), C_{2}(x_{h}))  \le C_{2}(x_{h})  
             = \frac{d {x_{h}}- {C_{-\theta,x}}  + \sqrt{3} C_{-\theta,y}  + \sqrt{\Delta(x_{h})}}{2  d}
             \le \frac{d {x_{h}} + d + \sqrt{3} d  + 2s}{2  d} 
               = \frac{x_{h}+1+\sqrt{3}}{2} + \frac{s}{d}.
      $
    \else
      $$
        \begin{aligned}
          0 \le \Delta(x_{h}) 
            =   4  s^{2} - \left(\sqrt{3}  {\left(d {x_{h}} -{C_{-\theta,x}} \right)} - C_{-\theta,y}\right)^{2} 
            \le   4  s^{2},
        \end{aligned}
      $$
      $$
        \begin{aligned}
          \lceil Y_{1}^{S}(x_{h}) \rceil 
            &\ge Y_{1}^{S}(x_{h})   
            = \frac{d {x_{h}}- {C_{-\theta,x}} + \sqrt{3} C_{-\theta,y}   - \sqrt{\Delta(x_{h})}}{2  d}  \\
            &\ge \frac{d {x_{h}}- d - \sqrt{3} d   - 2s}{2  d}  
            = \frac{x_{h} - 1 - \sqrt{3}}{2} - \frac{s}{d} ,
        \end{aligned}
      $$
      and
      $$
        \begin{aligned}
          \lfloor Y_{2}^{S}(x_{h}) \rfloor
             &\le  Y_{2}^{S}(x_{h}) 
             \le \min(L(x_{h}), C_{2}(x_{h}))  \le C_{2}(x_{h})  
             \\&= \frac{d {x_{h}}- {C_{-\theta,x}}  + \sqrt{3} C_{-\theta,y}  + \sqrt{\Delta(x_{h})}}{2  d}
             \le \frac{d {x_{h}} + d + \sqrt{3} d  + 2s}{2  d} 
             \\ & = \frac{x_{h}+1+\sqrt{3}}{2} + \frac{s}{d}.
        \end{aligned}
      $$
    \fi
    Thus,
    \if\shortVersion 1
      $
          0 \le N_{S}(T,\theta) = 
          \sum_{x_{h}=B}^{U} \big(\lfloor Y_{2}^{S}(x_{h}) \rfloor - \lceil Y_{1}^{S}(x_{h}) \rceil + 1 \big)
          \le \sum_{x_{h}=B}^{U}\big( \frac{x_{h}+1+\sqrt{3}}{2} + \frac{s}{d} - \big(\frac{x_{h} - 1 - \sqrt{3}}{2} - \frac{s}{d}\big) + 1 \big)  
          = \sum_{x_{h}=B}^{U}  \big(\frac{2s}{d} + \sqrt{3} + 2\big) 
          \le 
          \big( \frac{4}{\sqrt{3}} + \frac{2s}{\sqrt{3}d} -\big( -\frac{2}{\sqrt{3}} -\frac{s}{\sqrt{3}d}  \big) + 1\big)  \big(\frac{2s}{d} + \sqrt{3} + 2\big) 
          = \big( 2\sqrt{3} + \frac{\sqrt{3}s}{d}  + 1\big)  \big(\frac{2s}{d} + \sqrt{3} + 2\big) 
      $
      $
        \Rightarrow 0 = \lim_{T\to\infty} \frac{-1}{T} 
          \le \lim_{T\to \infty} \frac{N_{S}(T,\theta)-1}{T} 
          \le \lim_{T \to \infty} \frac{1}{T}\big(\big( 2\sqrt{3} + \frac{\sqrt{3}s}{d}  + 1\big)  \big(\frac{2s}{d} + \sqrt{3} + 2\big) -1\big) = 0.
      $
  \else
\ifexpandexplanation 
      $$
        \begin{aligned}
          \phantom{\Rightarrow } 0 &\le N_{S}(T,\theta) = 
          \sum_{x_{h}=B}^{U} \left(\lfloor Y_{2}^{S}(x_{h}) \rfloor - \lceil Y_{1}^{S}(x_{h}) \rceil + 1 \right)
          \\&\le \sum_{x_{h}=B}^{U}\left( \frac{x_{h}+1+\sqrt{3}}{2} + \frac{s}{d} - \left(\frac{x_{h} - 1 - \sqrt{3}}{2} - \frac{s}{d}\right) + 1 \right)  
            \\&= \sum_{x_{h}=B}^{U} \left(\frac{x_{h}+1+\sqrt{3}}{2} + \frac{s}{d} - \frac{x_{h} - 1 - \sqrt{3}}{2} + \frac{s}{d} + 1  \right)
            \\&= \sum_{x_{h}=B}^{U} \left(\frac{\sqrt{3}}{2} + \frac{1}{2} + \frac{2s}{d} + \frac{1}{2}  + \frac{\sqrt{3}}{2} + 1 \right) 
            \\&= \sum_{x_{h}=B}^{U}  \left(\frac{2s}{d} + \frac{\sqrt{3} + 1 + 1 + \sqrt{3}}{2} + 1 \right) \\
            &= \sum_{x_{h}=B}^{U}  \left(\frac{2s}{d} + \frac{2\sqrt{3} + 4}{2} \right) 
          \\
        \end{aligned}
      $$
      $$
        \begin{aligned}  
          &
          = \sum_{x_{h}=B}^{U}  \left(\frac{2s}{d} + \sqrt{3} + 2\right) 
            = (U - B + 1) \left(\frac{2s}{d} + \sqrt{3} + 2\right)
          \\&\le 
          \left( \frac{4}{\sqrt{3}} + \frac{2s}{\sqrt{3}d} -\left( -\frac{2}{\sqrt{3}} -\frac{s}{\sqrt{3}d}  \right) + 1\right)  \left(\frac{2s}{d} + \sqrt{3} + 2\right)\\ 
            &= \left( \frac{4}{\sqrt{3}} + \frac{2s}{\sqrt{3}d}  +\frac{2}{\sqrt{3}} +\frac{s}{\sqrt{3}d}   + 1\right)  \left(\frac{2s}{d} + \sqrt{3} + 2\right)\\
          &= \left( 2\sqrt{3} + \frac{\sqrt{3}s}{d}  + 1\right)  \left(\frac{2s}{d} + \sqrt{3} + 2\right) 
        \end{aligned}
      $$
\else 
      $$
        \begin{aligned}
          \phantom{\Rightarrow } 0 &\le N_{S}(T,\theta) = 
          \sum_{x_{h}=B}^{U} \left(\lfloor Y_{2}^{S}(x_{h}) \rfloor - \lceil Y_{1}^{S}(x_{h}) \rceil + 1 \right)
          \\&\le \sum_{x_{h}=B}^{U}\left( \frac{x_{h}+1+\sqrt{3}}{2} + \frac{s}{d} - \left(\frac{x_{h} - 1 - \sqrt{3}}{2} - \frac{s}{d}\right) + 1 \right)  
          \\&
          = \sum_{x_{h}=B}^{U}  \left(\frac{2s}{d} + \sqrt{3} + 2\right) 
          \\&\le 
          \left( \frac{4}{\sqrt{3}} + \frac{2s}{\sqrt{3}d} -\left( -\frac{2}{\sqrt{3}} -\frac{s}{\sqrt{3}d}  \right) + 1\right)  \left(\frac{2s}{d} + \sqrt{3} + 2\right)\\ 
          &= \left( 2\sqrt{3} + \frac{\sqrt{3}s}{d}  + 1\right)  \left(\frac{2s}{d} + \sqrt{3} + 2\right) 
        \end{aligned}
      $$
\fi
      $$
      \begin{aligned}
        \Rightarrow 0 = \lim_{T\to\infty} \frac{-1}{T} 
          &\le \lim_{T\to \infty} \frac{N_{S}(T,\theta)-1}{T} 
        \\
          &\le \lim_{T \to \infty} \frac{1}{T}\left(\left( 2\sqrt{3} + \frac{\sqrt{3}s}{d}  + 1\right)  \left(\frac{2s}{d} + \sqrt{3} + 2\right) -1\right) = 0.
      \end{aligned}
      $$
    \fi
    Hence, the result follows from the sandwich theorem.
    \fi %
  \end{proof}
  
  As we obtained that $\displaystyle\lim_{T\to \infty} \frac{N_{S}(T,\theta)-1}{T} = 0$, hereafter we only calculate the limit for the number of robots inside the rectangle. By Lemmas \ref{lemma:NR} to \ref{lemma:MiddleInterval}, if $n_{l}^{+}-1 < K'$  we have
  \if\shortVersion 1
    $
        \lim_{T\to \infty} f_{h}(T,\psi) 
        = \lim_{T\to \infty} \frac{1}{T} \sum_{x_{h}=-n_{l}^{-}}^{n_{l}^{-}} \big(\lfloor Y_{2}^{R}(x_{h}) \rfloor - \lceil Y_{1}^{R}(x_{h}) \rceil + 1 \big)
          + \lim_{T\to \infty} \frac{1}{T} \sum_{x_{h}=n_{l}^{-} + 1}^{n_{l}^{+}-1} \big( \lfloor Y_{2}^{R}(x_{h}) \rfloor - \lceil Y_{1}^{R}(x_{h}) \rceil + 1 \big),
    $
  \else
    $$
      \begin{aligned}
        \lim_{T\to \infty} f_{h}(T,\psi) 
        &= \lim_{T\to \infty} \frac{1}{T} \sum_{x_{h}=-n_{l}^{-}}^{n_{l}^{-}} \left(\lfloor Y_{2}^{R}(x_{h}) \rfloor - \lceil Y_{1}^{R}(x_{h}) \rceil + 1 \right)\\
          &+ \lim_{T\to \infty} \frac{1}{T} \sum_{x_{h}=n_{l}^{-} + 1}^{n_{l}^{+}-1} \left( \lfloor Y_{2}^{R}(x_{h}) \rfloor - \lceil Y_{1}^{R}(x_{h}) \rceil + 1 \right),
      \end{aligned}
    $$
  \fi
  otherwise,
  \if\shortVersion 1
    $
        \lim_{T\to \infty} f_{h}(T,\psi) 
        = \lim_{T\to \infty} \frac{1}{T} \sum_{x_{h}=-n_{l}^{-}}^{n_{l}^{-}} \big(\lfloor Y_{2}^{R}(x_{h}) \rfloor - \lceil Y_{1}^{R}(x_{h}) \rceil + 1 \big)
          + \lim_{T\to \infty} \frac{1}{T} \sum_{x_{h}=n_{l}^{-} + 1}^{K' -1} \big( \lfloor Y_{2}^{R}(x_{h}) \rfloor - \lceil Y_{1}^{R}(x_{h}) \rceil + 1 \big)
          + \lim_{T\to \infty} \frac{1}{T} \sum_{x_{h}=K'}^{n_{l}^{+}-1} \big( \lfloor Y_{2}^{R}(x_{h}) \rfloor - \lceil Y_{1}^{R}(x_{h}) \rceil + 1 \big).
    $
  \else
    $$
      \begin{aligned}
        \lim_{T\to \infty} f_{h}(T,\psi) 
        &= \lim_{T\to \infty} \frac{1}{T} \sum_{x_{h}=-n_{l}^{-}}^{n_{l}^{-}} \left(\lfloor Y_{2}^{R}(x_{h}) \rfloor - \lceil Y_{1}^{R}(x_{h}) \rceil + 1 \right)\\
          &+ \lim_{T\to \infty} \frac{1}{T} \sum_{x_{h}=n_{l}^{-} + 1}^{K' -1} \left( \lfloor Y_{2}^{R}(x_{h}) \rfloor - \lceil Y_{1}^{R}(x_{h}) \rceil + 1 \right)\\
          &+ \lim_{T\to \infty} \frac{1}{T} \sum_{x_{h}=K'}^{n_{l}^{+}-1} \left( \lfloor Y_{2}^{R}(x_{h}) \rfloor - \lceil Y_{1}^{R}(x_{h}) \rceil + 1 \right).\\
      \end{aligned}
    $$ 
  \fi
  To clarify, the third summation is zero in the case of $n_{l}^{+}-1 < K'$, while the second summation goes until $\min(n_{l}^{+}-1,K'-1)$ in both cases. Each one will be individually solved assuming $\psi \neq \pi/6$. Later, we will see that the final result holds for $\psi = \pi/6$ as well. The following lemmas will be useful soon.
  
  \begin{lemma} 
    Assume $\psi \neq \pi/6$.
    \if\shortVersion 1
      $
      \lim_{T\to \infty} \frac{1}{T} \sum_{x_{h}=-n_{l}^{-}}^{n_{l}^{-}} \left( \lfloor Y_{2}^{R}(x_{h}) \rfloor - \lceil Y_{1}^{R}(x_{h}) \rceil + 1 \right) = 0.
      $
    \else
      $$
      \lim_{T\to \infty} \frac{1}{T} \sum_{x_{h}=-n_{l}^{-}}^{n_{l}^{-}} \left( \lfloor Y_{2}^{R}(x_{h}) \rfloor - \lceil Y_{1}^{R}(x_{h}) \rceil + 1 \right) = 0.
      $$ 
    \fi
    \label{lemma:lim1st}
  \end{lemma} 
  \begin{proof} 
    \ifithasappendixforlemmas %
      See Online Appendix.
    \else %
    As for any $x$, $x - 1 < \lfloor x \rfloor \le x \le \lceil x \rceil < x + 1$,
    \if\shortVersion 1
      $
           \lim_{T\to \infty} \frac{1}{T}\sum_{x_{h}=-n_{l}^{-}}^{n_{l}^{-}} \big( Y_{2}^{R}(x_{h})   -  Y_{1}^{R}(x_{h}) -1  \big) 
           < \lim_{T\to \infty} \frac{1}{T} \sum_{x_{h}=-n_{l}^{-}}^{n_{l}^{-}} \big(\lfloor Y_{2}^{R}(x_{h}) \rfloor - \lceil Y_{1}^{R}(x_{h}) \rceil + 1 \big) 
           \le \lim_{T\to \infty} \frac{1}{T} \sum_{x_{h}=-n_{l}^{-}}^{n_{l}^{-}} \big( Y_{2}^{R}(x_{h})   -  Y_{1}^{R}(x_{h}) +1 \big).
      $
    \else
      $$
        \begin{aligned}
           &\lim_{T\to \infty} \frac{1}{T}\sum_{x_{h}=-n_{l}^{-}}^{n_{l}^{-}} \left( Y_{2}^{R}(x_{h})   -  Y_{1}^{R}(x_{h}) -1  \right) \\
           &< \lim_{T\to \infty} \frac{1}{T} \sum_{x_{h}=-n_{l}^{-}}^{n_{l}^{-}} \left(\lfloor Y_{2}^{R}(x_{h}) \rfloor - \lceil Y_{1}^{R}(x_{h}) \rceil + 1 \right) \\
           &\le \lim_{T\to \infty} \frac{1}{T} \sum_{x_{h}=-n_{l}^{-}}^{n_{l}^{-}} \left( Y_{2}^{R}(x_{h})   -  Y_{1}^{R}(x_{h}) +1 \right).
        \end{aligned}
      $$
    \fi
    By Lemma \ref{lemma:Interval1}, the first and last summations do not depend on $T$, so both sides have limit equal to 0. By the sandwich theorem, we have the result. 
    \fi %
  \end{proof}
  
  \begin{lemma}
    Assume $\psi \neq \pi/6$. For $K' = \left\lceil\frac{2 (vT-s) \cos(\psi - \pi/6) - 2s\sin(\vert \psi - \pi/6\vert ) }{\sqrt{3}d}\right\rceil$,
    \if\shortVersion 1
      $
          \lim_{T\to \infty} \frac{1}{T} \sum_{x_{h}=K'}^{n_{l}^{+}-1} \left( \lfloor Y_{2}^{R}(x_{h}) \rfloor - \lceil Y_{1}^{R}(x_{h}) \rceil + 1 \right)= 0.
      $
    \else
      $$
          \lim_{T\to \infty} \frac{1}{T} \sum_{x_{h}=K'}^{n_{l}^{+}-1} \left( \lfloor Y_{2}^{R}(x_{h}) \rfloor - \lceil Y_{1}^{R}(x_{h}) \rceil + 1 \right)= 0.
      $$
    \fi
    \label{lemma:lim2nd}
  \end{lemma} 
  \begin{proof}
    \ifithasappendixforlemmas %
      See Online Appendix.
    \else %
    If $K' > n_{l}^{+}-1$, this limit is already zero, so we focus this proof on the other case. We have, analogously to the previous lemma, 
    \begin{equation}
      \begin{aligned}
         &\lim_{T\to \infty} \frac{1}{T}\sum_{x_{h}=K'}^{n_{l}^{+}-1} \left( Y_{2}^{R}(x_{h})   -  Y_{1}^{R}(x_{h}) -1 \right)   \\
         &<\lim_{T\to \infty} \frac{1}{T} \sum_{x_{h}=K'}^{n_{l}^{+}-1} \left( \lfloor Y_{2}^{R}(x_{h}) \rfloor - \lceil Y_{1}^{R}(x_{h}) \rceil + 1 \right)  \\
         &\le\lim_{T\to \infty} \frac{1}{T} \sum_{x_{h}=K'}^{n_{l}^{+}-1}\left( Y_{2}^{R}(x_{h})   -  Y_{1}^{R}(x_{h}) +1 \right).
      \end{aligned}
      \label{eq:limit2ineq}
    \end{equation}
    
    For any constant $c$, we have 
    \begin{equation}
      \lim_{T \to \infty}\frac{1}{T}\sum_{x_{h} = K'}^{n_{l}^{+}-1}c = 0,
      \label{eq:limzerolemma101}
    \end{equation}
    because the number of $x_{h}$ indexes  in the summation is limited by a finite number of integer outcomes that depends on $T$. In other words, the number of indexes in the above summation is $n_{l}^{+} - K'$ such that $\frac{4s\sin(\vert \psi - \pi/6\vert ) }{\sqrt{3}d} - 1 < n_{l}^{+} - K' \le \frac{4s\sin(\vert \psi - \pi/6\vert ) }{\sqrt{3}d} + 1$. The last inequality is obtained by counting how many $x_{h}$ are used in the summation and knowing that $2y -1 <\lfloor x + y \rfloor - \lceil x - y \rceil + 1 \le 2y + 1$ for any $x,y \in \Real$. Thus, for any $T$, $n_{l}^{+} - K'$  can only range from $\left\lceil\frac{4s\sin(\vert \psi - \pi/6\vert ) }{\sqrt{3}d}\right\rceil -1$ to $\left\lfloor\frac{4s\sin(\vert \psi - \pi/6\vert ) }{\sqrt{3}d}\right\rfloor + 1$. This yields to three possible integer numbers, if $\frac{4s\sin(\vert \psi - \pi/6\vert ) }{\sqrt{3}d} \in \Zeta$, or four, otherwise. Thus, a finite range of outcomes, none of them having $T$. Hence, for all outcomes, the limit on the left side of (\ref{eq:limzerolemma101}) is zero.

    Assume $\psi > \pi/6$ (for $\psi < \pi/6$ the result is the same).  From Lemma \ref{lemma:endcase}, 
    \begin{equation}
      \begin{aligned}
         &Y_{2}^{R}(x_{h}) - Y_{1}^{R}(x_{h}) 
         = 
          \frac{\frac{2 (v T-s)}{d \cos(\psi)}-2 x_{h}}{\sqrt{3} \tan(\psi) - 1}-\frac{\frac{2y_1}{d} + 2\tan(\psi) x_h}{\sqrt{3} + \tan(\psi)} \\
         &= \frac{\frac{2 (v T-s)}{d \cos(\psi)}}{\sqrt{3} \tan(\psi) - 1}-\frac{\frac{2y_1}{d}}{\sqrt{3} + \tan(\psi)} - \Bigg(\frac{2}{\sqrt{3} \tan(\psi) - 1} \\
         &+ \frac{2\tan(\psi) }{\sqrt{3} + \tan(\psi)}\Bigg) x_{h}.
      \end{aligned}
      \label{eq:t1y2y1xh}
    \end{equation}
    
    For the second term above, by (\ref{eq:limzerolemma101}), 
    $ \displaystyle
      \lim_{T\to\infty}\sum_{x_{h}=K'}^{n_{l}^{+}-1} \frac{\frac{2y_1}{d}}{\sqrt{3} + \tan(\psi)} = 0.
    $
    
    For the first term,
    \begin{equation}
      \begin{aligned}
        &\sum_{x_{h}=K'}^{n_{l}^{+}-1}  \frac{1}{T}\frac{\frac{2 (v T-s)}{d \cos(\psi)}}{\sqrt{3} \tan(\psi) - 1} 
        = \sum_{x_{h}=K'}^{n_{l}^{+}-1}   \frac{2 \left(v -\frac{s}{T}\right)}{d \cos(\psi)(\sqrt{3} \tan(\psi) - 1)}  \\
        &= \sum_{x_{h}=K'}^{n_{l}^{+}-1}    \frac{2 \left(v -\frac{s}{T}\right)}{d (\sqrt{3} \sin(\psi) - \cos(\psi))} 
        =\sum_{x_{h}=K'}^{n_{l}^{+}-1}   \frac{v -\frac{s}{T}}{d\sin(\psi-\pi/6)}\\
        &=\sum_{x_{h}=K'}^{n_{l}^{+}-1}   \frac{v}{d\sin(\psi-\pi/6)}  -\frac{1}{T}\sum_{x_{h}=K'}^{n_{l}^{+}-1} \frac{s}{d\sin(\psi-\pi/6)},
      \end{aligned}
      \label{eq:t102l7}
    \end{equation}
    due to $\frac{\sqrt{3}}{2} \sin(\psi) - \frac{1}{2}\cos(\psi) = \sin(\psi - \pi/6)$. Let $L$ be the number of terms on the summation of (\ref{eq:t102l7}). As discussed above, $L$ is an integer in $\Big\{\left\lceil\frac{4s\sin(\vert \psi - \pi/6\vert ) }{\sqrt{3}d}\right\rceil -1, \dots, \left\lfloor\frac{4s\sin(\vert \psi - \pi/6\vert ) }{\sqrt{3}d}\right\rfloor + 1 \Big\}$, so
    \begin{equation}
      \begin{aligned}
        &\sum_{x_{h}=K'}^{n_{l}^{+}-1}  \frac{1}{T}\frac{\frac{2 (v T-s)}{d \cos(\psi)}}{\sqrt{3} \tan(\psi) - 1} 
        =\frac{Lv}{d\sin(\psi-\pi/6)} - \frac{1}{T}\sum_{x_{h}=K'}^{n_{l}^{+}-1} \frac{s}{d\sin(\psi-\pi/6)}.
      \end{aligned}
      \label{eq:t102l72}
    \end{equation}

    Also, we have
    \if\shortVersion 1 
      $
           \frac{2}{\sqrt{3} \tan(\psi) - 1} + \frac{2\tan(\psi) }{\sqrt{3} + \tan(\psi)} 
            = \frac{\sqrt{3}}{2\sin(\psi - \pi/6) \cos(\psi - \pi/6)} 
      $
    \else
\ifexpandexplanation
      $$
        \begin{aligned}
           &\frac{2}{\sqrt{3} \tan(\psi) - 1} + \frac{2\tan(\psi) }{\sqrt{3} + \tan(\psi)} 
             = \frac{2(\sqrt{3} + \tan(\psi)) + 2\tan(\psi)(\sqrt{3} \tan(\psi) - 1) }{(\sqrt{3} \tan(\psi) - 1)(\sqrt{3} + \tan(\psi))} \\
               &= \frac{2\sqrt{3} + 2\tan(\psi) + 2\sqrt{3}\tan^{2}(\psi)  - 2 \tan(\psi) }{(\sqrt{3} \tan(\psi) - 1)(\sqrt{3} + \tan(\psi))} \\
               &= \frac{2\sqrt{3} + 2\sqrt{3}\tan^{2}(\psi)   }{(\sqrt{3} \tan(\psi) - 1)(\sqrt{3} + \tan(\psi))} \\
             &= \frac{2\sqrt{3}( 1 + \tan^{2}(\psi))   }{(\sqrt{3} \tan(\psi) - 1)(\sqrt{3} + \tan(\psi))} 
               \\&= \frac{2\sqrt{3}\sec^{2}(\psi)   }{(\sqrt{3} \tan(\psi) - 1)(\sqrt{3} + \tan(\psi))} \\&
             = \frac{2\sqrt{3}   }{(\sqrt{3} \sin(\psi) - \cos(\psi))(\sqrt{3}\cos(\psi) + \sin(\psi))} \\
             &= \frac{\sqrt{3}}{2\sin(\psi - \pi/6) \cos(\psi - \pi/6)} \\
        \end{aligned}
      $$ 
\else 
      $$
        \begin{aligned}
           &\frac{2}{\sqrt{3} \tan(\psi) - 1} + \frac{2\tan(\psi) }{\sqrt{3} + \tan(\psi)} 
             = \frac{2(\sqrt{3} + \tan(\psi)) + 2\tan(\psi)(\sqrt{3} \tan(\psi) - 1) }{(\sqrt{3} \tan(\psi) - 1)(\sqrt{3} + \tan(\psi))} \\
             &= \frac{2\sqrt{3}( 1 + \tan^{2}(\psi))   }{(\sqrt{3} \tan(\psi) - 1)(\sqrt{3} + \tan(\psi))} 
             = \frac{2\sqrt{3}   }{(\sqrt{3} \sin(\psi) - \cos(\psi))(\sqrt{3}\cos(\psi) + \sin(\psi))} \\
             &= \frac{\sqrt{3}}{2\sin(\psi - \pi/6) \cos(\psi - \pi/6)} \\
        \end{aligned}
      $$ 
\fi
    \fi
    as $1 + \tan^{2}(\psi) = \sec^{2}(\psi)$  and $\frac{\sqrt{3}}{2}\cos(\psi) + \frac{1}{2}\sin(\psi) = \cos(\psi - \pi/6)$. Hence, for the last term in (\ref{eq:t1y2y1xh}),
    \begin{equation}
    \begin{aligned}
      &\frac{1}{T}\sum_{x_{h}=K'}^{n_{l}^{+}-1} \frac{\sqrt{3}}{2\sin(\psi - \pi/6) \cos(\psi - \pi/6)}x_{h} 
        \\
      &=\frac{1}{T}\frac{\sqrt{3}}{2\sin(\psi - \pi/6) \cos(\psi - \pi/6)} \frac{(n_{l}^{+}-1 + K') (n_{l}^{+}-K')}{2}       \\
      &=\frac{\sqrt{3}LG}{4T\sin(\psi - \pi/6) \cos(\psi - \pi/6)},       \\
      \end{aligned} 
    \label{eq:t101l7}
    \end{equation}
    for an integer $G = n_{l}^{+} - 1 + K'$.  As $2x - 1 <\lfloor x + y \rfloor + \lceil x - y \rceil < 2x+1$ for any $x,y \in \Real$,  $G \in \left(\frac{4 (vT-s) \cos(\psi - \pi/6)}{\sqrt{3}d}-1,\frac{4 (vT-s) \cos(\psi - \pi/6)}{\sqrt{3}d}+1\right)$.

    For the lowest bound on G, using (\ref{eq:t102l72}) and (\ref{eq:t101l7})
    \if\shortVersion 1
      $
          \lim_{T\to \infty} \frac{1}{T} \sum_{x_{h}=K'}^{n_{l}^{+}-1} \big(  Y_{2}^{R}(x_{h})  - Y_{1}^{R}(x_{h}) \big)
          =\lim_{T\to \infty} \frac{1}{T} \sum_{x_{h}=K'}^{n_{l}^{+}-1} 
           \bigg( \frac{\frac{2 (v T-s)}{d \cos(\psi)}}{\sqrt{3} \tan(\psi) - 1} - \big(\frac{2}{\sqrt{3} \tan(\psi) - 1} + \frac{2\tan(\psi) }{\sqrt{3} + \tan(\psi)}\big) x_{h} \bigg)
          =\lim_{T\to \infty} \bigg(\frac{Lv}{d\sin(\psi-\pi/6)}  - \frac{1}{T}\sum_{x_{h}=K'}^{n_{l}^{+}-1} \frac{s}{d\sin(\psi-\pi/6)}  \big. 
          \phantom{=} \big. -\frac{\sqrt{3}L\big(\frac{4 (vT-s) \cos(\psi - \pi/6)}{\sqrt{3}d}-1\big)}{4T\sin(\psi - \pi/6) \cos(\psi - \pi/6)} \bigg) 
          =\lim_{T\to \infty} \bigg( \frac{\sqrt{3}L}{4T\sin(\psi - \pi/6) \cos(\psi - \pi/6)} - \frac{1}{T}\sum_{x_{h}=K'}^{n_{l}^{+}-1} \frac{s}{d\sin(\psi-\pi/6)} \bigg)
          = 0,
      $
    \else
      $$
        \begin{aligned}
          &\lim_{T\to \infty} \frac{1}{T} \sum_{x_{h}=K'}^{n_{l}^{+}-1} \left(  Y_{2}^{R}(x_{h})  - Y_{1}^{R}(x_{h}) \right)\\
          &=\lim_{T\to \infty} \frac{1}{T} \sum_{x_{h}=K'}^{n_{l}^{+}-1} 
           \left( \frac{\frac{2 (v T-s)}{d \cos(\psi)}}{\sqrt{3} \tan(\psi) - 1} - \left(\frac{2}{\sqrt{3} \tan(\psi) - 1} + \frac{2\tan(\psi) }{\sqrt{3} + \tan(\psi)}\right) x_{h} \right)\\
          &=\lim_{T\to \infty} \left(\frac{Lv}{d\sin(\psi-\pi/6)}  - \frac{1}{T}\sum_{x_{h}=K'}^{n_{l}^{+}-1} \frac{s}{d\sin(\psi-\pi/6)}  \right. \\
          &\phantom{=} \left. -\frac{\sqrt{3}L\left(\frac{4 (vT-s) \cos(\psi - \pi/6)}{\sqrt{3}d}-1\right)}{4T\sin(\psi - \pi/6) \cos(\psi - \pi/6)} \right) \\
        \end{aligned}
      $$
\ifexpandexplanation
      $$
        \begin{aligned}
            &=\lim_{T\to \infty} \left( \frac{Lv}{d\sin(\psi-\pi/6)} -\frac{L\frac{4 (vT-s) \cos(\psi - \pi/6)}{d}-\sqrt{3}L}{4T\sin(\psi - \pi/6) \cos(\psi - \pi/6)} - \frac{1}{T}\sum_{x_{h}=K'}^{n_{l}^{+}-1} \frac{s}{d\sin(\psi-\pi/6)} \right)\\ 
        \end{aligned}
      $$
      $$
        \begin{aligned}
            &=\lim_{T\to \infty} \left( \frac{Lv}{d\sin(\psi-\pi/6)} -\frac{Lv }{d\sin(\psi - \pi/6)} + \frac{\sqrt{3}L}{4T\sin(\psi - \pi/6) \cos(\psi - \pi/6)} - \right.\\
              &\phantom{=} \left. \frac{1}{T}\sum_{x_{h}=K'}^{n_{l}^{+}-1} \frac{s}{d\sin(\psi-\pi/6)} \right)\\
        \end{aligned}
      $$
\fi
      $$
        \begin{aligned}
          &=\lim_{T\to \infty} \left( \frac{\sqrt{3}L}{4T\sin(\psi - \pi/6) \cos(\psi - \pi/6)} - \frac{1}{T}\sum_{x_{h}=K'}^{n_{l}^{+}-1} \frac{s}{d\sin(\psi-\pi/6)} \right)
          = 0,\\
        \end{aligned}
      $$
    \fi
    due to (\ref{eq:limzerolemma101}) on the second term and, as 
    \if\shortVersion 1 
     $L\in \Big\{\Big\lceil\frac{4s\sin(\vert \psi - \pi/6\vert ) }{\sqrt{3}d}\Big\rceil -1, \dots, \Big\lfloor\frac{4s\sin(\vert \psi - \pi/6\vert ) }{\sqrt{3}d}\Big\rfloor + 1 \Big\},$
    \else
      $$L\in \Big\{\Big\lceil\frac{4s\sin(\vert \psi - \pi/6\vert ) }{\sqrt{3}d}\Big\rceil -1, \dots, \Big\lfloor\frac{4s\sin(\vert \psi - \pi/6\vert ) }{\sqrt{3}d}\Big\rfloor + 1 \Big\},$$ 
    \fi
    no element in this finite set has the term $T$.
    
    For the highest bound on G, we have the same limit. Hence, by the sandwich theorem applied on the results for both bounds of G, we get 
    \begin{equation}
      \begin{aligned}
        &\lim_{T\to \infty} \frac{1}{T} \sum_{x_{h}=K'}^{n_{l}^{+}-1} \left(  Y_{2}^{R}(x_{h})  - Y_{1}^{R}(x_{h}) \right)
       =0.
      \end{aligned}
      \label{eq:limity2y1zero1}
    \end{equation}
    Using (\ref{eq:limity2y1zero1}) and (\ref{eq:limzerolemma101}) on the bounds of (\ref{eq:limit2ineq}) and the sandwich theorem again concludes with the desired value.
    \fi %
  \end{proof}

  \begin{lemma} 
    Assume $\psi \neq \pi/6$.
    \if\shortVersion 1
      $
        \lim_{T\to \infty} \frac{1}{T} \sum_{x_{h}=n_{l}^{-} + 1}^{\min(n_{l}^{+}-1,K'-1)} \big(\lfloor Y_{2}^{R}(x_{h}) \rfloor - \lceil Y_{1}^{R}(x_{h}) \rceil + 1\big)
      $
    \else
      $$
        \lim_{T\to \infty} \frac{1}{T} \sum_{x_{h}=n_{l}^{-} + 1}^{\min(n_{l}^{+}-1,K'-1)} \left(\lfloor Y_{2}^{R}(x_{h}) \rfloor - \lceil Y_{1}^{R}(x_{h}) \rceil + 1\right)
      $$
    \fi
    exists and is bounded by
    \if\shortVersion 1
      $
          \big(\frac{4vs}{\sqrt{3}d^{2}} - \frac{2 v  \cos(\psi - \pi/6)}{\sqrt{3}d}, \frac{4vs}{\sqrt{3}d^{2}} +  \frac{2 v  \cos(\psi - \pi/6)}{\sqrt{3}d}\big].
      $
    \else
      $$
        \begin{aligned} 
          \left(\frac{4vs}{\sqrt{3}d^{2}} - \frac{2 v  \cos(\psi - \pi/6)}{\sqrt{3}d}, \frac{4vs}{\sqrt{3}d^{2}} +  \frac{2 v  \cos(\psi - \pi/6)}{\sqrt{3}d}\right].
        \end{aligned}
      $$ 
    \fi
    \label{lemma:lim3rd}
  \end{lemma}
  \begin{proof}
  \ifithasappendixforlemmas %
      See Online Appendix.
  \else %
  The next lemmas will be useful for proving this lemma.

\begin{lemma}
  For any $a,b >0, a\lfloor x \rfloor - b\lfloor y \rfloor < ax - by + a + b$.
  \label{lemma:flooraxby}
\end{lemma}
\begin{proof}  
As mentioned before, by the definition of floor function $\lfloor x \rfloor = x - frac(x)$,  where $frac$ is the function that returns the fractional part of the number $x$, such that $0 \le frac(x) < 1$ \citep{graham1994concrete},
\if\shortVersion 1
  $
    a\lfloor x \rfloor - b\lfloor y \rfloor 
      = a x - a frac(x) - by + b frac(y) 
      < ax - by + b -a frac(x) $  
    $
      < ax - by + b + a  $ 
    because  $frac(y)<1$ and  $-a frac(x) \le 0 < a$.
\else
  $$
  \begin{aligned}
    a\lfloor x \rfloor - b\lfloor y \rfloor 
      &= a x - a frac(x) - by + b frac(y) 
        &
    \\
      &< ax - by + b -a frac(x) 
        &[\text{because } frac(y)<1]
    \\
      & < ax - by + b + a & [\text{as } -a frac(x) \le 0 < a].
  \end{aligned}
  $$  
\fi
\end{proof}

\begin{lemma}
  Let $c,d,A_{1},B_{1},A_{2},B_{2} \in \Real$, $c > 0$ and $I_{1}\in \Zeta$. Then, 
  \if\shortVersion 1
    $
      \lim_{n\to\infty}{\sum_{i=I_{1}+1}^{\lfloor cn + d \rfloor} \frac{frac(-(A_{1}i+B_{1})) + frac(A_{2}i+B_{2})}{n}}.
    $ exists
  \else
    the limit below exists:
    $$
      \lim_{n\to\infty}{\sum_{i=I_{1}+1}^{\lfloor cn + d \rfloor} \frac{frac(-(A_{1}i+B_{1})) + frac(A_{2}i+B_{2})}{n}}.
    $$
  \fi
  \label{lemma:limsum1Ri}
\end{lemma}
\begin{proof}
  For convergence, we show that for $R(i) = frac(-(A_{1}i+B_{1})) + frac(A_{2}i+B_{2})$, $(a_{n})_{n \in \N^{*}} = \left(
  \sum_{i=I_{1}+1}^{\lfloor cn + d \rfloor } \frac{R(i)}{n}
  \right)_{n\in \N^{*}}$ is a Cauchy sequence. Take $\epsilon > 0$ and choose
  $N >  \frac{4\vert I_{1}-d+1\vert }{\epsilon}$.
  Let $n,m \in \N^{*}$ and $n > m > N.$ 
  We have
  \if\shortVersion 1
    $
        \vert a_{n} - a_{m}\vert 
        = \big\vert \sum_{i=I_{1}+1}^{\lfloor cn + d \rfloor} \frac{R(i)}{n} - \sum_{i=I_{1}+1}^{\lfloor cm + d \rfloor} \frac{R(i)}{m}\big\vert 
        = \big\vert \frac{1}{nm} \big( m\sum_{i=I_{1}+1}^{\lfloor cn + d \rfloor} R(i) -  n\sum_{i=I_{1}+1}^{\lfloor cm + d \rfloor} R(i) \big)\big\vert 
        = \big\vert \frac{1}{nm} \big( m\sum_{i=\lfloor cm + d \rfloor  + 1}^{\lfloor cn + d \rfloor} R(i) + (m-  n)\sum_{i=I_{1}+1}^{\lfloor cm + d \rfloor } R(i)\big) \big\vert 
        < \frac{2}{\vert nm\vert }\vert m(\lfloor cn + d \rfloor  - (\lfloor cm + d \rfloor + 1) + 1)  + (m-  n)(\lfloor cm + d \rfloor  -(I_{1}+1)+1 ) \vert 
        = \frac{2}{\vert nm\vert }\vert m\lfloor cn + d \rfloor - n\lfloor cm + d \rfloor  - (m-n) I_{1} \vert 
        < \frac{2}{\vert nm\vert }\vert m( cn + d ) - n( cm + d )  + m + n - (m-n) I_{1}  \vert   
        = \frac{2}{\vert nm\vert }\vert (n - m) (I_{1}-d)   + m + n  \vert 
        < 2\big\vert \frac{ (n + m) (I_{1}-d)   + m + n  }{nm} \big\vert 
        = 2\big\vert \frac{ (m + n) (I_{1}-d + 1) }{nm} \big\vert 
        = 2\vert I_{1}-d + 1\vert \frac{ m + n  }{nm} 
        = 2\vert I_{1}-d + 1\vert \big(\frac{1}{n}+ \frac{1}{m}\big)
        < 2\vert I_{1}-d + 1\vert \frac{2}{N}
        = \frac{4\vert I_{1}-d+1\vert }{N}
        < \epsilon.
    $  
    by knowing that $\left\lfloor cn + d \right \rfloor > \left \lfloor cm + d \right \rfloor$, $R(i) < 2$  for any  $i$ and Lemma \ref{lemma:flooraxby}.
  \else
    $$
      \begin{aligned}
          &\vert a_{n} - a_{m}\vert 
          = \left\vert \sum_{i=I_{1}+1}^{\lfloor cn + d \rfloor} \frac{R(i)}{n} - \sum_{i=I_{1}+1}^{\lfloor cm + d \rfloor} \frac{R(i)}{m}\right\vert {}
          = \left\vert \frac{1}{nm}\left( m\sum_{i=I_{1}+1}^{\lfloor cn + d \rfloor} R(i) -  n\sum_{i=I_{1}+1}^{\lfloor cm + d \rfloor} R(i)\right) \right\vert 
          \\
          &= \left\vert \frac{1}{nm}\left( m\sum_{i=\lfloor cm + d \rfloor  + 1}^{\lfloor cn + d \rfloor} R(i) + (m-  n)\sum_{i=I_{1}+1}^{\lfloor cm + d \rfloor } R(i) \right) \right\vert 
            \hspace*{.6cm} [\text{as } \left\lfloor cn + d \right \rfloor > \left \lfloor cm + d \right \rfloor]
      \end{aligned}
    $$
    $$
      \begin{aligned}
          &< 2\left\vert \frac{m(\lfloor cn + d \rfloor  - (\lfloor cm + d \rfloor + 1) + 1)  + (m-  n)(\lfloor cm + d \rfloor  -(I_{1}+1)+1 )}{nm} \right\vert 
              \hspace*{0.5cm} [\text{as } R(i) < 2 \text{ for any } i]
          \\
\ifexpandexplanation
          &= 2\left\vert \frac{m\lfloor cn + d \rfloor  - m(\lfloor cm + d \rfloor + 1) + m  + m \lfloor cm + d \rfloor  - m(I_{1}+1)+ m  -  n\lfloor cm + d \rfloor  +n(I_{1}+1) -n }{nm} \right\vert 
          \\
          &= 2\left\vert \frac{m\lfloor cn + d \rfloor  - m\lfloor cm + d \rfloor -m + m  + m \lfloor cm + d \rfloor  - mK -m + m  -  n\lfloor cm + d \rfloor  +nK +n -n }{nm} \right\vert 
          \\
          &= 2\left\vert \frac{m\lfloor cn + d \rfloor  - mK   -  n\lfloor cm + d \rfloor  +nK }{nm} \right\vert 
          = 2\left\vert \frac{m\lfloor cn + d \rfloor - n\lfloor cm + d \rfloor  - (m-n) I_{1} }{nm} \right\vert 
          \\
\else
          &= 2\left\vert \frac{m\lfloor cn + d \rfloor - n\lfloor cm + d \rfloor  - (m-n) I_{1} }{nm} \right\vert 
          \\
\fi
      \end{aligned}
    $$
    $$
      \begin{aligned}
          &< 2\left\vert \frac{m( cn + d ) - n( cm + d )  + m + n - (m-n) I_{1} }{nm} \right\vert    \hspace*{1.8cm} [\text{Lemma }\ref{lemma:flooraxby}]\\
\ifexpandexplanation
              &= 2\left\vert \frac{(m -n) (d-I_{1})   + m + n  }{nm} \right\vert \\
\fi
          &= 2\left\vert \frac{ (n - m) (I_{1}-d)   + m + n  }{nm} \right\vert 
          < 2\left\vert \frac{ (n + m) (I_{1}-d)   + m + n  }{nm} \right\vert 
          \\
          &= 2\left\vert \frac{ (m + n) (I_{1}-d + 1) }{nm} \right\vert 
          = 2\vert I_{1}-d + 1\vert \frac{ m + n  }{nm} 
          = 2\vert I_{1}-d + 1\vert \left(\frac{1}{n}+ \frac{1}{m}\right)
          \\
          &< 2\vert I_{1}-d + 1\vert \frac{2}{N}
          = \frac{4\vert I_{1}-d+1\vert }{N}
          < \epsilon.
      \end{aligned}
    $$  
  \fi
\end{proof}

  To prove the existence, we have $\lceil x \rceil = x +  frac(-x)$, for any real number $x$\footnote{Heed that using this definition of $frac$, $frac(1.7) = 0.7$ and $frac(-1.7) = 0.3$.}, because 
  \if\shortVersion 1
    $
        frac(-x)  
          = -x - \lfloor -x \rfloor  $ 
          $= -x - (-\lceil x \rceil)  $  $ 
        = -x + \lceil x \rceil 
        \LR  \lceil x \rceil = x +  frac(-x) $
    by the definition of $\lfloor -x \rfloor$ and $\lfloor -x \rfloor = -\lceil x \rceil$.
  \else
    $$
      \begin{aligned}
        frac(-x)  
          &= -x - \lfloor -x \rfloor & [\text{def. of }\lfloor -x \rfloor ] \\
          &= -x - (-\lceil x \rceil) & [\lfloor -x \rfloor = -\lceil x \rceil] \\
        &= -x + \lceil x \rceil &\\
        \LR  \lceil x \rceil &= x +  frac(-x).&\\
      \end{aligned}
    $$
  \fi
  Thus,
    \if\shortVersion 1
      $
          \lim_{T\to \infty} \frac{1}{T} \sum_{x_{h}=n_{l}^{-} + 1}^{\min(n_{l}^{+}-1,K'-1)} \big(\lfloor Y_{2}^{R}(x_{h}) \rfloor - \lceil Y_{1}^{R}(x_{h}) \rceil + 1 \big) 
          = \lim_{T\to \infty} \frac{1}{T} \sum_{x_{h}=n_{l}^{-} + 1}^{\min(n_{l}^{+}-1,K'-1)}  \big( Y_{2}^{R}(x_{h})  -  Y_{1}^{R}(x_{h}) + 1 \big) -
           \lim_{T\to \infty} \frac{1}{T} \sum_{x_{h}=n_{l}^{-} + 1}^{\min(n_{l}^{+}-1,K'-1)} \big( frac\big(-Y_{1}^{R}(x_{h})\big) +frac\big(Y_{2}^{R}(x_{h})\big)  \big).
       $
   \else
      $$
        \begin{aligned}
          &\lim_{T\to \infty} \frac{1}{T} \sum_{x_{h}=n_{l}^{-} + 1}^{\min(n_{l}^{+}-1,K'-1)} \left(\lfloor Y_{2}^{R}(x_{h}) \rfloor - \lceil Y_{1}^{R}(x_{h}) \rceil + 1 \right) 
          \\
          =& \lim_{T\to \infty} \frac{1}{T} \sum_{x_{h}=n_{l}^{-} + 1}^{\min(n_{l}^{+}-1,K'-1)}  \left( Y_{2}^{R}(x_{h})  -  Y_{1}^{R}(x_{h})  - frac\left(-Y_{1}^{R}(x_{h})\right) \right.\\
          &
            \left. -frac\left(Y_{2}^{R}(x_{h})\right) + 1 \right)
          \\
        \end{aligned}
      $$
      $$
        \begin{aligned}
          =& \lim_{T\to \infty} \frac{1}{T} \sum_{x_{h}=n_{l}^{-} + 1}^{\min(n_{l}^{+}-1,K'-1)}  \left( Y_{2}^{R}(x_{h})  -  Y_{1}^{R}(x_{h}) + 1 \right) -
          \\
          & \lim_{T\to \infty} \frac{1}{T} \sum_{x_{h}=n_{l}^{-} + 1}^{\min(n_{l}^{+}-1,K'-1)} \left( frac\left(-Y_{1}^{R}(x_{h})\right) +frac\left(Y_{2}^{R}(x_{h})\right)  \right).
          \\
        \end{aligned}
      $$
    \fi
    The limit of the first term above exists and its value is presented below on (\ref{eq:limplus1above}). 
    The existence of the limit for the second term was shown by Lemma \ref{lemma:limsum1Ri} for any outcome of $\min(n_{l}^{+}-1,K'-1)$, because, if $ \left\lfloor\frac{2 (vT-s) \cos(\pi/6 - \theta)  + 2s\sin(\vert \pi/6 - \theta\vert ) }{\sqrt{3}d}\right\rfloor = n_{l}^{+}-1 \le K'-1$, $c = \frac{2 v \cos(\psi - \pi/6)}{\sqrt{3}d}$ and $d =  \frac{2s(\sin(\vert \pi/6 - \theta\vert )-\cos(\pi/6 - \theta)) }{\sqrt{3}d}$ on the Lemma \ref{lemma:limsum1Ri}. If $n_{l}^{+}-1 > K'-1 = \left\lceil\frac{2 (vT-s) \cos(\psi - \pi/6) - 2s\sin(\vert \psi - \pi/6\vert ) }{\sqrt{3}d} - 1\right\rceil$, as for any $x$, $\lceil x \rceil = \lfloor x \rfloor$ or $\lceil x \rceil = \lfloor x \rfloor + 1$ depending on whether $x$ is an integer or not, then $K' - 1 = \Big\lfloor \frac{2 (vT-s) \cos(\psi - \pi/6)}{\sqrt{3}d} - \frac{2s\sin(\vert \psi - \pi/6\vert )}{\sqrt{3}d} -1\Big\rfloor$ or $K' - 1 = \Big\lfloor \frac{2 (vT-s) \cos(\psi - \pi/6)}{\sqrt{3}d} - \frac{2s\sin(\vert \psi - \pi/6\vert )}{\sqrt{3}d}  \Big\rfloor$. For both cases, on the Lemma \ref{lemma:limsum1Ri} $c = \frac{2 v \cos(\psi - \pi/6)}{\sqrt{3}d}$ as well, but for the former case, $d = -\frac{2s(\sin(\vert \psi - \pi/6\vert )+\cos(\pi/6 - \theta))}{\sqrt{3}d} - 1$, and for the latter, $d = - \frac{2s(\sin(\vert \psi - \pi/6\vert )+\cos(\pi/6 - \theta))}{\sqrt{3}d}$.

    To get the bounds, we have
    \begin{equation}
      \begin{aligned}
         &\lim_{T\to \infty} \frac{1}{T}\sum_{x_{h}=n_{l}^{-} + 1}^{\min(n_{l}^{+}-1,K'-1)} \left( Y_{2}^{R}(x_{h})   -  Y_{1}^{R}(x_{h}) -1 \right)   \\
         &< \lim_{T\to \infty} \frac{1}{T} \sum_{x_{h}=n_{l}^{-} + 1}^{\min(n_{l}^{+}-1,K'-1)}\left( \lfloor Y_{2}^{R}(x_{h}) \rfloor - \lceil Y_{1}^{R}(x_{h}) \rceil + 1 \right)  \\
         &\le \lim_{T\to \infty} \frac{1}{T} \sum_{x_{h}=n_{l}^{-} + 1}^{\min(n_{l}^{+}-1,K'-1)} \left( Y_{2}^{R}(x_{h})   -  Y_{1}^{R}(x_{h}) +1 \right), 
      \end{aligned}
      \label{eq:limit3ineq}
    \end{equation}
    and by Lemma \ref{lemma:MiddleInterval}, as $T \to \infty$,
    \if\shortVersion 1
      $
        Y_{2}^{R}(x_{h}) - Y_{1}^{R}(x_{h}) 
        = 
           \frac{\frac{y_2}{d} + \tan(\psi) x_h}{{\frac{\sqrt{3} + \tan(\psi)}{2}}} - 
            \frac{\frac{y_1}{d} + \tan(\psi) x_h}{\frac{\sqrt{3} + \tan(\psi)}{2}} 
        = \frac{2s}{d\cos(\psi - \pi/6)},
      $
    \else
\ifexpandexplanation 
      $$
      \begin{aligned}
        Y_{2}^{R}(x_{h}) - Y_{1}^{R}(x_{h}) 
        &= 
           \frac{\frac{y_2}{d} + \tan(\psi) x_h}{{\frac{\sqrt{3} + \tan(\psi)}{2}}} - 
            \frac{\frac{y_1}{d} + \tan(\psi) x_h}{\frac{\sqrt{3} + \tan(\psi)}{2}} 
                \\ &
                = \frac{\frac{y_{2}-y_{1}}{d}}{{\frac{\sqrt{3} + \tan(\psi)}{2}}}
                = \frac{\frac{2s}{d\cos(\psi)}}{{\frac{\sqrt{3} + \tan(\psi)}{2}}}
                = \frac{4s}{d(\sqrt{3}\cos(\psi) + \sin(\psi))}
                \\ &
        = \frac{2s}{d\cos(\psi - \pi/6)},
      \end{aligned}
      $$ 
\else
      $$
      \begin{aligned}
        Y_{2}^{R}(x_{h}) - Y_{1}^{R}(x_{h}) 
        &= 
           \frac{\frac{y_2}{d} + \tan(\psi) x_h}{{\frac{\sqrt{3} + \tan(\psi)}{2}}} - 
            \frac{\frac{y_1}{d} + \tan(\psi) x_h}{\frac{\sqrt{3} + \tan(\psi)}{2}} 
        = \frac{2s}{d\cos(\psi - \pi/6)},
      \end{aligned}
      $$ 
\fi
    \fi
    by (\ref{eq:2sy2y1}).
    
    For the first limit at (\ref{eq:limit3ineq}) in the case of $\min(n_{l}^{+}-1,K'-1) = n_{l}^{+}-1$,
    \if\shortVersion 1
      \begin{equation}
           \lim_{T\to \infty} \frac{1}{T}\sum_{x_{h}=n_{l}^{-} + 1}^{n_{l}^{+} - 1 } \left( Y_{2}^{R}(x_{h})   -  Y_{1}^{R}(x_{h}) -1 \right) 
           =  \frac{4vs}{\sqrt{3}d^{2}} -  \frac{2 v  \cos(\psi - \pi/6)}{\sqrt{3}d}.    
        \label{eq:limitbelowmid}
      \end{equation}
    \else
        $$
           \lim_{T\to \infty} \frac{1}{T}\sum_{x_{h}=n_{l}^{-} + 1}^{n_{l}^{+} - 1 } \left( Y_{2}^{R}(x_{h})   -  Y_{1}^{R}(x_{h}) -1 \right) 
        $$
        $$
           =  \lim_{T\to \infty} \frac{1}{T}\sum_{x_{h}=n_{l}^{-} + 1}^{n_{l}^{+} - 1 } \left( \frac{2s}{d\cos(\psi - \pi/6)} - 1 \right)
        $$
        $$
           =  \lim_{T\to \infty} \frac{1}{T} \left(n_{l}^{+} -  n_{l}^{-} - 1\right)  \left(\frac{2s}{d\cos(\psi - \pi/6)} - 1\right)
        $$
        \begin{align}
          &= \left(\frac{2s}{d\cos(\psi - \pi/6)} - 1\right) \left(  \lim_{T\to \infty} \frac{1}{T} n_{l}^{+} - \lim_{T\to \infty} \frac{1}{T} (n_{l}^{-} - 1 )\right) \nonumber \\
          &= \left(\frac{2s}{d\cos(\psi - \pi/6)} - 1\right)   \lim_{T\to \infty} \frac{1}{T} n_{l}^{+} \nonumber \\
          &=  \left(\frac{2s}{d\cos(\psi - \pi/6)} - 1\right)  \frac{2 v \cos(\psi - \pi/6)}{\sqrt{3}d}. \nonumber \\
          &=  \frac{4vs}{\sqrt{3}d^{2}} -  \frac{2 v  \cos(\psi - \pi/6)}{\sqrt{3}d}.
        \label{eq:limitbelowmid}
        \end{align}
    \fi
    Above we get $\displaystyle \lim_{T \to\infty}\frac{1}{T} n_{l}^{+} = \frac{2 v \cos(\psi - \pi/6)}{\sqrt{3}d}$ by using the sandwich theorem and the inequality $x - 1 < \lfloor x \rfloor \le x$ to get the bounds on $n_{l}^{+}$.

    Similarly, for the last limit at (\ref{eq:limit3ineq}) in the case of $\min(n_{l}^{+}-1,K'-1) = n_{l}^{+}-1$, 
    \if\shortVersion 1
      $
           \lim_{T\to \infty} \frac{1}{T}\sum_{x_{h}=n_{l}^{-} + 1}^{n_{l}^{+}-1} \left(Y_{2}^{R}(x_{h})   -  Y_{1}^{R}(x_{h}) +1 \right)
           =\frac{4vs}{\sqrt{3}d^{2}} +  \frac{2 v  \cos(\psi - \pi/6)}{\sqrt{3}d}.
      $
    \else
      $$
      \begin{aligned}
           &\lim_{T\to \infty} \frac{1}{T}\sum_{x_{h}=n_{l}^{-} + 1}^{n_{l}^{+}-1} \left(Y_{2}^{R}(x_{h})   -  Y_{1}^{R}(x_{h}) +1 \right)
           =\frac{4vs}{\sqrt{3}d^{2}} +  \frac{2 v  \cos(\psi - \pi/6)}{\sqrt{3}d}.
      \end{aligned}
      $$
    \fi
    The limits above in the case of $\min(n_{l}^{+}-1,K'-1) = K'-1$ yields the same result because of the sandwich theorem, the inequality $x  \le \lceil x \rceil < x + 1$, and
    \if\shortVersion 1
      $
          \frac{2 v \cos(\psi - \pi/6)}{\sqrt{3}d}
          = \lim_{T \to \infty} \frac{1}{T} \big( \frac{2 (vT-s) \cos(\psi - \pi/6) - 2s\sin(\vert \psi - \pi/6\vert ) }{\sqrt{3}d} \big)
           \le \lim_{T \to \infty} \frac{1}{T}(K' -1) = \lim_{T \to \infty} \frac{1}{T}K' 
          = \lim_{T \to \infty} \frac{1}{T} \big\lceil \frac{2 (vT-s) \cos(\psi - \pi/6) - 2s\sin(\vert \psi - \pi/6\vert ) }{\sqrt{3}d} \big\rceil \\
          < \lim_{T \to \infty} \frac{1}{T} \big( \frac{2 (vT-s) \cos(\psi - \pi/6) - 2s\sin(\vert \psi - \pi/6\vert ) }{\sqrt{3}d} + 1 \big) 
          = \frac{2 v \cos(\psi - \pi/6)}{\sqrt{3}d},
      $
    \else
      $$
        \begin{aligned} 
            &\frac{2 v \cos(\psi - \pi/6)}{\sqrt{3}d}
            = \lim_{T \to \infty} \frac{1}{T} \left( \frac{2 (vT-s) \cos(\psi - \pi/6) - 2s\sin(\vert \psi - \pi/6\vert ) }{\sqrt{3}d} \right)
            \\ 
            & \le \lim_{T \to \infty} \frac{1}{T}(K' -1) = \lim_{T \to \infty} \frac{1}{T}K' \\
            &= \lim_{T \to \infty} \frac{1}{T} \left\lceil \frac{2 (vT-s) \cos(\psi - \pi/6) - 2s\sin(\vert \psi - \pi/6\vert ) }{\sqrt{3}d} \right\rceil 
            \\ 
            &<\lim_{T \to \infty} \frac{1}{T} \left( \frac{2 (vT-s) \cos(\psi - \pi/6) - 2s\sin(\vert \psi - \pi/6\vert ) }{\sqrt{3}d} + 1 \right) \\
            &= \frac{2 v \cos(\psi - \pi/6)}{\sqrt{3}d},
        \end{aligned}
      $$
    \fi
    so,  $\displaystyle \lim_{T \to \infty} \frac{1}{T}K' = \lim_{T \to \infty} \frac{1}{T}n_{l}^{+}.$ Consequently, the limit below exists and 
    \begin{equation}
      \lim_{T\to \infty} \frac{1}{T}\sum_{x_{h}=n_{l}^{-} + 1}^{\min(n_{l}^{+}-1,K'-1) } \left(Y_{2}^{R}(x_{h})   -  Y_{1}^{R}(x_{h}) +1 \right) =\frac{4vs}{\sqrt{3}d^{2}} +  \frac{2 v  \cos(\psi - \frac{\pi}{6})}{\sqrt{3}d}.
    \label{eq:limplus1above}
    \end{equation}
    
    Finally, using the bounds provided by (\ref{eq:limitbelowmid}) and (\ref{eq:limplus1above}) we have the expected result.
    \fi %
  \end{proof}
  
  By Lemmas \ref{lemma:lim1st}, \ref{lemma:lim2nd} and \ref{lemma:lim3rd} we have for $\psi \neq \pi/6$
  \begin{equation} 
    \begin{aligned}
      \lim_{T \to \infty}  f_{h}(T,\psi) \in&
         \left(\frac{4vs}{\sqrt{3}d^{2}} - \frac{2 v  \cos(\psi - \pi/6)}{\sqrt{3}d}, \frac{4vs}{\sqrt{3}d^{2}} +  \frac{2 v  \cos(\psi - \pi/6)}{\sqrt{3}d}\right].
    \end{aligned}
    \label{eq:whollynotpi6}
  \end{equation}
  For $\psi = \pi/6$, by (\ref{eq:30degreescasepsi}),
  \if\shortVersion 1
    $
      \lim_{T \to \infty} \frac{1}{T}\sum_{x_{h}=0}^{
      \big\lfloor\frac{2  (vT-s) }{\sqrt{3}d} \big\rfloor
      } \Big(\frac{\sqrt{3}y_{2} + d x_{h}}{2d}  -  \frac{\sqrt{3}y_{1} + d x_{h}}{2d}  - 1 \Big)
      <
      \lim_{T \to \infty} f_{h}(T,\pi/6) 
      \le \lim_{T \to \infty} \frac{1}{T} \sum_{x_{h}=0}^{
      \big\lfloor\frac{2  (vT-s) }{\sqrt{3}d} \big\rfloor
      } \Big(\frac{\sqrt{3}y_{2} + d x_{h}}{2d}  - \frac{\sqrt{3}y_{1} + d x_{h}}{2d}  + 1 \Big),
    $
  \else
    $$
      \begin{array}{>{\displaystyle}c}
      \lim_{T \to \infty} \frac{1}{T}\sum_{x_{h}=0}^{
      \left\lfloor\frac{2  (vT-s) }{\sqrt{3}d} \right\rfloor
      } \left(\frac{\sqrt{3}y_{2} + d x_{h}}{2d}  -  \frac{\sqrt{3}y_{1} + d x_{h}}{2d}  - 1 \right)
      <
      \lim_{T \to \infty} f_{h}(T,\pi/6) 
      \\
      \le \lim_{T \to \infty} \frac{1}{T} \sum_{x_{h}=0}^{
      \left\lfloor\frac{2  (vT-s) }{\sqrt{3}d} \right\rfloor
      } \left(\frac{\sqrt{3}y_{2} + d x_{h}}{2d}  - \frac{\sqrt{3}y_{1} + d x_{h}}{2d}  + 1 \right),
      \end{array}
    $$
  \fi
  with
  \if\shortVersion 1
    $
        \lim_{T \to \infty} \frac{1}{T}\sum_{x_{h}=0}^{ \big\lfloor\frac{2  (vT-s) }{\sqrt{3}d} \big\rfloor} \big( \frac{\sqrt{3}y_{2} + d x_{h}}{2d}  -  \frac{\sqrt{3}y_{1} + d x_{h}}{2d}  + 1 \big) 
        = \lim_{T \to \infty} \frac{1}{T}\sum_{x_{h}=0}^{ \big\lfloor\frac{2  (vT -s)}{\sqrt{3}d} \big\rfloor} \big( \frac{\sqrt{3}s}{d\cos(\pi/6)}  + 1 \big)
        = \lim_{T \to \infty} \frac{1}{T} \big\lfloor \frac{2  (vT-s) }{\sqrt{3}d} + 1\big\rfloor \big(\frac{2s}{d}  + 1\big)
        = \frac{2  v }{\sqrt{3}d}  \big(\frac{2s}{d}  + 1\big),
    $
  \else
\ifexpandexplanation 
   $$
      \begin{aligned}
        &\lim_{T \to \infty} \frac{1}{T}\sum_{x_{h}=0}^{ \left\lfloor\frac{2  (vT-s) }{\sqrt{3}d} \right\rfloor} \left( \frac{\sqrt{3}y_{2} + d x_{h}}{2d}  -  \frac{\sqrt{3}y_{1} + d x_{h}}{2d}  + 1 \right) \\
        &= \lim_{T \to \infty} \frac{1}{T}\sum_{x_{h}=0}^{ \left\lfloor\frac{2  (vT -s)}{\sqrt{3}d} \right\rfloor} \left( \frac{\sqrt{3}s}{d\cos(\pi/6)}  + 1 \right)
          \\
          &= \lim_{T \to \infty} \frac{1}{T}\sum_{x_{h}=0}^{ \left\lfloor\frac{2  (vT-s) }{\sqrt{3}d} \right\rfloor} \left(\frac{2s}{d}  + 1 \right)\\
          & 
        = \lim_{T \to \infty} \frac{1}{T} \left\lfloor \frac{2  (vT-s) }{\sqrt{3}d} + 1\right\rfloor \left(\frac{2s}{d}  + 1\right)\\&
        = \frac{2  v }{\sqrt{3}d}  \left(\frac{2s}{d}  + 1\right),\\
      \end{aligned}
  $$ 
\else 
   $$
      \begin{aligned}
        &\lim_{T \to \infty} \frac{1}{T}\sum_{x_{h}=0}^{ \left\lfloor\frac{2  (vT-s) }{\sqrt{3}d} \right\rfloor} \left( \frac{\sqrt{3}y_{2} + d x_{h}}{2d}  -  \frac{\sqrt{3}y_{1} + d x_{h}}{2d}  + 1 \right) \\
        &= \lim_{T \to \infty} \frac{1}{T}\sum_{x_{h}=0}^{ \left\lfloor\frac{2  (vT -s)}{\sqrt{3}d} \right\rfloor} \left( \frac{\sqrt{3}s}{d\cos(\pi/6)}  + 1 \right)
        = \lim_{T \to \infty} \frac{1}{T} \left\lfloor \frac{2  (vT-s) }{\sqrt{3}d} + 1\right\rfloor \left(\frac{2s}{d}  + 1\right)\\&
        = \frac{2  v }{\sqrt{3}d}  \left(\frac{2s}{d}  + 1\right),\\
      \end{aligned}
  $$ 
\fi
  \fi
  from (\ref{eq:2sy2y1}) and, as similarly done before, $\displaystyle  \lim_{T \to \infty} \frac{1}{T} \left\lfloor \frac{2  (vT-s) }{\sqrt{3}d} + 1\right\rfloor =  \frac{2  v }{\sqrt{3}d}$ by using the sandwich theorem and the inequality $x - 1 < \lfloor x \rfloor \le x$ to get the bounds on the floor function; and
  \if\shortVersion 1
    $
      \lim_{T \to \infty} \frac{1}{T}\sum_{x_{h}=0}^{ \big\lfloor\frac{2  (vT-s) }{\sqrt{3}d} \big\rfloor} \big( \frac{\sqrt{3}y_{2} + d x_{h}}{2d}  -  \frac{\sqrt{3}y_{1} + d x_{h}}{2d}  - 1 \big)
      = \frac{2  v }{\sqrt{3}d}  \big(\frac{2s}{d}  - 1\big).
    $
  \else
    $$
    \begin{aligned}
      &\lim_{T \to \infty} \frac{1}{T}\sum_{x_{h}=0}^{ \left\lfloor\frac{2  (vT-s) }{\sqrt{3}d} \right\rfloor} \left( \frac{\sqrt{3}y_{2} + d x_{h}}{2d}  -  \frac{\sqrt{3}y_{1} + d x_{h}}{2d}  - 1 \right)
      = \frac{2  v }{\sqrt{3}d}  \left(\frac{2s}{d}  - 1\right).\\
    \end{aligned}
    $$
  \fi
  Accordingly, 
  \if\shortVersion 1
    $\lim_{T \to \infty} f_{h}(T,\pi/6) \in \big(\frac{2  v }{\sqrt{3}d}  \big(\frac{2s}{d}  - 1\big),\frac{2  v }{\sqrt{3}d}  \big(\frac{2s}{d}  + 1\big)\big],$
  \else
    $$\lim_{T \to \infty} f_{h}(T,\pi/6) \in \left(\frac{2  v }{\sqrt{3}d}  \left(\frac{2s}{d}  - 1\right),\frac{2  v }{\sqrt{3}d}  \left(\frac{2s}{d}  + 1\right)\right],$$
  \fi
  which are the same values in (\ref{eq:whollynotpi6}) if used $\psi = \pi/6$. 
  Lemmas \ref{lemma:nlmnlp}--\ref{lemma:MiddleInterval}, \ref{lemma:lim1st}, \ref{lemma:lim2nd} and \ref{lemma:lim3rd} used $\psi$, so, after replacing $\psi$ by $\pi/3 - \theta$, we conclude the Proposition \ref{prop:hexthroughputbounds}.  
\end{proof}

\begin{figure}
  \centering
  \includegraphics[width=0.6\columnwidth]{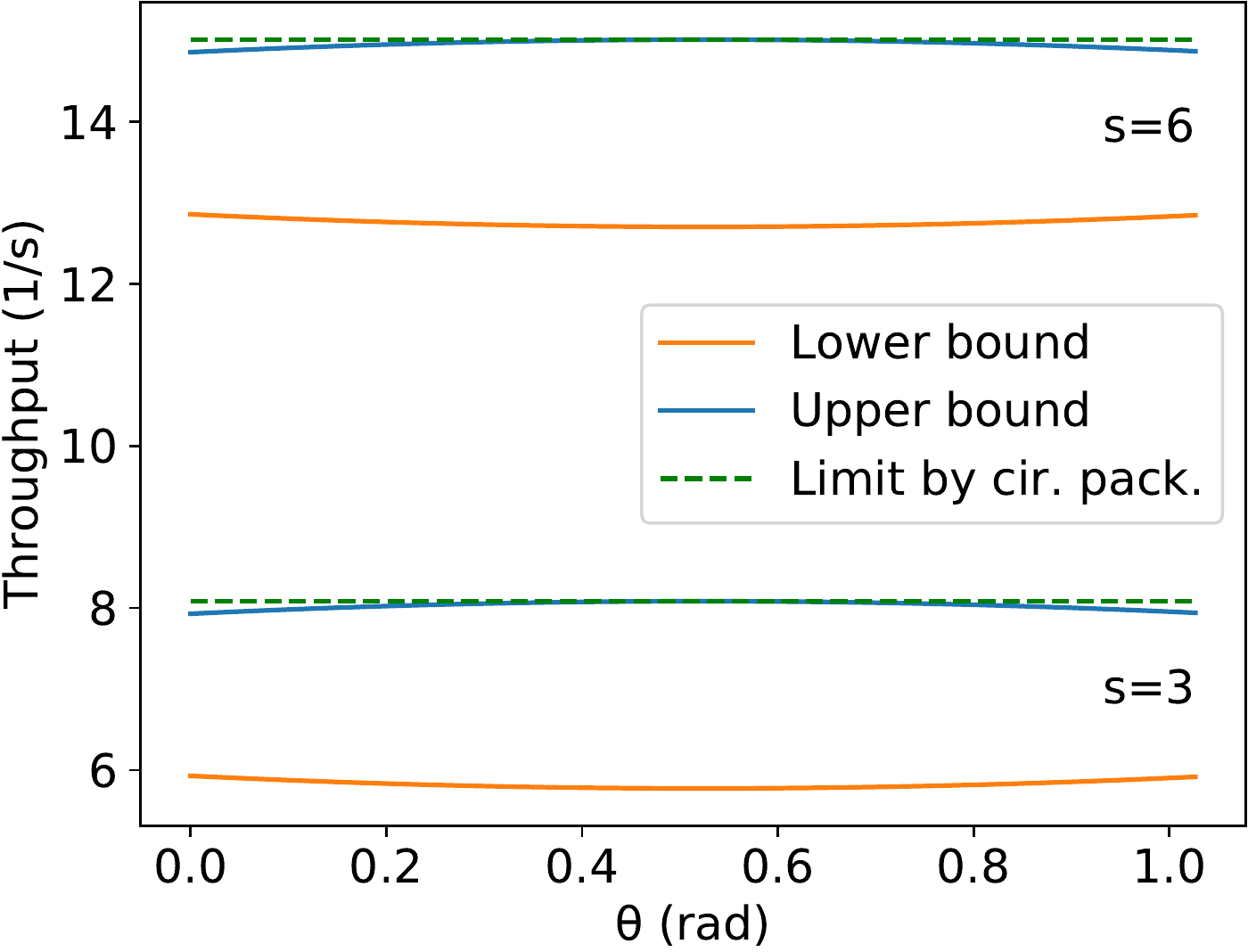} 
  \caption{Limit given by (\ref{eq:limitnoangle}) using the circle packing results and the lower and upper bounds of the hexagonal packing limit by (\ref{eq:hexthroughputbounds}) for $\theta \in \lbrack 0,\pi/3 \rparen$, $d = 1$ m, $v = 1$ m/s and $s \in \{3,6\}$ m.}
  \label{fig:limitshexpack}
\end{figure}

The upper and lower bounds presented on (\ref{eq:hexthroughputbounds}) are below or equal the maximum asymptotic throughput presented by the Proposition \ref{prop:triangularthroughput}, equation (\ref{eq:limitnoangle}). The result of the Proposition \ref{prop:triangularthroughput} only concerns the maximum asymptotic throughput and do not consider the hexagonal packing angle $\theta$, while Proposition \ref{prop:hexthroughputbounds} gives a lower bound and tightens the bounds for a given $\theta$. Figure \ref{fig:limitshexpack} presents an example comparison of these equations for two different values of $s$. As expected, the maximum asymptotic throughput under the optimal density assumption (in (\ref{eq:limitnoangle})) is a possible value of the throughput using hexagonal packing and is above or equal the interval in (\ref{eq:hexthroughputbounds}) for any given $\theta$. However, for practical robotic swarms applications, a certain hexagonal packing angle must be fixed depending on the expected height of the corridor,  target size and the minimum distance between the robots, resulting in a throughput below or equal to the upper value presented on Proposition \ref{prop:triangularthroughput}. 

\begin{figure}[t!]
  \centering 
  \subfloat[For 99 samples,  $T = 43$ s, $s=3$ m, $d = 1$ m.]{\includegraphics[width=0.483\columnwidth]{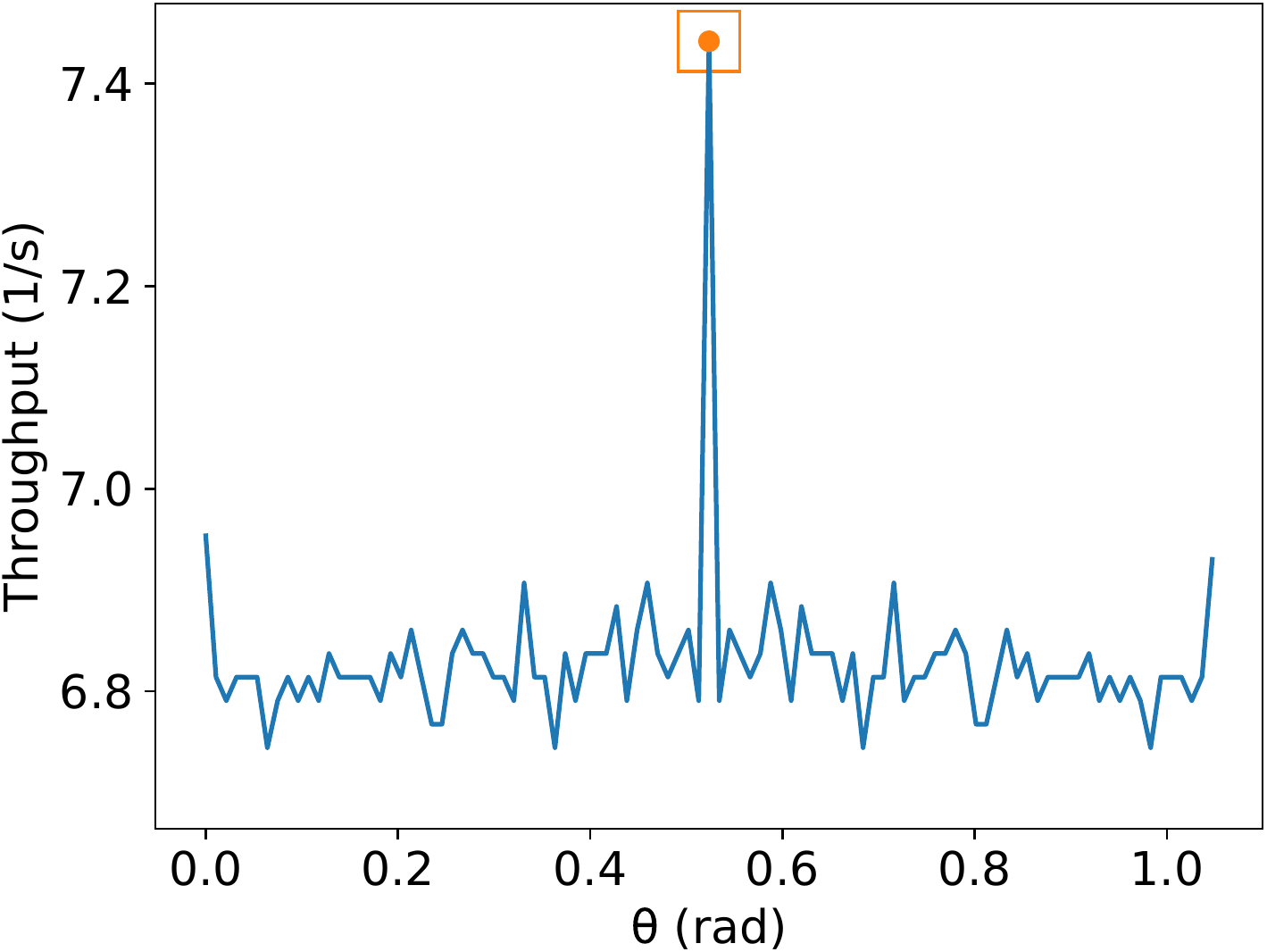}}
  \,
  \subfloat[For 100 samples, $T = 43$ s, $s=3$ m, $d = 1$ m.]{\includegraphics[width=0.483\columnwidth]{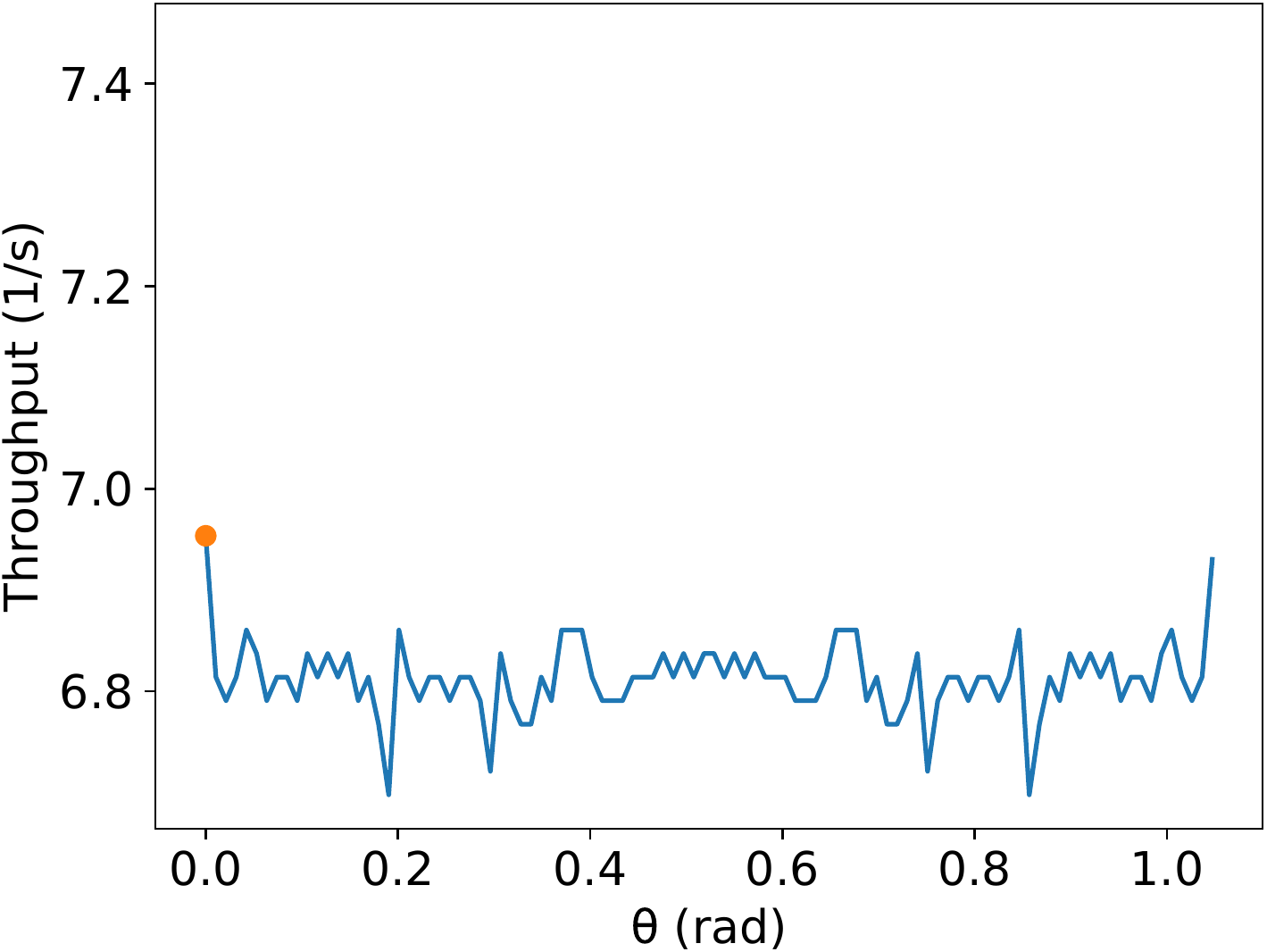}}
  \\
  \subfloat[For 99 samples, $T=30$ s, $s=2.5$ m and $d=0.66$ m.]{\includegraphics[width=0.483\columnwidth]{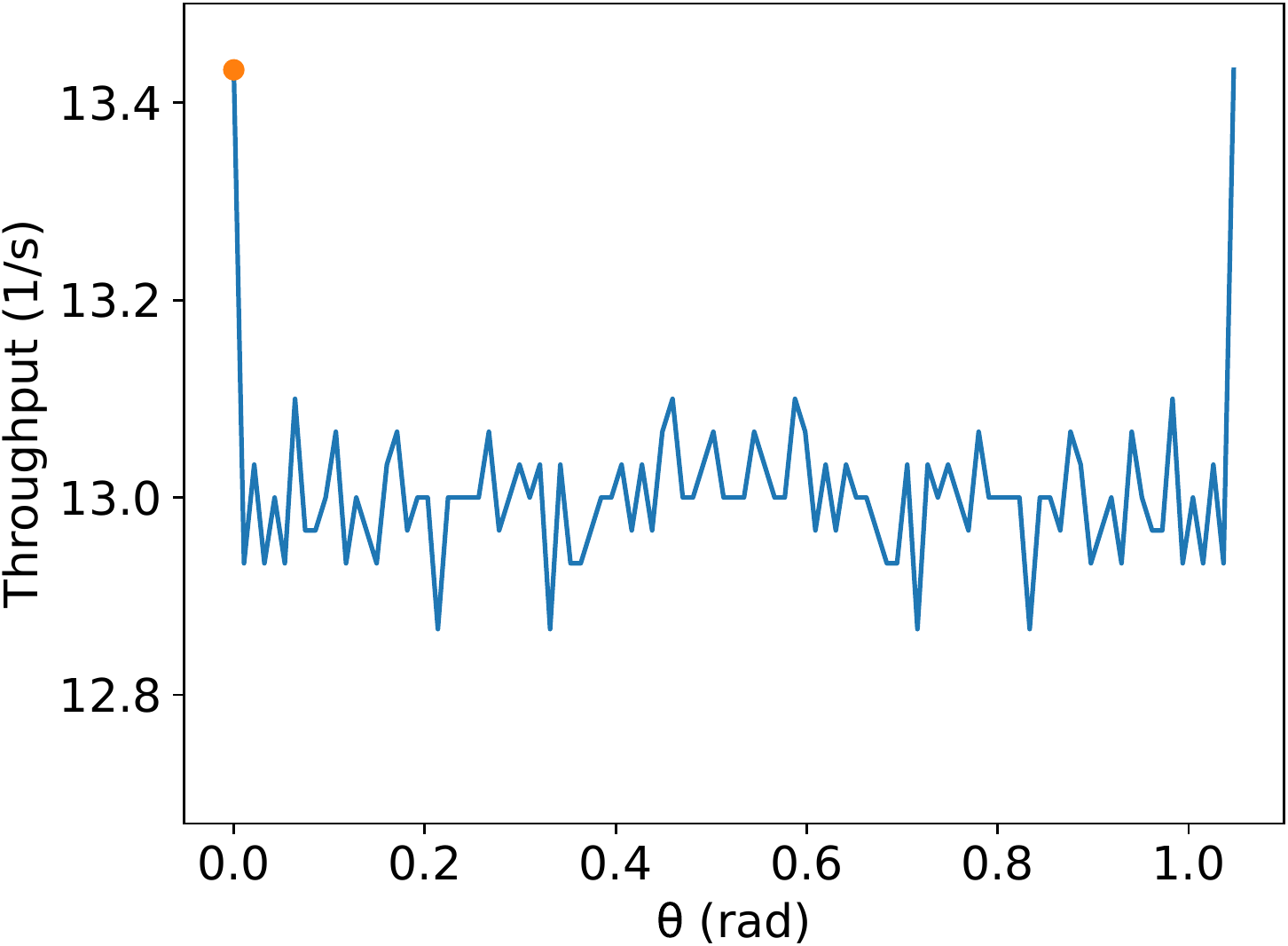}} 
  \,
  \subfloat[For 100 samples, $T=30$ s, $s=2.5$ m and $d=0.66$ m.]{\includegraphics[width=0.483\columnwidth]{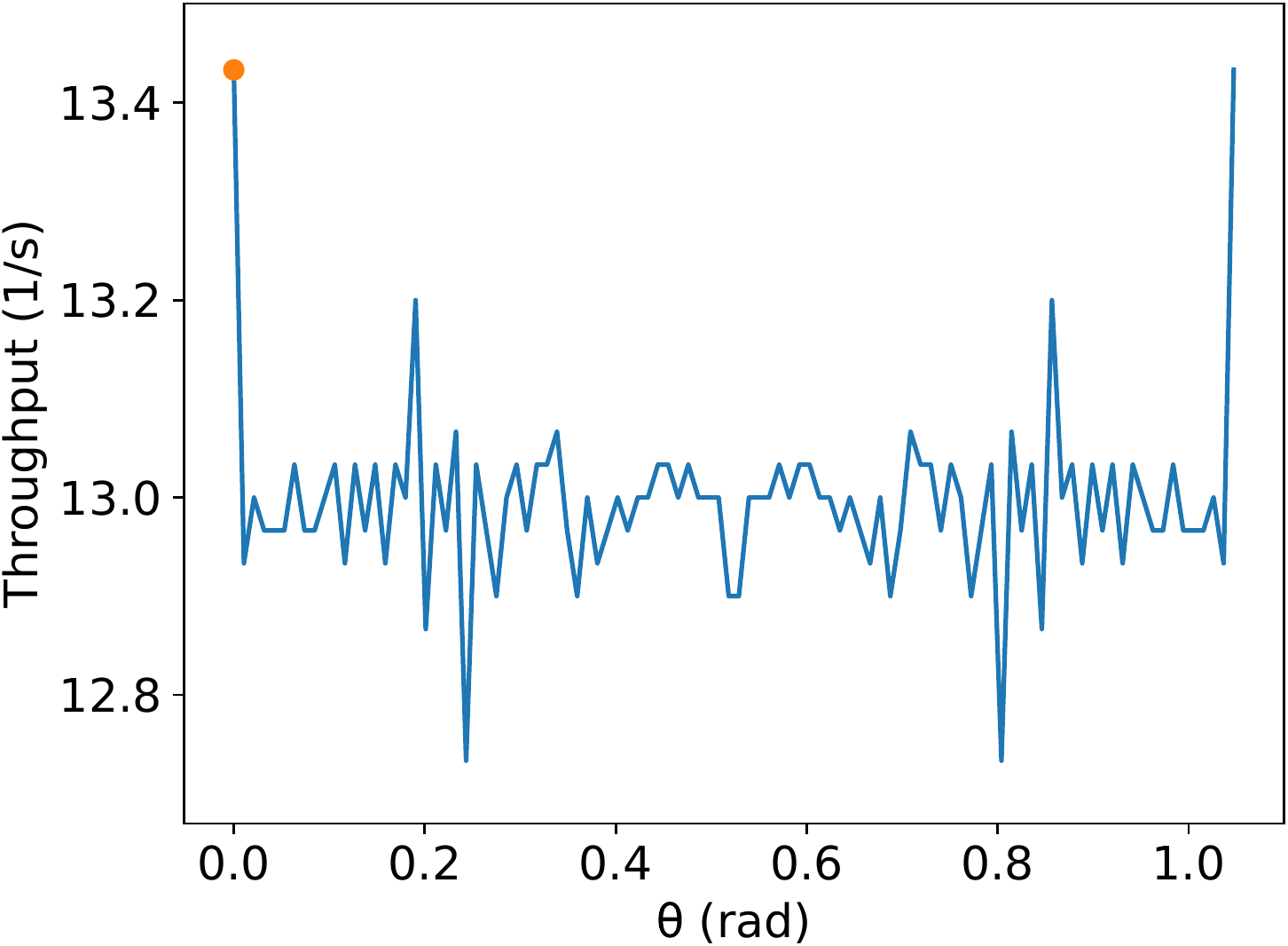}}
  \caption{Examples of (\ref{eq:hexthroughput}) varying $\theta$ from 0 to $\frac{\pi}{3}$ for different and randomly generated values of $T$, $s$, and $d$. In the graphs, $\theta$ is over the $x$-axis and the number of robots inside the given rectangle is over the $y$-axis. We used 99 samples on the images on the left-hand side and 100 samples on the right-hand side for each plot, and $v = 1$ m/s. The maximum value in each image is represented by an orange circle and a rectangle represents the maximum between the left and the right image. No square means the maximum values in both sides are equal. It continues in the Figure \ref{fig:plottheta2}.}
  \label{fig:plottheta1}
\end{figure}

\begin{figure}[t!]
  \centering 
  \subfloat[For 99 samples,  $T = 4$ s, $s = 2$ m and $d = 0.13$ m.]{\includegraphics[width=0.483\columnwidth]{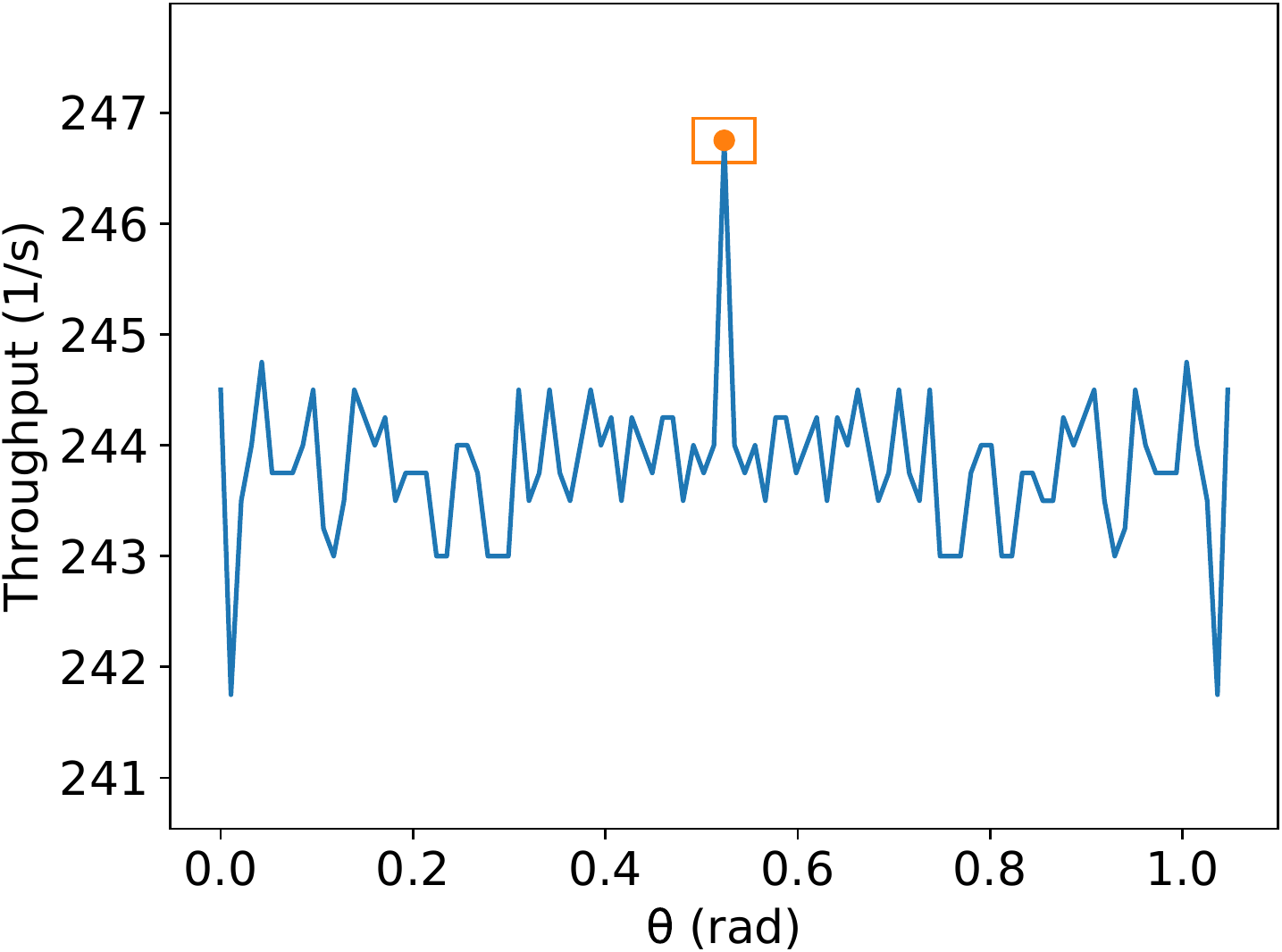}}
  \,
  \subfloat[For 100 samples, $T = 4$ s, $s = 2$ m and $d = 0.13$ m.]{\includegraphics[width=0.483\columnwidth]{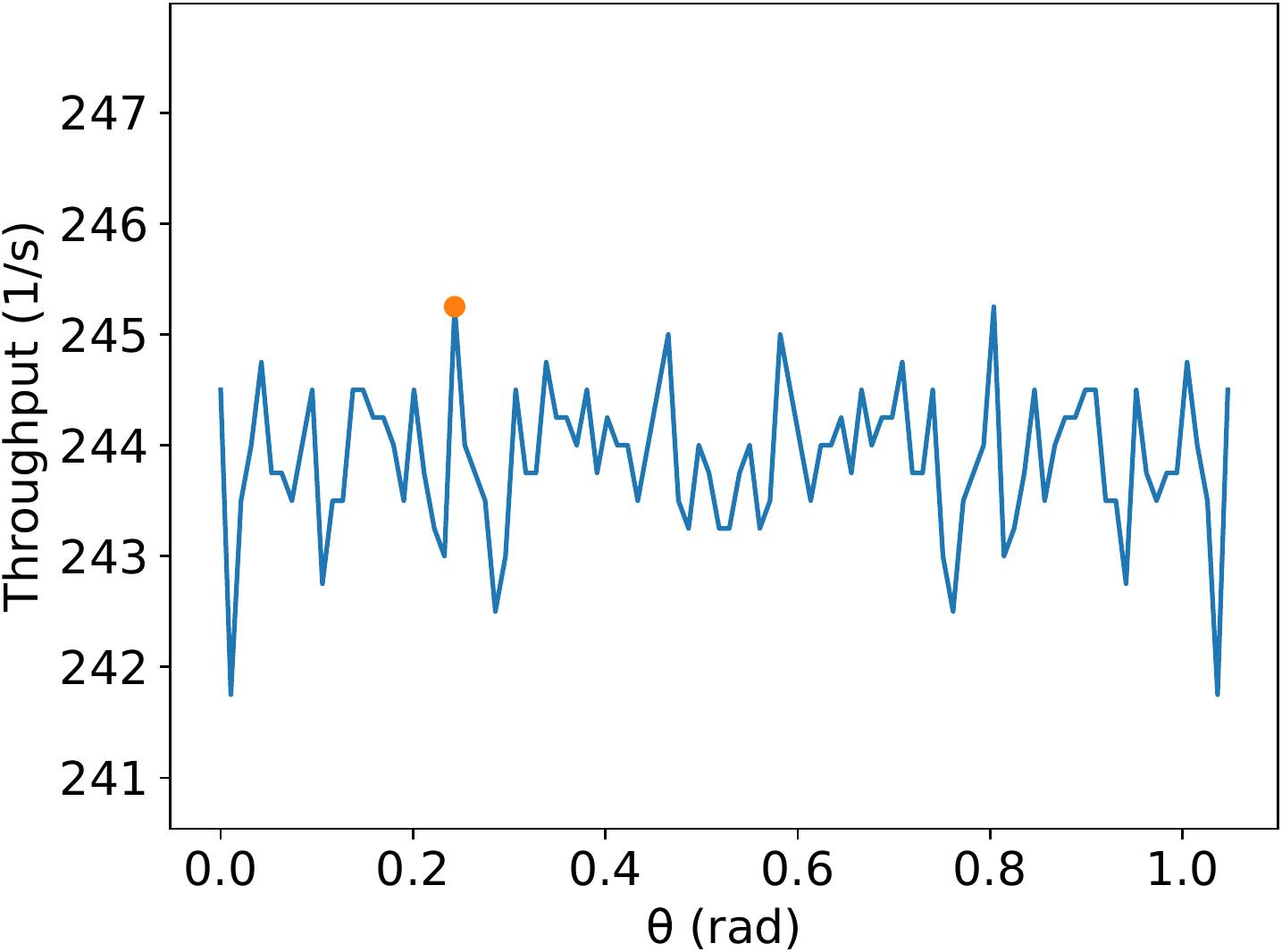}}
  \\
  \subfloat[For 99 samples, $T = 100$ s, $s=2.40513$ m and $d=1$ m.]{\includegraphics[width=0.483\columnwidth]{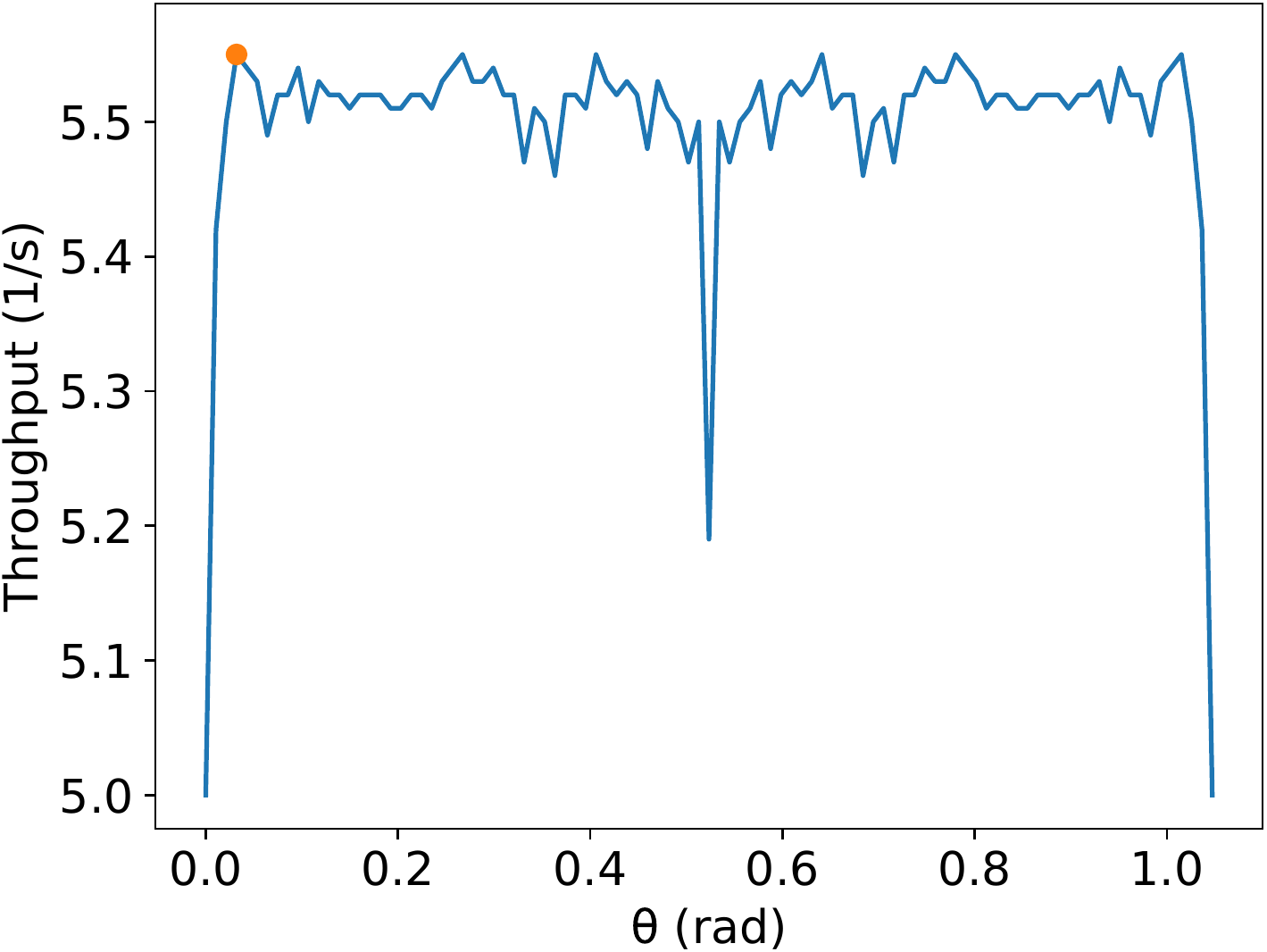}} 
  \,
  \subfloat[For 100 samples, $T = 100$ s, $s=2.40513$ m and $d=1$ m.]{\includegraphics[width=0.483\columnwidth]{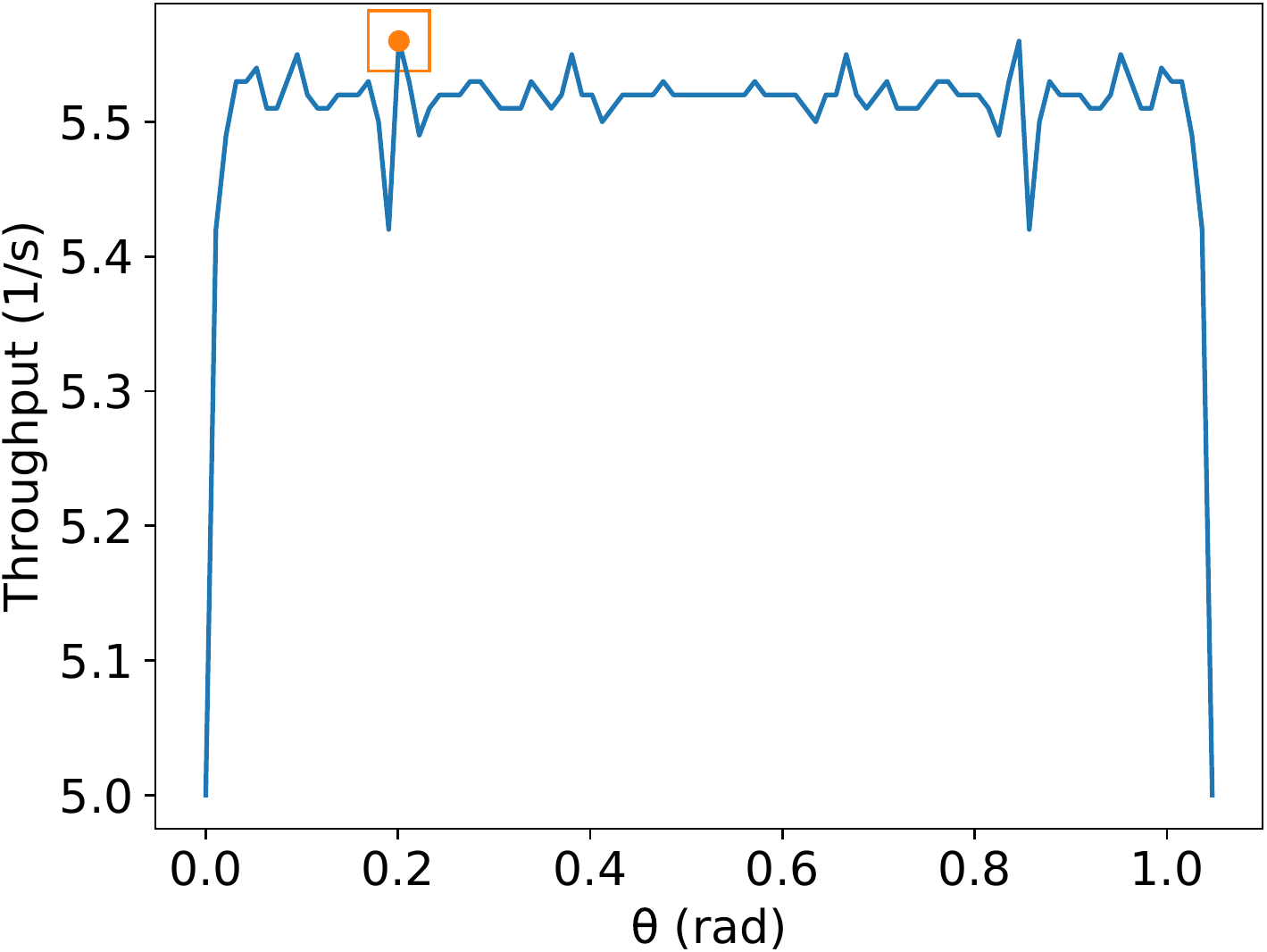}} 
  \caption{Continuation of Figure \ref{fig:plottheta1}.}
  \label{fig:plottheta2}
\end{figure}

\begin{figure}[t!]
  \centering 
  \subfloat[For $10^{7}$ samples,  $T = 43$ s, $s=3$ m, $d = 1$ m.]{\includegraphics[width=0.483\columnwidth]{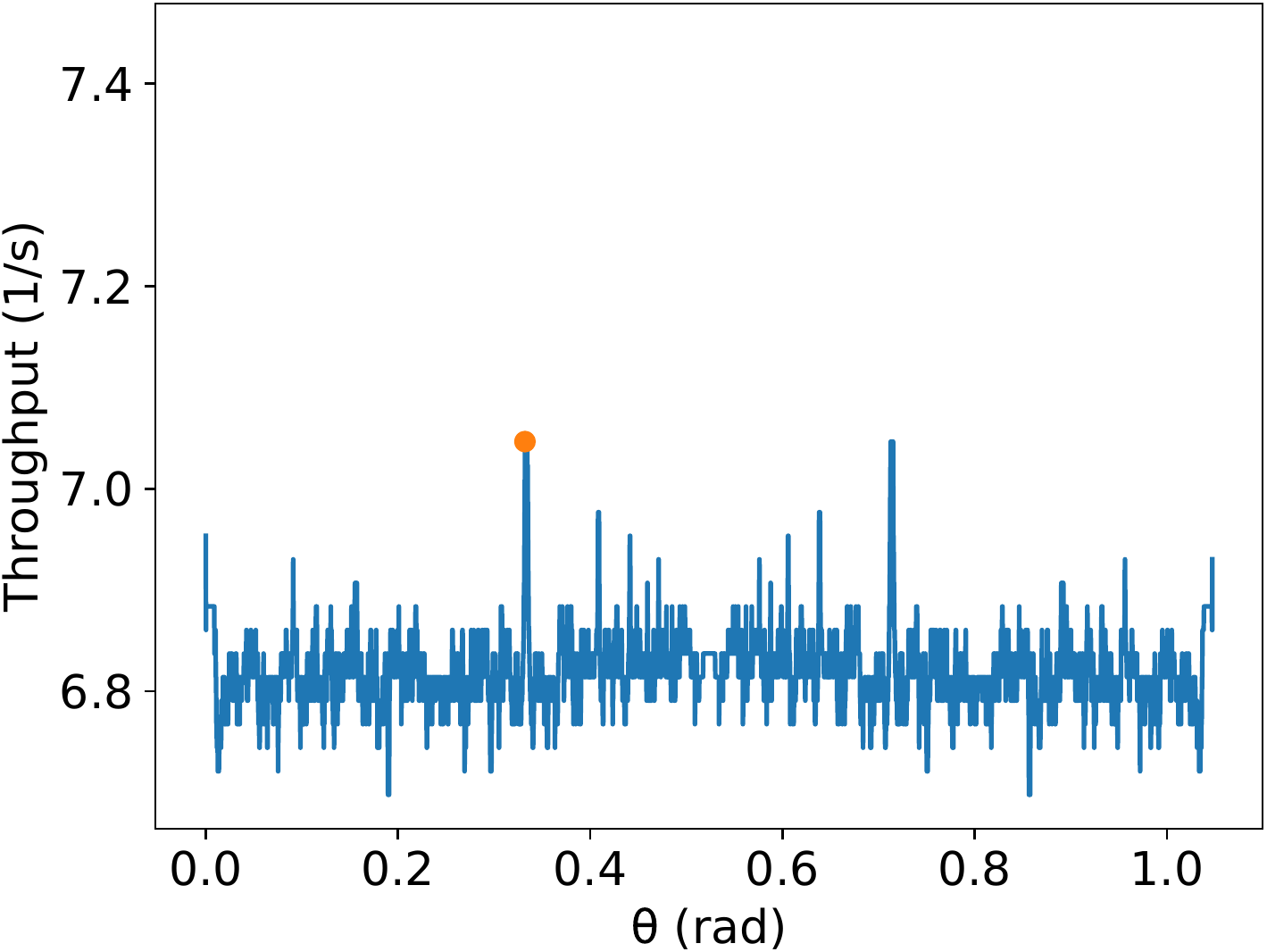}}
  \,
  \subfloat[For $10^{7}+1$ samples, $T = 43$ s, $s=3$ m, $d = 1$ m.]{\includegraphics[width=0.483\columnwidth]{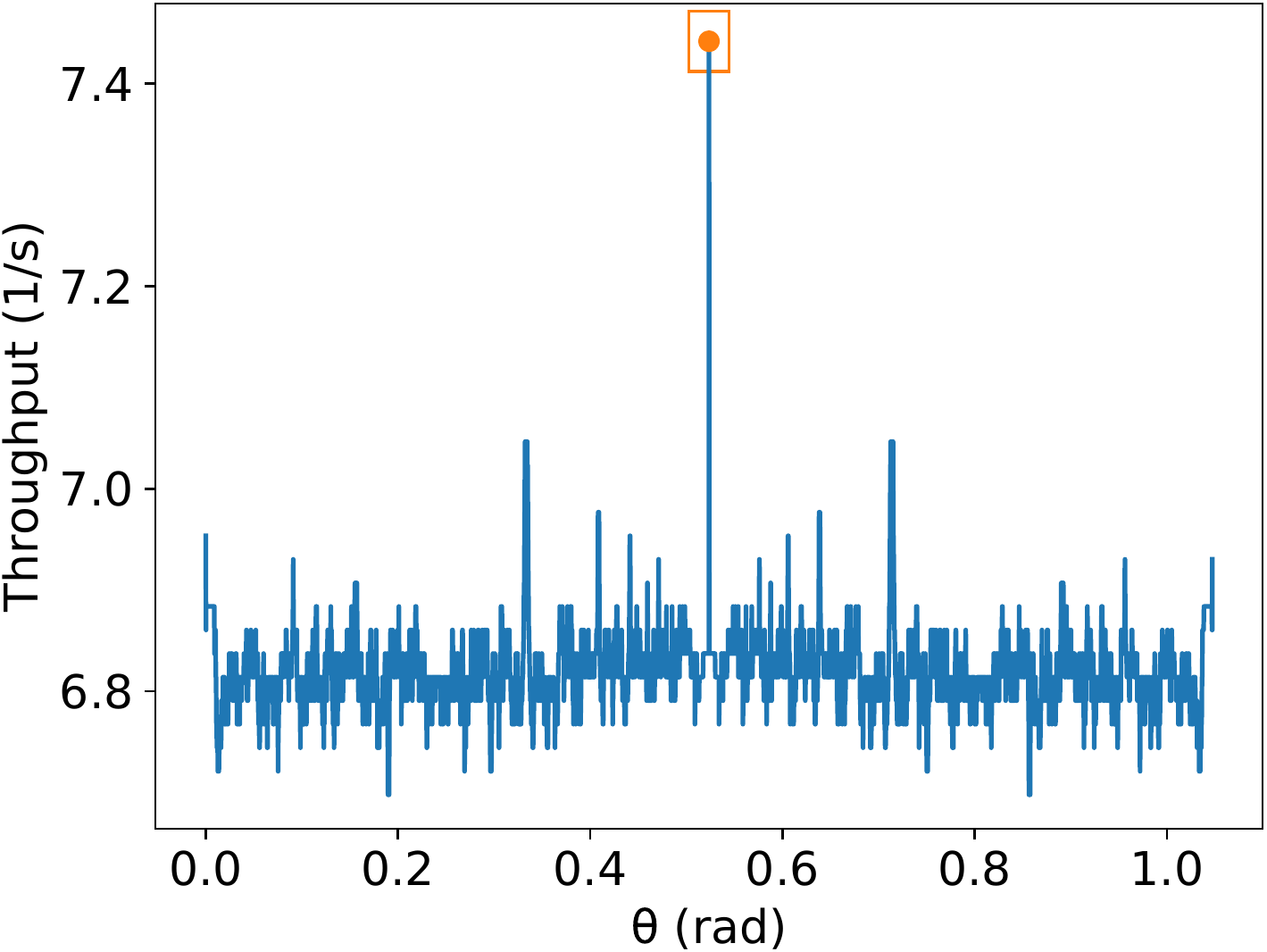}}
  \\
  \subfloat[For $10^{7}$ samples, $T=30$ s, $s=2.5$ m and $d=0.66$ m.]{\includegraphics[width=0.483\columnwidth]{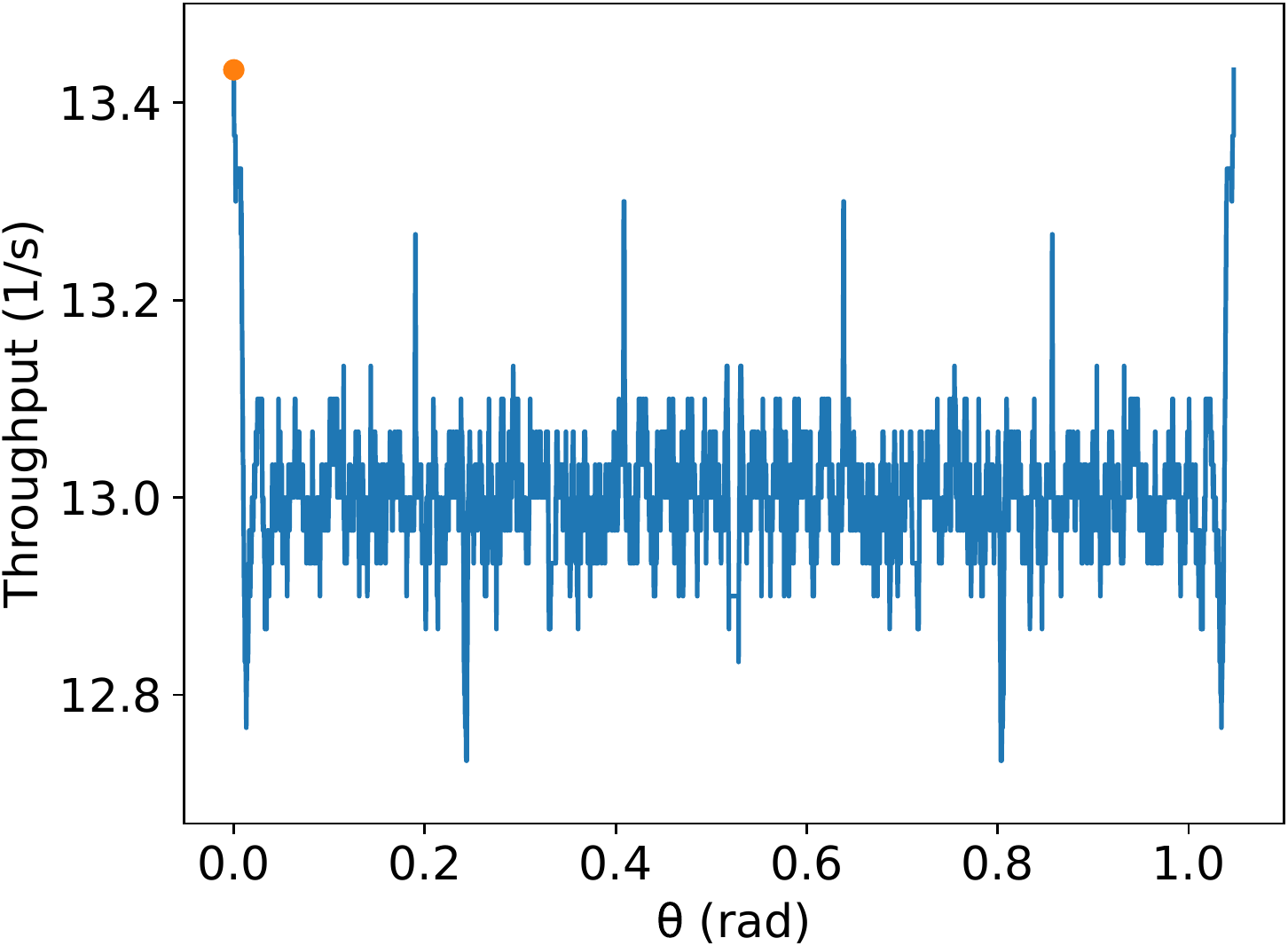}} 
  \,
  \subfloat[For $10^{7}+1$ samples, $T=30$ s, $s=2.5$ m and $d=0.66$ m.]{\includegraphics[width=0.483\columnwidth]{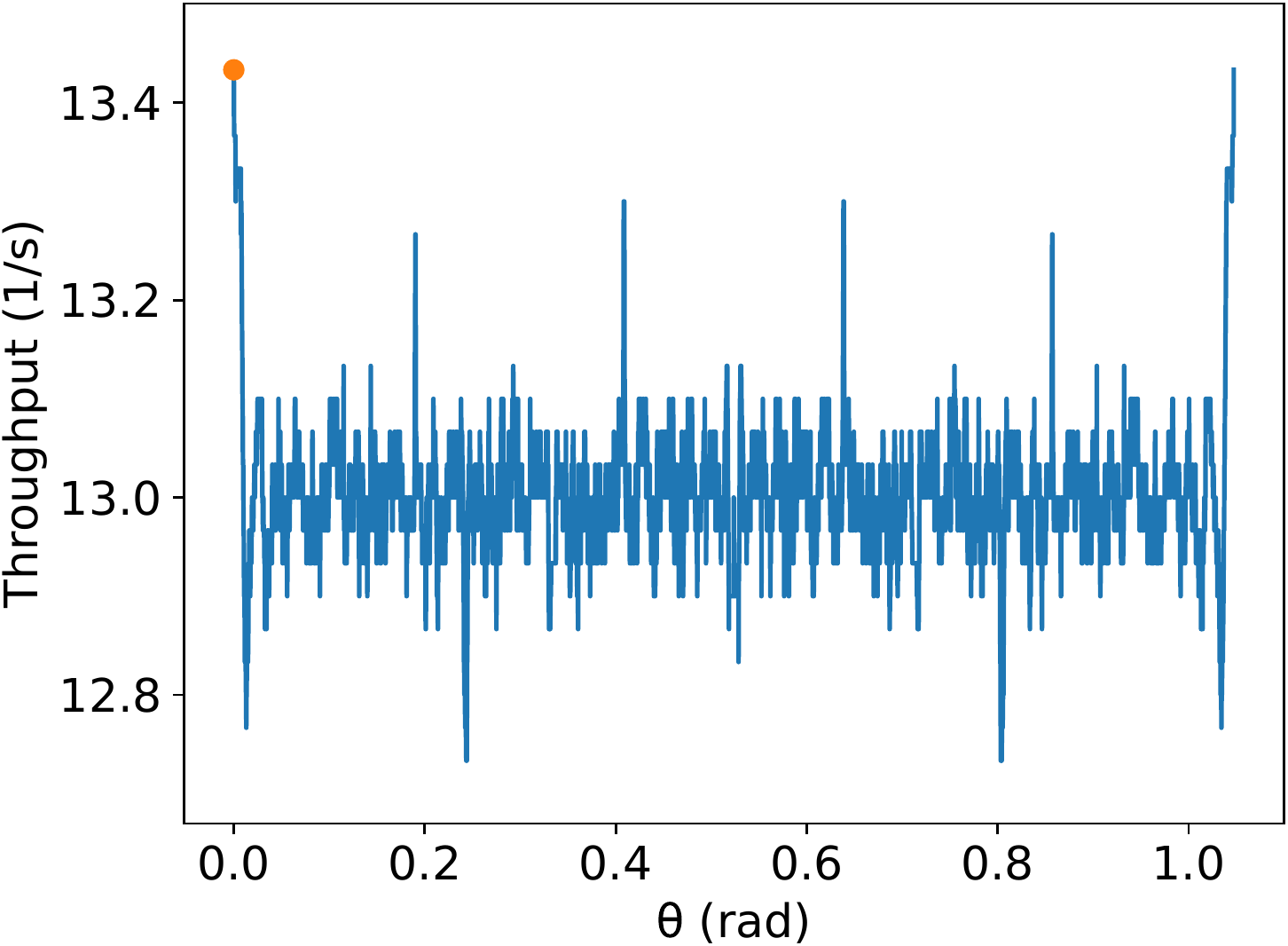}}
  \caption{Similar to Figures \ref{fig:plottheta1} and \ref{fig:plottheta2} but using 10000000 and 10000001 equally spaced points for $\theta \in \lbrack 0,\pi/3 \rparen$. It continues in the Figure \ref{fig:plottheta4}.} 
  \label{fig:plottheta3}
\end{figure}

\begin{figure}[t!]
  \centering 
  \subfloat[For $10^{7}$ samples,  $T = 4$ s, $s = 2$ m and $d = 0.13$ m.]{\includegraphics[width=0.483\columnwidth]{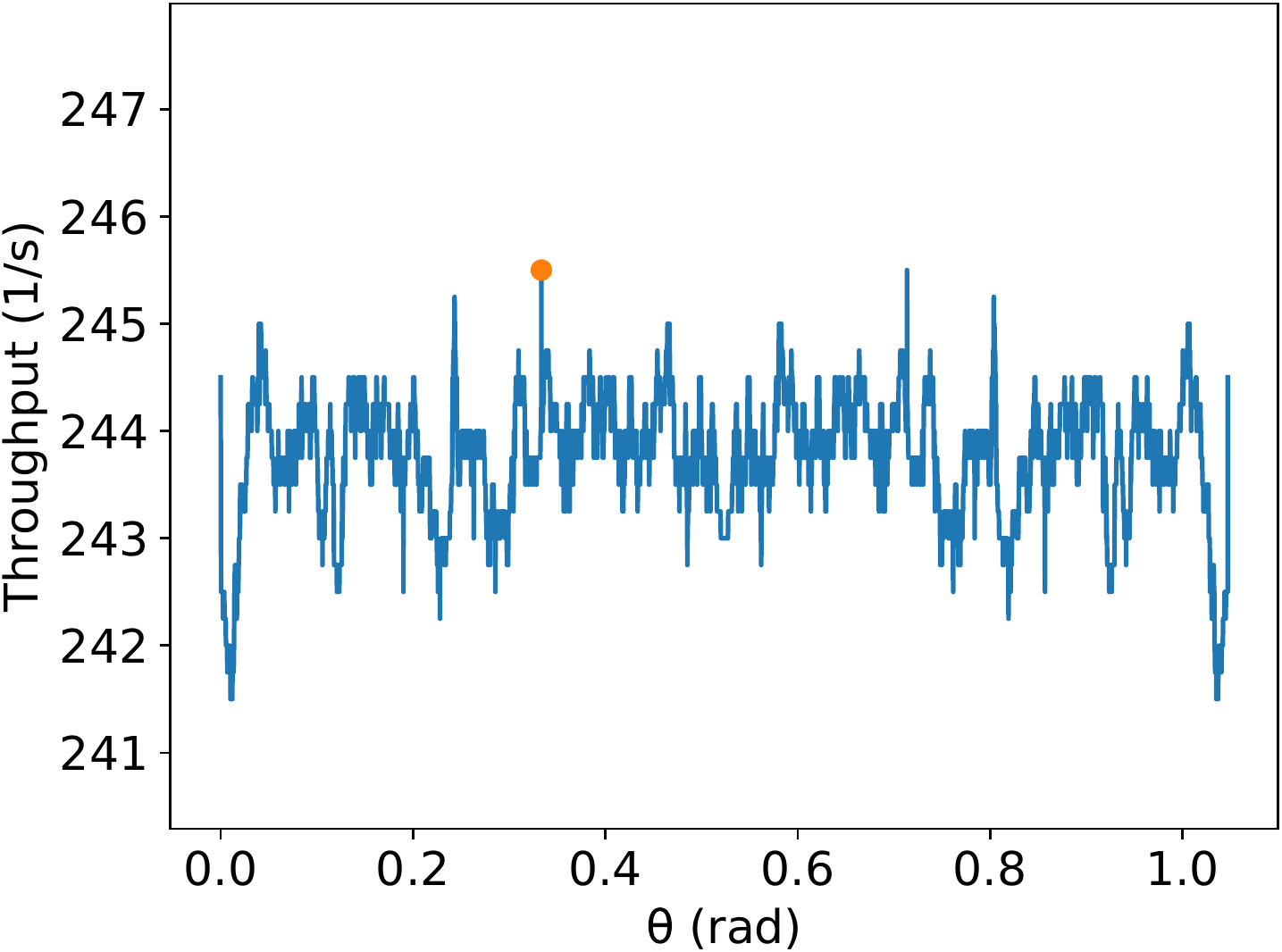}}
  \,
  \subfloat[For $10^{7}+1$ samples, $T = 4$ s, $s = 2$ m and $d = 0.13$ m.]{\includegraphics[width=0.483\columnwidth]{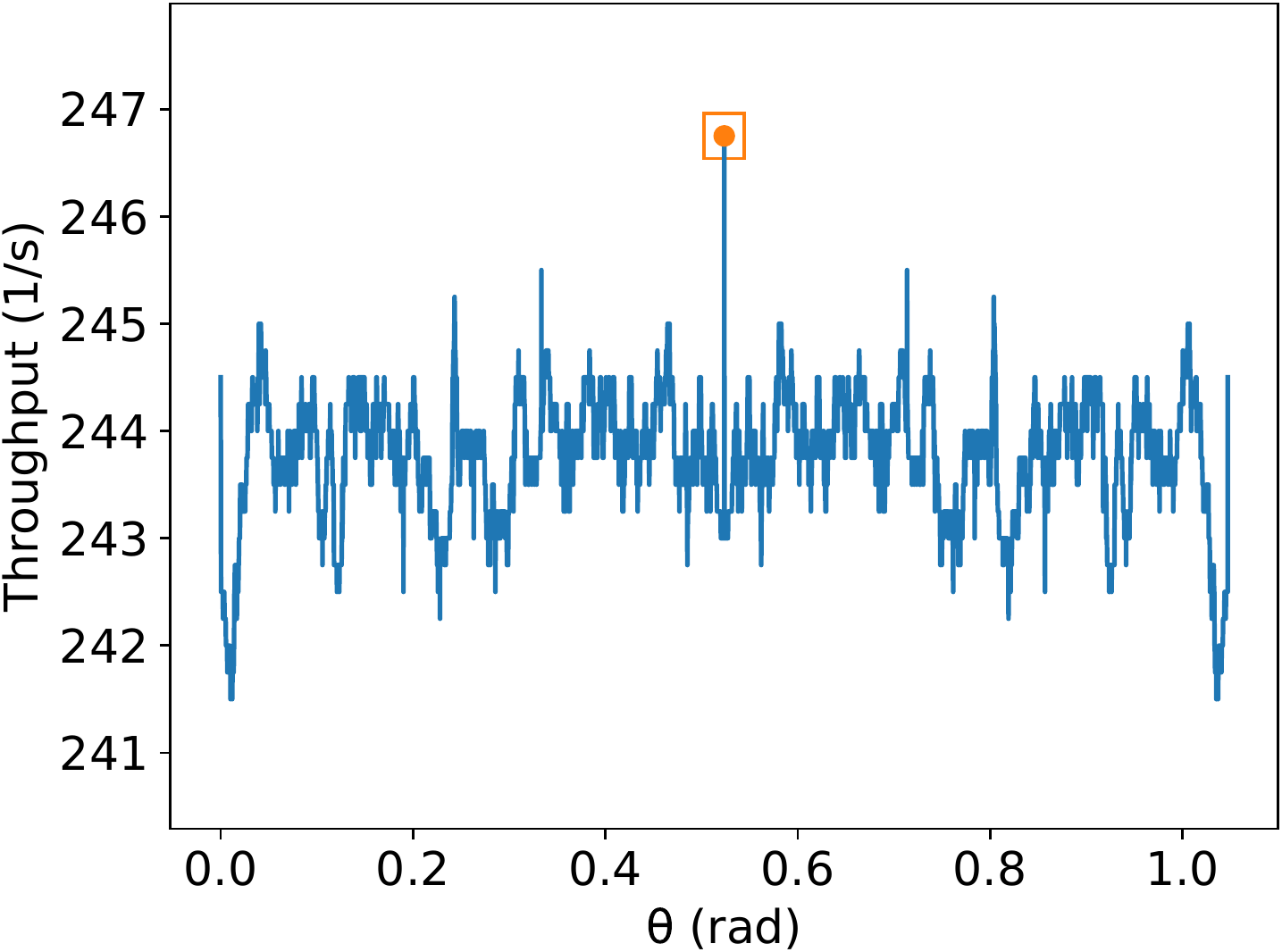}}
  \\
  \subfloat[For $10^{7}$ samples, $T = 100$ s, $s=2.40513$ m and $d=1$ m.]{\includegraphics[width=0.483\columnwidth]{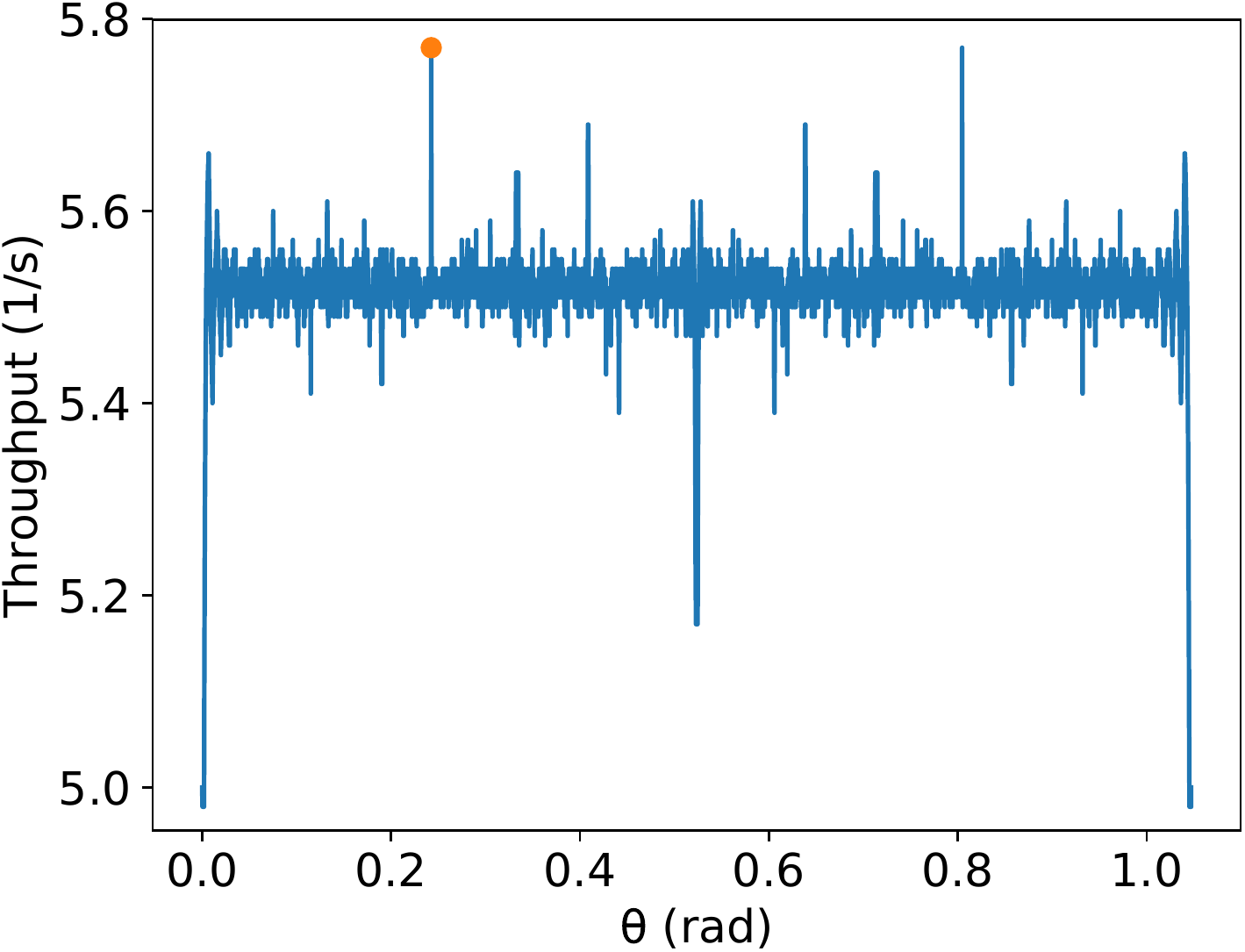}} 
  \,
  \subfloat[For $10^{7}+1$ samples, $T = 100$ s, $s=2.40513$ m and $d=1$ m.]{\includegraphics[width=0.483\columnwidth]{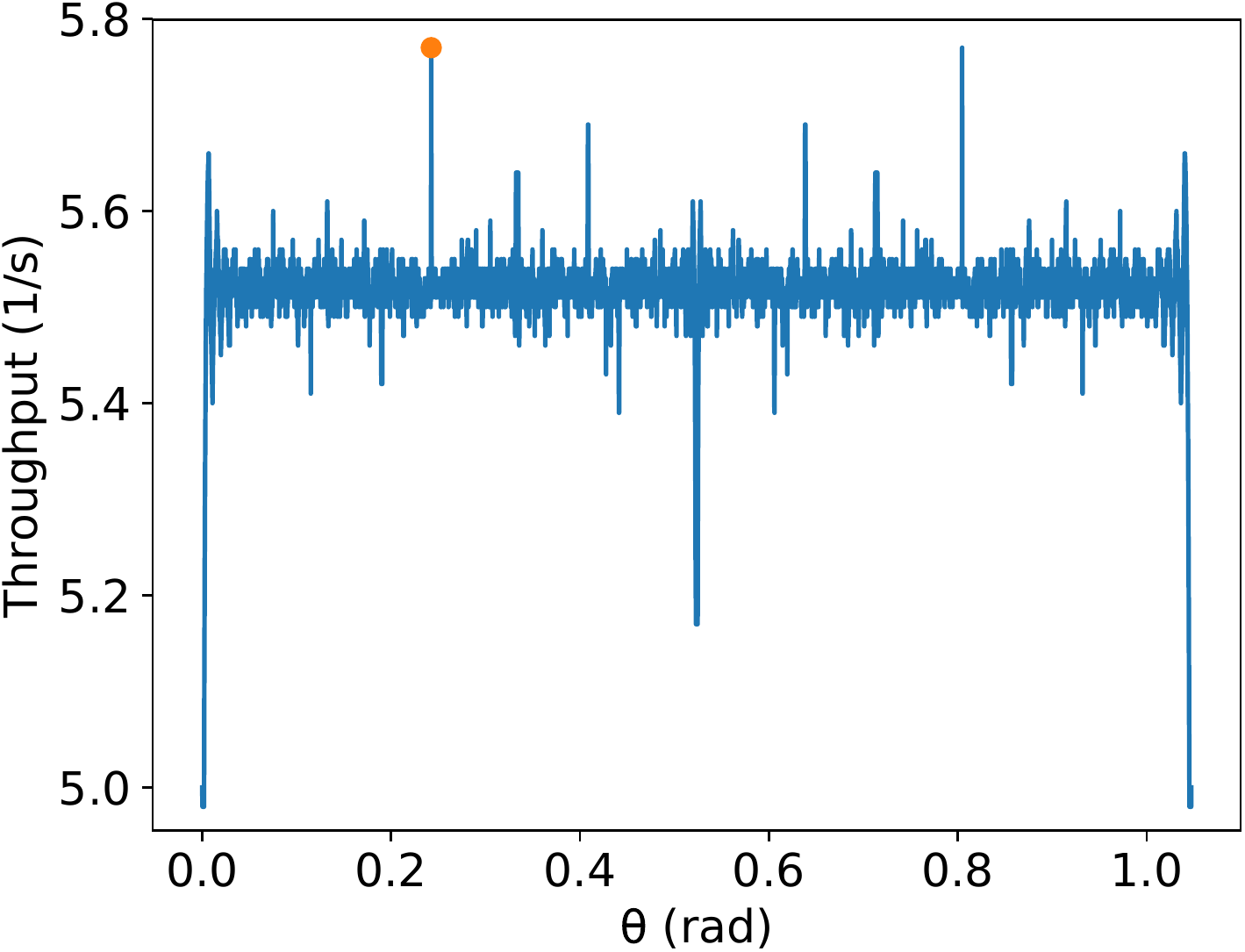}} 
  \caption{Continuation of Figure \ref{fig:plottheta3}.}
  \label{fig:plottheta4}
\end{figure}

On the other hand, due to the discontinuities of (\ref{eq:hexthroughput}), it is difficult to get an exact value of $\theta$ which maximises the throughput given the other parameters. Also, there is no specific value of $\theta$ which achieves the maximum throughput for all possible values of the other parameters. For instance, Figures \ref{fig:plottheta1}-\ref{fig:plottheta4} present the result of this equation for some randomly generated parameters and a different number of samples of $\theta$ equally spaced and taken from the domain interval, that is, from $0$ to $\pi/3$, including these values.

Each one of the Figures \ref{fig:plottheta1}-\ref{fig:plottheta4} presents two different sets of parameters.
In Figures \ref{fig:plottheta1} and \ref{fig:plottheta2}, we use 99 equally spaced values for $\theta \in \lbrack 0,\pi/3 \rparen$ on the left-hand side images and 100 on the right-hand side, then we compare the maximum on each side and choose the best one. We do the same in Figures \ref{fig:plottheta3} and \ref{fig:plottheta4}, but using $10^{7}$ and $10^{7}+1$. 
Figures  
\ref{fig:plottheta1} (a),  
\ref{fig:plottheta2} (a),
\ref{fig:plottheta3} (b)
and \ref{fig:plottheta4} (b)
show an example that $\theta \approx \pi/6$ reaches the maximum throughput, and in Figures 
\ref{fig:plottheta1} (c) and (d), and \ref{fig:plottheta3} (c) and (d)  the maximum is at $\theta = 0$.
Moreover, Figures 
\ref{fig:plottheta2} (c) and (d) has its maximum for $\theta$ different from the other examples. Figures
\ref{fig:plottheta1} (c) and \ref{fig:plottheta1} (d)
have the same maximum, despite the plotting being different. This also occurs in Figures
\ref{fig:plottheta3} (c) and (d), and Figures
\ref{fig:plottheta4} (c) and \ref{fig:plottheta4} (d).
If we know the parameters, we can find an approximate best candidate for $\theta$  by searching several values, as we presented. However, as far as we know, getting the true value which maximises that equation by a closed form is an open problem.

Additionally, notice that whenever the number of samples is odd, it is sampled the value $\theta= \pi/6$. We observe in these figures that when the maximum is at $\theta = \pi/6$, it tends to be higher than the maximum found without considering it. For instance, compare the maximum found on the pairs (a) and (b) in Figures \ref{fig:plottheta1}-\ref{fig:plottheta4}. On the other hand, $\theta = \pi/6$ is not always the optimal value. 
Thus, we suggest to compute first the value for $\theta=\pi/6$ and compare it with the result for a search for the maximum for any chosen number of samples in the interval from $\theta \in \lbrack 0,\pi/3 \rparen$.

\subsubsection{Touch and run strategy}
\label{sec:touchandrun}

\begin{figure}[t]
  \centering
  \subfloat[]{
    \includegraphics[width=0.47\columnwidth]{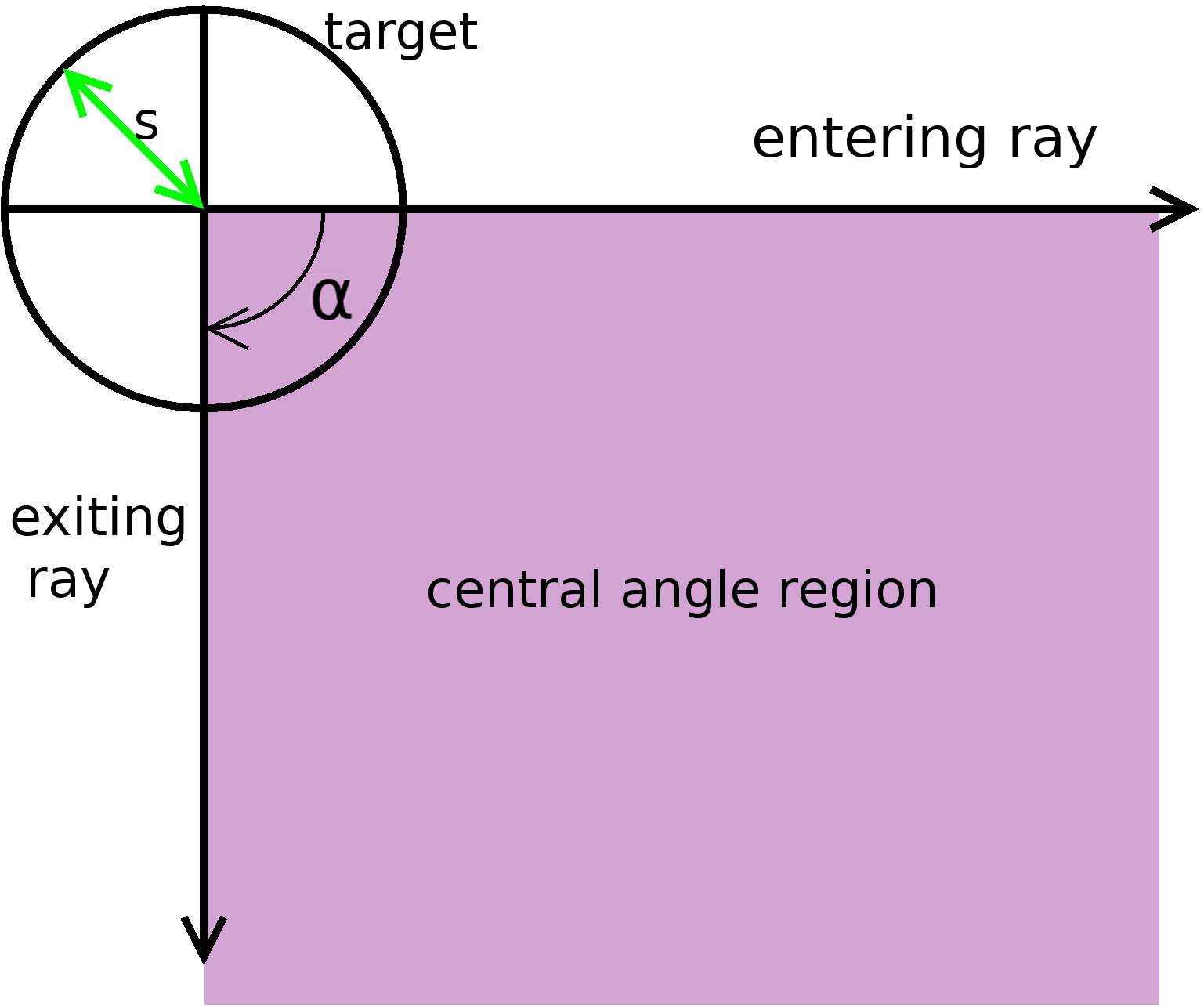}
  }
  \subfloat[]{
    \includegraphics[width=0.47\columnwidth]{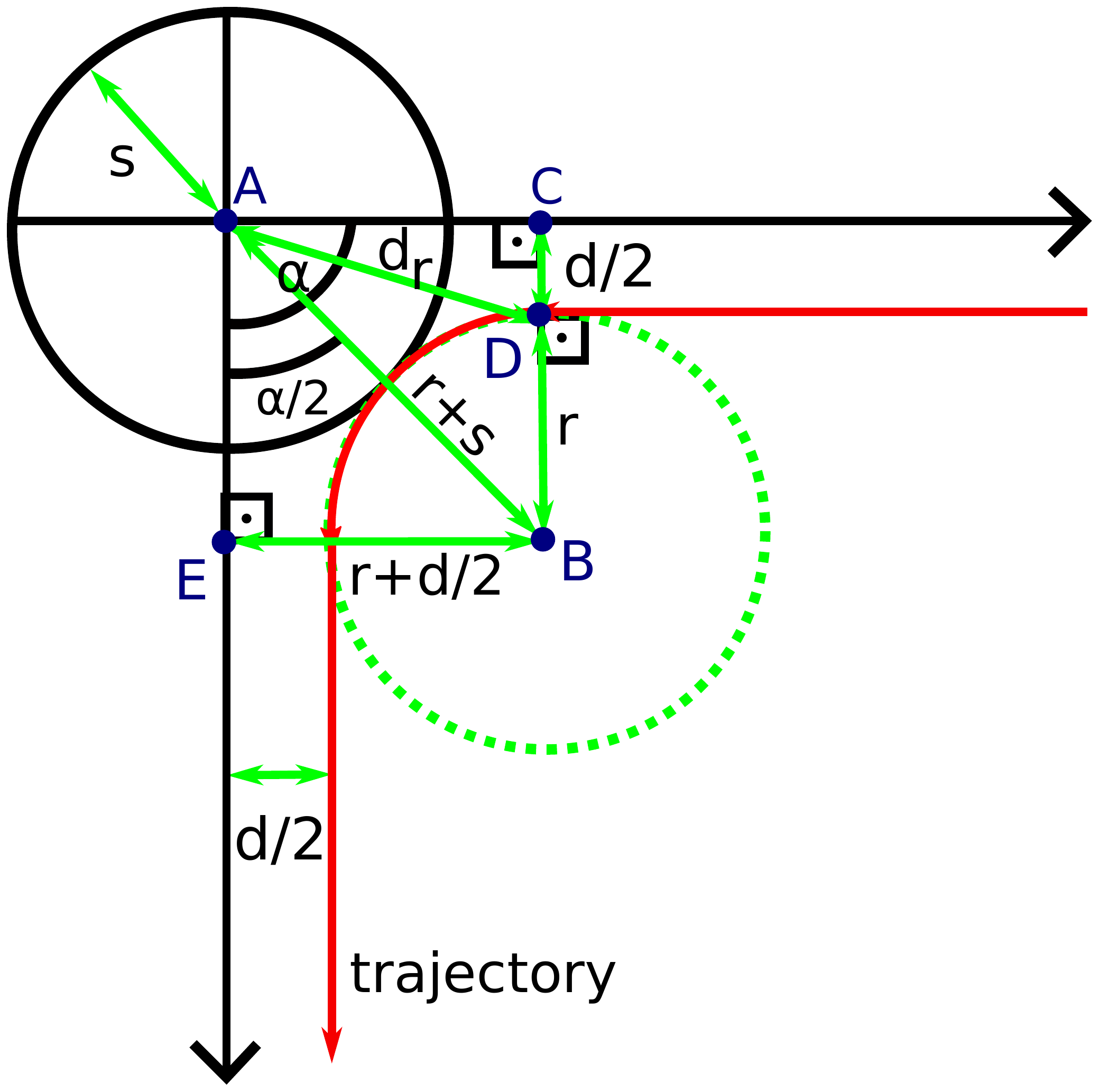}
  }
  \caption{Illustration of the touch and run strategy. (a) Central angle region and its exiting and entering rays defined by the angle $\alpha$. (b) Trajectory of a robot next to the target in red. Here we have the relationship between the target area radius ($s$), the minimum safety distance between the robots ($d$), the turning radius ($r$), the central region angle ($\alpha$) and the distance from the target centre for a robot to begin turning ($d_{r}$) -- used as justification for (\ref{eq:distTargetToTurn}) and (\ref{eq:relationangles}). The green dashed circle represents the whole turning circle.}
  \label{fig:theoretical:central_angle_linka}
\end{figure}

We now discuss the \emph{touch and run} strategy. Since a robot should spend as little time as possible near the target, we imagined a simple scenario where robots travel in predefined curved lanes and tangent to the target area where they spend minimum time on the target. 
To avoid collisions with other robots, the trajectory of a robot nearby the target is circular,
and the distance between each robot must be at least $d$ at any part of the trajectory. 
Hence, no lane crosses another, and each lane occupies a region defined by an angle in the target area that we denote by $\alpha$, shown in Figure \ref{fig:theoretical:central_angle_linka} (a).

Figure \ref{fig:theoretical:central_angle_linka} (b) shows the trajectory of a robot towards the target region following that strategy. The robot first follows the boundary of the central angle region -- that is, the entering ray -- at a distance $d/2$.
Then, it arrives at a distance of $s$ of the target centre using a circular trajectory with a turning radius $r$. 
As the trajectory is tangent to the target shape, it is close enough to consider the robot reached the target region.
Finally, the robot leaves the target by following the second boundary of the central angle region -- that is, the exiting ray -- at a distance $d/2$. Depending on the value of $\alpha$, it is possible to fit several of these lanes around the target.
For example, when $\alpha = \pi / 2$, it is possible to fit 4 lanes (Figure \ref{fig:theoretical:trajectory}). The robots in each lane must maintain a distance $d_{o}$ between each other -- which is calculated depending on the values of $d$, $s$, $r$ and the number of lanes $K$ as shown later.

\begin{figure}[t!]
  \centering
  \includegraphics[width=0.9\columnwidth]{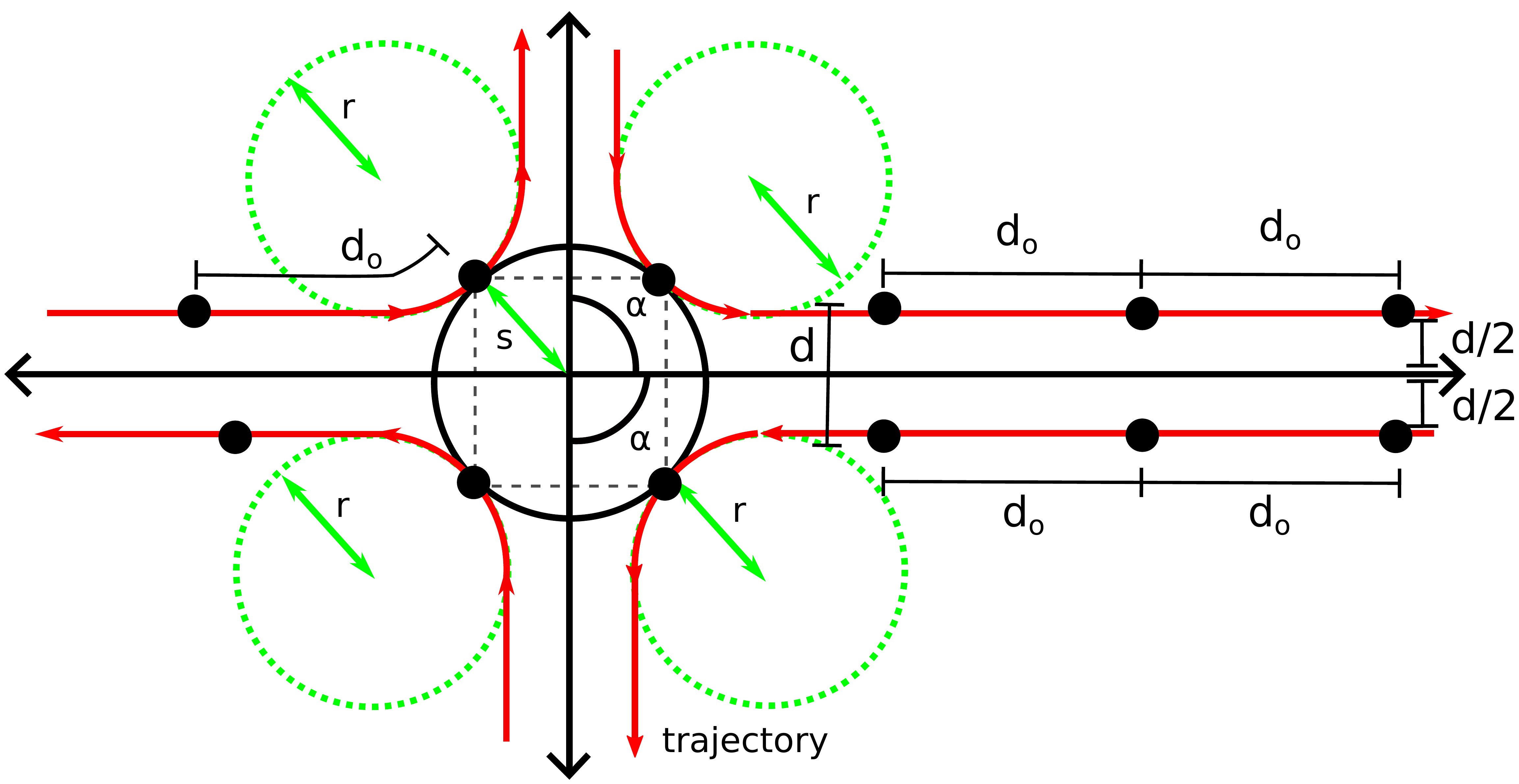} 
  \caption{Theoretical trajectory in red, for $\alpha = \pi/2$ and $K=4$. Robots are black dots and $d_{o}$ is the desired distance between the robots in the same lane. When  robots of all lanes simultaneously occupy the target region, their positions are the vertices of a regular polygon -- here, it is represented by a grey square inside the target region. }
  \label{fig:theoretical:trajectory}
\end{figure}

The lemma below concerns the distance to the target centre where the robots will start turning on the curved path. It will also be useful in the discussion about experiments using this strategy on Section \ref{sec:hitandrunexperiments}. 

\begin{lemma}
  The distance $d_{r}$ to the target centre for the robot to start turning is 
  \begin{equation}
     d_{r} = \sqrt{s(2r+s)-r d}. 
    \label{eq:distTargetToTurn}
  \end{equation}
  \label{lemma:drturn}
\end{lemma}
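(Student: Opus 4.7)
The plan is to place coordinates with the target centre $O$ at the origin and the entering ray along the positive $x$-axis, so that the robot's straight approach path (parallel to the entering ray at perpendicular distance $d/2$, inside the wedge of the central angle region) is the line $y = -d/2$, traversed in the $-x$ direction. Let $T = (x_{0}, -d/2)$ with $x_{0} > 0$ be the point at which the robot begins turning; then by definition $d_{r} = |OT| = \sqrt{x_{0}^{2} + (d/2)^{2}}$.

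Two geometric conditions determine $x_{0}$. First, because the robot's velocity is continuous at $T$, the circular arc of radius $r$ is tangent to the straight path at $T$, so the centre $C$ of the turning circle lies at distance $r$ from $T$ along the normal to the path. Inspection of Figure~\ref{fig:theoretical:central_angle_linka}(b) shows that $C$ sits on the opposite side of the path from the entering ray, i.e.\ $C = (x_{0}, -d/2 - r)$. Second, the arc touches the target exactly once, tangentially, at distance $s$ from $O$, and is otherwise outside the target: the turning circle and the target circle are therefore externally tangent, yielding $|OC| = r + s$.

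The main calculation then reduces to a single difference of squares. Writing $|OC|^{2} = x_{0}^{2} + (r + d/2)^{2} = (r + s)^{2}$ and using $(r+s)^{2} - (r+d/2)^{2} = (s - d/2)(s + d/2 + 2r) = s(2r+s) - rd - d^{2}/4$ gives $x_{0}^{2} = s(2r+s) - rd - d^{2}/4$, and hence $d_{r}^{2} = x_{0}^{2} + (d/2)^{2} = s(2r+s) - rd$, which is (\ref{eq:distTargetToTurn}) after taking the square root.

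The only subtlety in the argument is justifying the configuration choice, namely that $C$ lies on the far side of the path from the target (rather than between the path and the target) and that the two circles are externally rather than internally tangent. Both follow from the geometry depicted in Figure~\ref{fig:theoretical:central_angle_linka}(b): the lane wraps around the outside of the target in a single continuous arc that is symmetric about the angular bisector of the wedge, so the turning centre must lie deeper inside the wedge than the approach segment, and the robot, remaining outside the target throughout, produces a turning circle disjoint from the target except at the single contact point.
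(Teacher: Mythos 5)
Your proposal is correct and is essentially the paper's own argument in coordinate form: your two relations $x_{0}^{2}+(r+d/2)^{2}=(r+s)^{2}$ and $d_{r}^{2}=x_{0}^{2}+(d/2)^{2}$ are exactly the paper's right triangles $ABC$ (hypotenuse $r+s$, cathetus $r+d/2$) and $ACD$, leading to the same computation $d_{r}=\sqrt{(d/2)^{2}+(r+s)^{2}-(r+d/2)^{2}}=\sqrt{s(2r+s)-rd}$. The only difference is that you spell out the tangency justifications (external tangency of the turning circle with the target, and the placement of the turning centre beyond the path) that the paper reads directly off Figure~\ref{fig:theoretical:central_angle_linka}(b).
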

\begin{proof}
  \ifithasappendixforlemmas %
      See Online Appendix.
  \else %
  In Figure \ref{fig:theoretical:central_angle_linka} (b) we show the distance $d_{r}$ from the target centre where the robots begin turning.  By symmetry, this is the same distance from the target centre where the robots stop turning. From the right triangle $ABC$ on that figure, we have $\vert \overline{AC}\vert = \sqrt{(r+s)^{2} - (r+d/2)^{2}}$ and from $\bigtriangleup ACD$, $d_{r} = \sqrt{(d/2)^{2} + \vert \overline{AC}\vert ^{2}}$. Thus,
  \if\shortVersion 1
    $
     d_{r} = \sqrt{(d/2)^{2} + (r+s)^{2} - (r+d/2)^{2}}  = \sqrt{s(2r+s)-r d}. 
    $
  \else
    $$
     d_{r} = \sqrt{(d/2)^{2} + (r+s)^{2} - (r+d/2)^{2}} = \sqrt{s(2r+s)-r d}. 
    $$
  \fi
  \fi %
\end{proof}

We now present a lemma about the turning radius, then we define the domain of $K$ and $\alpha$, in order to calculate the throughput for the touch and run strategy.

\begin{lemma}
  The central region angle $\alpha$, the minimum distance between the robots $d$ and the turning radius $r$ are related by
  \begin{equation}
  r = \frac{s   \sin(\alpha / 2) - d/2}{1 - \sin(\alpha / 2)}.
  \label{eq:relationangles}
  \end{equation}
  \label{prop:relationangles}
\end{lemma}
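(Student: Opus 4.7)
The plan is to extract the geometric relation directly from Figure \ref{fig:theoretical:central_angle_linka} (b) by locating the centre of the turning circle in two independent ways and equating the resulting distances.

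First I would set up a coordinate system with the target centre $O$ at the origin and the bisector of the central angle along the positive $x$-axis, so the entering and exiting rays form angles $\mp \alpha/2$ with this bisector. By the symmetry of the trajectory about the bisector, the centre $Q$ of the turning circle of radius $r$ lies on the bisector. Since the curved part of the trajectory is tangent (internally for the robot, externally for the circles) to the target boundary, the two circles of radii $s$ and $r$ are externally tangent, hence $|\overline{OQ}| = r + s$, and in coordinates $Q = (r+s, 0)$.

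Next I would compute the perpendicular distance from $Q$ to the entering ray in two ways. On one hand, the entering ray is a half-line from $O$ making angle $\alpha/2$ with $\overline{OQ}$, so this perpendicular distance is $(r+s) \sin(\alpha/2)$. On the other hand, the straight segment of the trajectory along the entering ray is parallel to that ray at distance $d/2$ (the robot flies at distance $d/2$ from the boundary of the central region, as described in the paragraph preceding the statement), and the turning arc joins this segment tangentially; therefore $Q$ lies on the far side of the straight segment from the ray, at perpendicular distance $r$ from the segment, giving a total perpendicular distance of $r + d/2$ from the ray. Equating the two expressions yields
\begin{equation*}
(r+s)\sin(\alpha/2) = r + d/2,
\end{equation*}
and solving for $r$ gives the required identity $r = (s\sin(\alpha/2) - d/2)/(1-\sin(\alpha/2))$.

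There is no real obstacle here beyond making the geometric setup precise; the main care-point is simply the orientation argument that places $Q$ on the far side of the straight trajectory (so the two contributions $d/2$ and $r$ add rather than subtract). This can be justified by the fact that the curved trajectory must bring the robot towards the target, so the centre of curvature must lie between the straight segment and the target, i.e., inside the central region, which is exactly the side opposite the bounding ray. With that orientation fixed, the equality of the two perpendicular distances is immediate.
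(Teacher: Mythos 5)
Your proof is correct and is essentially the paper's own argument: the paper reads off a right triangle with vertex at the target centre, hypotenuse $r+s$ to the turning-circle centre, angle $\alpha/2$, and opposite cathetus $r+d/2$, which is exactly your equation $(r+s)\sin(\alpha/2)=r+d/2$. You simply make explicit the tangency and orientation facts that the paper takes directly from the figure.
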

\begin{proof} 
  \ifithasappendixforlemmas %
      See Online Appendix.
  \else %
  From Figure \ref{fig:theoretical:central_angle_linka} (b), we can see that the right triangle ABE has angle $\widehat{EAB} = \alpha/2$,  hypotenuse $r + s$ and cathetus $r + d/2$. Hence, it directly follows that
  \if\shortVersion 1
    $
    sin(\alpha / 2) = \frac{r + d/2}{r + s}
    \LR
    r = \frac{s   \sin(\alpha / 2) - d/2}{1 - \sin(\alpha / 2)}.
    $
  \else
    $$
    sin(\alpha / 2) = \frac{r + d/2}{r + s}
    \LR
    r = \frac{s   \sin(\alpha / 2) - d/2}{1 - \sin(\alpha / 2)}.
    $$
  \fi
  \fi %
\end{proof}

\begin{proposition}
  Let $K$ be the number of curved trajectories around the target area, $\alpha$ be the angle of each central area region, and $r$ the turning radius of the robot for the curved trajectory of this central area region. For a given $d > 0$ and $s \ge d/2$, the domain of $K$ is
  \begin{equation} 
    3 \le K  \le \frac{\pi}{\arcsin \left( \frac{d}{2s} \right)}, \text{ and }
    \label{eq:Kbounds}
  \end{equation}
  \begin{equation}
    \alpha = \frac{2 \pi}{K}.
    \label{eq:ak}
  \end{equation}
  \label{prop:Kboundsrk}
\end{proposition}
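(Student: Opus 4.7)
The plan is to derive the three pieces of the statement separately: first equation (\ref{eq:ak}), and then the lower and upper bounds in (\ref{eq:Kbounds}).

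First I would establish (\ref{eq:ak}) by a straightforward partition argument. By construction (see Figure \ref{fig:theoretical:central_angle_linka} (a) and Figure \ref{fig:theoretical:trajectory}), each of the $K$ curved lanes occupies a central-angle region of angle $\alpha$ around the target, and these regions are disjoint and collectively cover the entire neighbourhood of the target (otherwise one could slot in another lane, or two lanes would overlap and violate the $d$ separation near the tangent point). Hence $K\alpha=2\pi$, giving $\alpha=2\pi/K$.

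Next I would derive both bounds in (\ref{eq:Kbounds}) directly from the feasibility of the turning radius formula (\ref{eq:relationangles}). Substituting $\alpha=2\pi/K$ into
\[
r = \frac{s\sin(\alpha/2)-d/2}{1-\sin(\alpha/2)},
\]
I need $r$ to be a finite, non-negative real number for the curved trajectory of Figure \ref{fig:theoretical:central_angle_linka} (b) to make geometric sense. Finiteness requires $\sin(\alpha/2)<1$, equivalently $\alpha<\pi$, i.e.\ $2\pi/K<\pi$, which forces $K\ge 3$ (since $K$ is a positive integer). Non-negativity of the numerator, given that the denominator is then positive, requires $s\sin(\alpha/2)\ge d/2$; using $s\ge d/2$ so that $d/(2s)\in(0,1]$ and $\arcsin$ is well-defined, this reads $\alpha/2\ge\arcsin(d/(2s))$, i.e.\ $\pi/K\ge\arcsin(d/(2s))$, which yields the upper bound $K\le \pi/\arcsin(d/(2s))$.

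I would then briefly justify why these are the only constraints. The lower bound $K\ge 3$ is sharp: $K=2$ would give $\alpha=\pi$ and $r=\infty$, degenerating the curved lane into a straight line tangent to the target (no turning), which falls outside the touch-and-run strategy as specified; $K=1$ is geometrically meaningless. The upper bound is sharp because at $K=\pi/\arcsin(d/(2s))$ we get $r=0$, corresponding to the extreme case where a robot just touches the target and reverses, and any larger $K$ would force adjacent lanes closer than $d$ at the tangent point (violating the safety distance established in the derivation of Lemma \ref{prop:relationangles}). No step is a genuine obstacle here; the only mild subtlety is being explicit that $s\ge d/2$ is exactly what keeps $d/(2s)\le 1$ so that $\arcsin(d/(2s))$ is defined, and that the hypothesis justifies taking the closed interval on the right endpoint.
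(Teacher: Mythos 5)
Your proposal is correct, and it reaches the same three conclusions, but the route to the bounds on $K$ is genuinely different from the paper's. For $\alpha = 2\pi/K$ you both use the same partition-into-$K$-identical-sectors argument. For the bounds, however, the paper argues combinatorially/geometrically: $K \ge 3$ is justified by observing that $K=2$ already degenerates to parallel straight lanes (so curved trajectories only start making sense at $K=3$), and the upper bound comes from a worst-case simultaneous-arrival argument — the $K$ robots touching the target at the same instant sit at the vertices of a regular $K$-gon inscribed in the circle of radius $s$, whose side $2s\sin(\pi/K)$ must be at least $d$. You instead extract both bounds from the feasibility of the turning radius in Lemma \ref{prop:relationangles}: finiteness of $r$ forces $\sin(\alpha/2)<1$ and hence $K\ge 3$, and non-negativity of $r$ forces $s\sin(\pi/K)\ge d/2$, which is literally the same inequality $2s\sin(\pi/K)\ge d$. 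This is not an accident — the formula for $r$ was derived precisely from the inter-lane separation constraint, so $r\ge 0$ and the inscribed-polygon condition are two faces of the same geometric requirement — but your derivation is arguably tighter and more self-contained, since it needs only the already-proved radius formula rather than a separate worst-case configuration argument, and it makes the role of the hypothesis $s\ge d/2$ (keeping $\arcsin(d/(2s))$ defined) explicit, which the paper leaves implicit. The one thing the paper's version buys that yours does not emphasise is the physical interpretation of the upper bound as a collision constraint among robots simultaneously occupying the target, which is the intuition behind Figure \ref{fig:theoretical:trajectory}.
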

\begin{proof}
The number of trajectories $K$ must be greater or equal to 3.
	The reason is that for the minimum possible value for $s$, $s = d/2$, $K = 2$ is enough to have parallel lanes. However, starting with $K = 3$, 
	curved trajectories are needed to guarantee that robots of one lane do not interfere with robots 
	from another lane.

Also, we have $K$ identical trajectories around the target, each taking a central angle of $\alpha$.
As a result, the value of $\alpha$ given $K$ is
$
\alpha = \frac{2 \pi}{K},
$
implying that
$
0 < \alpha \le \frac{2 \pi}{3}. 
$

Additionally, in the worst case, one robot of each lane arrives in the target region at the same time. When robots of all lanes simultaneously occupy the target region, their positions can be seen as the vertices of a regular polygon which must be inscribed in the circular target region of radius $s$ (e.g., in Figure \ref{fig:theoretical:trajectory} we have a square whose sides are greater than $d$). The number of robots on the target region at the same time must be limited by the maximum number of sides of an inscribed regular polygon of a side with minimum side greater or equal to $d$. The side of a K regular polygon inscribed in a circle of radius $s$ measures $2 s \sin\left(\frac{\pi}{K}\right)$. Thence, $2 s \sin\left(\frac{\pi}{K}\right) \ge d \Rightarrow  \frac{\pi}{\arcsin\left(\frac{d}{2 s} \right)} \ge K.$  
\end{proof}

Now that we have determined the correct parametrisation for the touch and run strategy, we determine its throughput in the next proposition.

\begin{proposition}
  Assuming the touch and run strategy and that the first robot of every lane begins at the same distance from the target, given a target radius $s$,
  the constant linear robot speed $v$,
  a minimum distance  between robots $d$
  and the number of lanes $K$, the throughput for a given instant $T$ is given by
  
  \begin{equation}
  f_{t}(K,T) = 
    \frac{1}{T}\left(K\left\lfloor \frac{vT}{d_{o}} +1 \right\rfloor - 1\right), \text{ for }
    \label{eq:throughputhitandruntime}
  \end{equation}
  \begin{equation}
    d_{o} = \max(d,d'), \text{ and }
    \label{eq:do}
  \end{equation}
  \begin{equation}
    d' = 
    \begin{cases}
       r  (\pi - \alpha)  + \frac{d - 2  r  \cos(\alpha / 2)}  {\sin(\alpha / 2)}, & \text{ if } 2  r  \cos(\alpha / 2) < d,\\
      2  r  \arcsin\left( \frac{d}{2 r}\right), & \text{ otherwise, }
    \end{cases}
    \label{eq:dprime}
  \end{equation}
  with $r$ obtained from (\ref{eq:relationangles}). Also,
  \begin{equation}
  f_{t}(K) = \lim_{T\to \infty}{f_{t}(K,T)}= \frac{Kv }{d_{o}}. 
  \label{eq:throughputhitandrunlimit}
  \end{equation}
  \label{prop:throughputsdk}
\end{proposition}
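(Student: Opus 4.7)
The plan is to separate the counting argument from the geometric derivation of $d_{o}$. First, under the given assumption that every lane's leading robot starts at the same distance from the target, all $K$ first robots touch the target simultaneously; I would set this common arrival time as $0$. Within each lane, robots are placed consecutively along the prescribed trajectory with arc-length spacing $d_{o}$ and move at constant speed $v$, so arrivals in a single lane are separated in time by $d_{o}/v$. Hence by time $T$ the number of robots of one lane that have reached the target is $\lfloor vT/d_{o} + 1 \rfloor$ (including the first). Summing over the $K$ identical lanes gives $N(T)=K\lfloor vT/d_{o}+1\rfloor$, and Definition \ref{def:throughput2} directly yields (\ref{eq:throughputhitandruntime}). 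The asymptotic formula (\ref{eq:throughputhitandrunlimit}) then follows by the standard sandwich $\lfloor x \rfloor = x - \mathrm{frac}(x)$ with $0\le \mathrm{frac}(x)<1$, so $\lim_{T\to\infty}\lfloor vT/d_{o}+1\rfloor/T = v/d_{o}$, and the $-1/T$ and $+1$ inside the floor vanish in the limit.

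The main technical step is the derivation of $d_{o}$ in (\ref{eq:do})--(\ref{eq:dprime}). Since the geometry is identical across lanes and lanes do not interfere by the construction of the central angle regions, the binding constraint on $d_{o}$ comes from two consecutive robots within the same lane. I would decompose the trajectory into three pieces: an entering straight segment parallel to one boundary of the central region at distance $d/2$, a circular arc of radius $r$ tangent to the target boundary turning through angle $\pi-\alpha$, and an exiting straight segment. On the straight pieces arc-length equals Euclidean distance, so the constraint there is simply $d_{o}\ge d$. On the curved piece, the Euclidean distance between two points separated by arc-length $L$ on a circle of radius $r$ is $2r\sin(L/(2r))$, which is strictly less than $L$, so a tighter bound $d_{o}\ge d'$ may arise.

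To obtain $d'$ I would examine the critical configuration in which two consecutive robots are exactly at Euclidean distance $d$. The dichotomy in (\ref{eq:dprime}) corresponds to whether both robots can lie on the arc simultaneously. If $2r\cos(\alpha/2)\ge d$ — that is, the chord spanning the whole arc is at least $d$ — then the critical pair sits entirely on the arc, and setting $2r\sin(d_{o}/(2r))=d$ yields $d' = 2r\arcsin(d/(2r))$. If instead $2r\cos(\alpha/2)<d$, no two points on the arc can be Euclidean-distance $d$ apart, so the critical pair straddles the arc: one robot has advanced a distance $x$ onto the exiting straight, the other is still a distance $x$ before the entering straight (the symmetric placement is the extremal one by reflection across the bisector of the central region). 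Using Lemma \ref{prop:relationangles} together with elementary trigonometry on the angle $\alpha/2$ between each straight segment and the chord joining the tangent points, the Euclidean distance becomes $2r\cos(\alpha/2) + 2x\sin(\alpha/2)$, and equating this to $d$ together with $d_{o} = 2x + r(\pi-\alpha)$ gives the first branch of (\ref{eq:dprime}). Taking $d_{o}=\max(d,d')$ enforces both the straight and the curved constraints simultaneously.

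The hard part will be the last step above: rigorously verifying that the symmetric straddling configuration is indeed the extremal one, and ruling out asymmetric placements that might shorten the Euclidean distance for a given $d_{o}$. I would argue by differentiating the Euclidean-distance function with respect to the positioning parameter subject to the fixed arc-length constraint $d_{o}$, and showing the minimum over positions occurs at the symmetric placement — intuitively, since both straight segments leave the arc at the same angle $\alpha/2$ to the chord, the problem is invariant under swapping the two robots and reflecting across the bisector, so the extremum must be symmetric. Once this is checked, combining the two-case analysis with the counting argument completes the proof.
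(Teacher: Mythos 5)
Your proposal is correct and follows essentially the same route as the paper: the same per-lane counting $N(K,T)=K\lfloor vT/d_{o}+1\rfloor$ with Definition \ref{def:throughput2} and the floor-function limit, and the same two-case split on whether the chord $2r\cos(\alpha/2)$ of the turning arc is at least $d$ (robots both on the arc, giving $2r\arcsin(d/(2r))$) or less than $d$ (the symmetric straddling pair $T$, $U$ with $|\overline{TU}|=d$, giving $r(\pi-\alpha)+\frac{d-2r\cos(\alpha/2)}{\sin(\alpha/2)}$). The extremality of the symmetric configuration that you flag as the hard part is handled in the paper by the equivalent observation that any smaller delay forces an instant at which the two robots are aligned perpendicular to the bisector strictly between $T$ and $U$, hence closer than $d$.
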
 
\begin{proof}
  \begin{figure}[t!]
    \centering
    \includegraphics[width=0.55\columnwidth]{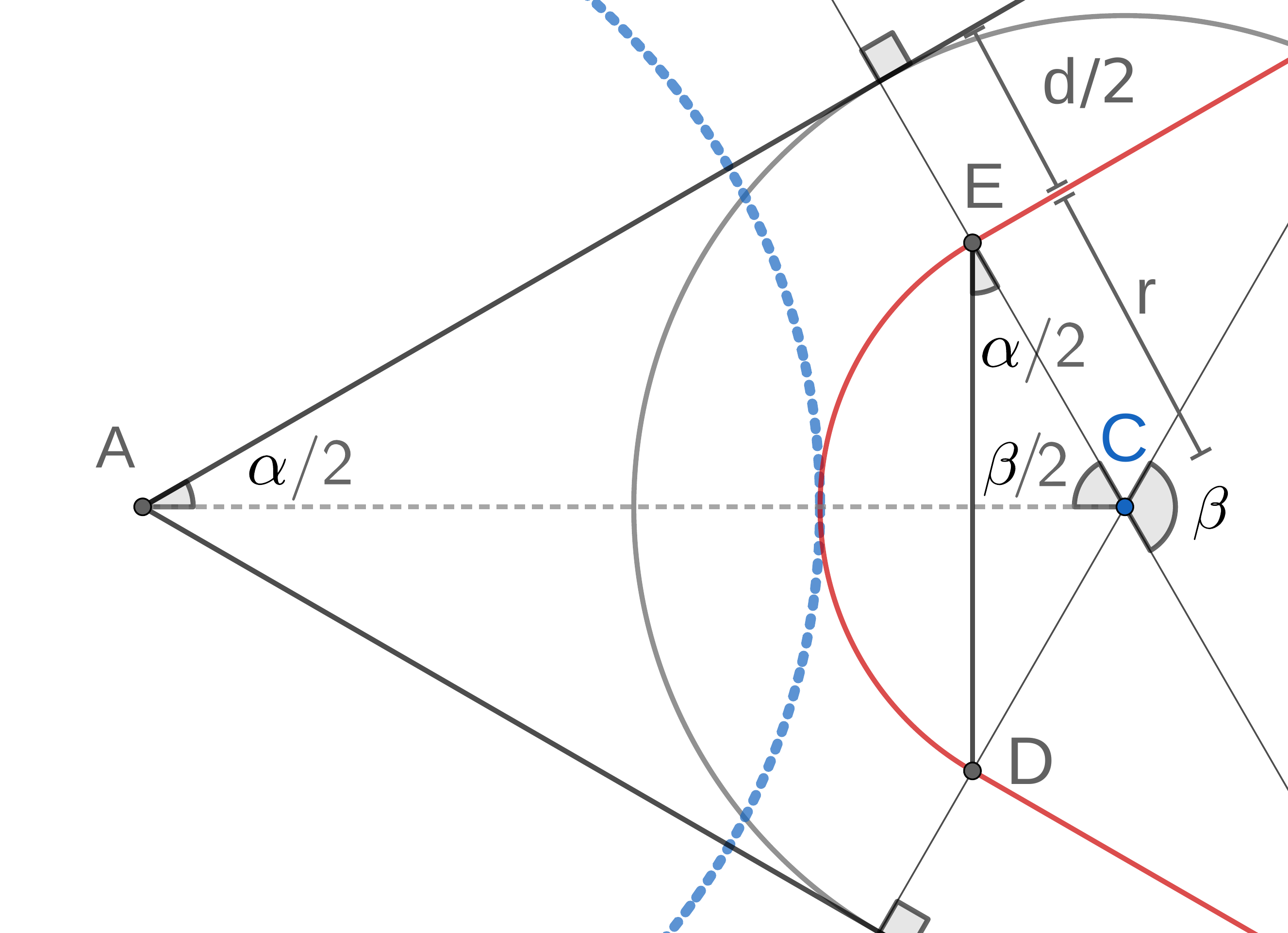}
    \caption{ The red line represents the trajectory of robots in one lane. $\alpha$ is the central angle for the lane. The dashed blue circle of centre A is the target. C is the centre of the circle of radius $r$ from the circular trajectory. The grey circle of centre C has a radius of $r + d/2$. Points D and E represent the connection between the curved path and the straight path. We have $\beta = \pi - \alpha$ due to the symmetry and the fact that the sum of the angles of $\bigtriangleup \mathit{ECD}$ is equal to $\pi$.}
    \label{fig:coolpaths:1}
  \end{figure}
  
  Using the touch and run strategy, each lane is distant by at least $d$ from each other. However, the minimum distance between robots on the same lane $d_{o}$ must be checked at the beginning of the curved path, as their distance decreases if assuming linear constant velocity. We distinguish two cases based on Figure \ref{fig:coolpaths:1}:
  \begin{enumerate}
  \item  $\vert \overline{ED}\vert < d$:
  Two robots cannot be on the lane curved path; 
  \item $\vert \overline{ED}\vert \ge d$:
  More than one robot can occupy the lane curved path. 
  \end{enumerate}
  
  These cases affect the minimum distance between robots $d_{o}$ such that they can follow the trajectory without decreasing their linear speed. In both cases, they need to satisfy the minimum distance $d$ if they are turning on the curved path.  From Figure \ref{fig:coolpaths:1}, 
  \begin{equation}
  \vert \overline{ED}\vert =
  2r \sin \left( \frac{\beta}{2} \right) = 
  2r \sin \left( \frac{\pi}{2} - \frac{\alpha}{2} \right)
  = 2r \cos
  \left(
  \frac{\alpha}{2}
  \right).
  \label{eq:EDv}
  \end{equation}

  \begin{figure}
    \centering 
    \subfloat[$\vert \overline{ED}\vert < d$]{
      \includegraphics[width=0.45\linewidth]{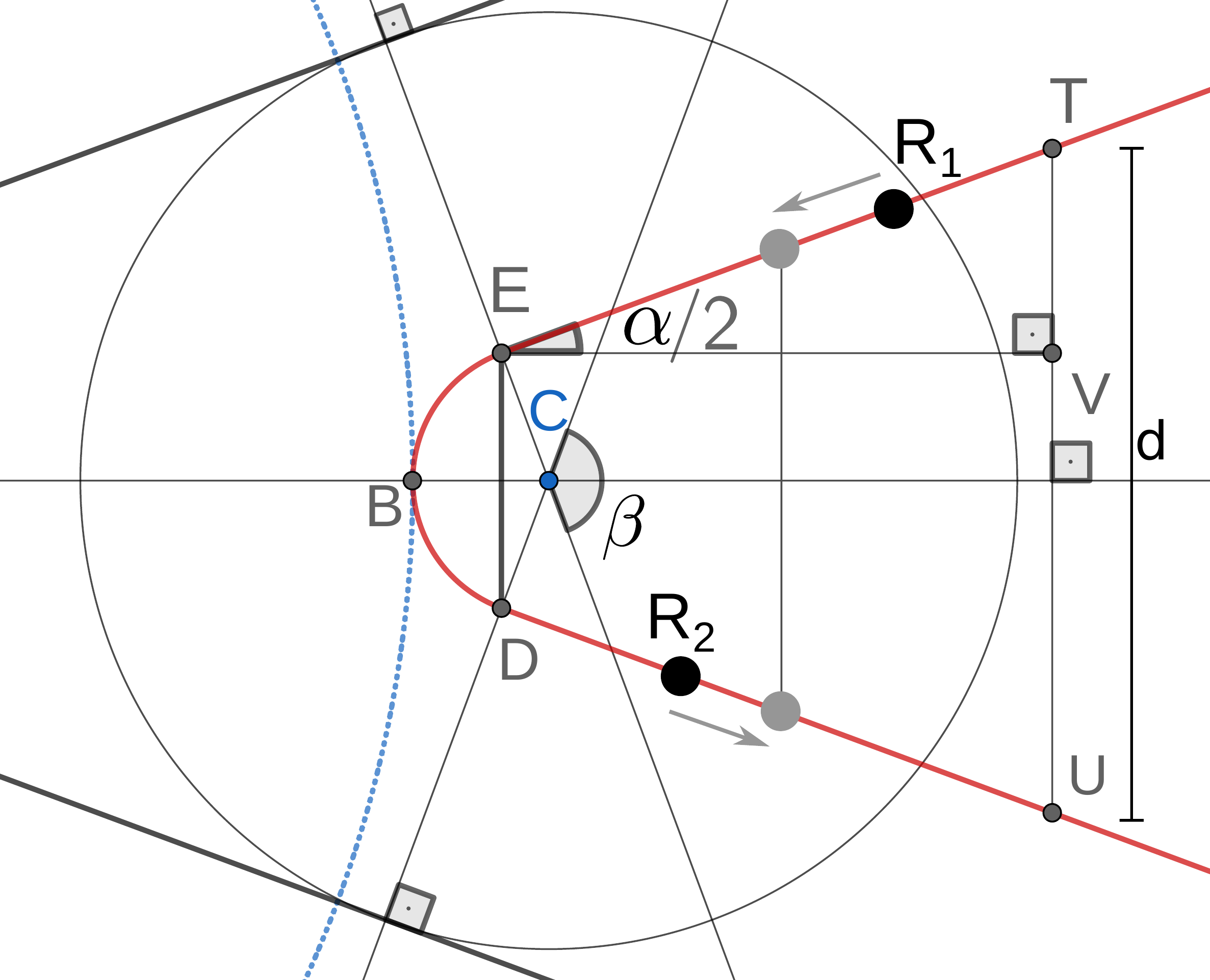}
    }\qquad
    \subfloat[$\vert \overline{ED}\vert \ge d$]{  
      \includegraphics[width=0.25\linewidth]{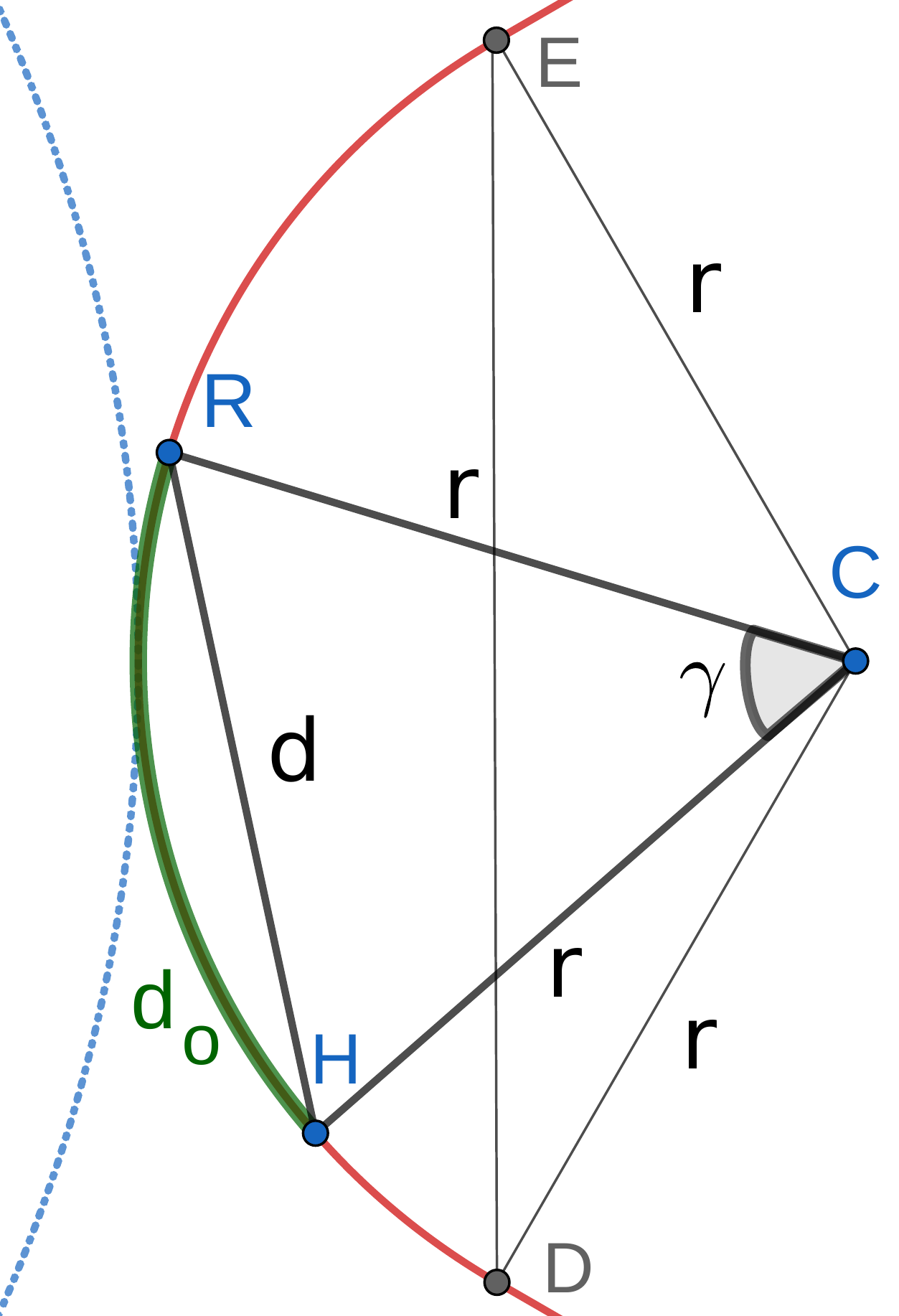}
    }
    \caption{Enlargements of Figure \ref{fig:coolpaths:1}. (a) The robots $R_{1}$ and $R_{2}$ are the black dots on the red line representing the trajectory. If the delay between $R_{1}$ and $R_{2}$ is less than the time for a robot to run from $T$ to $U$ following the red trajectory, there will be some instant which $R_{1}$ and $R_{2}$ will be vertically aligned. Their positions at that instant are represented by grey dots in front of them. Hence, their distance would be less than $d$. The right triangle $TVE$ has side $\overline{TV}$, which can be measured using $\overline{ED}$. (b) $d_{o}$ denotes the minimum arc length for two robots located at any two points $R$ and $H$ on $\wideparen{ED}$ such that they are distant by at least $d$. $\gamma$ is the angle defining the arc $d_{o}$ for the circle of centre $C$.}
    \label{fig:coolpaths23}
  \end{figure}

  In the case 1, in Figure \ref{fig:coolpaths23} (a), we define two points T and U on the lane such that the distance between them is $\vert \overline{TU}\vert = d$ and their distances to the target are equal.
  The delay between one robot at T and another at U is equal to 
  \if\shortVersion 1
    $\Delta t_1 = \frac{\vert \overline{TE}\vert + \vert \wideparen{ED}\vert + \vert \overline{DU}\vert }{v},$
  \else
    $$\Delta t_1 = \frac{\vert \overline{TE}\vert + \vert \wideparen{ED}\vert + \vert \overline{DU}\vert }{v},$$
  \fi
  that is, the time for running through the straight line TE, the curved path ED and the straight line DU.
  
  For any delay less than  $\Delta t_1 $ between two robots, say $R_{1}$ and $R_{2}$, there is an instant of time when $R_{1}$ is on the path between B and T and $R_{2}$ is on the path between B and U and they are vertically aligned (Figure \ref{fig:coolpaths23} (a)). In this case, the distance between $R_{1}$ and $R_{2}$ is below $\vert \overline{TU}\vert $, so they do not respect the minimum distance $d$ between them.
  Hence, the minimum delay between two robots in case 1 is $\Delta t_1$.
  
  From Figure \ref{fig:coolpaths:1}, we have
  $\vert \wideparen{ED}\vert =  r \beta =  r (\pi - \alpha)$.
  For calculating the value of $\vert \overline{TE}\vert $ and $\vert \overline{DU}\vert $ from Figure \ref{fig:coolpaths23} (a), we observe that $\vert \overline{TE}\vert = \vert \overline{DU}\vert $ by symmetry. Thus,
  \if\shortVersion 1
    $
    \vert \overline{VT}\vert 
       = \frac{d}{2} - \frac{\vert \overline{ED}\vert }{2} 
       = \frac{d}{2} - r \cos \left( \frac{\alpha}{2} \right) 
    $
    From Figure \ref{fig:coolpaths23} (a) and (\ref{eq:EDv}).
  \else
    $$
    \begin{aligned}
    \vert \overline{VT}\vert 
      & = \frac{d}{2} - \frac{\vert \overline{ED}\vert }{2} 
        & [\text{From Figure \ref{fig:coolpaths23} (a)}]\\
      & = \frac{d}{2} - r \cos \left( \frac{\alpha}{2} \right) 
      & [\text{From (\ref{eq:EDv})}].\\
    \end{aligned}
    $$
  \fi
  As $\bigtriangleup \mathit{TVE}$ is right, $\vert \overline{TE}\vert = \frac{\vert \overline{VT}\vert }{\sin(\alpha / 2)}$. Thence,
  \if\shortVersion 1
    $
    \Delta t_1 = 
    \frac{r (\pi - \alpha) + 
    2 
    \frac{d/2 - r \cos(\alpha / 2)}{\sin(\alpha / 2)}
    }{v} = 
    \frac{r (\pi - \alpha)}{v}
    +
    \frac{d - 2 r \cos(\alpha / 2)}{v \sin(\alpha / 2)}
    $
  \else
    $$
    \Delta t_1 = 
    \frac{r (\pi - \alpha) + 
    2 
    \frac{d/2 - r \cos(\alpha / 2)}{\sin(\alpha / 2)}
    }{v} = 
    \frac{r (\pi - \alpha)}{v}
    +
    \frac{d - 2 r \cos(\alpha / 2)}{v \sin(\alpha / 2)}
    $$
  \fi
  and 
  \if\shortVersion 1
    $d_{o} = \max\left(d, v \Delta t_1\right)  
     = \max\left(d,r  (\pi - \alpha)  + \frac{d - 2  r  \cos\left(\alpha/2\right)}{\sin(\alpha/2)}\right).$
    Here
  \else
    $$d_{o} = \max\left(d, v \Delta t_1\right)  
     = \max\left(d,r  (\pi - \alpha)  + \frac{d - 2  r  \cos\left(\alpha/2\right)}{\sin(\alpha/2)}\right).$$
    Above
  \fi
  we used $\max$ function because the result of $v \Delta t_{1}$ can still be less than $d$, depending on $\alpha$, $r$ and $d$.
  
  In the case 2,  we need to check the minimum distance $d$ when two robots are on the circular part $\wideparen{ED}$ in Figure \ref{fig:coolpaths23} (b). From this figure, $\bigtriangleup CRH$ is isosceles, so $\gamma = 2 \arcsin\left( \frac{d}{2r}\right)$. Thus, to keep constant velocity, the delay between two robots in this case is
  \if\shortVersion 1 
    $
    \Delta t_2 = \frac{d_{o}}{v} = \frac{r  \gamma}{v} = \frac{2r}{v} \arcsin \left( \frac{d}{2r} \right).
    $
  \else
    $$
    \Delta t_2 = \frac{d_{o}}{v} = \frac{r  \gamma}{v} = \frac{2r}{v} \arcsin \left( \frac{d}{2r} \right).
    $$
  \fi
  Then, 
  \if\shortVersion 1
    $d_{o} = \max\left(d, v \Delta t_2\right) = \max\left(d, 2r\arcsin\left(\frac{d}{2r}\right)\right).$ 
  \else
    $$d_{o} = \max\left(d, v \Delta t_2\right) = \max\left(d, 2r\arcsin\left(\frac{d}{2r}\right)\right).$$ 
  \fi
  We used $\max$ function for a similar reason as exposed before. After rearranging, we have (\ref{eq:do}) and (\ref{eq:dprime}).

  For calculating the throughput $f_{t}(K,T)$ for $K$ lanes and a given time $T$ after the arrival of the first robot, we get the number of robots reaching the target region by the time $T$, then we use the Definition \ref{def:throughput2}. As we assume that the first robot of every lane begins at the same distance from the target, at time $T=0$ we have $K$ robots simultaneously arriving. Then, after $d_{o}/v$ units of time, we have $K$ more robots arriving and this keeps happening every $d_{o}/v$ units of time. Denote $N(K,T)$ the total number of robots that have arrived at the target region from $K$ lanes by time $T$. Thus, we have:
  \if\shortVersion 1
    $N(K,T) =K\left\lfloor \frac{T}{\frac{d_{0}}{v}} + 1\right\rfloor  = K\left\lfloor \frac{vT}{d_{o}} + 1\right\rfloor,$
  \else
    $$N(K,T) =K\left\lfloor \frac{T}{\frac{d_{0}}{v}} + 1\right\rfloor  = K\left\lfloor \frac{vT}{d_{o}} + 1\right\rfloor,$$
  \fi
  so, by Definition \ref{def:throughput2},
  \if\shortVersion 1
    $f_{t}(K,T) = 
      \frac{1}{T}\left(K\left\lfloor \frac{vT}{d_{o}} +1 \right\rfloor - 1\right).
    $
  \else
    $$f_{t}(K,T) = 
      \frac{1}{T}\left(K\left\lfloor \frac{vT}{d_{o}} +1 \right\rfloor - 1\right).
    $$
  \fi
  
  As for every number $x$, $\lfloor x \rfloor = x - frac(x)$ and $0 \le frac(x) < 1$, then distributing $\frac{1}{T}$ for each term, we get
  \if\shortVersion 1
    $
        f_{t}(K) 
          = \lim_{T \to \infty} f_{t}(K,T)
          = \frac{Kv }{d_{o}}.
    $  
  \else
    $$
      \begin{aligned}
        f_{t}(K) &= \lim_{T \to \infty} f_{t}(K,T)
          = \lim_{T \to \infty} \frac{1}{T}\left(K\left( \frac{vT}{d_{o}} + 1 \right) - frac\left( \frac{vT}{d_{o}} + 1 \right) - 1\right)\\
          &= \lim_{T \to \infty} \frac{K}{T}\left( \frac{vT}{d_{o}} + 1 \right)
          = \frac{Kv }{d_{o}}.
      \end{aligned}
    $$ 
  \fi
\end{proof}

\begin{figure}
  \centering
  \includegraphics[width=0.5\linewidth]{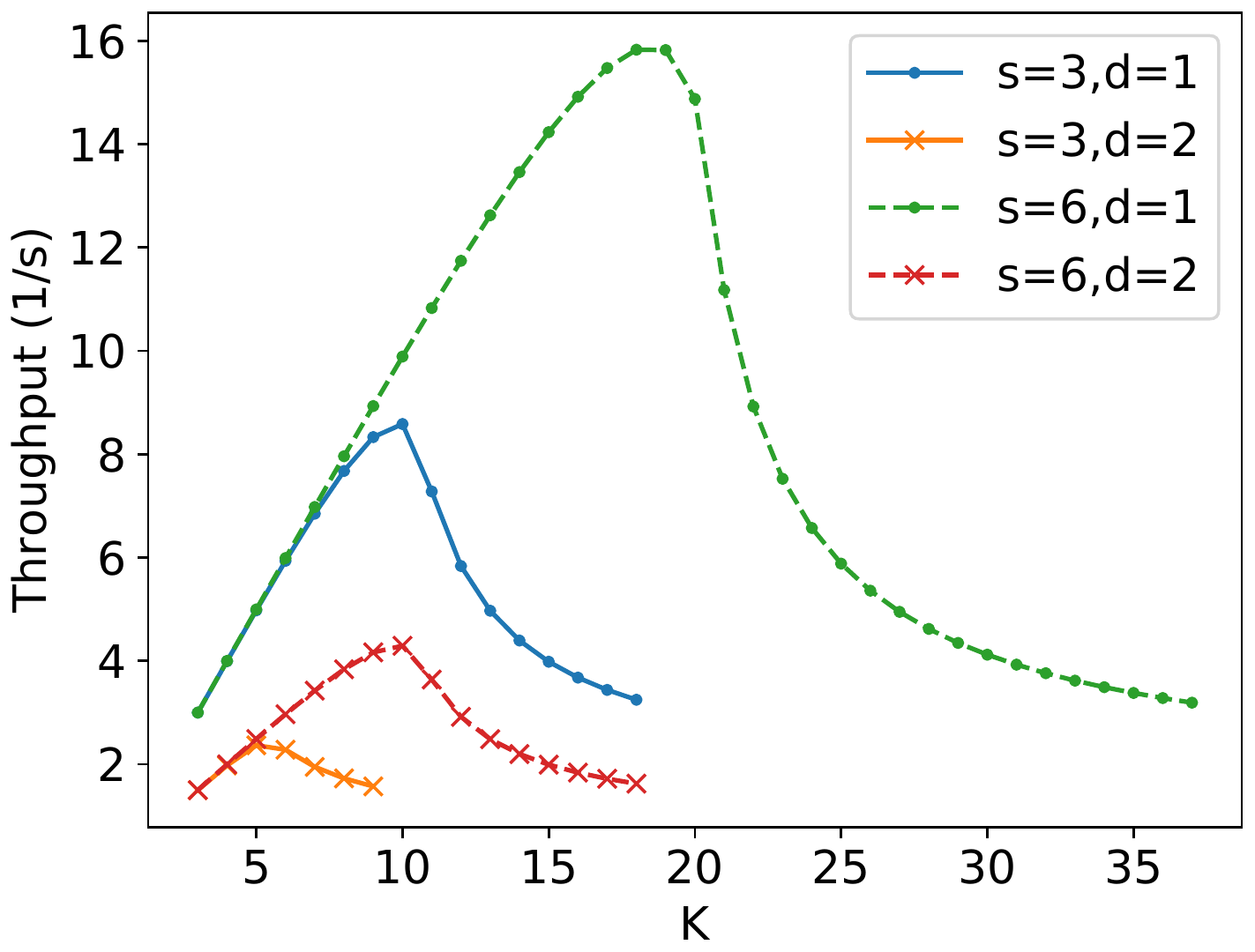}
  \caption{Plotting of the asymptotic throughput of the touch and run strategy (given by (\ref{eq:throughputhitandrunlimit})) for some values of $s$ and $d$, in metres, and $v = 1$ m/s, for the interval of values for $K$ obtained by (\ref{eq:Kbounds}). }
  \label{fig:grafKthroughput}
\end{figure}

Figure \ref{fig:grafKthroughput} presents examples of (\ref{eq:throughputhitandrunlimit}) for some parameters. Observe that the maximum throughput for different values of $s, d$ and $v$ can be found by linear search in the interval obtained by (\ref{eq:Kbounds}).

\subsubsection{Comparison of the strategies}
\label{sec:subseccomparison}

The parallel lanes strategy has the lowest of the limits in relation to $u = \frac{s}{d}$, the ratio between the radius of the target region and the minimum distance between the robots. However, its asymptotic value is still higher than the minimum possible asymptotic throughput for hexagonal packing just for some values of $u$. In this section, we will make explicit the dependence on the argument $u$ in every throughput function we defined previously to compare them with respect to this ratio. Let $f_{p}(u) = \displaystyle \lim_{T\to \infty} f_{p}(T,u)$ and $f_{h}^{min}(u)$ be the asymptotic throughput for the parallel lanes strategy and the lower asymptotic throughput for the hexagonal packing strategy for a ratio $u$, respectively. Hence, by Proposition \ref{methodology:bigtarget:independent_straight:proof},
\if\shortVersion 1
  $ f_{p}(u) = \lfloor 2u + 1 \rfloor \frac{v}{d}, $
\else
  $$ f_{p}(u) = \lfloor 2u + 1 \rfloor \frac{v}{d}, $$
\fi
and, by (\ref{eq:hexthroughputbounds}) using $\theta = \pi/6$ as it minimises the lower bound of $\displaystyle \lim_{T\to \infty} f(T,\theta)$ in Proposition \ref{prop:hexthroughputbounds},
\if\shortVersion 1
  $f_{h}^{min}(u)=\frac{2 }{\sqrt{3}}  \left(2u  - 1\right)\frac{v}{d}.$
\else
  $$f_{h}^{min}(u)=\frac{2 }{\sqrt{3}}  \left(2u  - 1\right)\frac{v}{d}.$$
\fi

\begin{proposition}
   There are some $u < \frac{\sqrt{3} + 2 }{4-2 \sqrt{3} }$ such that $f_{p}(u) > f_{h}^{min}(u)$, and for every $u \ge \frac{\sqrt{3} + 2 }{4-2 \sqrt{3} }, f_{p}(u) \le f_{h}^{min}(u)$.
\end{proposition}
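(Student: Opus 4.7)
The plan is to reduce both assertions to comparing the closed-form expressions for $f_p(u)$ and $f_h^{min}(u)$ obtained earlier, then handle the two directions separately. Dividing by $v/d$, the inequality $f_p(u) \leq f_h^{min}(u)$ is equivalent to
\begin{equation*}
\lfloor 2u + 1 \rfloor \;\leq\; \frac{2}{\sqrt{3}}\,(2u - 1).
\end{equation*}
This is the single inequality I would work with throughout.

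For the ``for every $u \geq u_0$'' direction, I would first establish the stronger linear inequality $2u + 1 \leq \frac{2}{\sqrt{3}}(2u-1)$ and show it is equivalent, by straightforward algebraic manipulation (multiply by $\sqrt{3}$, collect the $u$ terms), to $u \geq \frac{\sqrt{3}+2}{4-2\sqrt{3}}$. Since $\lfloor 2u + 1 \rfloor \leq 2u + 1$ always holds, this immediately yields the desired bound on the floor expression whenever $u \geq \frac{\sqrt{3}+2}{4-2\sqrt{3}}$, and hence $f_p(u) \leq f_h^{min}(u)$ on this range.

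For the existential direction, I would exhibit an explicit witness rather than argue abstractly. Any small $u$ in the valid range works; for instance at $u = 1/2$ we have $f_p(u) = 2\,v/d$ while $f_h^{min}(u) = 0$, so trivially $f_p(u) > f_h^{min}(u)$, and clearly $1/2 < \frac{\sqrt{3}+2}{4-2\sqrt{3}}$ (the latter equals $(7+4\sqrt{3})/2 \approx 6.96$ after rationalising the denominator). If a less degenerate example is preferred, $u = 1$ gives $f_p(u) = 3\,v/d$ versus $f_h^{min}(u) = (2/\sqrt{3})\,v/d \approx 1.155\,v/d$, again with $u$ well below the threshold.

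No step here looks genuinely hard: both pieces reduce to elementary comparisons between a step function and an affine function, and the threshold value is precisely the crossing point of the affine upper envelope $2u+1$ with $\frac{2}{\sqrt{3}}(2u-1)$. The one place to be careful is making the inequality direction sharp for the first claim: using $\lfloor 2u+1 \rfloor \leq 2u+1$ only gives an upper bound on the floor, so witness-by-example is the cleanest way to finish the strict-inequality clause, which is why I would avoid trying to invert the floor bound and simply plug in a small value of $u$.
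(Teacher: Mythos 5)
Your proposal is correct and follows essentially the same route as the paper: the universal clause is handled by the chain $f_{p}(u) \le (2u+1)\frac{v}{d} \le f_{h}^{min}(u)$, using the algebraic equivalence $(2u+1)\frac{v}{d} \le \frac{2}{\sqrt{3}}(2u-1)\frac{v}{d} \Leftrightarrow u \ge \frac{\sqrt{3}+2}{4-2\sqrt{3}}$ together with $\lfloor x\rfloor \le x$, exactly as in the paper. The only cosmetic difference is in the existential clause, where the paper argues abstractly that $f_{p}(u)=(2u+1)\frac{v}{d}$ whenever $2u+1\in\mathbb{Z}$ (and such $u$ exist below the threshold), while you exhibit the concrete witness $u=1$; both are valid.
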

\begin{proof}
  For any $u <   \frac{\sqrt{3} + 2 }{4-2 \sqrt{3} }$, $(2u + 1)\frac{v}{d} > f_{h}^{min}(u)$, due to
\if\shortVersion 1
  \begin{equation}
      (2 u + 1) \frac{v}{d} > \frac{2 }{\sqrt{3}}  \left(2u  - 1\right)\frac{v}{d} \LR 
      u < \frac{-1 - \frac{2 }{\sqrt{3}}}{2   - \frac{4 }{\sqrt{3}}} 
      = \frac{\sqrt{3} + 2 }{4-2 \sqrt{3} }. 
    \label{eq:equivuhex1}
  \end{equation}
\else
  \begin{equation}
    \begin{aligned}
      &\phantom{\LR\ } (2 u + 1) \frac{v}{d} > \frac{2 }{\sqrt{3}}  \left(2u  - 1\right)\frac{v}{d} \LR 
      2 u + 1  > \frac{2 }{\sqrt{3}}  \left(2u  - 1\right) \\ 
      &\LR 2 u  - \frac{4 }{\sqrt{3}} u > -1 - \frac{2 }{\sqrt{3}}   \LR 
      u < \frac{-1 - \frac{2 }{\sqrt{3}}}{2   - \frac{4 }{\sqrt{3}}} 
\ifexpandexplanation
        = \frac{-\sqrt{3} - 2 }{2\sqrt{3}   - 4 } 
\fi
      = \frac{\sqrt{3} + 2 }{4-2 \sqrt{3} }. 
    \end{aligned}
    \label{eq:equivuhex1}
  \end{equation}
\fi
We have that $f_{p}(u) = (2u + 1)\frac{v}{d}$ when $2u + 1 \in \Zeta$. Also, as $u < \frac{\sqrt{3} + 2 }{4-2 \sqrt{3} } < 7$, $u$ can be a number satisfying $(2u + 1) = \lfloor2u + 1\rfloor$. Thus, there are some values of $u$ such that $f_{p}(u) = \lfloor2u + 1\rfloor\frac{v}{d} > f_{h}^{min}(u)$. 

From the equivalence in (\ref{eq:equivuhex1}) and because for any $x$, $\lfloor x \rfloor \le x$, it follows that for any $u \ge \frac{\sqrt{3} + 2 }{4-2 \sqrt{3} },  f_{p}(u) \le (2 u + 1) \frac{v}{d}  \le f_{h}^{min}(u).$   
\end{proof}

Figure \ref{fig:fpfhminfhmax} shows an example of $f_{h}^{min}(u)$, $f_{p}(u)$ and the maximum possible asymptotic throughput of the hexagonal packing $f_{h}^{max}(u) = \frac{2 }{\sqrt{3}}  \left(2u  + 1\right)\frac{v}{d}$ for $u \in [0,10]$. Observe that, from the left side of $u=7$, $f_{p}(u)$ has some values above $f_{h}^{min}(u)$ even though they are below $f_{h}^{max}(u)$ for every $u$.

\begin{figure}
  \centering  
  \begin{minipage}[t]{0.49\linewidth}
    \includegraphics[width=\linewidth]{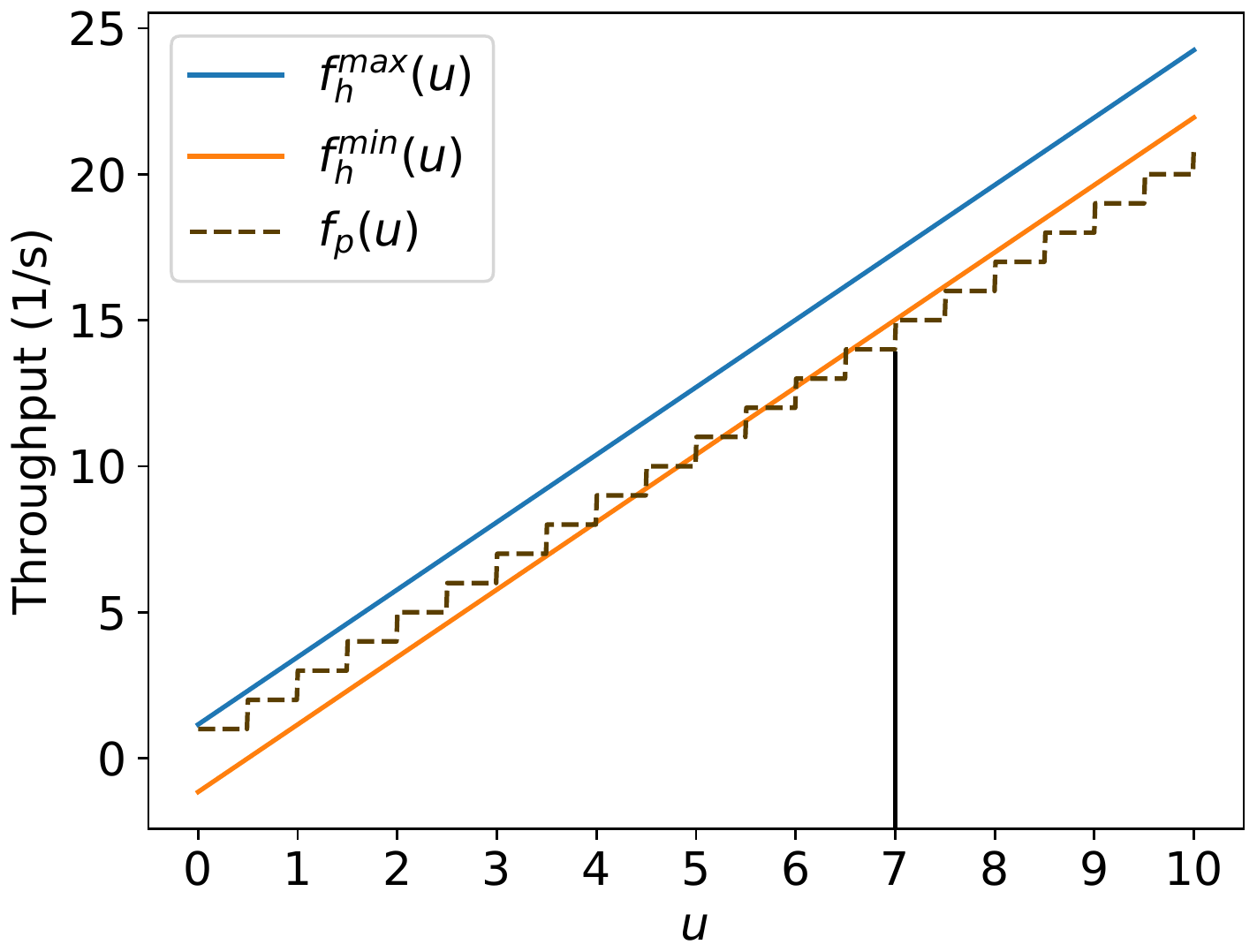}
    \caption{Example of $u$ values such that $f_{h}^{max}(u) > f_{p}(u)$ for $v = 1$ m/s and $d = 1$ m.}
    \label{fig:fpfhminfhmax}
  \end{minipage}\,\,
  \begin{minipage}[t]{0.49\linewidth}  
    \centering 
    \includegraphics[width=\linewidth]{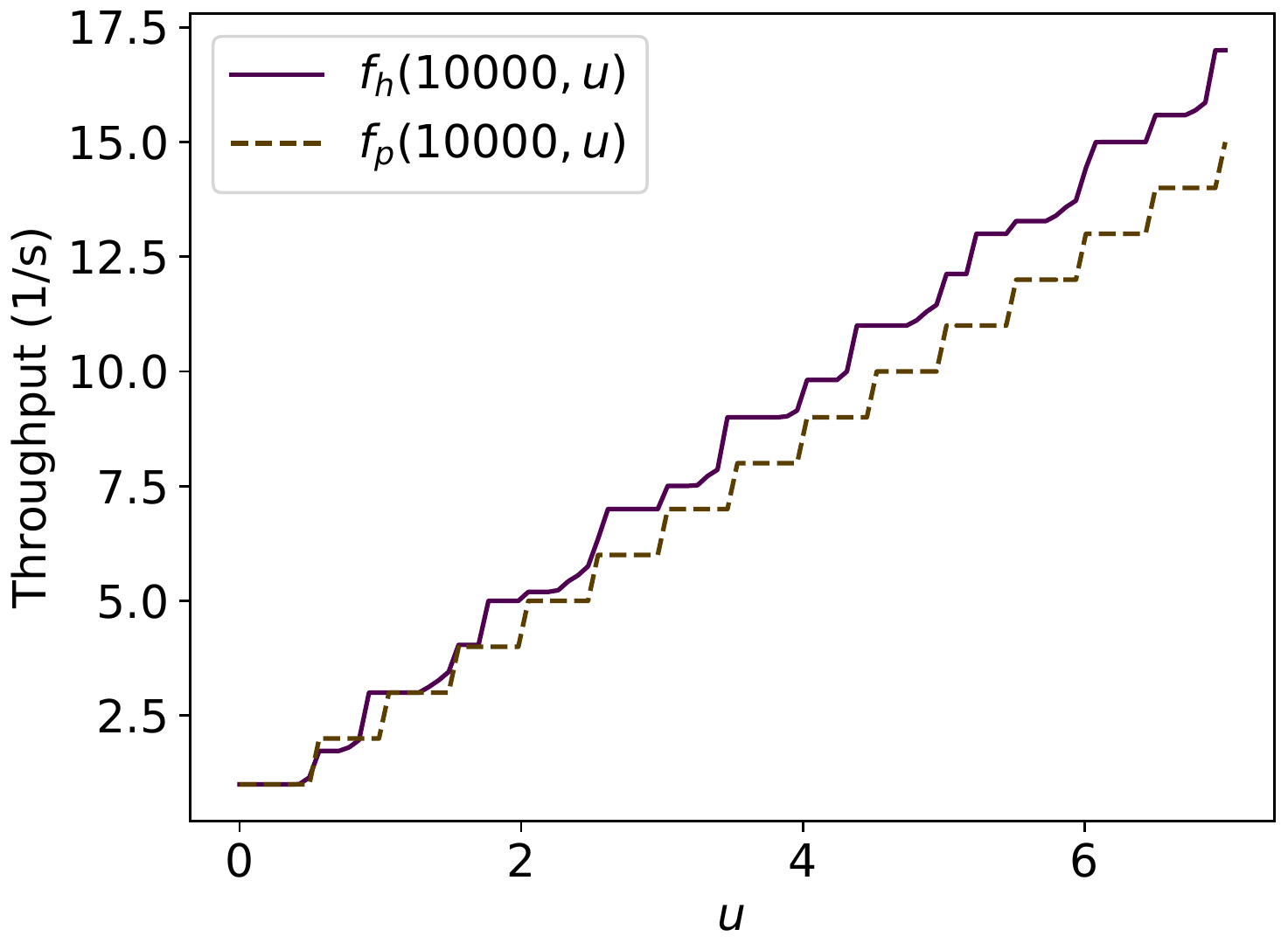}
  \caption{Comparison of $f_{p}(T,u)$ and $f_{h}(T,u)$ for $u \in [0,7]$, $T=10000$ s, $v = 1$ m/s and $d = 1$ m.}
  \label{fig:fhfpLarge}
  \end{minipage}
\end{figure}

Because of this proposition, we are certain that for values of $u \ge \frac{\sqrt{3} + 2 }{4-2 \sqrt{3} } \approx 7$ the hexagonal packing strategy at the limit will have higher throughput than parallel lanes. However, for values $u < \frac{\sqrt{3} + 2 }{4-2 \sqrt{3}}$, there is the possibility of the parallel lanes strategy being better than hexagonal packing. As we do not have an exact asymptotic throughput for the hexagonal packing strategy for a given angle $\theta$, we can numerically find the best $\theta$ using large values of $T$ on (\ref{eq:hexthroughput}); then, after choosing $\theta$, we calculate the numerical approximation of the asymptotic throughput using this fixed $\theta$ and those $T$ values. This result can be compared with the throughput for the same large values of $T$ for the parallel lanes strategy using (\ref{eq:parallelT}). Furthermore, in a scenario with the target region only being accessed by a corridor with a finite height, the maximum time $T$ can be inferred by its size, then the exact throughput for this specific value can be calculated by (\ref{eq:hexthroughput}) and (\ref{eq:parallelT}) as stated before, but using only this specific value $T$, instead of a set of large values, to decide which strategy is more suitable.

Let $f_{h}(T,\theta,u)$ and $f_{p}(T,u)$ be (\ref{eq:hexthroughput}) and (\ref{eq:parallelT}) making explicit the parameter $u$. Let $\theta^{*}$ be the outcome from the search of the  $\theta$ which  maximises $f_{h}(T,\theta,u)$ by numeric approximation. Thus, we define $f_{h}(T,u) = f_{h}(T,\theta^{*},u)$.  Figure \ref{fig:fhfpLarge} illustrates the result of the procedure mentioned above for $T = 10000$ for 100 equally spaced values of $u\in[0,7]$ and seeking the maximum throughput using 1000 evenly spaced points between $\lbrack 0,\pi/3 \rparen$ to find the best $\theta$ for the hexagonal packing strategy. Then, we compare it with the result for $\theta=\pi/6$ as we explained previously when we discussed Figures \ref{fig:plottheta1}-\ref{fig:plottheta4}. Observe that for $u \in [0.5,0.9]$ there is some values for which $f_{h}(10000,u) < f_{p}(10000,u)$. Figure \ref{fig:fhfpZoom} shows this by 100 equally spaced values of $u \in [0.4,1]$ for different values of $v$.  This occurs because, for such values of $u$, using square packing fits more robots inside the circle over the time than hexagonal packing, as we will show in Section \ref{sec:hexparexperiments}.

\begin{figure}[t!]
  \centering
  \subfloat[$v=0.1$ m/s]{\includegraphics[width=0.49\linewidth]{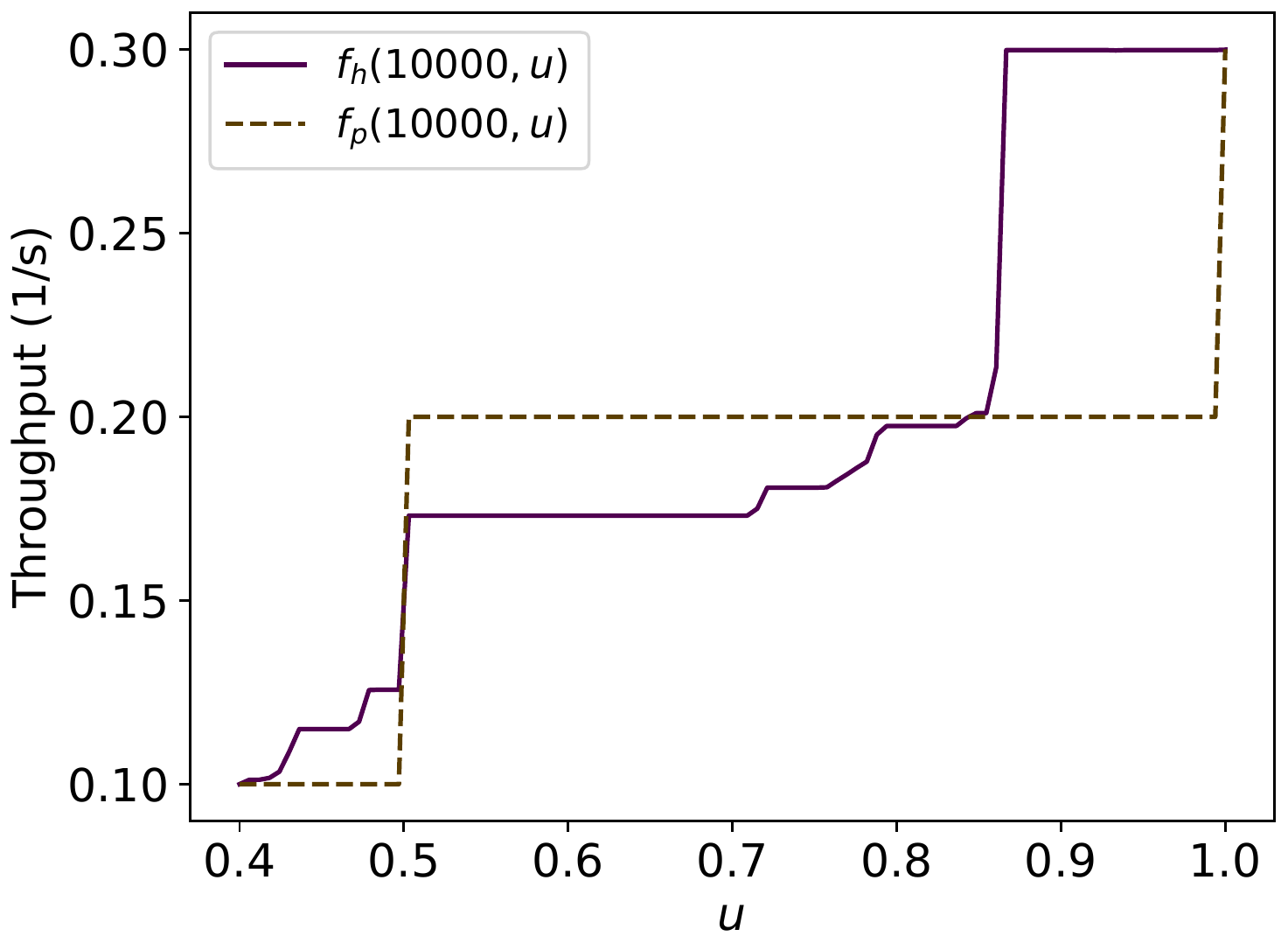}}
  \subfloat[$v=1$ m/s]{\includegraphics[width=0.49\linewidth]{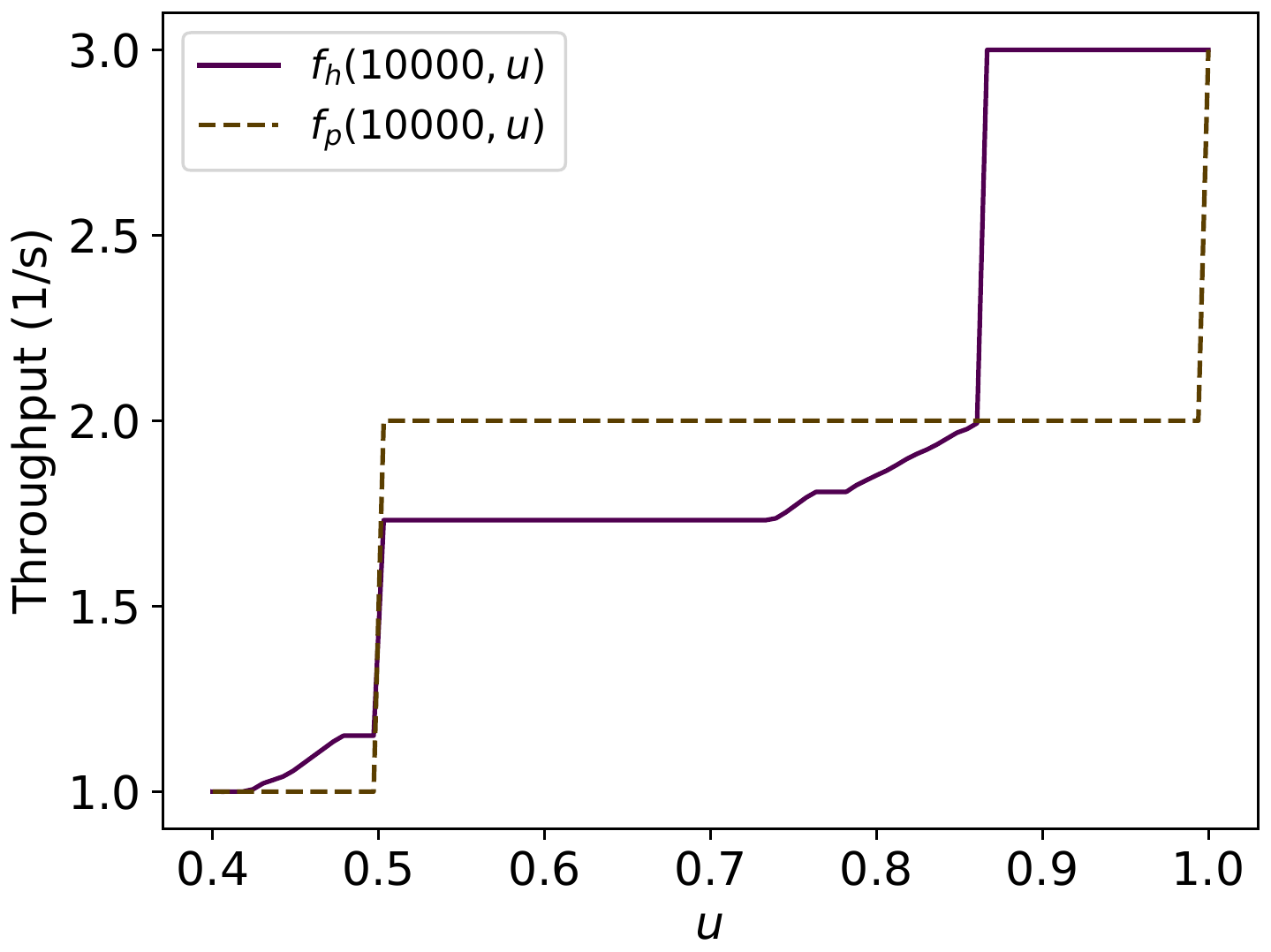}}
  \caption{Comparison of $f_{p}$ and $f_{h}$ for $u \in [0.4,1]$, $T=10000$ s, $v \in \{0.1,1\}$ m/s and $d = 1$ m. The difference in the lines of $f_{h}$ is due to $\theta^{*}$ being different for each value of $v$.}
  \label{fig:fhfpZoom}
\end{figure}

Additionally, the asymptotic throughput of the touch and run strategy, $f_{t}(u) = \displaystyle \lim_{T \to \infty} f_{t}(T,u)$, for  higher values of $u$ is greater than the maximum possible asymptotic value of the hexagonal packing $f_{h}^{max}(u) =\frac{2 }{\sqrt{3}}  \left(2u  + 1\right)\frac{v}{d}$, as shown later by numeric experimentation. Before presenting this result, we need to verify which values of $u$ are allowed by $f_{t}(u)$  and to express the asymptotic throughput of the touch and run strategy from Proposition \ref{prop:throughputsdk} in terms of the ratio $u$.

From Proposition \ref{prop:Kboundsrk} we have that the possible number of lanes $K \in \{3, \dots, K(u)\}$ with 
$
    K (u) = \Bigl\lfloor \frac{\pi}{\arcsin \left( \frac{1}{2u} \right)}\Bigr\rfloor 
$. Consequently,
  $f_{t}(u)$ is only allowed for any $u \ge \frac{1}{\sqrt{3}}.$ In fact,
  by Proposition \ref{prop:Kboundsrk}, $K \ge 3$, then $\frac{\pi}{\arcsin\left(\frac{1}{2u}\right)} \ge \Bigl\lfloor\frac{\pi}{\arcsin\left(\frac{1}{2u}\right)}\Bigr\rfloor \ge 3 \Rightarrow 
      \frac{\pi}{3} \ge \arcsin\left(\frac{1}{2u}\right) \Leftrightarrow
      \sin\left(\frac{\pi}{3}\right) \ge \frac{1}{2u} \Leftrightarrow
      \frac{\sqrt{3}}{2} \ge \frac{1}{2u} \Leftrightarrow
    u \ge \frac{1}{\sqrt{3}}.$

We show below the algebraic manipulations for  expressing the asymptotic throughput of the touch and run strategy from Proposition \ref{prop:throughputsdk} in terms of the ratio $u$. The asymptotic throughput expressed in (\ref{eq:throughputhitandrunlimit}) is
\ifexpandexplanation
  \begin{equation}
    \begin{aligned}
        \frac{Kv }{d_{o}} 
           & 
      = \frac{K}{\frac{d_{o}}{d}}\frac{v}{d} 
           & 
      = \frac{K}{\frac{\max(d,d')}{d}}\frac{v}{d} 
          & [\text{(\ref{eq:do})}] \\
      = \frac{K}{\max(1,\frac{d'}{d})}\frac{v}{d},
     \end{aligned}
     \label{eq:tempthrcurve}
  \end{equation}
\else
  \begin{equation}
        \frac{Kv }{d_{o}} 
      = \frac{K}{\frac{d_{o}}{d}}\frac{v}{d} 
      = \frac{K}{\frac{\max(d,d')}{d}}\frac{v}{d} 
      = \frac{K}{\max(1,\frac{d'}{d})}\frac{v}{d},
     \label{eq:tempthrcurve}
  \end{equation}
\fi
for an integer $K \in \{3, \dots, K(u)\}$. From (\ref{eq:ak}), $\alpha = \frac{2 \pi}{K}$, and, from (\ref{eq:relationangles}),
\if\shortVersion 1
  $
      \frac{r}{d} =  \frac{\frac{s}{d}   \sin(\alpha / 2) - \frac{d}{2d}}{1 - \sin(\alpha / 2)} = \frac{u   \sin(\frac{\pi}{K}) - \frac{1}{2}}{1 - \sin(\frac{\pi}{K})} \defeq r(u,K),
  $
\else
  $$
    \begin{aligned}
      \frac{r}{d} =  \frac{\frac{s}{d}   \sin(\alpha / 2) - \frac{d}{2d}}{1 - \sin(\alpha / 2)} = \frac{u   \sin(\frac{\pi}{K}) - \frac{1}{2}}{1 - \sin(\frac{\pi}{K})} \defeq r(u,K),
    \end{aligned}
  $$
\fi
resulting
\begin{equation}
  \begin{aligned}
    \frac{d'}{d} 
      &= 
      \begin{cases}
         \frac{r}{d}  (\pi - \alpha)  + \frac{d - 2  r  \cos(\alpha / 2)}  {d\sin(\alpha / 2)}, & \text{ if } 2  r  \cos(\alpha / 2) < d,\\
        2  \frac{r}{d}  \arcsin\left( \frac{d}{2 r}\right), & \text{ otherwise, }
      \end{cases}
       & [\text{by (\ref{eq:dprime})}]
      \\
      &=  
      \begin{cases}
        \frac{r}{d}  \left(\pi - \frac{2 \pi}{K}\right)  + \frac{1 - 2  \frac{r}{d}  \cos(\frac{\pi}{K})}  {\sin(\frac{\pi}{K})}, & \text{ if } 2  \frac{r}{d}  \cos(\frac{ \pi}{K} ) < 1,\\
        2  \frac{r}{d}  \arcsin\left( \left(2 \frac{r}{d}\right)^{-1}\right), & \text{ otherwise, }
      \end{cases} 
      \\ &=  
      \begin{cases}
        r(u,K)  \left(\pi - \frac{2 \pi}{K}\right)  + \frac{1 - 2  r(u,K)  \cos(\frac{\pi}{K})}  {\sin(\frac{\pi}{K})}, \\ 
           \hspace{3.3cm} \text{ if } 2  r(u,K)  \cos(\frac{ \pi}{K} ) < 1,\\
        2  r(u,K)  \arcsin\left( \frac{1}{2 r(u,K)}\right),  \text{ otherwise, }
      \end{cases}
      \\ &\defeq d'(u,K).  
  \end{aligned}
  \label{eq:dprimed}
\end{equation}
Thus, from (\ref{eq:tempthrcurve}) and (\ref{eq:dprimed}), 
$  f_{t}(u,K) = \frac{K}{\max(1,d'(u,K))}\frac{v}{d}  $ and the upper throughput for the touch and run strategy in terms of $u$ is given by 
\if\shortVersion 1
  $
      f_{t}(u) =  \max_{K \in \{3,\dots,K(u)\}} f_{t}(u,K) = \max_{K \in \{3,\dots,K(u)\}} \frac{K}{\max(1,d'(u,K))}\frac{v}{d} 
      = \frac{K^{*}(u)}{\max(1,d'(u,K^{*}(u)))}\frac{v}{d},
  $ 
\else
  $$ 
    \begin{aligned} 
      f_{t}(u) &=  \max_{K \in \{3,\dots,K(u)\}} f_{t}(u,K) = \max_{K \in \{3,\dots,K(u)\}} \frac{K}{\max(1,d'(u,K))}\frac{v}{d} \\
      &= \frac{K^{*}(u)}{\max(1,d'(u,K^{*}(u)))}\frac{v}{d},
    \end{aligned}
  $$ 
\fi
for some function $K^{*}(u)$ that finds this maximum in $\{3,\dots,K(u)\}$. Similarly, for a fixed maximum time $T$, we have by (\ref{eq:throughputhitandruntime}) $f_{t}(T,u) = \displaystyle \max_{K \in \{3,\dots,K(u)\}} f_{t}(K,T,u)$. 

Figure \ref{fig:numhigherftfh} presents a comparison of the asymptotic throughput $f_{t}(u)$  and the lower and upper values of the asymptotic throughput of the hexagonal packing $f_{h}^{min}(u)$ and $f_{h}^{max}(u)$   for values of $u$ ranging from $1/\sqrt{3}$ to 1000. Observe that the asymptotic throughput of the touch and run strategy is greater than the  maximum possible asymptotic throughput of the hexagonal packing strategy for almost all values of $u$, except for some in $(1.12,1.25)$ (Figure \ref{fig:numhigherftfh} (b)).

\begin{figure}[t!]
  \centering 
  \subfloat[\text{$u \in [1/\sqrt{3}, 1000]$}]{\includegraphics[width=0.49\linewidth]{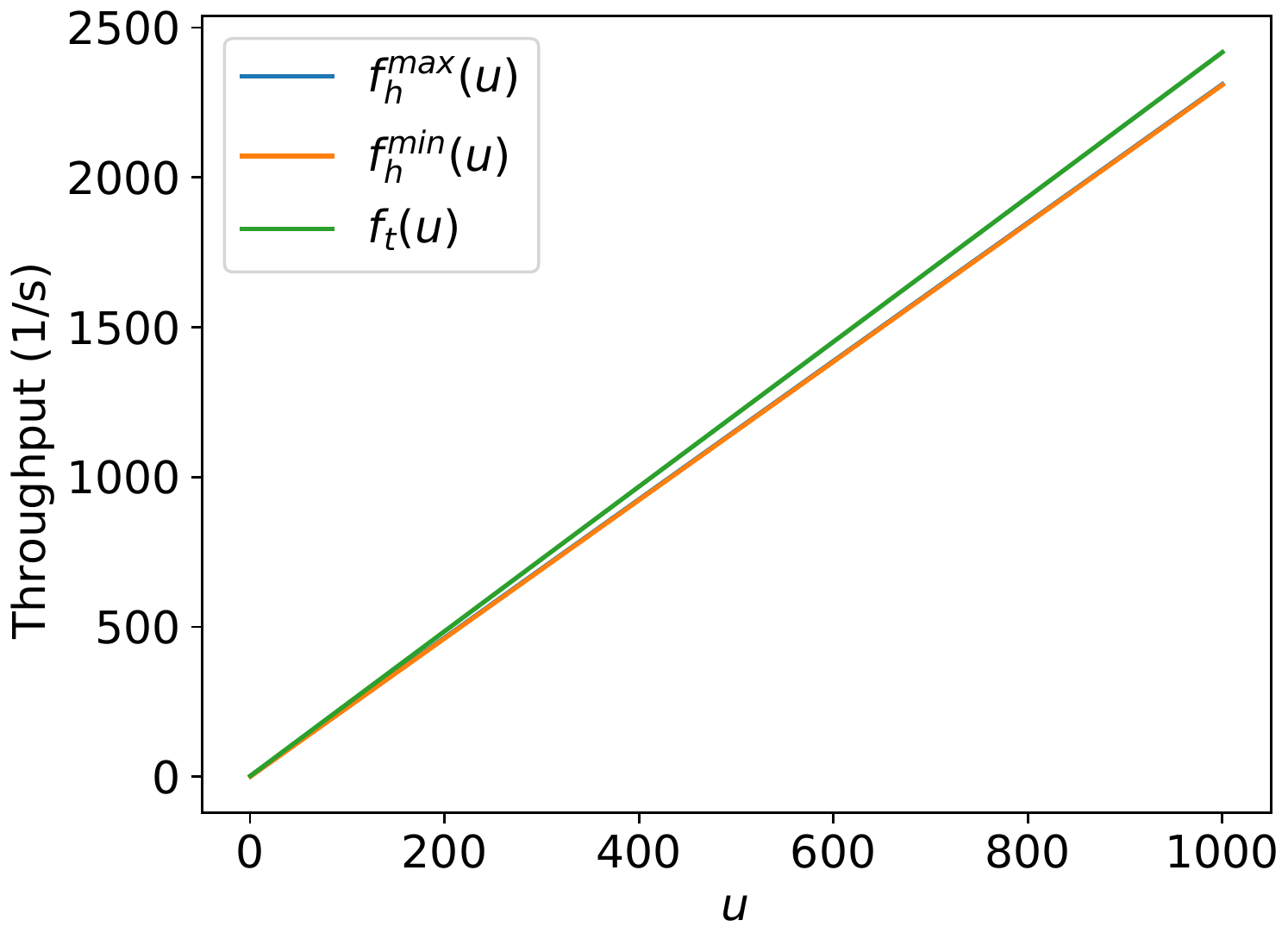}}
  \subfloat[\text{$u \in [1/\sqrt{3}, 1.25]$}]{\includegraphics[width=0.455\linewidth]{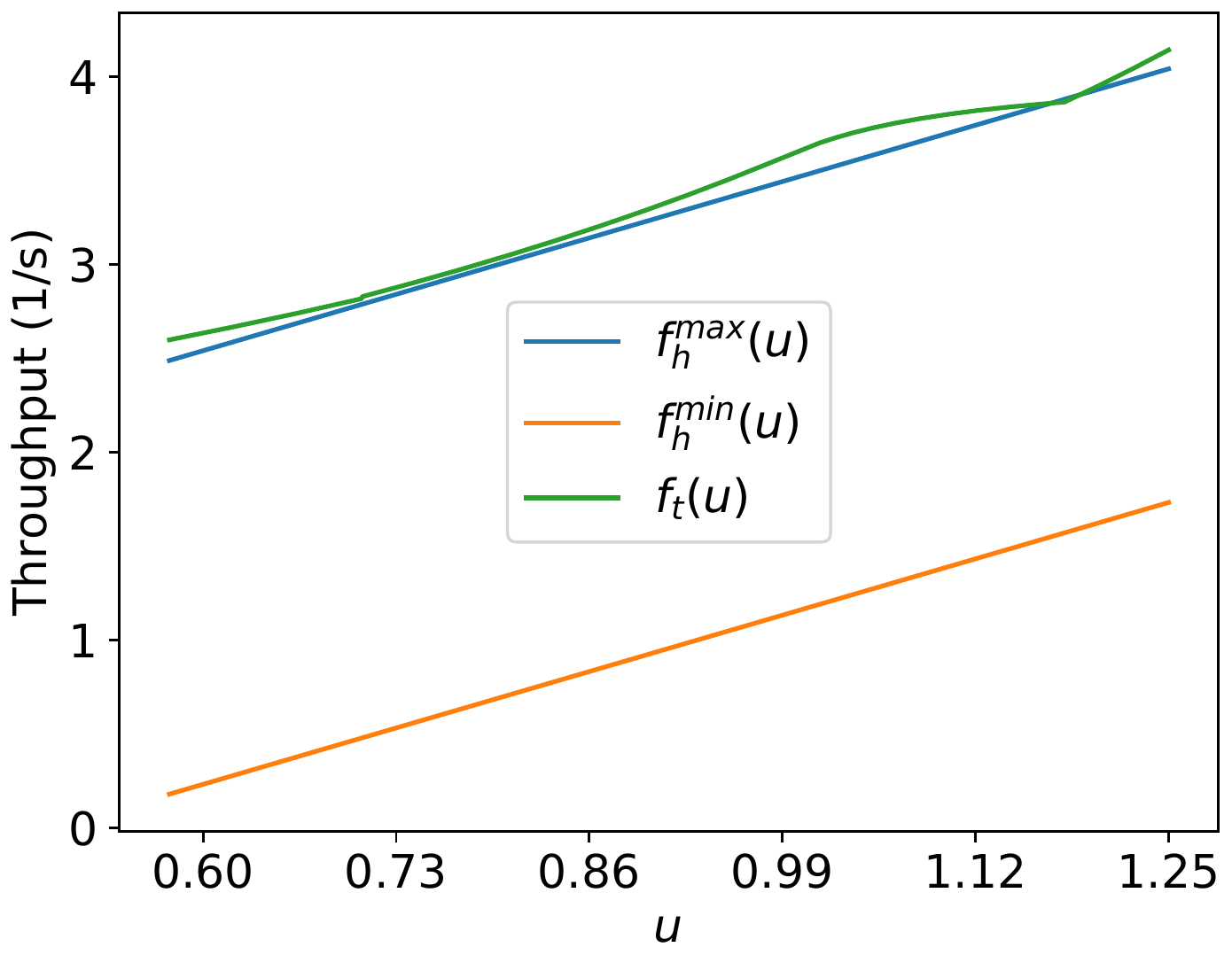}}
  \caption{Graph varying $u$ for $f_{h}^{min}(u)$, $f_{h}^{max}(u)$ and $f_{t}(u)$ with $v = 1$ m/s and $d = 1$ m for different intervals of $u$. In (a), $f_{h}^{min}(u)$ and $f_{h}^{max}(u)$ are almost overlapped. In (b), $f_{t}(u) > f_{h}^{max}(u)$ for all $u$, except in an interval within (1.12,1.25).} 
  \label{fig:numhigherftfh}
\end{figure}

\begin{figure}[t!]
  \centering
  \includegraphics[width=0.49\linewidth]{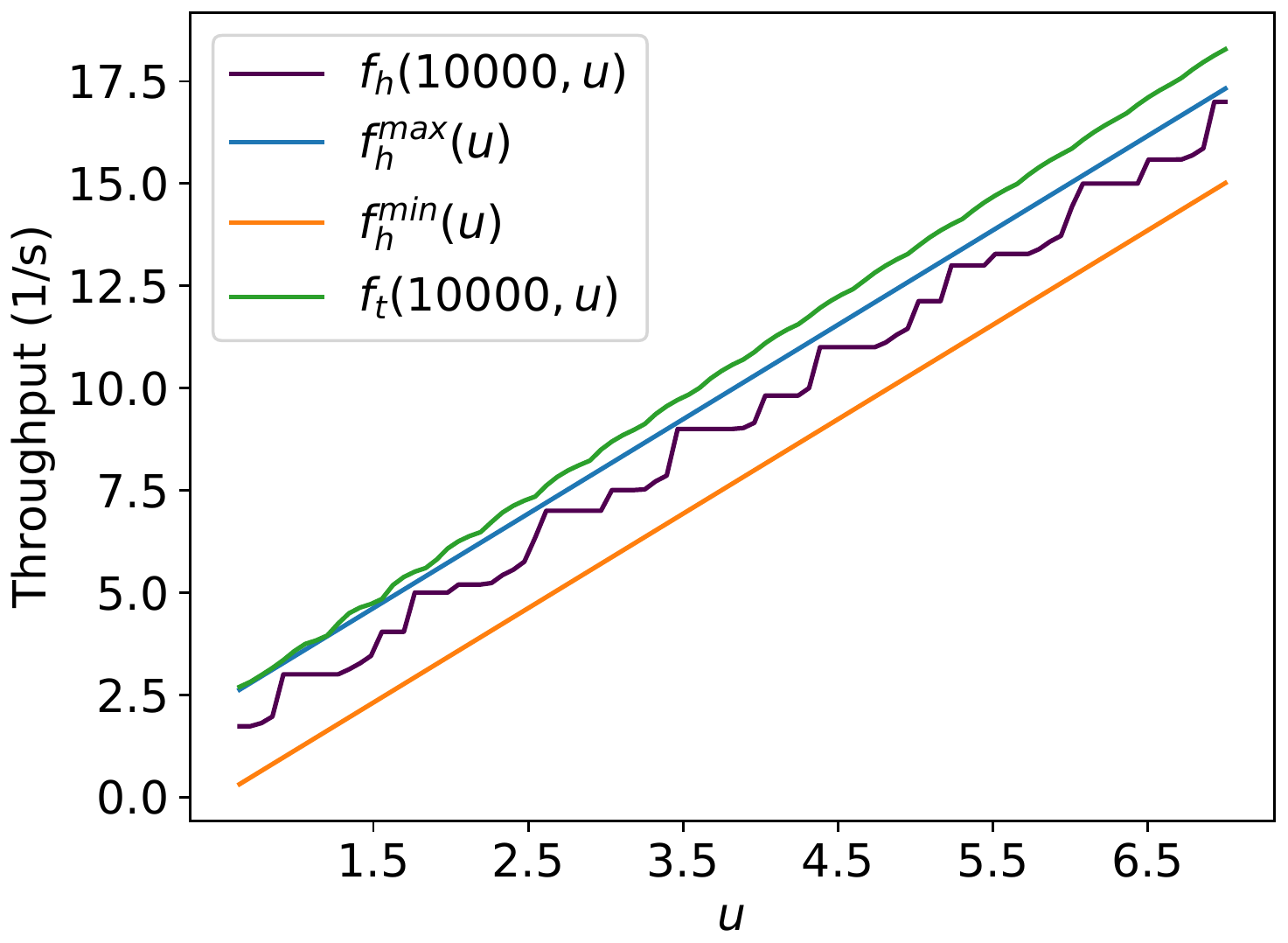}
  \caption{Example for $T=10000$ s, $v = 1$ m/s, $d = 1$ m and 100 equally spaced points of $u \in [1/\sqrt{3},7]$. We have $f_{h}(T,u) < f_{t}(T,u)$, albeit $f_{h}^{max}(u) \ge f_{t}(T,u)$ for a few values of $u < 1.5$.}
  \label{fig:ftbelowfh}
\end{figure}

Additionally, we performed numerical experiments for $f_{t}(T,u)$ and $f_{h}(T,u)$ using fixed time $T = 10000$ in (\ref{eq:throughputhitandruntime}), (\ref{eq:hexthroughput}) and $u \in [1/\sqrt{3}, 7]$. For finding $\theta^{*}$, we use the same procedure described before to compare $f_{h}(T,u)$ and $f_{p}(T,u)$. 
Figure \ref{fig:ftbelowfh} shows the result. It suggests the touch and run strategy has higher throughput than hexagonal packing for large values of $T$. Although hexagonal packing has lower asymptotic throughput than the touch and run for almost all $u$ values,  it is suitable for $u < \frac{1}{\sqrt{3}}$ whenever it surpasses the parallel lanes strategy.

For real-world applications and assuming the robots are in constant velocity and distance between other robots, the hexagonal packing strategy is adequate for a situation where the target is placed in a constrained region, for example, walls in north and south positions. In this example, the number of lanes used in the touch and run strategy would be reduced because of the surrounding walls. In an unconstrained scenario, if the ratio $u$ and the maximum time $T$ are known, the throughput value of the hexagonal packing strategy from (\ref{eq:hexthroughput}) (for the $\theta$ which maximises it) can be compared with the throughput of the touch and run strategy from (\ref{eq:throughputhitandruntime}) (for $K^{*}(u)$) to choose which strategy should be applied. However, assuming constant velocity and distance between robots in a swarm is not practical, because other robots influence the movement in the environment. Hence, we used these strategies as inspiration to propose novel algorithms based on potential fields for robotic swarms in \citep{arxivAlgorithms}.

\section{Experiments and Results}
\label{sec:experimentresults}

In order to evaluate our approach, we executed several simulations using the Stage robot simulator \citep{PlayerStage}  for testing the equations presented in the theoretical section (Section \ref{sec:theoreticalresults}). 
Hyperlinks to the video of executions are available in the captions of each corresponding figure. They are in real-time so that the reader can compare the time and screenshots presented in the figures in this section with those in the supplied videos.\footnote{The source codes of each experimented strategy are in \url{https://github.com/yuri-tavares/swarm-strategies}.}

We ran experiments for all strategies considering $s > 0$. We could not make experiments for point-like targets, because a point with a fixed value is nearly impossible to be reached by a moving robot in Stage computer simulations due to the necessity of exact synchronization of the sampling frequency of positions made by the simulator and the robot velocity. Hence, we must use a circular area with a radius $s>0$ around the target to identify that a robot reached it. After presenting the experiments and results for all strategies for circular target region with radius $s > 0$, we compare them experimentally considering the analysis previously discussed in Section \ref{sec:subseccomparison}.

We saved for each robot its arrival time in milliseconds since the start of the experiment.  We subtracted the arrival time of every robot by the arrival time of the first robot. By doing so, the experiment is assumed to begin in time $T=0$ without worrying about the initial inertia. After this, we registered the number of robots ($N$) for each time value ($T$).

To alleviate some of the numerical errors caused by the floating-point representation, we used rounding on the 13th decimal place before using floor and ceiling functions on the equations presented. For example, in nowadays computers, by using double variables in C or float in Python, if you divide 9.6 by 1.6 the result is  5.999999999999999 for 15 decimal places formatting, but it should be 6. If we applied the floor function on the computer result it would give us 5 instead of the expected 6.

For all experiments in this section, the robots are distant from each other by $d = 1$ m. In the figures of this section, black robots indicate they reached the target, and red did not.  Also, we did not repeat the experiments for the points on the graph plotting of this section because the velocity and initial positions are constant, so there is no random aspect, and we obtain the same results for different runs for that particular point.

\subsection{Compact lanes}
For compact lanes simulations, we used $v=1$ m/s, and the first robot to reach the target is at the bottom lane and starts at the target. For a target area radius $s$, such that $0 < s < \sqrt{3}d/4$, we used $s = 0.3$ m, and for $\sqrt{3}d/4 \le s < d/2$, we choose $s = 0.45$ m. Figure \ref{fig:exptriangulartiling2} shows screenshots of the simulation using  $s = 0.3$ m during $T = 7.1$ s and Figure \ref{fig:exptriangulartiling1} for $s = 0.45$ m and $T = 10.1$ s.

\begin{figure}[t!]
  \centering
  \begin{minipage}[t]{0.48\linewidth}  
    \centering
    \subfloat[0 s: beginning of the simulation.]{\includegraphics[width=\linewidth]{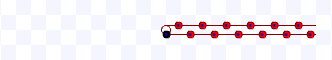}}\\
    \subfloat[After 2.7 s.]{\includegraphics[width=\linewidth]{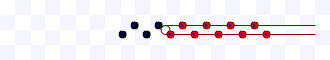}}\\
    \subfloat[After 6.7 s]{\includegraphics[width=\linewidth]{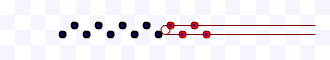}}\\
    \subfloat[5 s: ending of the simulation]{\includegraphics[width=\linewidth]{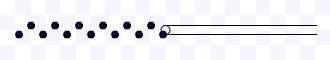}}
    \caption{Simulation on Stage for compact lanes strategy using $s = 0.3$ m, $d = 1$ m during $T = 7.1$ s. Available on \url{https://youtu.be/e1cWJzWhQmQ}.}
    \label{fig:exptriangulartiling2}
  \end{minipage}\quad
  \begin{minipage}[t]{0.48\linewidth}
    \centering
    \subfloat[0 s: beginning of the simulation.]{\includegraphics[width=\linewidth]{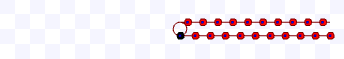}}\\
    \subfloat[After 3.5 s.]{\includegraphics[width=\linewidth]{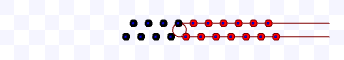}}\\
    \subfloat[After 7 s]{\includegraphics[width=\linewidth]{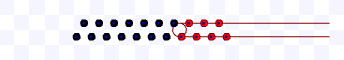}}\\
    \subfloat[10.1 s: ending of the simulation]{\includegraphics[width=\linewidth]{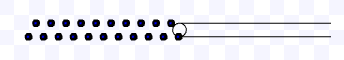}}
    \caption{Simulation on Stage for compact lanes strategy using $s = 0.45$ m, $d = 1$ m during $T = 10.1$ s. Available on \url{https://youtu.be/9OXGC1w83j0}.}
    \label{fig:exptriangulartiling1}
  \end{minipage}
\end{figure}

We ran experiments in order to verify the throughput for a given time and the asymptotic throughput calculated by (\ref{eq:giventime1}) to (\ref{eq:limit2}). Figure \ref{fig:results1qw} shows the throughput for different values of time obtained by the experiments in Stage, i.e. $(N-1)/T$, in comparison with the calculated value by (\ref{eq:giventime1}) and (\ref{eq:limit1}) for $s=0.3$ m and by (\ref{eq:giventime2}) and (\ref{eq:limit2}) for $s=0.45$ m. These figures confirm  that the equations presented at the theoretical section agree with the throughput obtained by simulations. 

\begin{figure}[t!]
  \centering
  \includegraphics[width=0.5\linewidth]{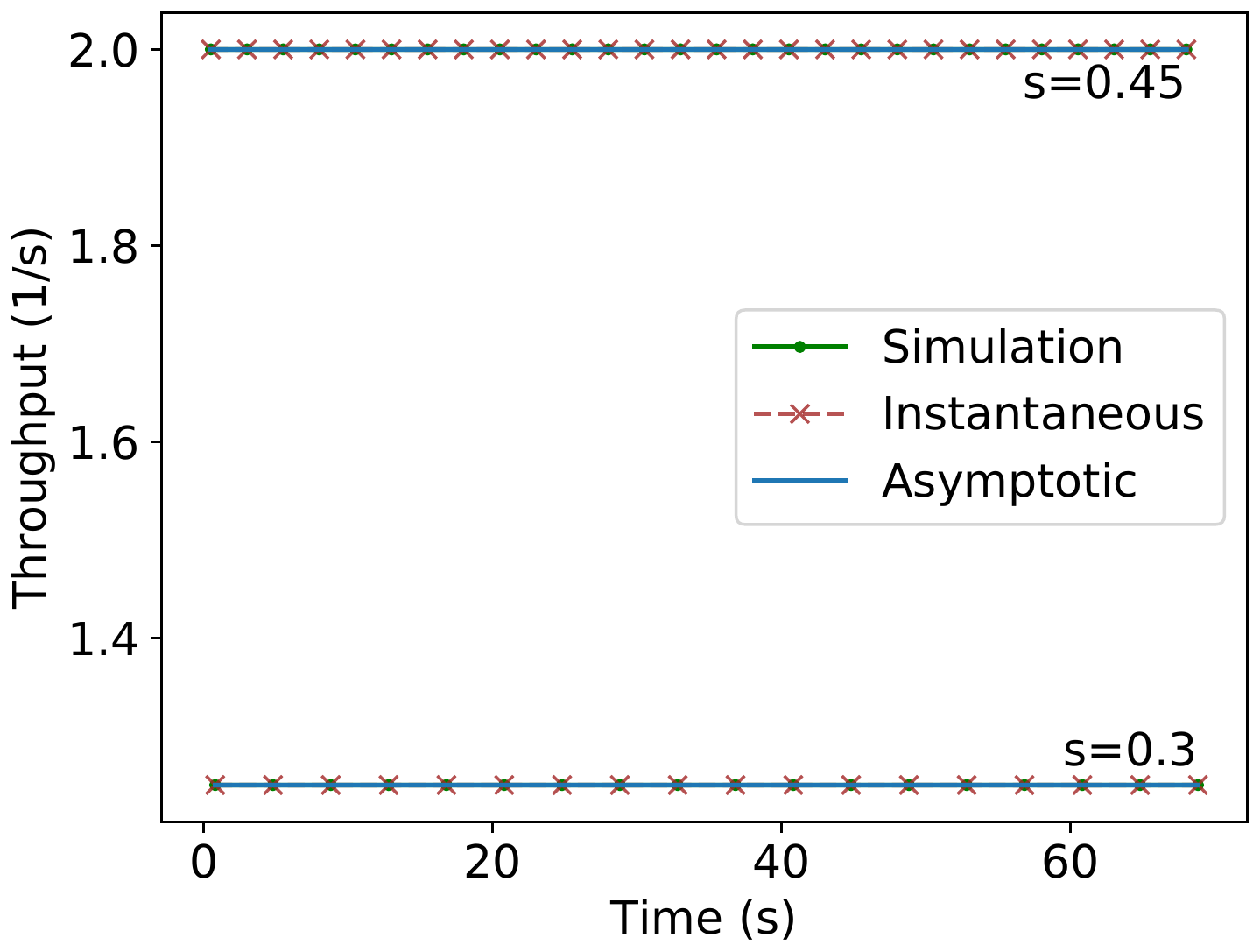}
  \caption{Throughput versus time plotting for compact lanes strategy for different values of $s$. ``Simulation'' stands for the data obtained from Stage, ``Instantaneous'' for the equations of the throughtput for a given time calculated in (\ref{eq:giventime1}) and (\ref{eq:giventime2}), and ``Asymptotic'' for the asymptotic throughtput obtained from (\ref{eq:limit1}) and (\ref{eq:limit2}). The mentioned equations results match with the data obtained from simulations.} \label{fig:results1qw} 
\end{figure}

\subsection{Parallel lanes}

We experimented with the parallel lanes strategy for $v=1$ m/s and $s \in \{3,6\}$ m. Figures \ref{fig:exppar1} and \ref{fig:exppar2} present screenshots from executions using these parameters.

\begin{figure}[t!]
  \centering
  \subfloat[0 s: beginning of the simulation.]{\includegraphics[width=0.49\linewidth]{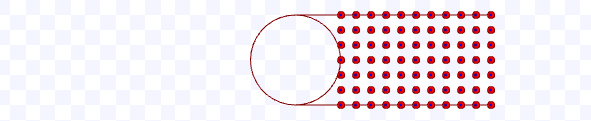}}\,
  \subfloat[After 6.5 s.]{\includegraphics[width=0.49\linewidth]{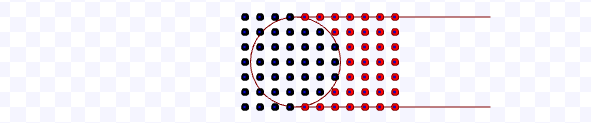}}\\
  \subfloat[13 s: ending of the simulation]{\includegraphics[width=0.49\linewidth]{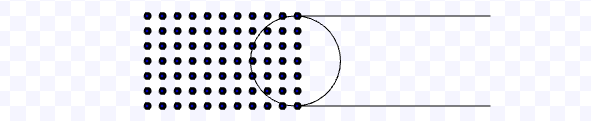}}
  \caption{Simulation on Stage for parallel lanes strategy using $s = 3$ m, $d = 1$ m during $T = 13$ s. Available on \url{https://youtu.be/2Y1RHc9YVaw}.}
  \label{fig:exppar1}
\end{figure}

\begin{figure}[t!]
  \centering
  \subfloat[0 s: beginning of the simulation.]{\includegraphics[width=0.49\linewidth]{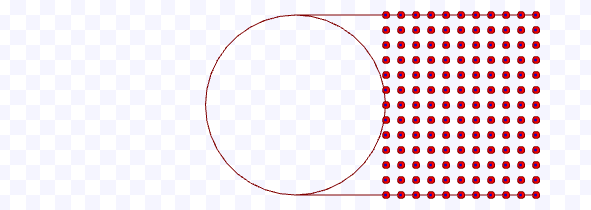}}\,
  \subfloat[After 8 s.]{\includegraphics[width=0.49\linewidth]{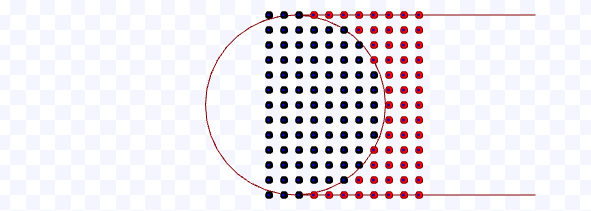}}\\
  \subfloat[16 s: ending of the simulation]{\includegraphics[width=0.49\linewidth]{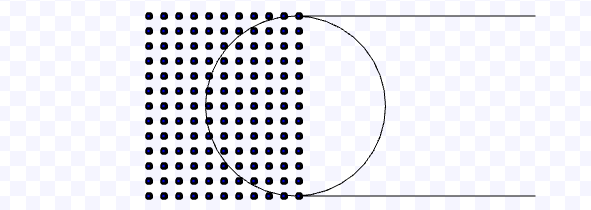}}
  \caption{Simulation on Stage for parallel lanes strategy using $s = 6$ m, $d = 1$ m during $T = 16$ s. Available on \url{https://youtu.be/TVdka65fi1g}.}
  \label{fig:exppar2}
\end{figure}

In order to verify the throughput for a given time calculated by (\ref{eq:parallelT}) and its asymptotic value as in (\ref{eq:parallelLimit}), we compare them with the throughput obtained from Stage simulations. Figure \ref{fig:tppar} (a) presents these comparisons. Observe that the values from (\ref{eq:parallelT}) are almost aligned with the values from simulation, except for some points. The difference in those points is due to the floating-point error discussed in the beginning of Section \ref{sec:experimentresults} that happens in the division before the use of floor or ceiling functions used on (\ref{eq:parallelT}). As expected, the values of (\ref{eq:parallelT}) approximates to (\ref{eq:parallelLimit}) as time passes. As the running time is proportional to the number of robots in our experiments, observe that higher throughput per time is reflected as a lower arrival time of the last robot per number of robots (Figure \ref{fig:tppar} (b)). Also, note that in the lower arrival time per number of robots graph, those values tends to infinite as the horizontal axis values grows.

\begin{figure}[t!]
  \centering
  \subfloat[]{\includegraphics[width=0.49\linewidth]{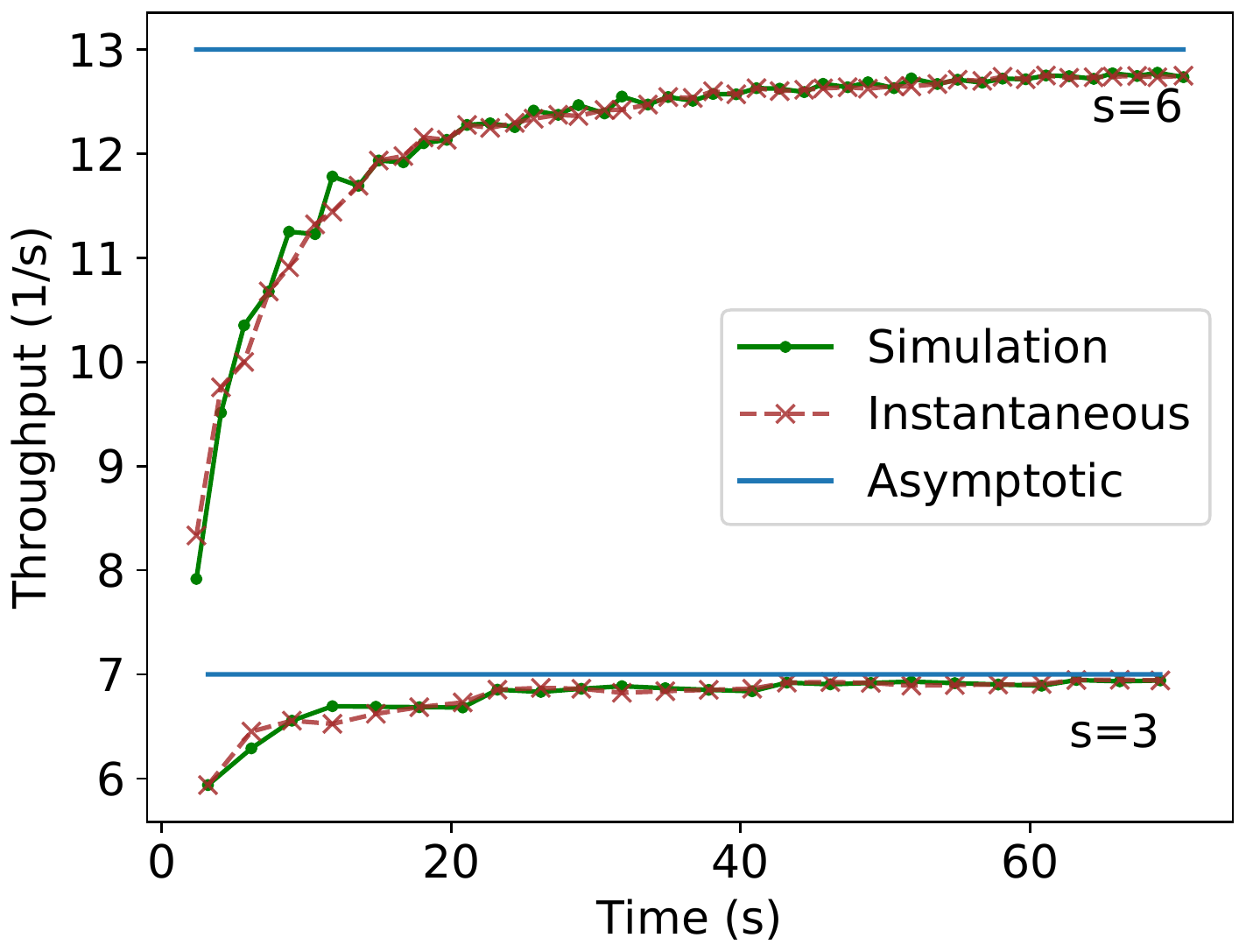}}
  \subfloat[]{\includegraphics[width=0.49\linewidth]{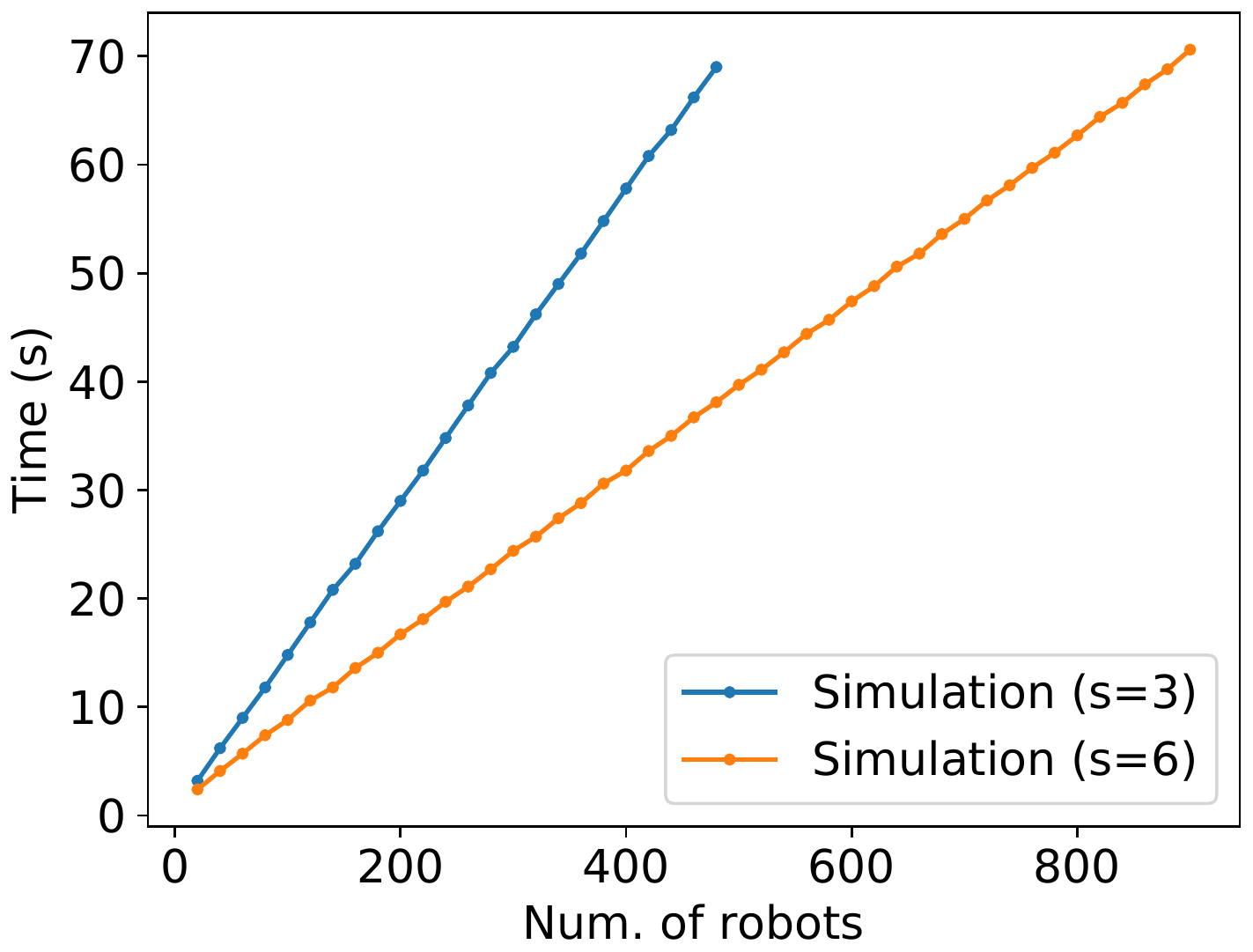}}
  \caption{(a) Throughput versus time plotting for parallel lanes strategy for $s \in \{3,6\}$ m. ``Simulation'' stands for the data obtained from Stage, ``Instantaneous'' for the equations of the throughtput for a given time calculated in (\ref{eq:parallelT}), and ``Asymptotic'' for the asymptotic throughtput obtained from (\ref{eq:parallelLimit}). (b) Number of robots versus time of arrival of the last robot for the same data.}
  \label{fig:tppar} 
\end{figure}

\subsection{Hexagonal packing}

\begin{figure}[t!]
  \centering
  \begin{minipage}[b]{0.48\linewidth}
    \centering
    \subfloat[0 s: beginning of the simulation.]{\includegraphics[width=0.7\linewidth]{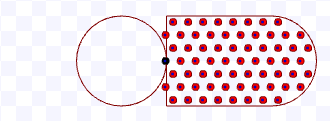}}\\
    \subfloat[After 4.9 s.]{\includegraphics[width=0.7\linewidth]{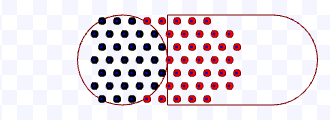}}\\
    \subfloat[9.8 s: ending of the simulation.]{\includegraphics[width=0.7\linewidth]{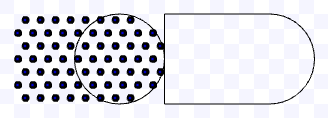}}
    \caption{Simulation on Stage for hexagonal packing strategy using $s = 3$ m, $\theta = 0$  during $T = 9.8$ s. Available on \url{https://youtu.be/6_LgZWFOWd0}.}
    \label{fig:exphex1}
  \end{minipage}\quad
  \begin{minipage}[b]{0.48\linewidth}
    \centering
    \subfloat[0 s: beginning of the simulation.]{\includegraphics[width=0.7\linewidth]{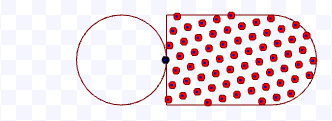}}\\
    \subfloat[After 5 s.]{\includegraphics[width=0.7\linewidth]{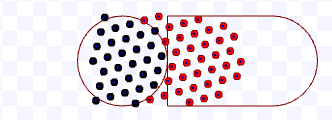}}\\
    \subfloat[10 s: ending of the simulation.]{\includegraphics[width=0.7\linewidth]{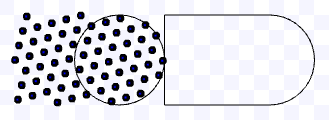}}
    \caption{Simulation on Stage for hexagonal packing strategy using $s = 3$ m, $\theta = \pi/12$ during $T = 10$ s. Available on \url{https://youtu.be/Wji8XlSQJBQ}.}
  \end{minipage}
\end{figure}

\begin{figure}[t!]
  \centering
  \begin{minipage}[b]{0.48\linewidth}
    \centering
    \subfloat[0 s: beginning of the simulation.]{\includegraphics[width=0.7\linewidth]{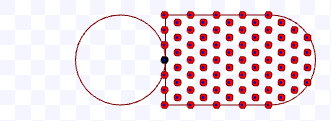}}\\
    \subfloat[After 4.9 s.]{\includegraphics[width=0.7\linewidth]{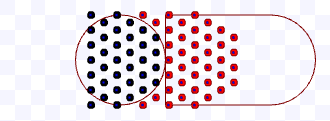}}\\
    \subfloat[10 s: ending of the simulation.]{\includegraphics[width=0.7\linewidth]{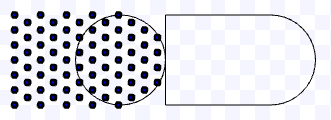}}
    \caption{Simulation on Stage for hexagonal packing strategy using $s = 3$ m, $\theta = \pi/6$  during $T = 10$ s. Available on \url{https://youtu.be/szOBU8no_sU}.}
  \end{minipage}\quad
  \begin{minipage}[b]{0.48\linewidth}
    \centering
    \subfloat[0 s: beginning of the simulation.]{\includegraphics[width=0.7\linewidth]{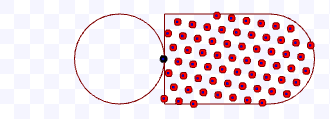}}\\
    \subfloat[After 4.9 s.]{\includegraphics[width=0.7\linewidth]{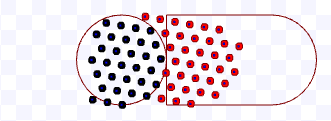}}\\
    \subfloat[10 s: ending of the simulation.]{\includegraphics[width=0.7\linewidth]{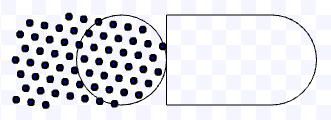}}
    \caption{Simulation on Stage for hexagonal packing strategy using $s = 3$ m, $\theta = 5\pi/18$  during $T = 10$ s. Available on \url{https://youtu.be/jRLgaF7Te1Q}.}
  \end{minipage}
\end{figure}

\begin{figure}[t!]
  \centering
  \begin{minipage}[b]{0.48\linewidth}
    \centering
    \subfloat[0 s: beginning of the simulation.]{\includegraphics[width=0.7\linewidth]{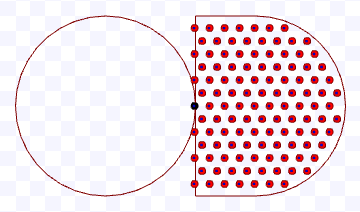}}\\
    \subfloat[After 4.9 s.]{\includegraphics[width=0.7\linewidth]{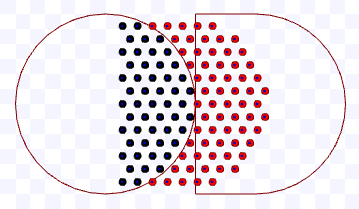}}\\
    \subfloat[9.8 s: ending of the simulation.]{\includegraphics[width=0.7\linewidth]{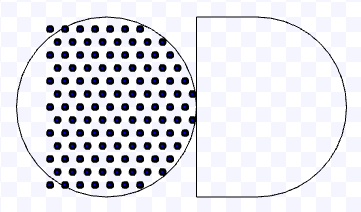}}
    \caption{Simulation on Stage for hexagonal packing strategy using $s = 6$ m, $\theta = 0$ during $T = 9.8$ s. Available on \url{https://youtu.be/v0FK8YpGrL8}.}
  \end{minipage}\quad
  \begin{minipage}[b]{0.48\linewidth}
    \centering
    \subfloat[0 s: beginning of the simulation.]{\includegraphics[width=0.7\linewidth]{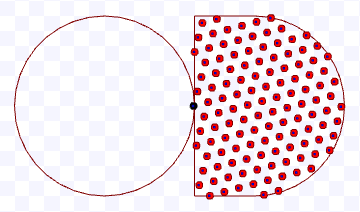}}\\
    \subfloat[After 5 s.]{\includegraphics[width=0.7\linewidth]{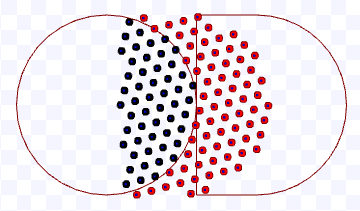}}\\
    \subfloat[10.1 s: ending of the simulation.]{\includegraphics[width=0.7\linewidth]{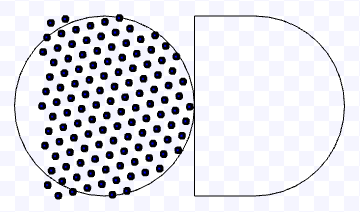}}
    \caption{Simulation on Stage for hexagonal packing strategy using $s = 6$ m, $\theta = \pi/12$ during $T = 10.1$ s. Available on \url{https://youtu.be/OBS_HADH5OE}.}
  \end{minipage}
\end{figure}

\begin{figure}[t!]
  \centering
  \begin{minipage}[b]{0.48\linewidth}
    \centering
    \subfloat[0 s: beginning of the simulation.]{\includegraphics[width=0.7\linewidth]{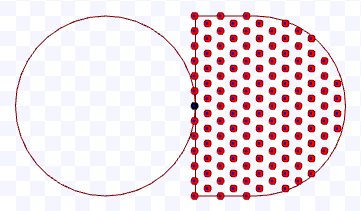}}\\
    \subfloat[After 4.9 s.]{\includegraphics[width=0.7\linewidth]{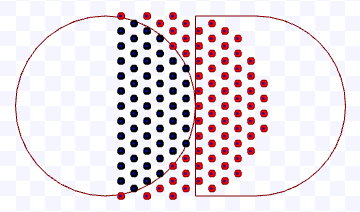}}\\
    \subfloat[10 s: ending of the simulation.]{\includegraphics[width=0.7\linewidth]{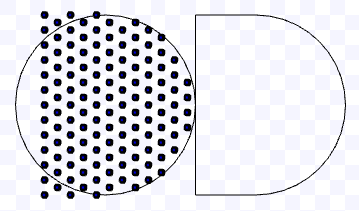}}
    \caption{Simulation on Stage for hexagonal packing strategy using $s = 6$ m, $\theta = \pi/6$ during $T = 10$ s. Available on \url{https://youtu.be/-KX7ziOp8b0}.}
  \end{minipage}\quad
  \begin{minipage}[b]{0.48\linewidth}
    \centering
    \subfloat[0 s: beginning of the simulation.]{\includegraphics[width=0.7\linewidth]{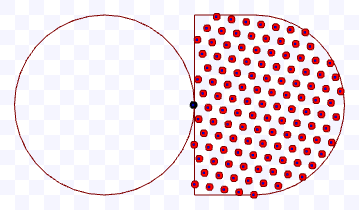}}\\
    \subfloat[After 4.9 s.]{\includegraphics[width=0.7\linewidth]{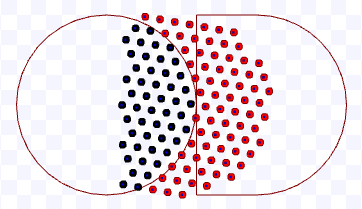}}\\
    \subfloat[10 s: ending of the simulation.]{\includegraphics[width=0.7\linewidth]{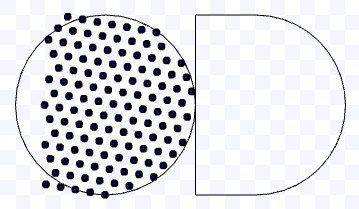}}
    \caption{Simulation on Stage for hexagonal packing strategy using $s = 6$ m, $\theta = 5\pi/18$ during $T = 10$ s. Available on \url{https://youtu.be/GRYRnH5CrhU}.}
    \label{fig:exphex8}
  \end{minipage}
\end{figure}

We experimented the hexagonal packing for $v=1$ m/s, and the combination of the following  variables and values: $s \in \{3,6\}$ m and $\theta \in \{0, \pi/12, \pi/6, 5\pi/18\}$. Figures \ref{fig:exphex1}-\ref{fig:exphex8} present screenshots from executions using these parameters.

\begin{figure}[t!]
  \subfloat[$\theta = 0$]{\includegraphics[width=0.495\linewidth]{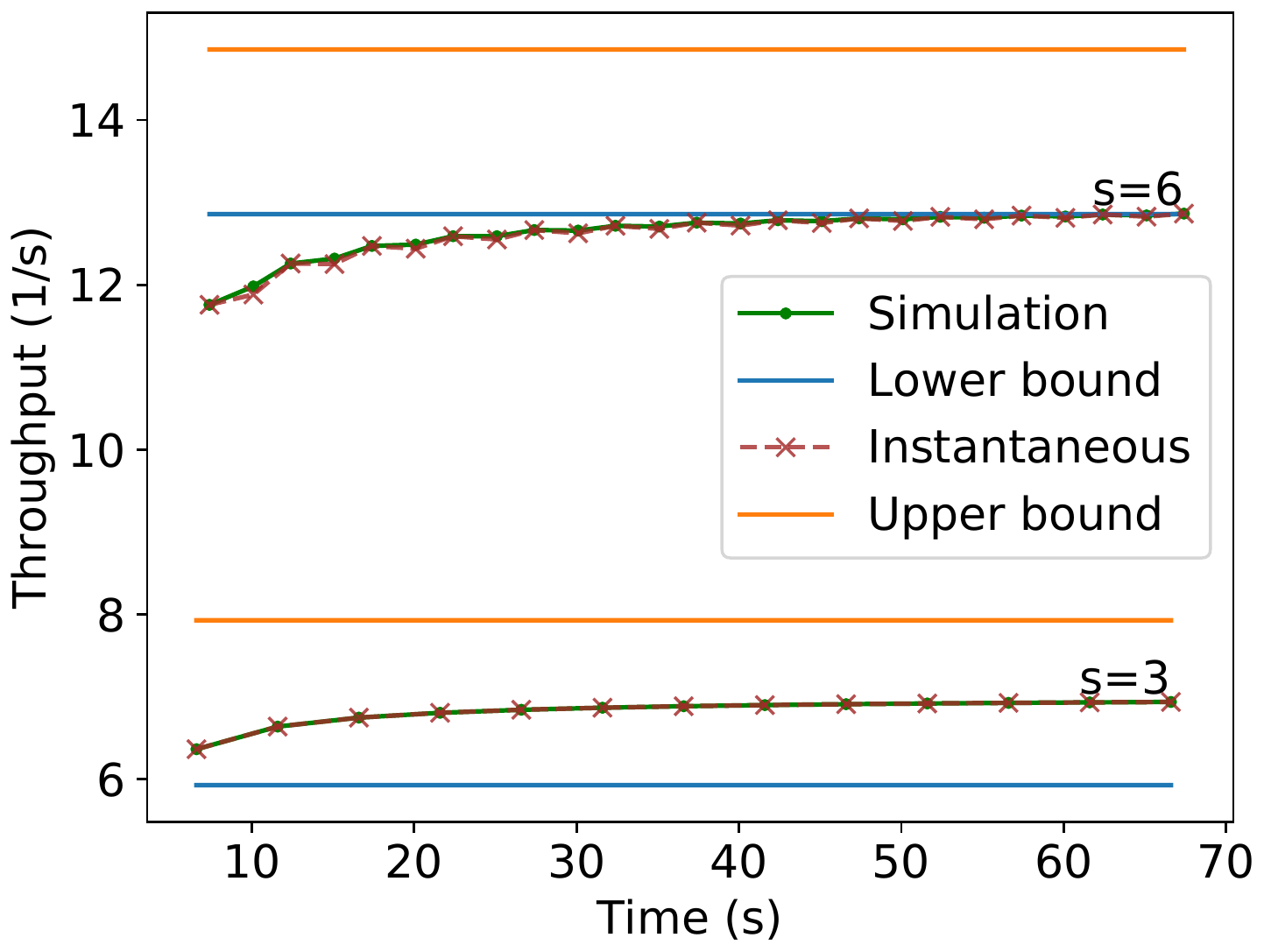}}
  \subfloat[$\theta = \pi/12$]{\includegraphics[width=0.495\linewidth]{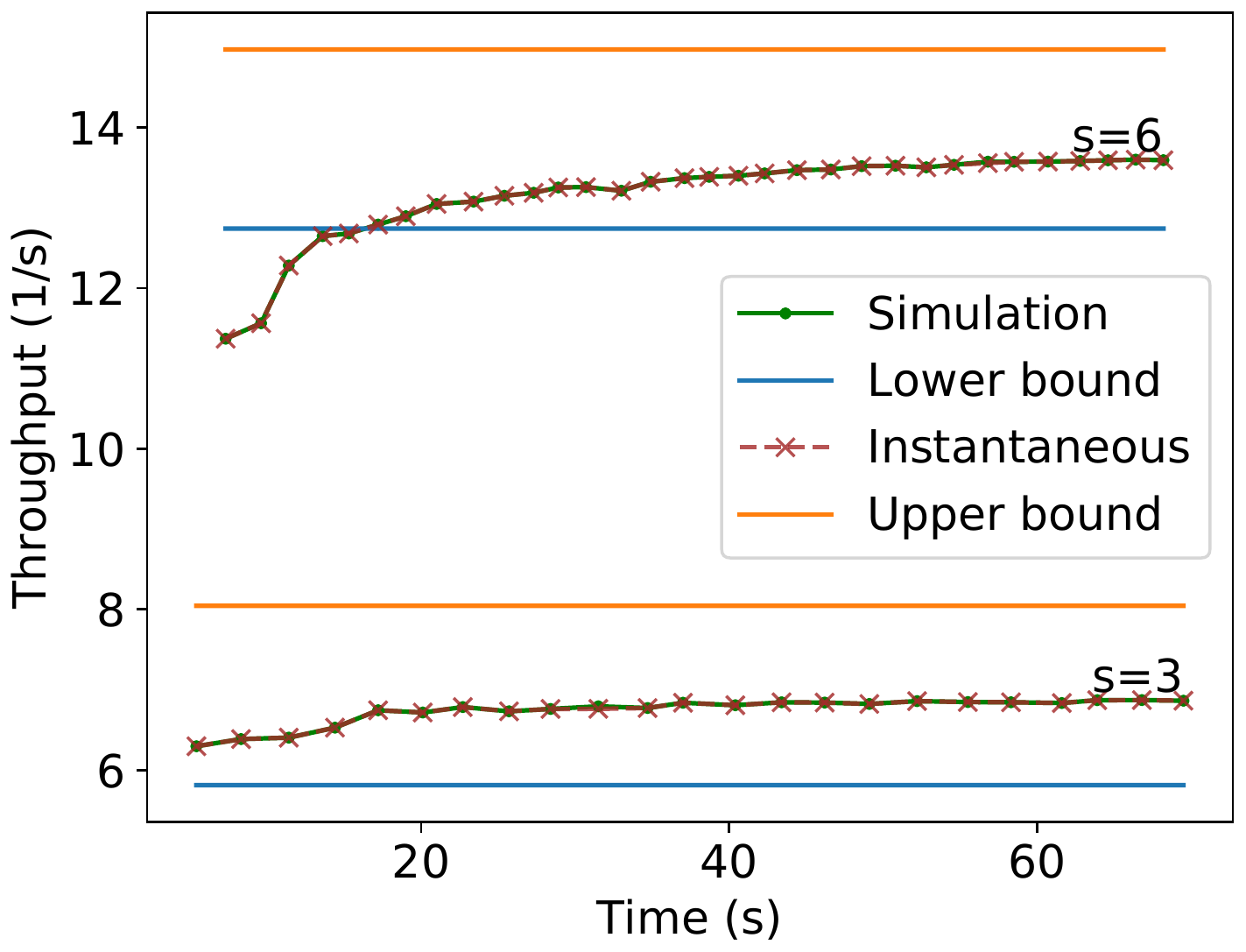}}\\
  \subfloat[$\theta = \pi/6$]{\includegraphics[width=0.495\linewidth]{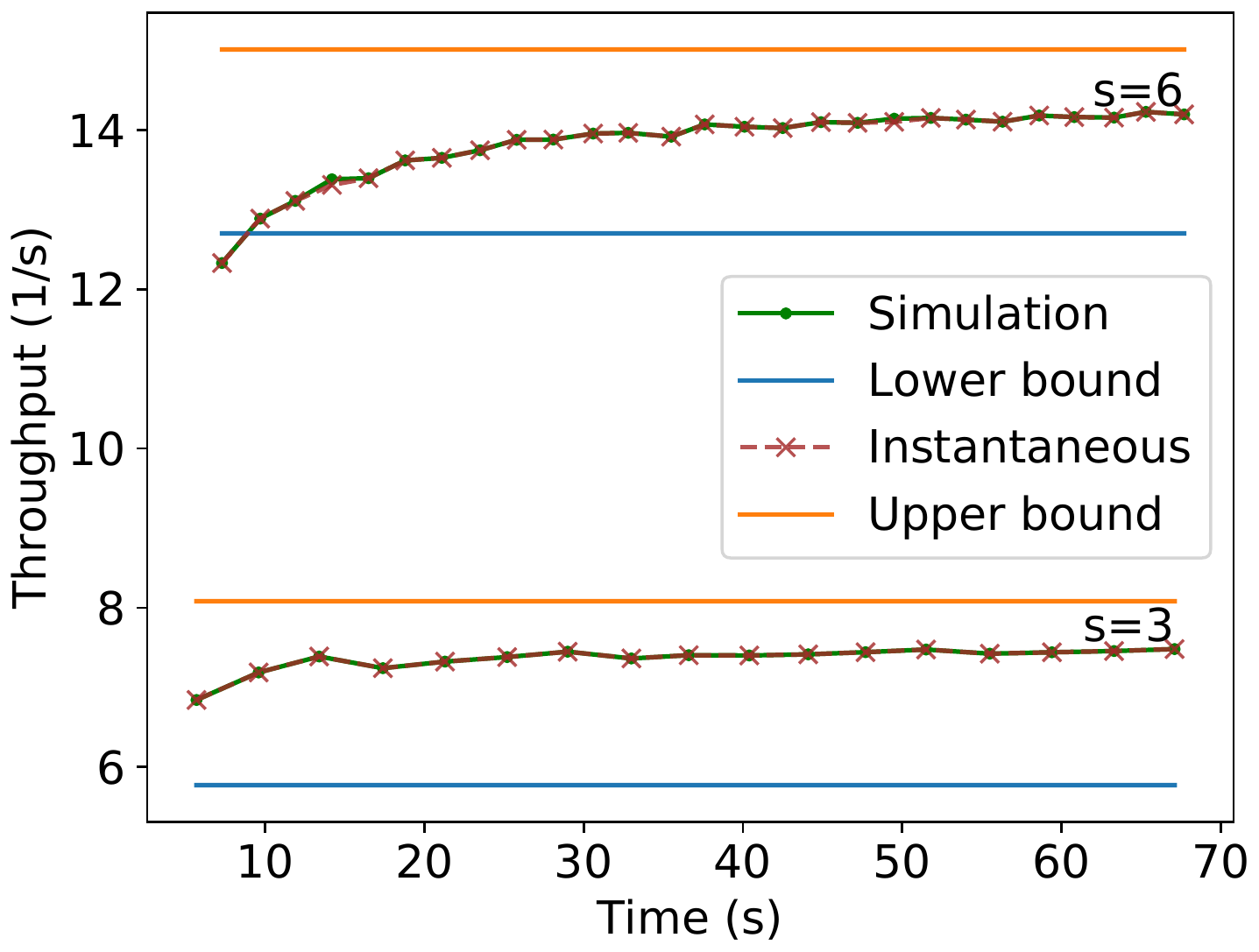}}
  \subfloat[$\theta = 5\pi/18$]{\includegraphics[width=0.495\linewidth]{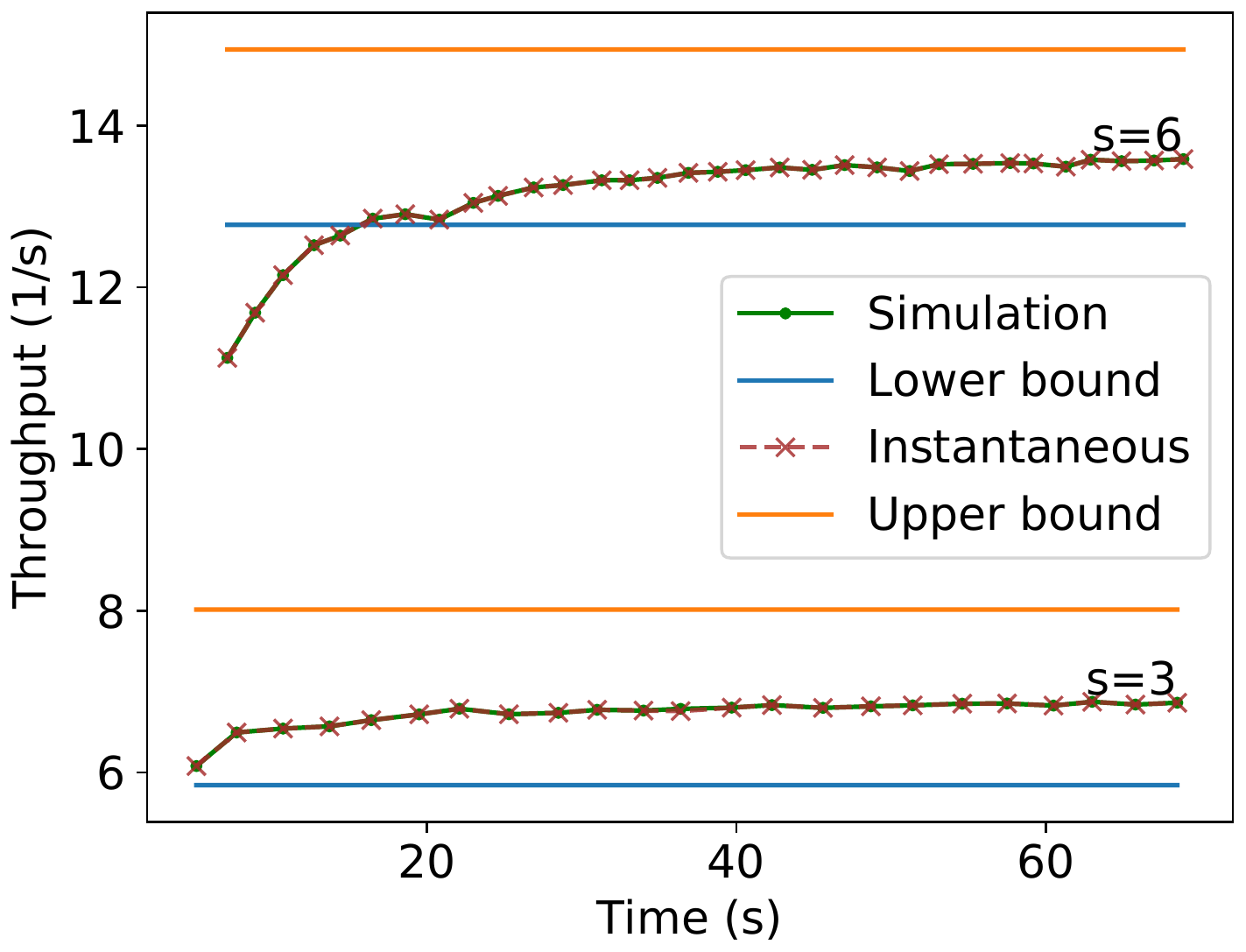}}
  \caption{Throughput versus time comparison of the simulation on Stage, upper and lower bounds on the asymptotic value and the theoretical instantaneous equation for the throughput for a given hexagonal packing angle for different values of $s$ and $\theta$.}
  \label{fig:tphex}
\end{figure}

\begin{figure}[t!]
  \subfloat[$\theta = 0$]{\includegraphics[width=0.495\linewidth]{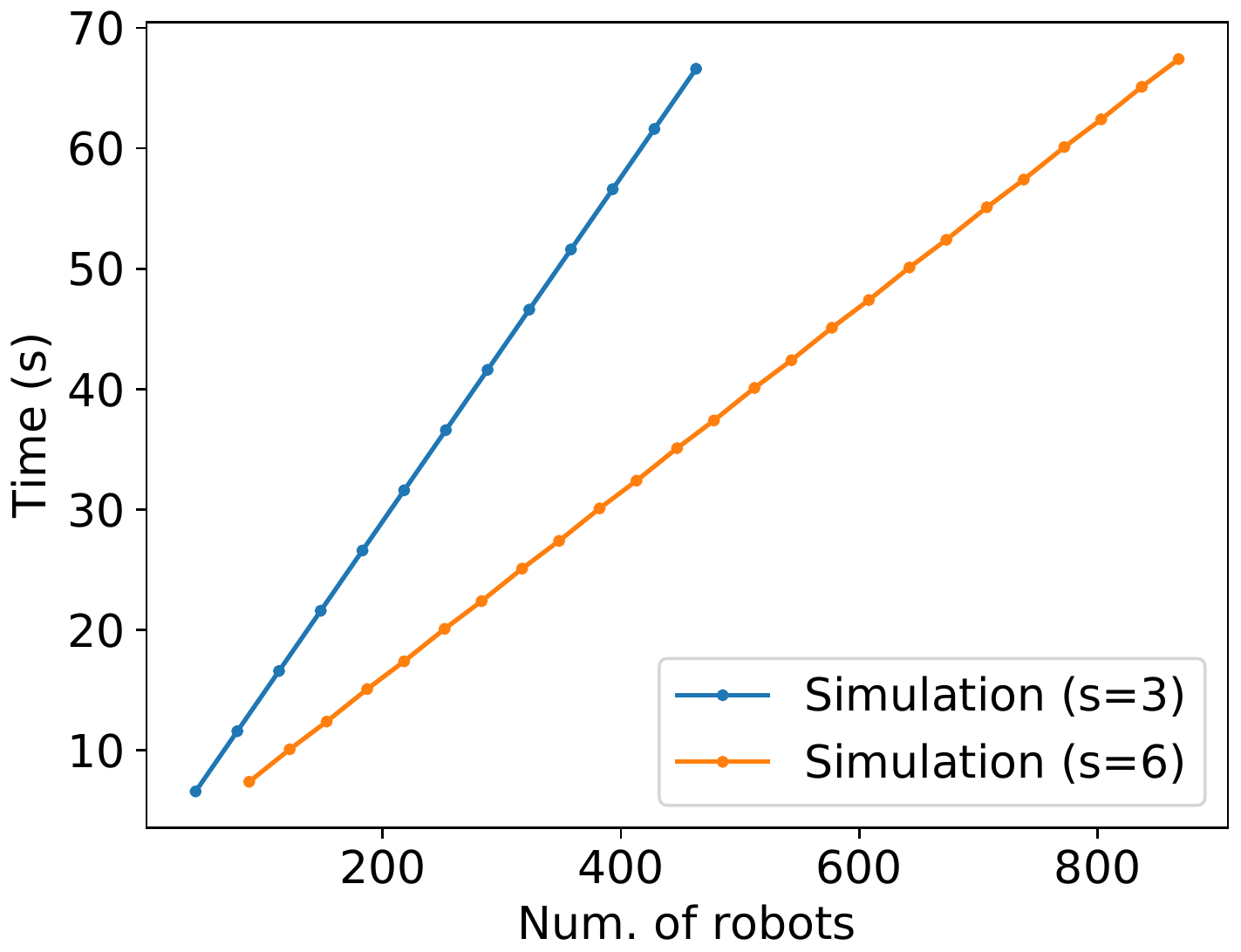}}
  \subfloat[$\theta = \pi/12$]{\includegraphics[width=0.495\linewidth]{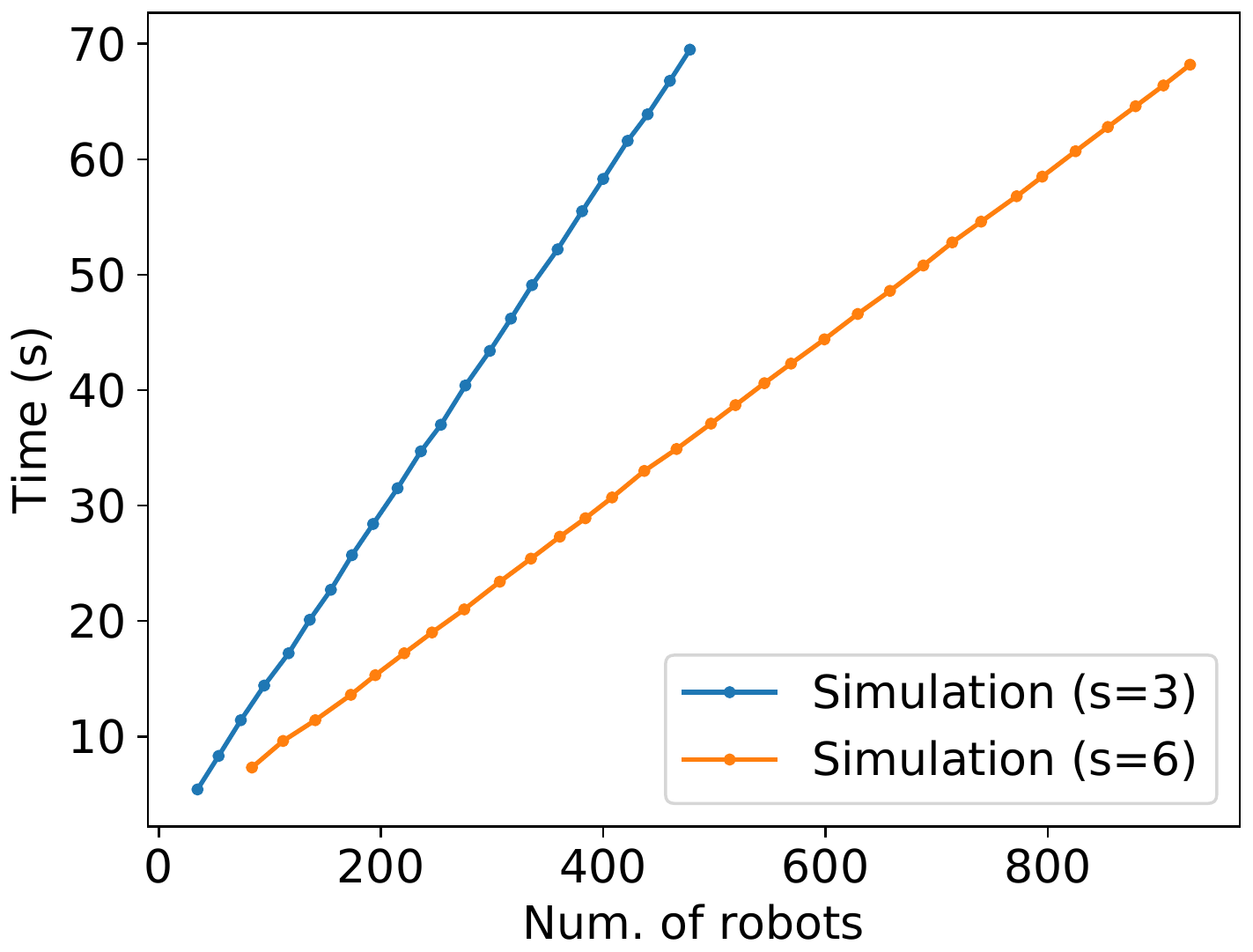}}\\
  \subfloat[$\theta = \pi/6$]{\includegraphics[width=0.495\linewidth]{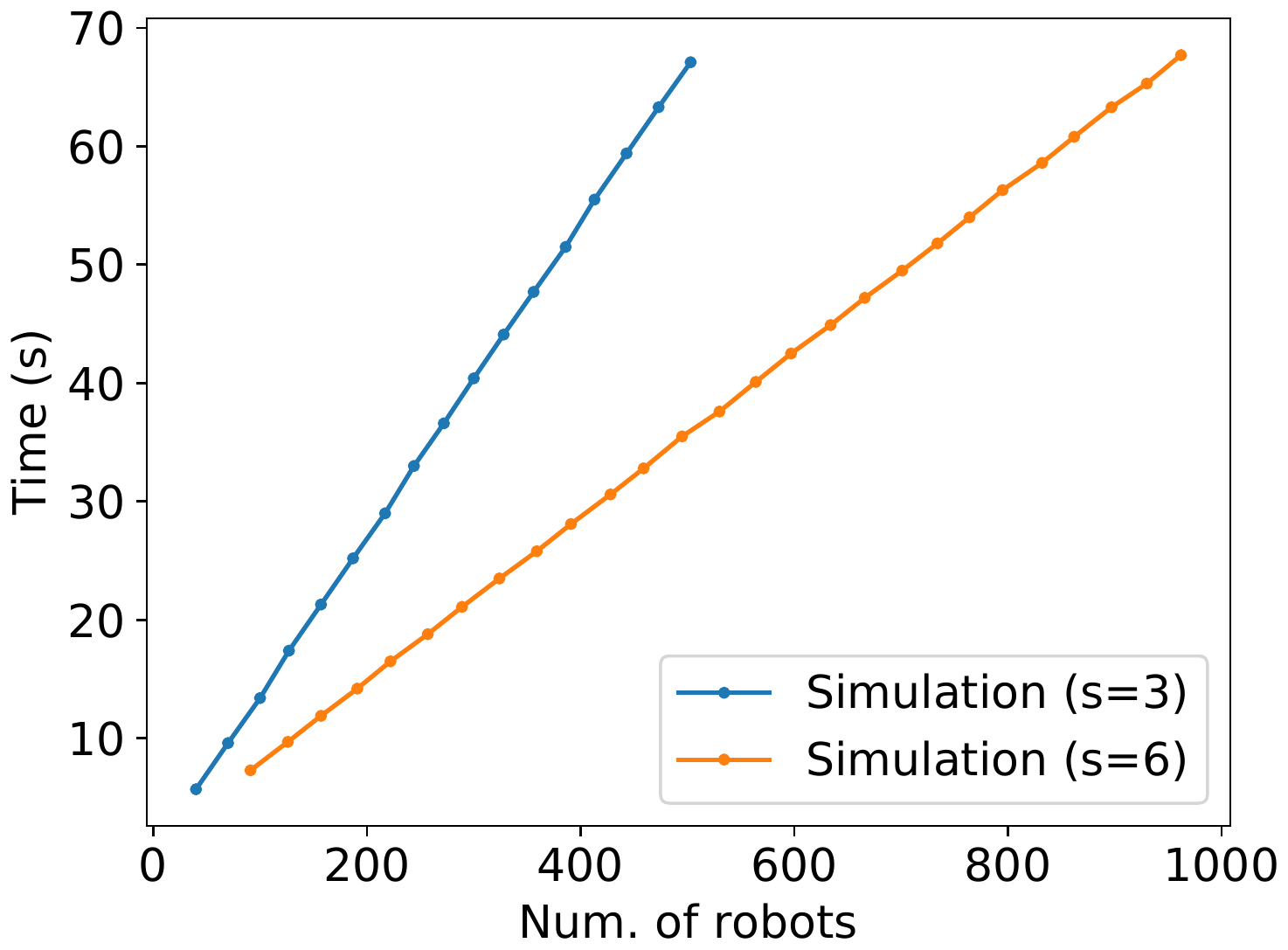}}
  \subfloat[$\theta = 5\pi/18$]{\includegraphics[width=0.495\linewidth]{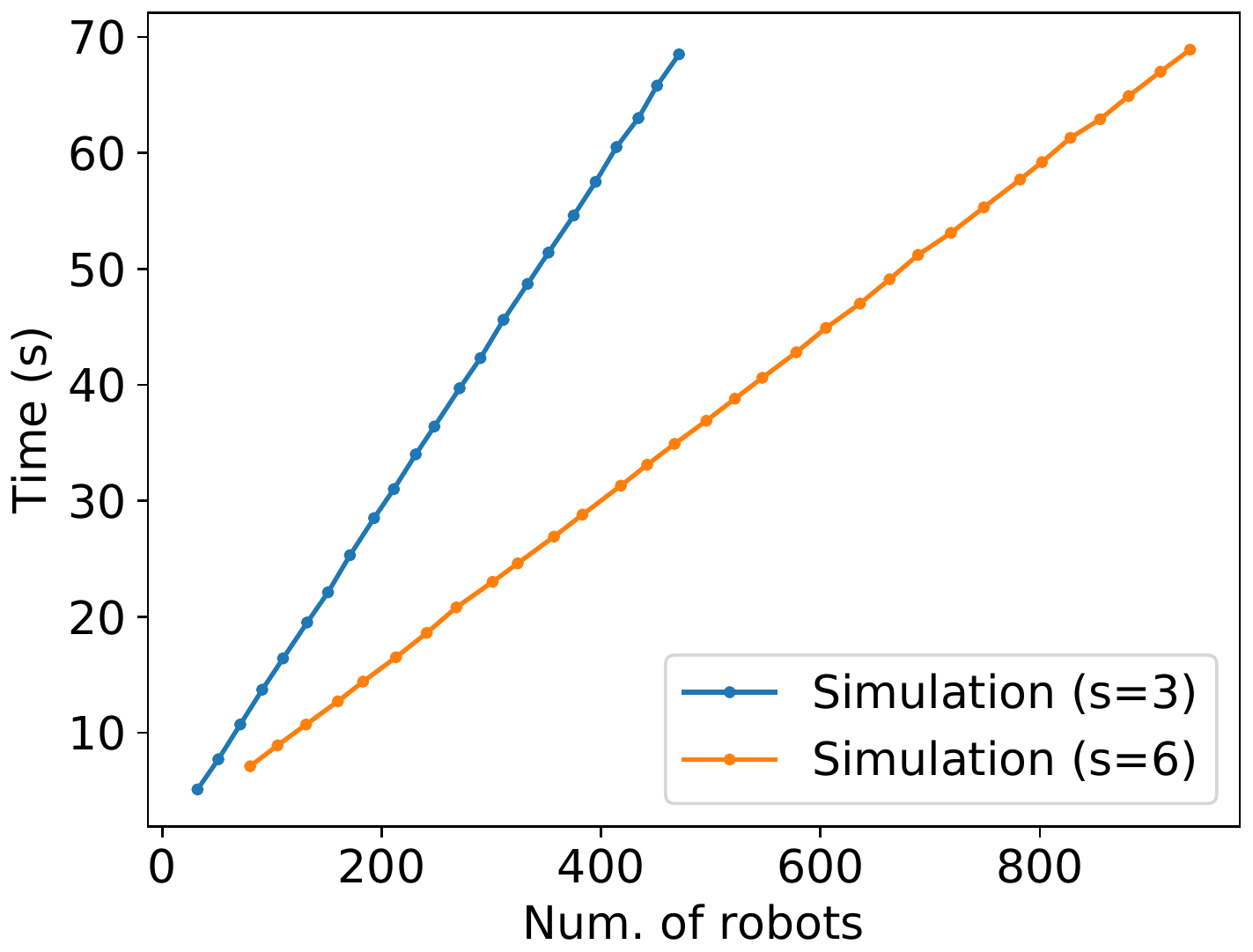}}
  \caption{Time of arrival at the target of the last robot versus number of robots for the same simulations in Figure \ref{fig:tphex}.}
  \label{fig:timehex}
\end{figure}

In order to evaluate the throughput for a given time and angle calculated in (\ref{eq:hexthroughput}) and the bounds on the asymptotic throughput as in (\ref{eq:hexthroughputbounds}), we compare them with the throughput obtained from Stage simulations. Figure \ref{fig:tphex} presents these comparisons. Observe that the values from (\ref{eq:hexthroughput}) are almost aligned with the values from simulation, except for some points. The difference in those points is also due to the floating point error -- discussed in the introduction of Section \ref{sec:experimentresults} -- over the divisions and trigonometric functions done before the use of floor or ceiling functions used on (\ref{eq:hexthroughput}). Also, due to floating point error, in our computation of (\ref{eq:whereIusedepsilon}), instead of using $\min(L(x_{h}),C_{2}(x_{h})) = \lfloor L(x_{h})\rfloor$, we check $\vert \min(L(x_{h}),C_{2}(x_{h})) - \lfloor L(x_{h})\rfloor\vert < 0.001$.

Additionally, note in Figure \ref{fig:tphex} that for any value of $s$ or $\theta$, as the time passes, the values of (\ref{eq:hexthroughput}) asymptotically approaches some value inside the bounds given by (\ref{eq:hexthroughputbounds}). Although the exact asymptotic value could not be given for the presented parameters, the experiments show that the bounds are correct. In the same manner as occured for parallel lanes, higher throughput per time is reflected as a lower arrival time of the last robot per number of robots, and it tends to infinite as the number of robots grows (Figure \ref{fig:timehex}). 

\subsection{Touch and run}
\label{sec:hitandrunexperiments}

For the touch and run strategy, the robots maintain the linear velocity over all the experiment, then turn at fixed constant rotational speed $\omega = v/r$, for $r$ obtained from (\ref{eq:relationangles}), when they are next to the target centre by the distance $d_{r}$ obtained from (\ref{eq:distTargetToTurn}). After they get on the target region,  when they are distant from the target centre by $d_{r}$, they leave the curved path, stop turning and follow the linear exiting lane. On that lane, to stabilise their path following, the robots follow the queue using turning speed equals to $\gamma - \beta$, such that $\beta$ is the angle of the exiting lane and $\gamma$ is the robot orientation angle, both in relation to the $x$-axis.

We used $v=0.1$ m/s because the robots we used on Stage have maximum turning speed of $\pi/2$ rad/s. Choosing a low velocity implies in more number of lanes $K$, as the turning speed $\omega = v/r$, and $r$ varies over $K$ and $s$. In addition, low linear speed diminishes time measurement error, since the positions of the robots are sampled at every $0.1$ s by the Stage simulator. Their positions are not guaranteed to be obtained on the exact moment they are far from the target centre by $d_{r}$, thus this also yields error in time measurement for their arrival on the target area.

\begin{figure}[t!]
  \centering
  \begin{minipage}[b]{0.48\linewidth}
    \centering
    \subfloat[0 s: beginning of the simulation.]{\includegraphics[width=0.7\linewidth]{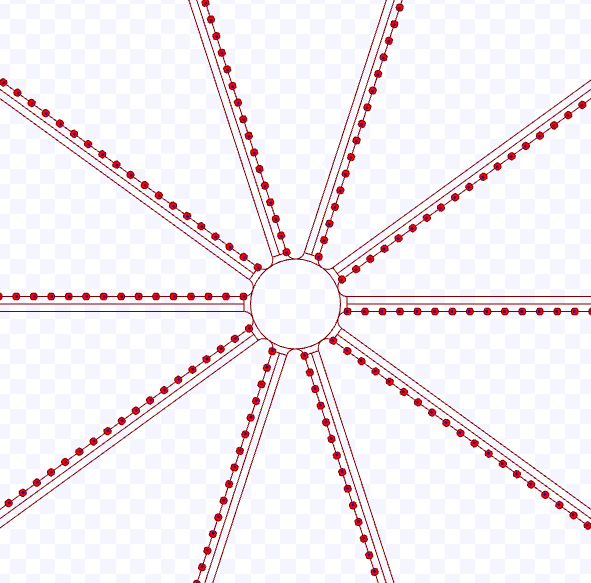}}\\
    \subfloat[After 114 s.]{\includegraphics[width=0.7\linewidth]{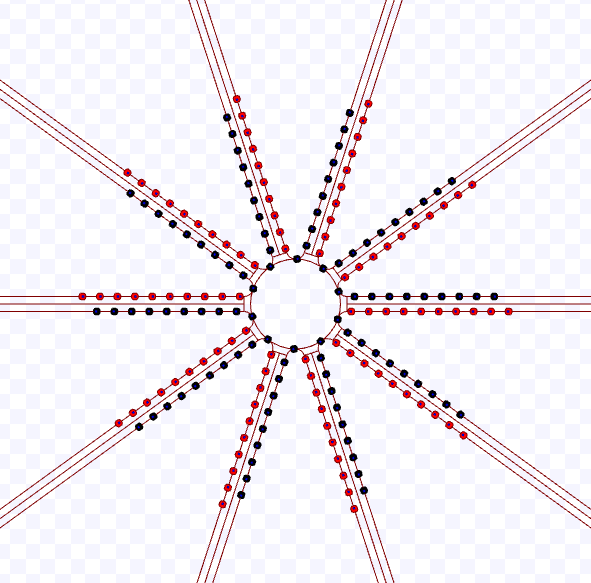}}\\
    \subfloat[228 s: ending of the simulation.]{\includegraphics[width=0.7\linewidth]{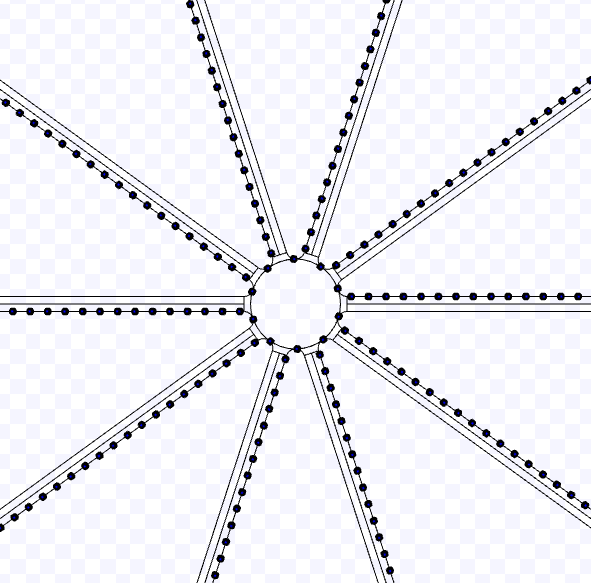}}
    \caption{Simulation on Stage for the touch and run strategy using $s = 3$ m, $K=10$ during $T = 228$ s at $v = 0.1$ m/s. Available on \url{https://youtu.be/Z-ruOMYFyBU}.}
    \label{fig:expit1}
  \end{minipage}\quad
  \begin{minipage}[b]{0.48\linewidth}
    \centering
    \subfloat[0 s: beginning of the simulation.]{\includegraphics[width=0.7\linewidth]{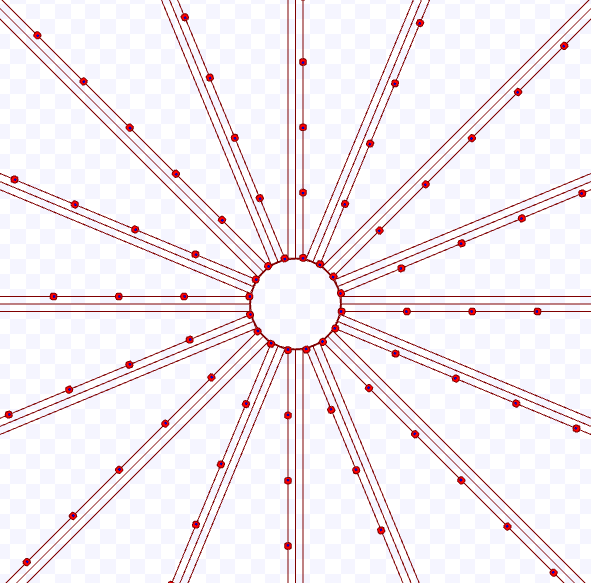}}\\
    \subfloat[After 261.6 s.]{\includegraphics[width=0.7\linewidth]{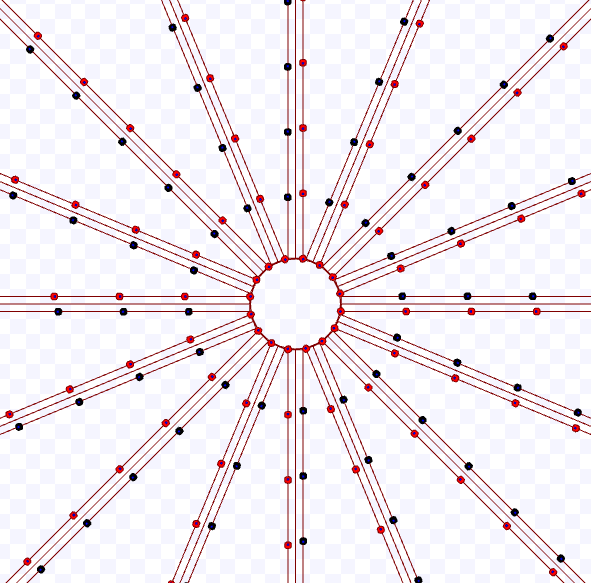}}\\
    \subfloat[523.1 s: ending of the simulation.]{\includegraphics[width=0.7\linewidth]{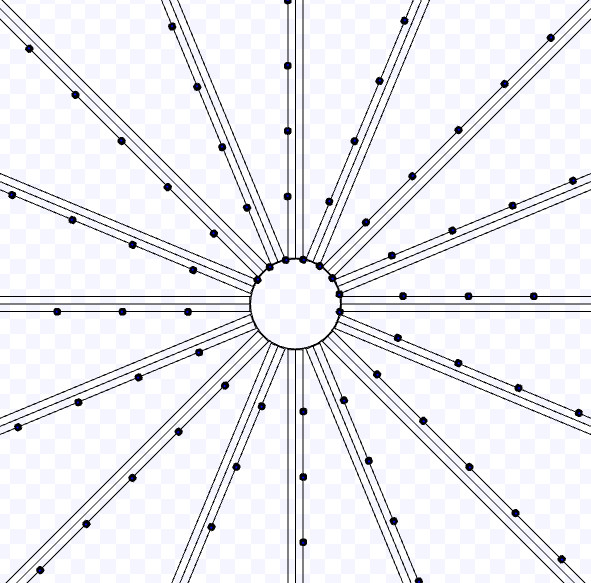}}
    \caption{Simulation on Stage for the touch and run strategy using $s = 3$ m, $K = 16$ during $T = 523.1$ s at $v = 0.1$ m/s. Available on \url{https://youtu.be/FvAqv0zD4_Y}.}
  \end{minipage}
\end{figure}

\begin{figure}[t!]
  \centering
  \begin{minipage}[b]{0.48\linewidth}
    \centering
    \subfloat[0 s: beginning of the simulation.]{\includegraphics[width=0.7\linewidth]{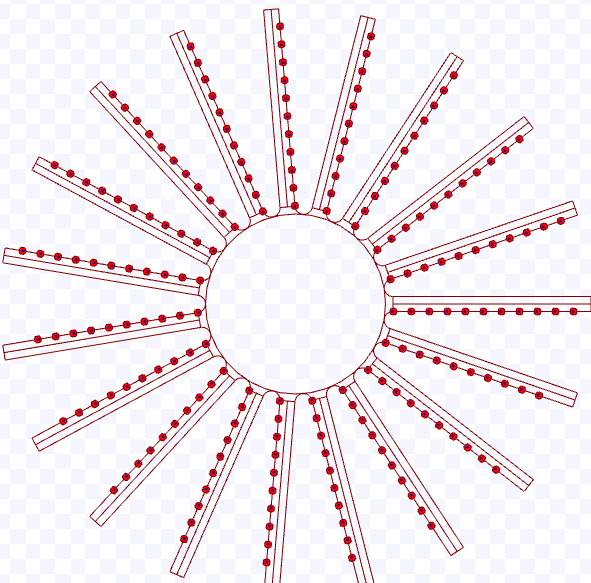}}\\
    \subfloat[After 63.6 s.]{\includegraphics[width=0.7\linewidth]{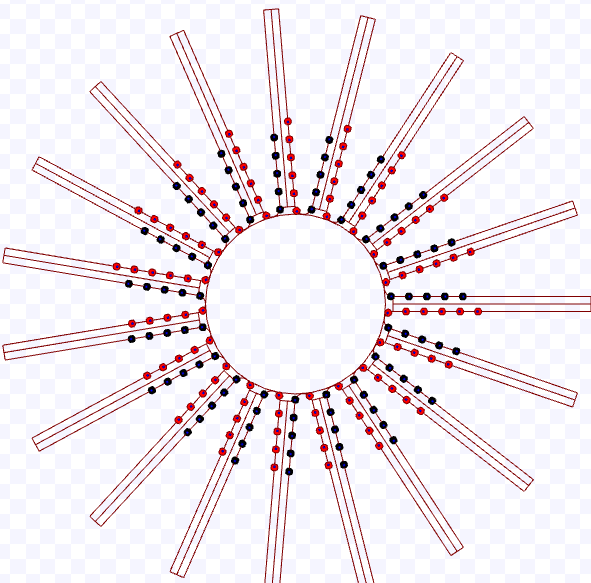}}\\
    \subfloat[127.4 s: ending of the simulation.]{\includegraphics[width=0.7\linewidth]{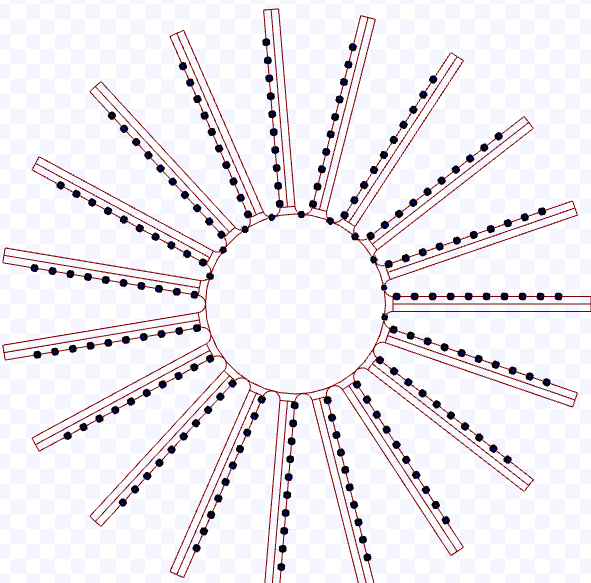}}
    \caption{Simulation on Stage for the touch and run strategy using $s = 6$ m, $K=19$ during $T = 127.4$ s at $v = 0.1$ m/s. Available on \url{https://youtu.be/xJVoVCIjX5k}.}
  \end{minipage}\quad
  \begin{minipage}[b]{0.48\linewidth}
    \centering
    \subfloat[0 s: beginning of the simulation.]{\includegraphics[width=0.7\linewidth]{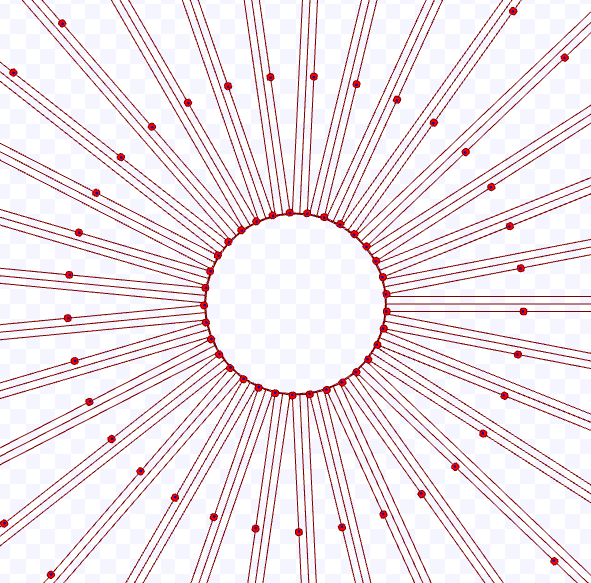}}\\
    \subfloat[After 274 s.]{\includegraphics[width=0.7\linewidth]{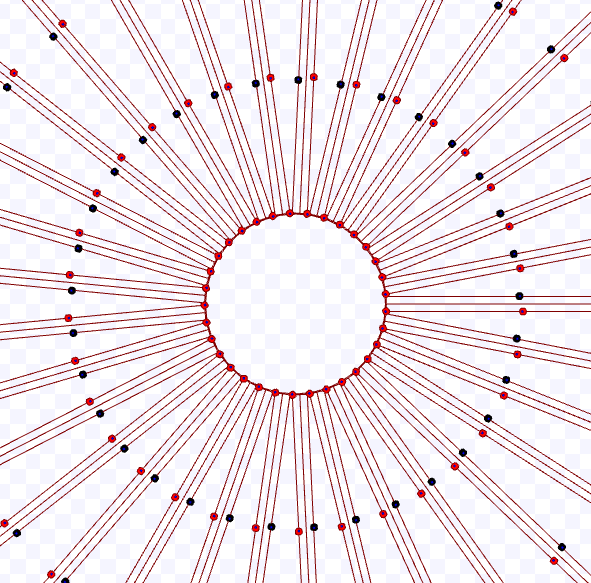}}\\
    \subfloat[548 s: ending of the simulation.]{\includegraphics[width=0.7\linewidth]{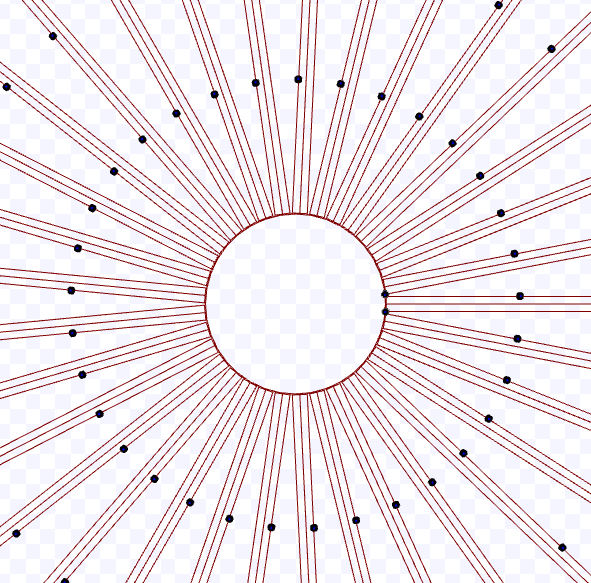}}
    \caption{Simulation on Stage for the touch and run strategy using $s = 6$ m, $K=33$ during $T = 548$ s at $v = 0.1$ m/s. Available on \url{https://youtu.be/-xZz84npKV4}.}
    \label{fig:expit4}
  \end{minipage}
\end{figure}

We used $s \in \{3, 6\}$ m  and all allowed $K$ values for experimenting the touch and run strategy with 200 robots. By (\ref{eq:Kbounds}), for the former $s$ value we have a maximum $K = 18$ and for the later, $K = 37$. However, as the maximum angular velocity is limited, the allowed $K$ values range   for $s=3$ m is reduced to $\{3 ,\dots, 16\}$ and for $s = 6$ m, $\{3,\dots,33\}$. Figures \ref{fig:expit1}-\ref{fig:expit4} presents screenshots from executions using some of these parameters. The circle in the middle of these figures is the target region and the lines where the robots are above represent the curved trajectory which they follow by the touch and run strategy. 

\begin{figure}[t!]
  \subfloat[$s = 3$ m and $K \in \{10,16\}$.]{\includegraphics[width=0.5\linewidth]{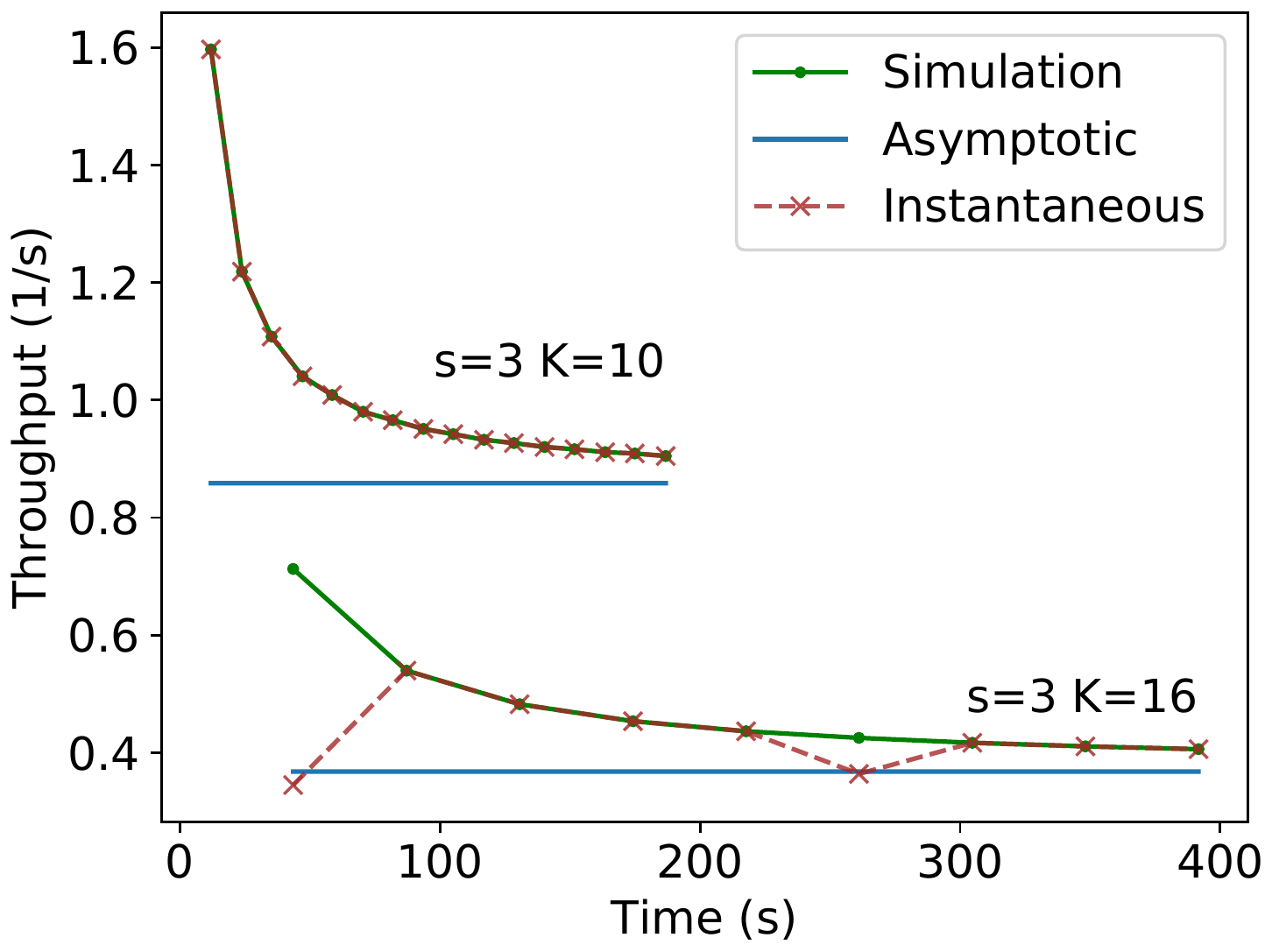}}
  \subfloat[$s = 6$ m and $K \in \{19,33\}$.]{\includegraphics[width=0.5\linewidth]{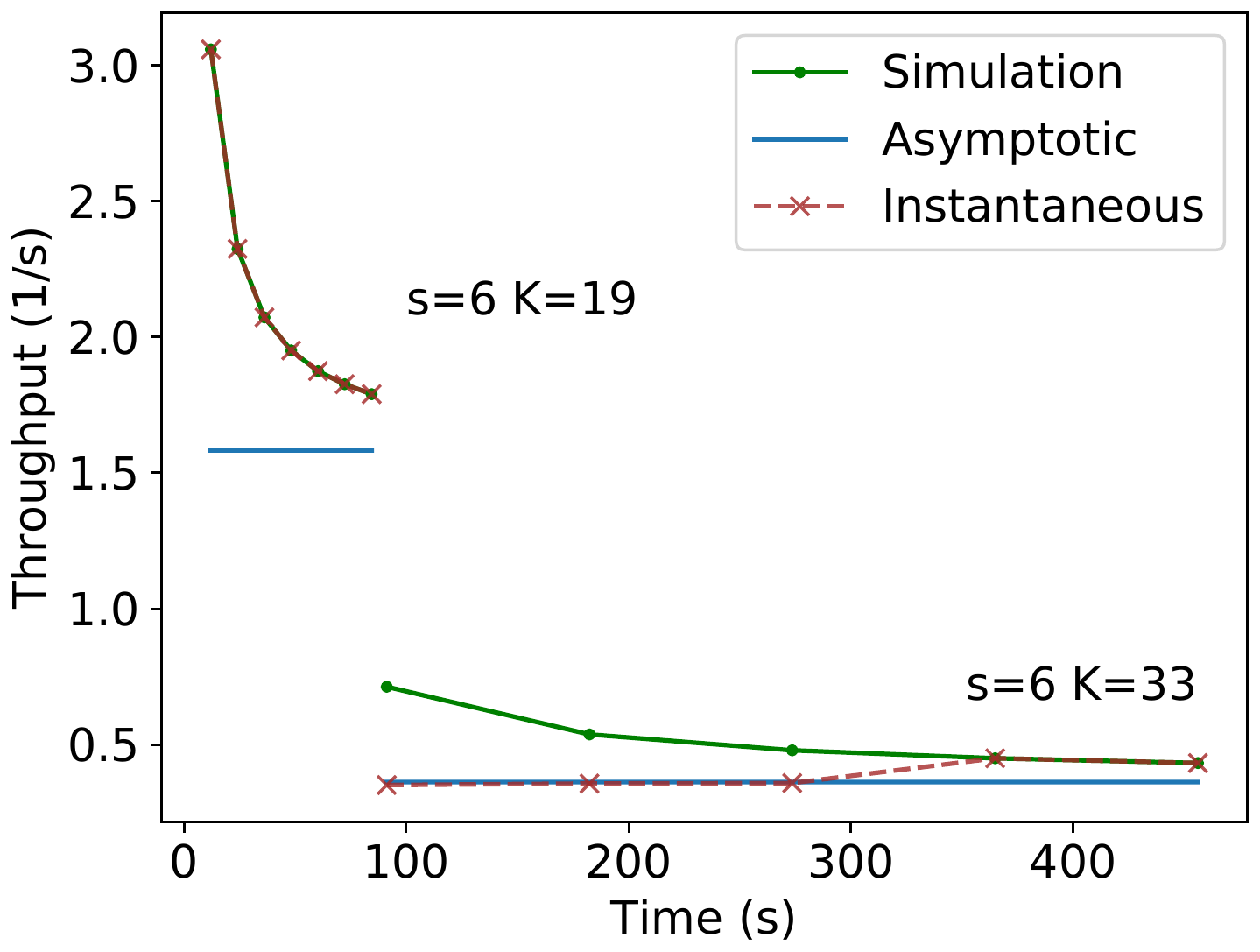}}\\
  \caption{Throughput versus time comparison of the touch and run simulation on Stage with asymptotic values and the theoretical instantaneous equation for the throughput for different values of $s$ and $K$.}
  \label{fig:tpit}
\end{figure}

\begin{figure}[t!]
  \subfloat[$s = 3$ m and $K \in \{10,16\}$.]{\includegraphics[width=0.5\linewidth]{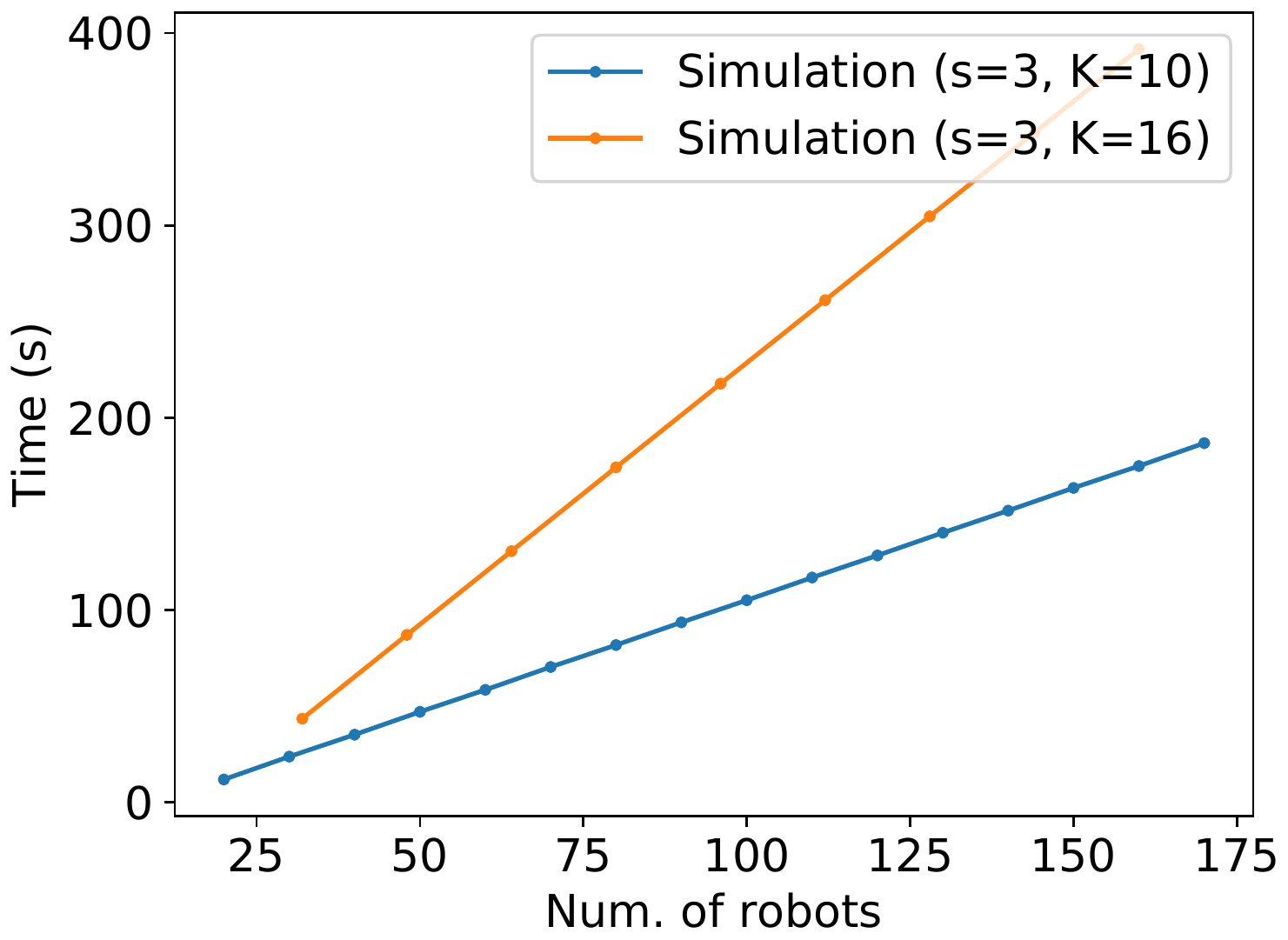}}
  \subfloat[$s = 6$ m and $K \in \{19,33\}$.]{\includegraphics[width=0.5\linewidth]{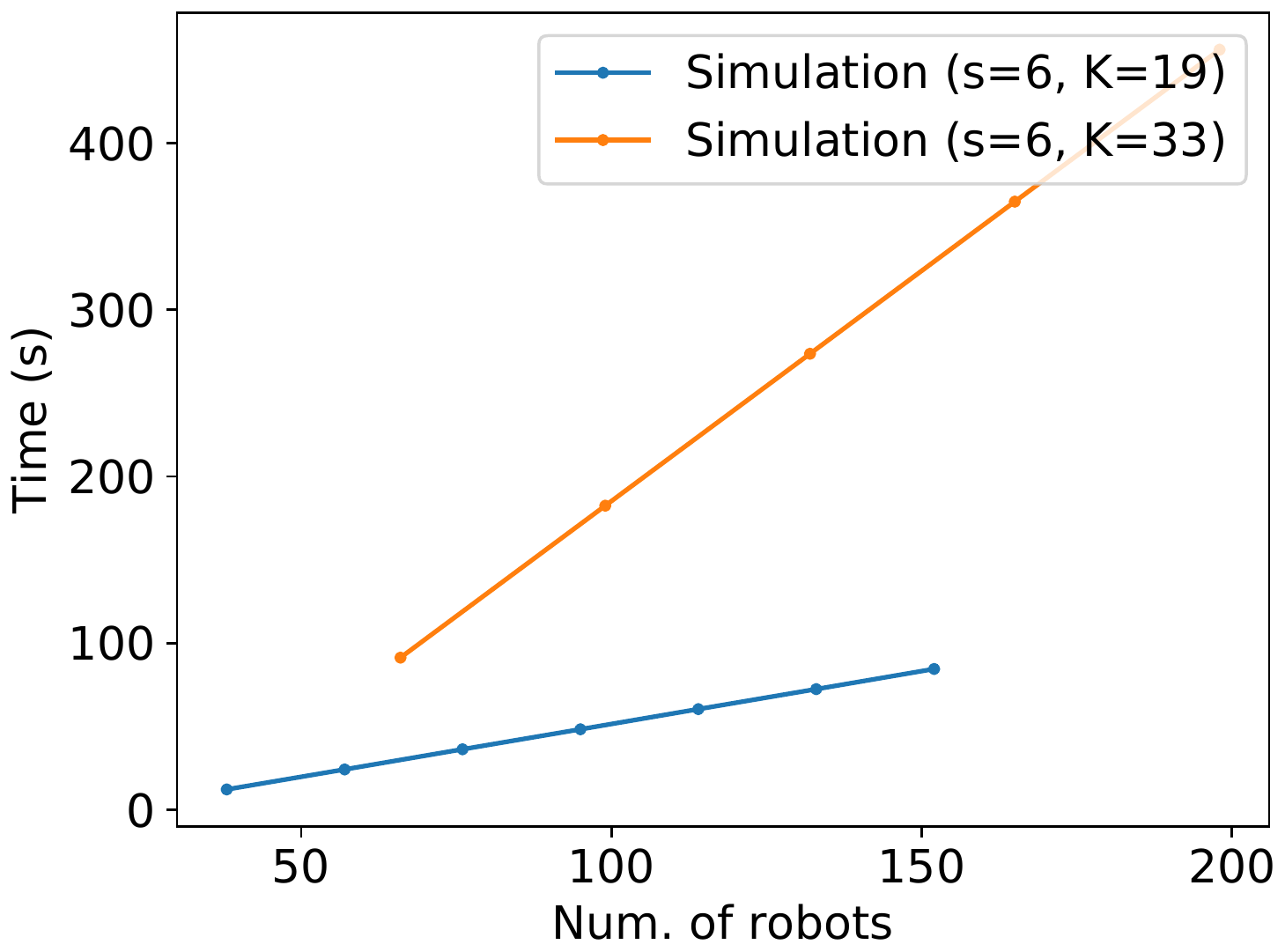}}\\
  \caption{Time of arrival at the target of the last robot versus number of robots for the same simulations in Figure \ref{fig:tpit}.}
  \label{fig:tsit21}
\end{figure}

Figure \ref{fig:tpit} presents the comparison of (\ref{eq:throughputhitandruntime}) and (\ref{eq:throughputhitandrunlimit}) for the throughput for a given time, the bound on its asymptotic value and the one obtained from Stage simulations. Although we fixed the total number of robots and the linear velocity, the arrival times and the number of robots to reach the target change for each parameter used in this figure since the distance between the robots per lane varies and the number of robots simultaneously arriving is, in most cases, the number of lanes. Also, we did not plot the first two arrival times because the first one is zero, yielding an indetermination by the throughput definition, and the second one is still too small in relation to the others, making the resultant throughput too high compared with the rest, producing, thus, an incomprehensible graph.

Observe that the values from (\ref{eq:throughputhitandruntime}) are almost equal to the values from simulation, except for some points. The difference in those points is due to the floating-point error in the divisions and trigonometric functions before the use of floor function used on (\ref{eq:throughputhitandruntime}) -- already mentioned in the introduction of Section \ref{sec:experimentresults} -- as well as the time measurement errors for the arrival of the robots on the target area as explained at the beginning of this section. As expected, the values of (\ref{eq:throughputhitandruntime}) tend to get nearer to the asymptotic value given by (\ref{eq:throughputhitandrunlimit}). Differently from the previous strategies, notice that, for small values of $T$, the throughput is higher than for larger ones because, for a fixed $K$, (\ref{eq:throughputhitandruntime}) is decreasing for $T$.  As occurred for the previous strategies, higher throughput per time is reflected as a lower arrival time of the last robot per number of robots, which tends to infinite as the number of robots grows (Figure \ref{fig:tsit21}).

\begin{figure}[t!]
  \centering 
  \includegraphics[width=0.55\linewidth]{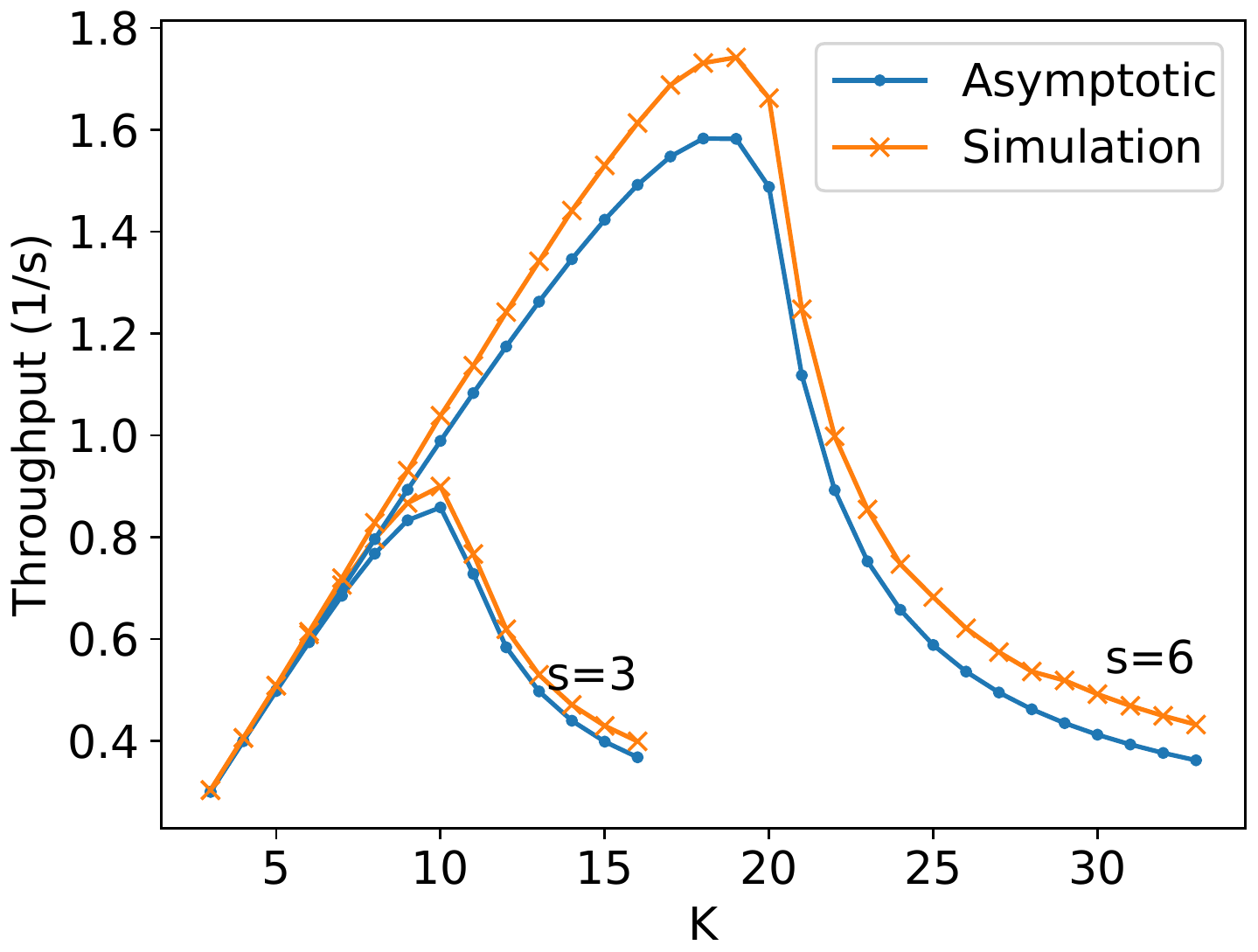} 
  \caption{Throughput versus the number of lanes comparison of the simulation on Stage and asymptotic throughput for $s \in \{3,6\}$ m.}
  \label{fig:ksit}
\end{figure}

Figure \ref{fig:ksit} shows a comparison of the throughput at the end of the experiment -- that is, for 200 robots and considering the difference between the time to reach the target region spent by the last robot and the first -- and the asymptotic throughput obtained by (\ref{eq:throughputhitandrunlimit}) for all the possible number of lanes ($K$) for the used parameters and $s \in \{3,6\}$ m. Confirming our results, the simulation values tend to get near to the asymptotic value.

\subsection{Comparison between hexagonal packing and parallel lanes}

\label{sec:hexparexperiments}

As discussed in Section \ref{sec:subseccomparison}, we observed that the parallel lane strategy has a higher throughput than hexagonal packing for values of $u = s/d$ from 0.5 to a value about 0.85 and for high values of $T$, despite the parallel lanes having lower asymptotic throughput for other values of $u$.  In order to validate this observation, we performed experiments on Stage for these strategies using $T=10000$ s, $v=0.1$ m/s, $d=1$ m and $s$ ranging from 0.4 to 0.95 m in increments of 0.05 m. For hexagonal packing, we computed the best hexagonal packing angle $\theta$ using the same method mentioned at the end of the theoretical section, that is, we searched the maximum throughput using 1000 evenly spaced points between $\lbrack 0,\pi/3 \rparen$ to find the best $\theta$, then we compared with the result for $\pi/6$.

\begin{figure}[t!]
  \centering
  \subfloat[$v = 0.1$ m/s.]{\includegraphics[width=0.49\linewidth]{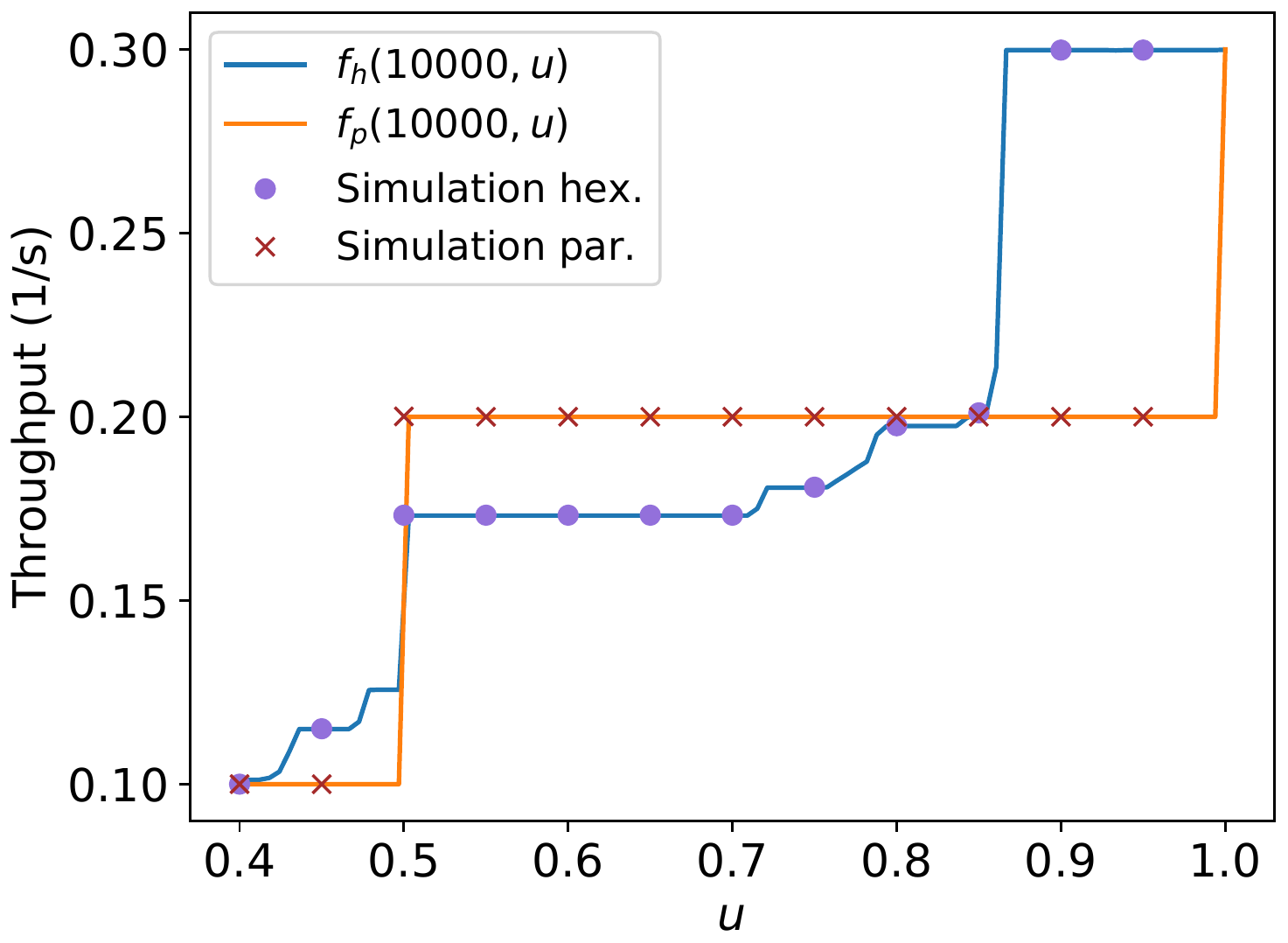} }
  \subfloat[$v = 1$ m/s.]{\includegraphics[width=0.49\linewidth]{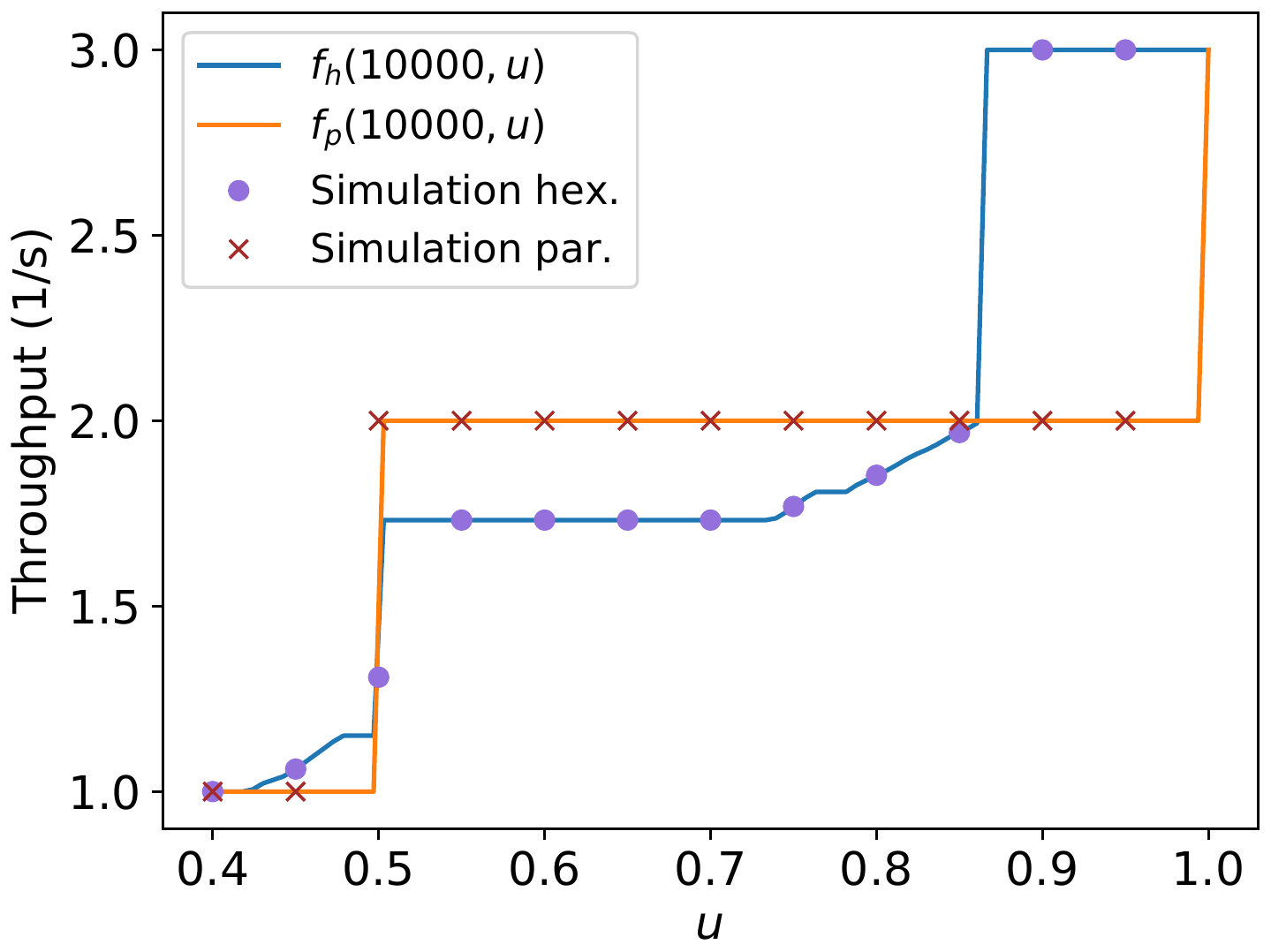} }
  \caption{Throughput versus ratio $u=s/d$ comparing hexagonal packing and parallel lanes strategies for $v\in \{0.1,1\}$ m/s, including results from Stage simulations. The functions $f_{h}$ and $f_{p}$ are the same presented in Figure \ref{fig:fhfpZoom}. The labels ``Simulation hex.'' and ``Simulation par.''  stand for the throughput resultant from the experiments with hexagonal packing and parallel lanes strategies, respectively.}
  \label{fig:compHexParExperiments}
\end{figure}

\begin{figure}[t!]
  \subfloat[Hexagonal packing with best $\theta$ for $s = 0.5$ m. Available on \url{https://youtu.be/IZBnFHLKXUA}.]{\includegraphics[width=0.495\linewidth]{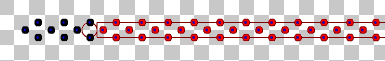}}\ 
  \subfloat[Parallel lanes for $s = 0.5$ m. Available on \url{https://youtu.be/YYv1dJFkdPA}.]{\includegraphics[width=0.495\linewidth]{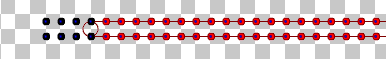} }\\
  \subfloat[Hexagonal packing with best $\theta$ for $s = 0.85$ m. Available on \url{https://youtu.be/r9X0fsnngm0}.]{\includegraphics[width=0.495\linewidth]{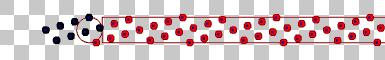}}\ 
  \subfloat[Parallel lanes for $s = 0.85$ m. Available on \url{https://youtu.be/0cx-bHPIong}.]{\includegraphics[width=0.495\linewidth]{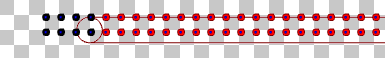} }\\
  \caption{Screenshots of the Stage simulation for hexagonal packing and parallel lanes strategy for $d=1$ m and $s\in \{0.5,0.85\}$ m showing that the square packing fits more robots than hexagonal packing over the time in these cases. The robots run from right to left at constant linear speed $v=0.1$ m/s. We highlighted the grey squares -- which measure $1\times1$ m$^{2}$ -- to help estimate the time needed to  about eight robots arrive on the target region. }
  \label{fig:stageCompHP}
\end{figure}

Figure \ref{fig:compHexParExperiments} presents the results from the experiments with Stage and the theoretical results showed earlier. 
The throughput improvement for the values of $u=s/d$ where the parallel lanes strategy overcomes the hexagonal packing is mainly caused by the square packing being more effective than hexagonal packing for fitting the robots inside the circle over the time for those values. To illustrate this, Figure \ref{fig:stageCompHP} shows the execution for $v=0.1$ m/s, $d=1$ m and $s\in\{0.5,0.85\}$ m. Observe in those figures that when the robots are arranged in squares, more robots arrive per unit of time than using hexagonal packing.
To help visualise this, heed that in Figure \ref{fig:stageCompHP} (a) there are $N=9$ robots in black, occupying a rectangle including the circular target area with a width of approximately $W\approx4.5$ m (this distance can be roughly measured by the grey squares, counting from the two last black robots on the right side to the first one in the left side). As we assumed $v=0.1$ m/s, the throughput in this case is approximately $\frac{N-1}{\frac{W}{v}} \approx \frac{(9-1)0.1}{4.5} \approx 0.178$ s$^{-1}$. Making similar calculations, we have for Figures \ref{fig:stageCompHP} (b), \ref{fig:stageCompHP} (c) and \ref{fig:stageCompHP} (d) the approximate throughputs $\frac{(8-1)0.1}{3} \approx 0.233$ s$^{-1}$, $\frac{(8-1)0.1}{4}=0.175$ s$^{-1}$ and $\frac{(8-1)0.1}{3} \approx 0.233$ s$^{-1}$, respectively. The results from the parallel lanes in this illustration -- about 0.233 for both values of $s$ -- surpass the values for the hexagonal packing.

\section{Conclusion}
\label{sec:conclusion}

A novel metric was proposed for measuring the effectiveness of algorithms to minimise congestions in a swarm of robots trying to reach the same goal: the common target area throughput. Also, we defined the asymptotic throughput for the common target area as the throughput as the time tends to infinity. Assuming the robots have constant velocity and distance between each other, we showed how to calculate the asymptotic throughput for different theoretical strategies to arrive in the common circular target region: (i) making parallel queues to get in the target region, (ii) using a corridor with robots in hexagonal packing to enter in the region and (iii) following curved trajectories to touch the region boundary. These strategies where the inspiration of new algorithms using artificial potential fields in \citep{arxivAlgorithms}. As the growing number of robots needs to be considered in robotic swarms, the asymptotic throughput abstracts the measure of how many robots can reach the target as time passes and, consequently, as the number of robots increases. Should a closed form for the asymptotic throughput be given, we can use it to compare algorithms as it will be always finite. 

When we assumed constant velocity and distance between robots, we were able to provide theoretical calculations of the throughput for a given time and the asymptotic throughput for the different theoretical strategies. Based solely on these calculations, we could compare which strategy is better. If we can theoretically calculate the asymptotic throughput of any algorithm, we may use it to compare them to decide which one is better. When a robotic swarm has all robots going to the same target, the function relating the number of robots and the time of arrival on the target region tends to infinite as the number of robots grows, while the function relating the number of robots and throughput tends to a finite number, because the asymptotic throughput is finite. When an algorithm exhibits a lower target region arrival time for a number of robots, the throughput is higher, so the comparison by this latter metric can replace the former as it reflects the same ordering but reverse. 

Although we do not have closed asymptotic equations tackling dynamic speed and inter-robot distance, we believe that, if we had them, we could compare only by the analytical asymptotic throughput, as we accomplished when the linear velocity and distance between the robots are constant. Thus, for common target area congestion in robotic swarms, the throughput is well suited for comparing algorithms due to its abstraction of the rate of the target area access as the number of robots grows, whether we have or not the closed throughput equation.

\section*{Acknowledgements}

Yuri Tavares dos Passos gratefully acknowledges the Faculty of Science and Technology at Lancaster University for the scholarship and the Universidade Federal do Recôncavo da Bahia for granting the leave of absence for finishing his PhD.

\bibliographystyle{elsarticle-num-names}
\bibliography{swarmIntelligence}   

\end{document}